\newtheorem{theorem}{Theorem}
\newtheorem{lemma}{Lemma}
\newtheorem{corollary}{Corollary}
\newtheorem{claim}{Claim}
\newtheorem{proposition}{Proposition}
\title{Global Convergence to the Equilibrium of GANs using Variational Inequalities}
\author{
  Ian Gemp \\
  College of Information and Computer Sciences \\
  University of Massachusetts Amherst \\
  Amherst, MA 01003 \\
  \texttt{imgemp@cics.umass.edu} \\
   \And
   Sridhar Mahadevan \\
   College of Information and Computer Sciences \\
   University of Massachusetts Amherst \\
   Amherst, MA 01003 \\
   \texttt{mahadeva@cics.umass.edu} \\
}
\begin{document}

\maketitle

\begin{abstract}
In optimization, the negative gradient of a function denotes the direction of steepest descent. Furthermore, traveling in any direction orthogonal to the gradient maintains the value of the function. In this work, we show that these orthogonal directions that are ignored by gradient descent can be critical in equilibrium problems. Equilibrium problems have drawn heightened attention in machine learning due to the emergence of the Generative Adversarial Network (GAN). We use the framework of Variational Inequalities to analyze popular training algorithms for a fundamental GAN variant: the Wasserstein Linear-Quadratic GAN. We show that the steepest descent direction causes divergence from the equilibrium, and convergence to the equilibrium is achieved through following a particular orthogonal direction. We call this successful technique \emph{Crossing-the-Curl}, named for its mathematical derivation as well as its intuition: identify the game's axis of rotation and move ``across'' space in the direction towards smaller ``curling''.
\end{abstract}

\section{Introduction}
When minimizing $f(x)$ over $x \in \mathcal{X}$, it is known that $f$ decreases fastest if $x$ moves in the direction $-\nabla f(x)$. In addition, any direction orthogonal to $-\nabla f(x)$ will leave $f(x)$ unchanged. In this work, we show that these orthogonal directions that are ignored by gradient descent can be critical in equilibrium problems, which are central to game theory. If each player $i$ in a game updates with $x^{(i)} \leftarrow x^{(i)} - \rho \nabla_{x^{(i)}} f^{(i)}(x)$, $x=[x^{(1)};x^{(2)};\ldots]^\top$ can follow a cyclical trajectory, similar to a person riding a merry-go-round (see Figure~\ref{fig:merrygoround}). This toy scenario actually perfectly reflects an aspect of training for a particular machine learning model mentioned below, and is depicted more technically later on in Figure~\ref{fig:w1bplot}. To arrive at the equilibrium point, a person riding the merry-go-round should walk perpendicularly to their direction of travel, taking them directly to the center.

Equilibrium problems have drawn heightened attention in machine learning due to the emergence of the Generative Adversarial Network (GAN)~\cite{goodfellow2014generative}. GANs have served a variety of applications including generating novel images~\cite{karras2017progressive}, simulating particle physics~\cite{de2017learning}, and imitating expert policies in reinforcement learning~\cite{ho2016generative}. Despite this plethora of successes, GAN training remains heuristic.

Deep learning has benefited from an understanding of simpler, more fundamental techniques. For example, multinomial logistic regression formulates learning a multiclass classifier as minimizing the cross-entropy of a log-linear model where class probabilities are recovered via a \texttt{softmax}. The minimization problem is convex and is solved efficiently with guarantees using stochastic gradient descent (SGD). Unsurprisingly, the majority of deep classifiers incorporate a \texttt{softmax} at the final layer, minimize a cross-entropy loss, and train with a variant of SGD. This progression from logistic regression to classification with deep neural nets is not mirrored in GANs. In contrast, from their inception, GANs were architected with deep nets. Only recently has the Wasserstein Linear-Quadratic GAN (LQ-GAN)~\cite{feizi2017understanding,nagarajan2017gradient} been proposed as a minimal model for understanding GANs.
\begin{figure}[htbp]
    \centering
    \includegraphics[scale=0.35]{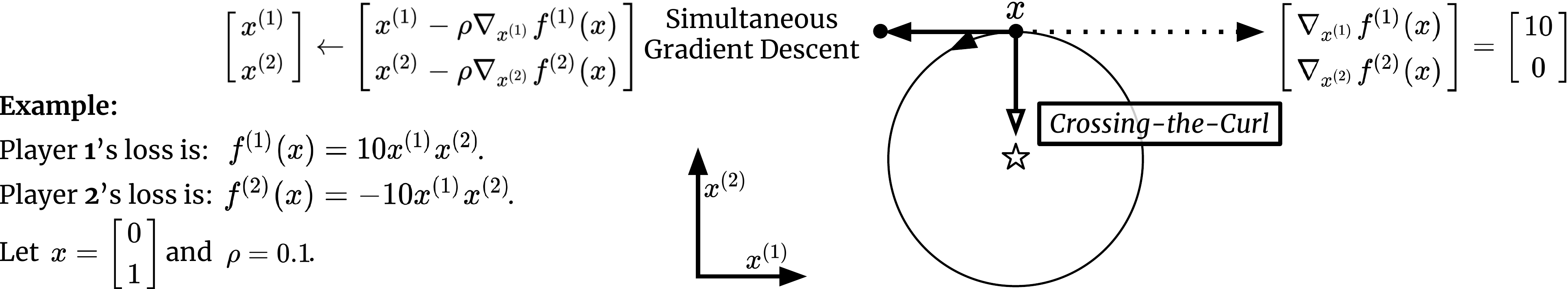}
    \caption{The goal is to find the equilibrium point (denoted by the star) of the merry-go-round. If someone follows simultaneous gradient descent, she will ride along in circles forever. However, if she travels perpendicularly to this direction, a.k.a. \emph{Crosses-the-Curl}, she will arrive at the equilibrium.}
    \label{fig:merrygoround}
\end{figure}
%
%

In this work, we analyze the convergence of several GAN training algorithms in the LQ-GAN setting. We survey several candidate theories for understanding convergence in GANs, naturally leading us to select Variational Inequalities, an intuitive generalization of the widely relied-upon theories from Convex Optimization. According to our analyses, none of the current GAN training algorithms is globally convergent in this setting. We propose a new technique, \emph{Crossing-the-Curl}, for training GANs that converges with high probability in the N-dimensional (N-d) LQ-GAN setting. 

This work makes the following contributions (proofs can be found in the supplementary material):
\begin{itemize}[leftmargin=*]
    \item The first global convergence analysis of several GAN training methods for the N-d LQ-GAN,
    \item \emph{Crossing-the-Curl}, the first technique with $\mathcal{O}(N/k)$ stochastic convergence for the N-d LQ-GAN,
    \item An empirical demonstration of \emph{Crossing-the-Curl} in the multivariate LQ-GAN setting as well as some common neural network driven settings in Appendix~\ref{deepnets}.
\end{itemize}

\section{Generative Adversarial Networks}
\label{sec:gan}
The Generative Adversarial Network (GAN)~\cite{goodfellow2014generative} formulates learning a generative model of data as finding a Nash equilibrium of a minimax game. The generator ($\min$ player) aims to synthesize realistic data samples by transforming vectors drawn from a fixed source distribution, e.g., $\mathcal{N}(\mathbf{0},I_d)$. The discriminator ($\max$ player) attempts to learn a scoring function that assigns low scores to synthetic data and high scores to samples drawn from the true dataset. The generator's transformation function, $G$, and discriminator's scoring function, $D$, are typically chosen to be neural networks parameterized by weights $\theta$ and $\phi$ respectively. The minimax objective of the original GAN~\cite{goodfellow2014generative} is
\begin{align}
    \min_{\theta} \max_{\phi} \Big\{ V(\theta,\phi) = \mathbb{E}_{y \sim p(y)} [ g (D_{\phi}(y)) ] + \mathbb{E}_{z \sim p(z)} [ g (-D_{\phi}(G_{\theta}(z)) ] \Big\}, \label{eqn:Vorig}
\end{align}
where $p(z)$ is the source distribution, $p(y)$ is the true data distribution, and $g(x)=-\log(1+e^{-x})$.
%

In practice, finding the solution to~(\ref{eqn:Vorig}) consists of local updates, e.g., SGD, to $\theta$ and $\phi$. This continues until 1) $V$ has stabilized, 2) the generated data is judged qualitatively accurate, or 3) training has de-stabilized and appears irrecoverable, at which point, training is restarted. The difficulty of training GANs has spurred research that includes reformulating the minimax objective~\cite{arjovsky2017wasserstein,mao2017least,mroueh2017fisher,mroueh2017mcgan,nowozin2016f,uehara2016generative,zhao2016energy}, devising training heuristics~\cite{gulrajani2017improved,karras2017progressive,salimans2016improved,roth2017stabilizing}, proving the existence of equilibria~\cite{arora2017generalization}, and conducting local stability analyses~\cite{gidel2018variational,mescheder2017numerics,mescheder2018training,nagarajan2017gradient}.

We acknowledge here that our algorithm, \emph{Crossing-the-Curl}, was independently proposed in~\cite{balduzzi2018mechanics} as \emph{Symplectic Gradient Adjustment} (SGA). In contrast to that work, this paper specifies a non-trivial application of this algorithm to LQ-GAN which obtains global convergence with high probability.

Recent work has studied a simplified setting, the Wasserstein LQ-GAN, where $G$ is a linear function, $D$ is a quadratic function, $g(x)=x$, and $p(z)$ is Gaussian~\cite{feizi2017understanding,nagarajan2017gradient}. Follow-up research has shown that, in this setting, the optimal generator distribution is a rank-$k$ Gaussian containing the top-$k$ principal components of the data~\cite{feizi2017understanding}. Furthermore, it is shown that if the dimensionality of $p(z)$ matches that of $p(y)$, LQ-GAN is equivalent to maximum likelihood estimation of the generator's resulting Gaussian distribution. To our knowledge, no GAN training algorithm with guaranteed convergence is currently known for this setting. We revisit the LQ-GAN in more detail in Section~\ref{sec:lqgan}.

\section{Convergence of Equilibrium Dynamics}
\label{sec:theories}
In this section, we review Variational Inequalities (VIs) and compare it to the ODE Method leveraged in recent work~\cite{nagarajan2017gradient}. See~\ref{DiffGames} and~\ref{AGT} for a discussion of two additional theories. Throughout the paper, $\mathcal{X}\subseteq \mathbb{R}^n$ refers to a convex set and $F$ refers to a vector field operator (or map) from $\mathcal{X}$ to $\mathbb{R}^n$, although many of the results for VIs apply to set-valued maps, e.g., subdifferentials, as well. Here, we will cover the basics of the theories and introduce select theorems when necessary later on.

\subsection{Variational Inequalities}
Variational Inequalities (VIs) are used to study equilibrium problems in a number of domains including mechanics, traffic networks, economics, and game theory~\cite{dafermos,facchinei-pang:vi,hartman-stampacchia:acta,nagurney:pdsbook}. The Variational Inequality problem, VI$(F,\mathcal{X})$, is to find an $x^*$ such that for all $x$ in the feasible set $\mathcal{X}$, $\langle F(x^*), x - x^* \rangle \ge 0$. Under mild conditions (see Appendix~\ref{vine}), $x^*$ constitutes a Nash equilibrium point. For readers familiar with convex optimization, note the consistent similarity throughout this subsection for when $F=\nabla f$. In game theory, $F$ often maps to the set of player gradients. For example, the map corresponding to the minimax game in Equation~(\ref{eqn:Vorig}) is $F: \mathbb{R}^{|\theta|+|\phi|} \rightarrow [\nabla V_{\theta}; -\nabla V_{\phi}]  \in \mathbb{R}^{|\theta|+|\phi|}$.

A map, $F$, is monotone~\cite{aslam1998generalized} if
$\langle F(x) - F(x'), x - x' \rangle \ge 0$ for all $x \in \mathcal{X}$ and $x' \in \mathcal{X}$.
Alternatively, if the Jacobian matrix of $F$ is positive semidefinite (PSD), then $F$ is monotone~\cite{nagurney:pdsbook,schaible1996generalized}. A matrix, $J$, is PSD if for all $x \in \mathbb{R}^n$, $x^\top  J x \ge 0$, or equivalently, $J$ is PSD if $(J$$+$$J^\top ) \succeq 0$.

As in convex optimization, a hierarchy of monotonicity exists. For all $x \in \mathcal{X}$ and $x' \in \mathcal{X}$, $F$ is
\begin{align}
    \text{monotone iff} & \quad \langle F(x) - F(x'), x - x' \rangle \ge 0, \label{monotone} \\
    \text{pseudomonotone iff} & \quad \langle F(x'), x - x' \rangle \ge 0 \implies \langle F(x), x - x' \rangle \ge 0, \nonumber \\
    \text{and quasimonotone iff} & \quad \langle F(x'), x - x' \rangle > 0 \implies \langle F(x), x - x' \rangle \ge 0. \label{def:quasimon}
\end{align}
If, in Equation~(\ref{monotone}), ``$\ge$'' is replaced by ``$>$'', then $F$ is strictly-monotone; if ``$\ge$'' is replaced by ``$s ||x-x'||^2$'', then $F$ is $s$-strongly-monotone. If $F$ is a gradient, then replace monotone with convex.

Table~\ref{tab:VIconv} cites algorithms with convergence rates for several settings. Whereas gradient descent achieves optimal convergence rates for various convex optimization settings, extragradient~\cite{korpelevich} achieves optimal rates for VIs. Results have been extended to the online learning setting as well~\cite{gemp2016online,gemp2017online}.
\begin{table}[]
    \centering
    \caption{Existing convergence rates for VI algorithms in different settings.}
    \begin{tabular}{c|c|c|c}
                  & Strongly-Monotone & (Smooth/Sharp+)Monotone & Pseudomonotone \\
    Deterministic & $\mathcal{O}(e^{-k})$~\cite{cavazzuti2002nash} & ($\mathcal{O}(1/k)$~\cite{nemirovski2004prox,cai20141}) $\mathcal{O}(1/\sqrt{k})$~\cite{juditsky2011solving} & $\mathcal{O}(1/\sqrt{k})$~\cite{dang2015convergence} \\ 
    Stochastic & $\mathcal{O}(1/k)$~\cite{kannan2017pseudomonotone} & ($\mathcal{O}(1/k)$~\cite{yousefian2014optimal,kannan2017pseudomonotone}) $\mathcal{O}(1/\sqrt{k})$~\cite{juditsky2011solving} 
    &  $\mathcal{O}(1/\sqrt{k})$~\cite{iusem2017extragradient} 
    \end{tabular}
    \label{tab:VIconv}
\end{table}

\subsection{The ODE Method \& Hurwitz Jacobians}
Recently,~\citet{nagarajan2017gradient} performed a \emph{local} stability analysis of the gradient dynamics of Equation~(\ref{eqn:Vorig}), proving that the Jacobian of $F$ evaluated at $x^*$ is Hurwitz\footnote{Our definition of Hurwitz is equivalent to the more standard: $-J$ is Hurwitz if $\max_i[\text{Re}(\lambda_i(-J))] < 0$.}~\cite{borkar:book,borkar2000ode,khalil1996noninear}, i.e., the real parts of its eigenvalues are strictly positive. This means that if simultaneous gradient descent using a ``square-summable, not summable'' step sequence enters an $\epsilon$-ball with a low enough step size, it will converge to the equilibrium. This applies only in the deterministic setting because stochastic gradients can cause the iterates to exit this ball and diverge. Note that while the real parts of eigenvalues reveal exponential growth or decay of trajectories, the imaginary parts reflect any rotation in the system\footnote{Linearized Dynamical System: $x(t) = \sum_{i} c_i v_i e^{\lambda_i t}$; Euler's formula: $e^{(a+ib)t} = e^{at}(\cos(bt)+i\sin(bt))$.}.

The Hurwitz and monotonicity properties are complementary (see~\ref{monvshurwitz}).
To summarize, Hurwitz encompasses dynamics with exponentially stable trajectories and with arbitrary rotation, while monotonicity includes cycles (Jacobians with zero eigenvalues) and is similar to convex optimization.

Given the preceding discussion, we believe VIs and monotone operator theory will serve as a strong foundation for deriving fundamental convergence results for GANs; this theory is
\begin{enumerate}[leftmargin=*]
    \item Similar to convexity suggesting its adoption by the GAN community should be smooth,
    \item Mature with natural mechanisms for handling constraints, subdifferentials, and online scenarios,
    \item Rich with algorithms with finite sample convergence for a hierarchy of monotone operators.
\end{enumerate}

Finally, we suggest~\cite{scutari2010convex} for a lucid comparison of convex optimization, game theory, and VIs.

\section{The Wasserstein Linear Quadratic GAN}
\label{sec:lqgan}
In the Wasserstein Linear-Quadratic GAN, the generator and discriminator are restricted to be linear and quadratic respectively: $G(z) = Az + b$ and $D(y) = y^\top  W_2 y + w_1^\top  y$. Equation~(\ref{eqn:Vorig}) becomes
\begin{align}
    \min_{A,b} \max_{W_2,w_1} \Big\{ V(W_2,w_1,A,b) = \mathbb{E}_{y \sim p(y)} [ D(y) ] - \mathbb{E}_{z \sim p(z)} [ D(G(z) ] \Big\}. \label{eqn:Vlqgan}
\end{align}
Let $\mathbb{E}[y]=\mu$, $\mathbb{E}[(y-\mu)^\top(y-\mu)]=\Sigma$, $\mathbb{E}[z]=0$, and $\mathbb{E}[z^2]=I$. If $A$ is constrained to be lower triangular with positive diagonal, i.e., of Cholesky form, then $(W_2^*,w_1^*,A^*,b^*) = (\mathbf{0},\mathbf{0},\Sigma^{1/2},\mu)$ is the unique minimax solution (see Proposition~\ref{multivarsoln}). The majority of this work focuses on the case where $p(y)$ and $p(z)$ are 1-d distributions. Equation~(\ref{eqn:Vlqgan}) simplifies to
\begin{align}
    \min_{a>0,b} \max_{w_2,w_1} \Big\{ V(w_2,w_1,a,b) = w_2 (\sigma^2 + \mu^2 - a^2 - b^2) + w_1 (\mu - b) \Big\}. \label{eqn:Vlqgan1d}
\end{align}
The map $F$ associated with this zero-sum game is constructed by concatenating the gradients of the two players' losses ($f_G = V, f_D = -V$):
\begin{align}
    F &= \begin{bmatrix}
    \frac{\partial f_D}{\partial w_2}, \frac{\partial f_D}{\partial w_1}, \frac{\partial f_G}{\partial a}, \frac{\partial f_G}{\partial b}
    \end{bmatrix}^\top = \begin{bmatrix}
    a^2 + b^2 - \sigma^2 - \mu^2, & b - \mu, & -2 w_2 a , & -2 w_2 b - w_1
    \end{bmatrix}^\top. \nonumber
\end{align}

\section{Crossing-the-Curl}
\label{sec:crosscurl}
In this section, we will derive our proposed technique, \emph{Crossing-the-Curl}, motivated by an examination of the ($w_1,b$)-subsystem of LQ-GAN, i.e., $(w_2,a)$ fixed at $(0,a_0)$ for any $a_0$. The results discussed here hold for the N-dimensional case as well. The map associated with this subsystem is plotted in Figure~\ref{fig:w1bplot} and formally stated in Equation~(\ref{eqn:Fw1b}).
\begin{figure}[htbp]
    \centering
    \begin{minipage}{0.45\textwidth}
        \centering
        \includegraphics[scale=0.35]{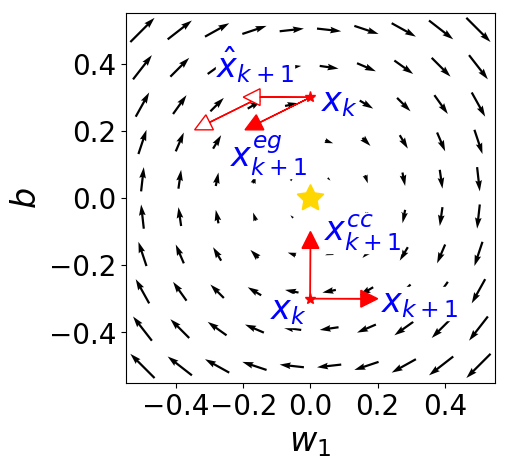} 
    \end{minipage}\hfill
    \begin{minipage}{0.5\textwidth}
        \begin{align}
            F^{w_1,b} &= [b - \mu, -w_1]^\top \label{eqn:Fw1b} \\
            J^{w_1,b} &= \begin{bmatrix}
                0 & 1 \\
                -1 & 0
            \end{bmatrix} \nonumber \\
            x_k &= [w_{1,k},b_k]^\top \nonumber \\
            x_{k+1} &= x_k - \rho_k F^{w_1,b}(x_k) \label{eqn:simgd}
        \end{align}
    \end{minipage}
    \caption{Vector field plot of $F^{w_1,b}$ for $\mu=0$ with extragradient, $x^{eg}_{k+1}$ (see updates~(\ref{EGstep1}) \&~(\ref{EGstep2})), simultaneous gradient descent, $x_{k+1}$, and \emph{Crossing-the-Curl}, $x^{cc}_{k+1}$, updates overlayed on top.}
    \label{fig:w1bplot}
\end{figure}

The Jacobian of $F^{w_1,b}$ is not Hurwitz, and simultaneous gradient descent, defined in Equation~(\ref{eqn:simgd}), will diverge for this problem (see~\ref{simgddiverge}). However, $F^{w_1,b}$ is monotone $(J+J^\top=\mathbf{0})$ and $1-$Lipschitz in the sense that $|| F^{w_1,b}(x)-F^{w_1,b}(x') ||^2 \le 1 ||x-x'||^2$. Table~\ref{tab:VIconv} offers an extragradient method (see Figure~\ref{fig:w1bplot}) with $\mathcal{O}(1/k)$ convergence rate, which is optimal for worst case monotone maps.

Nevertheless, an algorithm that travels perpendicularly to the vector field will proceed directly to the equilibrium. The intuition is to travel in the direction that is perpendicular to both $F$ and the axis of rotation. For a 2-d system, the axis of rotation can be obtained by taking the curl of the vector field. To derive a direction perpendicular to both $F$ and the axis of rotation, we can take their cross product:
\begin{flalign}
    F_{cc} = -\frac{1}{2}(\overbrace{\nabla \times F}^{\text{curl}}) \times F &= -\frac{1}{2}\{ \nabla_F (v \cdot F) - (v \cdot \nabla) F \} \Big\vert_{v=F} = -\Big(\frac{J-J^\top}{2} \Big)F = \begin{bmatrix} w_1 \\ b-\mu \end{bmatrix} &  \nonumber
\end{flalign}
where $\nabla_F$ is Feynman notation for the gradient with respect to $F$ only and $\vert_{v=F}$ means evaluate the expression at $v=F$. The~$\sfrac{-1}{2}$ factor ensures the algorithm moves toward regions of ``tighter cycles'' and simplifies notation. It may be sensible to perform some linear combination of simultaneous gradient descent and \emph{Crossing-the-Curl}, so we will refer to $(I-\eta(J-J^\top ))F$ as $F_{\eta cc}$.

Note that the fixed point of $F_{cc}$ remains the same as the original field $F$. Furthermore, the reader may recognize $F_{cc}$ as the gradient of the function $\frac{1}{2}(w_1^2 + (b-\mu)^2)$, which is strongly convex, allowing an $\mathcal{O}(e^{-k})$ convergence rate in the deterministic setting. $F_{cc}$ is derived from intuition in 2-d, however, we discuss reasons in the next subsection for why this approach generalizes to higher dimensions.

\subsection{Discussion \& Relation to Other Methods}
For the ($w_1,b$)-subsystem, \emph{Crossing-the-Curl} is equivalent to two other methods: the consensus algorithm~\cite{mescheder2017numerics} and a Taylor series approximation to extragradient~\cite{korpelevich}.
\begin{figure}[htbp]
    \centering
    \begin{minipage}{0.55\textwidth}
        \begin{align}
            \hat{x}_{k+1} &= x_k - \eta F(x_k) \label{EGstep1} \\
            x_{k+1} &= x_k - \rho F(\hat{x}_{k+1}) \label{EGstep2} \\
            &= x_k - \rho \underbrace{(I - \eta J(x_k)) F(x_k)}_{F_{eg}} + \mathcal{O}(\rho \eta^2) \label{eqn:tayeg}
        \end{align}
    \end{minipage}\hfill
    \begin{minipage}{0.45\textwidth}
        \begin{align}
            x_{k+1} &= x_k - \rho (F(x_k) + \eta \nabla ||F||^2) \nonumber \\
            &= x_k - \rho \underbrace{(I + \eta J^\top (x_k)) F(x_k)}_{F_{con}} \label{eqn:con}
        \end{align}
    \end{minipage}
    \caption{A Taylor series expansion of extragradient~(\ref{eqn:tayeg}) and the consensus algorithm~(\ref{eqn:con}).}
\end{figure}

These equivalences occur because the Jacobian is skew-symmetric ($J^\top$$=$$-J$) for the ($w_1,b$)-subsystem. In the more general case, where $J$ is not necessarily skew-symmetric, \emph{Crossing-the-Curl} represents a combination of the two techniques. Extragradient (EG) is key to solving VIs and the consensus algorithm has delivered impressive results for GANs, so this is promising for $F_{cc}$. To our knowledge, $F_{eg}$ is novel and has not appeared in the Variational Inequality literature.

\emph{Crossing-the-Curl} stands out in many ways though. Observe that in higher dimensions, the subspace orthogonal to $F$ is $(n-1)$ dimensional, which means $(J^\top$$-$$J)F$ is no longer the unique direction orthogonal to $F$. However, every matrix can be decomposed into a symmetric part with real eigenvalues, $\sfrac{1}{2}(J+J^\top)$, and a skew-symmetric part with purely imaginary eigenvalues, $\sfrac{1}{2}(J-J^\top)$. Notice that for an optimization problem, $J$$-$$J^\top$$=$$H$$-$$H^\top$$=$$0$ where $H$ is the Hessian.\footnote{Assuming the objective function has continuous second partial derivatives\textemdash see Schwarz's theorem.} It is the imaginary eigenvalues, i.e., rotation, that set equilibrium problems apart from optimization and necessitate the development of new algorithms like extragradient. It is reassuring that this matrix appears explicitly in $F_{cc}$. In addition, $F_{cc}$ reduces to gradient descent when applied to an optimization problem making the map agnostic to the type of problem at hand: optimization or equilibration.

The curl also shares close relation to the gradient. The gradient is $\nabla$ applied to a scalar function and the curl is $\nabla$ crossed with a vector function. Furthermore, under mild conditions, every vector field, $F: \mathbb{R}^3 \rightarrow \mathbb{R}^3$, admits a Helmholdtz decomposition: $F=-\nabla f + \nabla \times G$ where $f$ is a scalar function and $G$ is a vector function suggesting the gradient and curl are both fundamental components.

Consider the perspective of $F_{cc}$ as preconditioning $F$ by a skew-symmetric matrix. Preconditioning with a positive definite matrix dates back to Newton's method and has reappeared in machine learning with natural gradient~\cite{amari1998natural}. \citet{dafermos1983iterative} considered asymmetric positive definite preconditioning matrices for VIs. \citet{thomas2014genga} extended the analysis of natural gradient to PSD matrices. We are not aware of any work using skew-symmetric matrices for preconditioning. The scalar $x^\top A x \equiv 0$ for any skew-symmetric matrix $A$, so calling $(J^\top-J)$ a PSD matrix is not adequately descriptive.

Note that \emph{Crossing-the-Curl} does not always improve convergence; this technique can transform a strongly-monotone field into a saddle and an unstable fixed point (non-monotone) into a strongly-monotone field (see~\ref{ccmon2nonmon} for examples), so this technique should generally be used with caution.

Lastly, \emph{Crossing-the-Curl} is inexpensive to compute. The Jacobian-vector product, $JF$, can be approximated accurately and efficiently with finite differences. Likewise, $J^\top F$ can be computed efficiently with double backprop~\cite{drucker1992improving} by taking the gradient of $\sfrac{1}{2}||F||^2$. In total, three backprops are required, one for $F(x_k)$, one for $F(\hat{x}_{k+1})$, and one for $\sfrac{1}{2}||F(x_k)||^2$.

In our analysis, we also consider the gradient regularization proposed in~\cite{nagarajan2017gradient}, $F_{reg}$, the Unrolled GAN proposed in~\cite{metz2016unrolled}, $F_{unr}$, alternating gradient descent, $F_{alt}$, as well as any linear combination of $F$, $JF$, and $J^\top F$, deemed $F_{lin}$, which forms a family of maps that includes $F_{eg}$, $F_{con}$, and $F_{cc}$:
\begin{gather}
    F_{reg} = \begin{bmatrix}
    F_{D}; & F_{G} + \eta \nabla_G ||F_{D}||^2
    \end{bmatrix}^\top,
    \hspace{2.0cm}
    F_{lin} = (\rho I + \beta J^\top - \gamma J) F.  \nonumber
\end{gather}

Keep in mind that we are proposing $F_{lin}$ as a generalization of \emph{Crossing-the-Curl}. We state our main results here for the $(w_1,b)$-subsystem.
\begin{proposition}
\label{Flinw1bstrong_body}
For any $\alpha$, $F^{w_1,b}_{lin}$ with at least one of $\beta$ and $\gamma$ positive and both non-negative is strongly monotone. Also, its Jacobian is Hurwitz. See Proposition~\ref{w1bmonotone}.
\end{proposition}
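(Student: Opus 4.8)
The plan is to compute the Jacobian of $F^{w_1,b}_{lin}$ explicitly and then check strong monotonicity and the Hurwitz property directly on the resulting constant $2\times 2$ matrix. Since $F^{w_1,b}(x) = [\,b-\mu,\ -w_1\,]^\top$ is affine in $x=[w_1,b]^\top$, its Jacobian $J := J^{w_1,b} = \left[\begin{smallmatrix}0 & 1\\ -1 & 0\end{smallmatrix}\right]$ is constant, so $F^{w_1,b}_{lin}(x) = (\rho I + \beta J^\top - \gamma J) F^{w_1,b}(x)$ is itself affine with constant Jacobian $M := (\rho I + \beta J^\top - \gamma J)\,J$. Here $\rho = \alpha$ plays the role of the coefficient on the (divergent) simultaneous-descent term and is allowed to be arbitrary, while $\beta,\gamma \ge 0$ with at least one strictly positive. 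Using $J^\top = -J$ and $J^2 = -I$, one finds $\beta J^\top - \gamma J = -(\beta+\gamma)J$ and hence $M = (\alpha I - (\beta+\gamma)J)\,J = \alpha J - (\beta+\gamma)J^2 = (\beta+\gamma)I + \alpha J$. Writing $s := \beta+\gamma > 0$, we get simply $M = sI + \alpha J$.

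Next I would verify strong monotonicity. For an affine map with constant Jacobian $M$, the modulus of strong monotonicity is the smallest eigenvalue of the symmetric part $\tfrac12(M + M^\top)$; indeed $\langle F_{lin}(x)-F_{lin}(x'), x-x'\rangle = (x-x')^\top M (x-x') = (x-x')^\top \tfrac12(M+M^\top)(x-x')$. Since $J$ is skew-symmetric, $\tfrac12(M+M^\top) = sI$, which is positive definite with smallest eigenvalue $s = \beta+\gamma > 0$. Therefore $F^{w_1,b}_{lin}$ is $s$-strongly-monotone with $s = \beta+\gamma$, and this holds for every value of $\alpha$, as claimed. (Note this also makes the equilibrium $x^*=[0,\mu]$ unique, since $F^{w_1,b}_{lin}(x^*)=M F^{w_1,b}(x^*) = 0$ and strong monotonicity forces uniqueness of the VI solution.)

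Finally, for the Hurwitz property I would compute the eigenvalues of $M = sI + \alpha J$ directly: the eigenvalues of $J$ are $\pm i$, so those of $M$ are $s \pm i\alpha$, both with real part $s = \beta+\gamma > 0$; hence $M$ is Hurwitz in the paper's sign convention. (Equivalently, this follows from strong monotonicity: a strongly monotone affine map has Hurwitz Jacobian, since $\mathrm{Re}(\lambda) = \mathrm{Re}(\bar v^\top M v) = \bar v^\top \tfrac12(M+M^\top) v \ge s\|v\|^2 > 0$ for any unit eigenvector $v$.)

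There is essentially no obstacle here — the only thing to be careful about is the sign/identity bookkeeping ($J^\top = -J$, $J^2 = -I$) that collapses the three-parameter family into the clean form $M = (\beta+\gamma)I + \alpha J$; once that simplification is made, both conclusions are immediate from the skew-symmetry of $J$. The statement and proof would of course be subsumed by the more general Proposition~\ref{w1bmonotone} referenced in the text, which presumably tracks the same quantities with explicit constants.
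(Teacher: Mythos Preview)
Your proposal is correct and follows essentially the same route as the paper's Proposition~\ref{w1bmonotone}: both compute the Jacobian of $F^{w_1,b}_{lin}$, observe its symmetric part is $(\beta+\gamma)I$, and deduce $(\beta+\gamma)$-strong monotonicity and then Hurwitz. The only cosmetic difference is that you exploit the identities $J^\top=-J$ and $J^2=-I$ to write the Jacobian compactly as $(\beta+\gamma)I+\alpha J$ and read off the eigenvalues $s\pm i\alpha$ directly, whereas the paper writes out the matrix entries explicitly (in block form, covering the multivariate case) and then invokes the general ``strongly monotone $\Rightarrow$ Hurwitz'' implication (its Proposition~\ref{smon2hurwitz}) rather than computing the spectrum.
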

\begin{corollary}
$F^{w_1,b}_{cc}$, $F^{w_1,b}_{\eta cc}$, $F^{w_1,b}_{eg}$, and $F^{w_1,b}_{con}$ with $\eta>0$ are strongly-monotone with Hurwitz Jacobians. See Proposition~\ref{Flinw1bstrong_body}.
\end{corollary}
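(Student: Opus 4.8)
The plan is to observe that each of $F^{w_1,b}_{cc}$, $F^{w_1,b}_{\eta cc}$, $F^{w_1,b}_{eg}$, and $F^{w_1,b}_{con}$ is a particular instance of the family $F^{w_1,b}_{lin} = (\rho I + \beta J^\top - \gamma J)F$ for a non-negative choice of the coefficients, and then to invoke Proposition~\ref{Flinw1bstrong_body}, which already guarantees strong monotonicity and a Hurwitz Jacobian whenever $\beta,\gamma \ge 0$ with at least one of them strictly positive (for any value of the identity coefficient $\rho$). So the whole argument reduces to reading off four sets of coefficients and checking the sign hypothesis in each case.

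First I would record the coefficients. From its derivation in Section~\ref{sec:crosscurl}, $F^{w_1,b}_{cc} = -\frac{1}{2}(J - J^\top)F = (\frac{1}{2} J^\top - \frac{1}{2} J)F$, so $(\rho,\beta,\gamma) = (0,\frac{1}{2},\frac{1}{2})$. By definition $F^{w_1,b}_{\eta cc} = (I - \eta(J - J^\top))F = (I + \eta J^\top - \eta J)F$, so $(\rho,\beta,\gamma) = (1,\eta,\eta)$. From Equation~(\ref{eqn:tayeg}), $F^{w_1,b}_{eg} = (I - \eta J)F$, so $(\rho,\beta,\gamma) = (1,0,\eta)$; and from Equation~(\ref{eqn:con}), $F^{w_1,b}_{con} = (I + \eta J^\top)F$ (absorbing the factor of $2$ in $\nabla \|F\|^2 = 2J^\top F$ into $\eta$), so $(\rho,\beta,\gamma) = (1,\eta,0)$. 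In every case $\beta,\gamma \ge 0$, and once $\eta > 0$ — automatic for $F_{cc}$, assumed for the other three — at least one of $\beta,\gamma$ is strictly positive. Proposition~\ref{Flinw1bstrong_body} then applies directly and delivers the conclusion for all four maps.

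There is essentially no obstacle beyond sign bookkeeping, but two small points I would make explicit. First, Equations~(\ref{eqn:tayeg}) and~(\ref{eqn:con}) define $F_{eg}$ and $F_{con}$ via a first-order (Taylor) expansion of the two-step updates in general; for the $(w_1,b)$-subsystem, however, the Jacobian $J^{w_1,b}$ is constant (independent of $(w_1,b)$), so the $\mathcal{O}(\rho\eta^2)$ remainder vanishes and these identifications are exact rather than approximate — in particular the two-step extragradient update is exactly $x_{k+1} = x_k - \rho F^{w_1,b}_{eg}(x_k)$. Second, $F_{cc}$ sits at $\rho = 0$, so the argument genuinely uses that Proposition~\ref{Flinw1bstrong_body} holds for every value of the identity coefficient, not merely positive ones; this is why that proposition is stated with a universal quantifier over that parameter. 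With these remarks the corollary follows immediately from the proposition.
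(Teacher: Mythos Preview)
Your proof is correct and follows exactly the approach the paper intends: identify each of the four maps as an instance of $F^{w_1,b}_{lin}$ with $\beta,\gamma\ge 0$ and $\beta+\gamma>0$, then invoke Proposition~\ref{Flinw1bstrong_body}. The paper's appendix (Proposition~\ref{w1bmonotone}) additionally observes that for this subsystem $J$ is skew-symmetric, so in fact $F^{w_1,b}_{cc}=F^{w_1,b}_{eg}=F^{w_1,b}_{con}=[w_1,\,b-\mu]^\top$ coincide and the Jacobian is the identity; your route via distinct $(\rho,\beta,\gamma)$ triples is equivalent and equally valid.
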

\begin{proposition}
$F^{w_1,b}_{alt}$, $F^{w_1,b}_{unr}$, $F^{w_1,b}$, and $F^{w_1,b}_{reg}$ with any $\eta$ are monotone, but not strictly monotone. Of these maps, only $F^{w_1,b}_{reg}$'s Jacobian is Hurwitz. See Propositions~\ref{unrolledalt} and~\ref{w1bmonotone}.
\end{proposition}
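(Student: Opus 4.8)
The plan is to exploit that, restricted to the $(w_1,b)$-subsystem, each of these four maps is \emph{affine}, hence has a \emph{constant} Jacobian $J$, and to reduce everything to linear algebra. For an affine map $F(x)=Jx+c$ one has $\langle F(x)-F(x'),\,x-x'\rangle=(x-x')^\top J(x-x')=\tfrac12(x-x')^\top(J+J^\top)(x-x')$, so $F$ is monotone iff $J+J^\top\succeq 0$, strictly monotone iff $J+J^\top\succ 0$, and the matrix the Hurwitz condition refers to is just this same constant $J$. So the first step is to record the four Jacobians: $J^{w_1,b}=\begin{bmatrix}0&1\\-1&0\end{bmatrix}$ from~(\ref{eqn:Fw1b}); the Jacobians of $F^{w_1,b}_{alt}$ and $F^{w_1,b}_{unr}$, which I would read off from Proposition~\ref{unrolledalt}; and $J^{w_1,b}_{reg}$, which I would compute directly from $F_{reg}=[F_D;\,F_G+\eta\nabla_G\|F_D\|^2]$: the relevant component of $F_D$ in this subsystem is $\partial f_D/\partial w_1=b-\mu$, so the regularizer adds $2\eta(b-\mu)$ to the $b$-update, giving $J^{w_1,b}_{reg}=\begin{bmatrix}0&1\\-1&2\eta\end{bmatrix}$.

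Second step, monotonicity and the failure of strictness: form $J+J^\top$ in each case. For $F^{w_1,b}$ (and, I expect, for $F^{w_1,b}_{alt}$ and $F^{w_1,b}_{unr}$ once their Jacobians are in hand) this is $\mathbf 0$; for $F^{w_1,b}_{reg}$ with $\eta\ge 0$ it is $\operatorname{diag}(0,4\eta)$. In every case $J+J^\top\succeq 0$ (monotone) but is singular, with the $w_1$-axis in its kernel, so choosing $x,x'$ that differ only in the $w_1$ coordinate yields $\langle F(x)-F(x'),x-x'\rangle=0$ and strict monotonicity fails. Along the way I would also check that the regularizer leaves the subsystem equilibrium at $(w_1,b)=(0,\mu)$: the regularized component of $F_D$ is $b-\mu$, which vanishes there, so $\nabla_G\|F_D\|^2=2(\nabla_G F_D)^\top F_D=0$ at the fixed point.

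Third step, the Hurwitz dichotomy: factor the characteristic polynomial of each $J$. For $F^{w_1,b}$ it is $\lambda^2+1$, roots $\pm i$, real parts $0$, so $J$ is not Hurwitz; I would use Proposition~\ref{unrolledalt} to obtain the analogous conclusion for $F^{w_1,b}_{alt}$ and $F^{w_1,b}_{unr}$ (their Jacobians in this subsystem should again have purely imaginary spectrum). For $F^{w_1,b}_{reg}$ the characteristic polynomial is $\lambda^2-2\eta\lambda+1$ with roots $\eta\pm\sqrt{\eta^2-1}$: for small $\eta>0$ these are complex with real part $\eta>0$, and for $\eta\ge 1$ they are two positive reals, so $J^{w_1,b}_{reg}$ is Hurwitz (so ``any $\eta$'' is read as any positive regularization coefficient). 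This mirrors Proposition~\ref{Flinw1bstrong_body}, with the telling difference that $F_{reg}$ regularizes only the generator update, so its symmetric part gains a positive entry in the $b$-direction alone and stays singular in the $w_1$-direction — which is precisely why $F_{reg}$ lands at ``monotone and Hurwitz'' rather than ``strongly monotone''.

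The main obstacle is the first step for $F^{w_1,b}_{alt}$ and $F^{w_1,b}_{unr}$: one must be careful about the update order in alternating descent and about differentiating through the unrolled inner loop, and in particular must confirm that in \emph{this} subsystem these corrections contribute no net diagonal term to $J$ — otherwise, as the $\begin{bmatrix}0&1\\-1&2\eta\end{bmatrix}$ computation for $F_{reg}$ shows, one would wrongly conclude that $F_{alt}$ and $F_{unr}$ are Hurwitz too. For that bookkeeping I would rely entirely on Proposition~\ref{unrolledalt}; the remainder (forming $J+J^\top$, exhibiting a kernel vector, factoring a quadratic) is routine.
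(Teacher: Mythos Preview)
Your overall strategy matches the paper's: exploit affineness, reduce monotonicity to $J+J^\top\succeq 0$, and read Hurwitz from the spectrum of the constant $J$. Your handling of $F^{w_1,b}$ and $F^{w_1,b}_{reg}$ is correct and coincides with Proposition~\ref{w1bmonotone}.

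The gap is your expectation for $F^{w_1,b}_{alt}$ and $F^{w_1,b}_{unr}$. You anticipate that ``these corrections contribute no net diagonal term to $J$'' and that the spectrum is ``purely imaginary.'' In Proposition~\ref{unrolledalt} the paper computes the maps explicitly and finds otherwise: unrolling gives $F^{w_1,b}_{unr}=[\,b-\mu,\;\rho\Delta k(b-\mu)-w_1\,]^\top$ with $J^{unr}=\begin{bmatrix}0&1\\-1&\rho\Delta k\end{bmatrix}$, and alternating descent gives $F^{w_1,b}_{alt}=[\,b-\mu+\rho w_1,\;-w_1\,]^\top$ with $J^{alt}=\begin{bmatrix}\rho&1\\-1&0\end{bmatrix}$. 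Both carry a nonzero diagonal entry. The symmetrizations are $\operatorname{diag}(0,\rho\Delta k)$ and $\operatorname{diag}(\rho,0)$, so your monotone-but-not-strict argument survives with this correction. But your Hurwitz argument does not: $J^{unr}$ has exactly the form of $J^{w_1,b}_{reg}$ with $2\eta$ replaced by $\rho\Delta k$, and by your own characteristic-polynomial computation its eigenvalues have strictly positive real part whenever $\rho\Delta k>0$. The worry you flagged (``otherwise \ldots one would wrongly conclude that $F_{alt}$ and $F_{unr}$ are Hurwitz too'') is in fact realized.

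The paper's one-line justification at the end of Proposition~\ref{unrolledalt}---``both Jacobians exhibit a zero eigenvalue''---refers to the \emph{symmetrized} Jacobians, not to the raw $J^{unr},J^{alt}$; the distinction implicitly being drawn in the statement is that $\eta$ is a fixed regularization constant while $\rho=\rho_k$ is a step size, so $J^{unr}$ and $J^{alt}$ degenerate to the non-Hurwitz $\begin{bmatrix}0&1\\-1&0\end{bmatrix}$ along the schedule. That is a different argument from the one you sketched, and it is worth noting that at any \emph{fixed} $\rho>0$ the spectral computation gives the same conclusion for $J^{unr}$ and $J^{alt}$ as for $J^{w_1,b}_{reg}$.
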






\section{Analysis of the Full System}
\label{sec:fullanalysis}
Here, we analyze the maps for each of the algorithms discussed above, testing for quasimonotonicity (the weakest monotone property) and whether the Jacobian is Hurwitz for the full LQ-GAN system.

Proving quasiconvexity of 4th degree polynomials has been proven strongly NP-Hard~\cite{ahmadi2013np}. This implies that proving monotonicity of 3rd degree maps is strongly NP-Hard. The original $F$ contains quadratic terms suggesting it may welcome a quasimonotone analysis, however, the remaining maps all contain 3rd degree terms. Unsurprisingly, analyzing quasimonotonicity for $F_{lin}$ represents the most involved of our proofs given in Appendix~\ref{subsec:lincombnotquasi}.

The definition stated in~(\ref{def:quasimon}) suggests checking the truth of an expression depending on four separate variables: $x$, $x'$, $y$, $y'$. While we used this definition for certain cases, the following alternate requirements proposed in~\cite{crouzeix1996criteria} made the complete analysis of the system tractable. We restate simplified versions of the requirements we leveraged for convenience.

Consider the following conditions:
\begin{enumerate}[label=(\Alph*)]
    \item For all $x \in \mathcal{X}$ and $v \in \mathbb{R}^n$ such that $v^\top F(x) = 0$ we have $v^\top J(x) v \ge 0$.
    \item For all $x \in \mathcal{X}$ and $x^* \in \mathcal{X}$ such that $F(x^*)=0$, we have that $F(x)^\top (x-x^*) \ge 0$.
\end{enumerate}
\begin{theorem}[\cite{crouzeix1996criteria}, Theorem 3]
Let $F: \mathcal{X} \rightarrow \mathbb{R}^n$ be differentiable on the open convex set $\mathcal{X} \subset \mathbb{R}^n$.
\begin{enumerate}[leftmargin=*,label=(\roman*)]
    \item $F$ is quasimonotone on $\mathcal{X}$ only if (A) holds, i.e. (A) is necessary but not sufficient.
    \item $F$ is pseudomonotone on $\mathcal{X}$ if (A) and (B) hold, i.e. (A) and (B) are sufficient but not necessary.
\end{enumerate}
\end{theorem}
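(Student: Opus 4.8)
The plan is to prove both parts by restricting $F$ to line segments, which turns the generalized‑monotonicity conditions into sign conditions on the scalar function $g(t) = \langle F(x_0 + t(x_1 - x_0)),\, x_1 - x_0 \rangle$ with derivative $g'(t) = (x_1-x_0)^\top J(x_0 + t(x_1-x_0))(x_1-x_0)$. For part (i) I would prove the contrapositive: suppose (A) fails, so there exist $x \in \mathcal{X}$ and $v$ with $v^\top F(x) = 0$ but $v^\top J(x) v < 0$. Since $\mathcal{X}$ is open, $h(t) = v^\top F(x + tv)$ is defined for small $|t|$, with $h(0) = 0$ and $h'(0) = v^\top J(x) v < 0$; hence $h(t_1) > 0$ for some small $t_1 < 0$ and $h(t_2) < 0$ for every small $t_2 > 0$. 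For such a pair, $\langle F(x + t_1 v),\, (x + t_2 v) - (x + t_1 v)\rangle = (t_2 - t_1)h(t_1) > 0$ while $\langle F(x + t_2 v),\, (x + t_2 v) - (x + t_1 v)\rangle = (t_2 - t_1)h(t_2) < 0$, contradicting quasimonotonicity. So quasimonotonicity implies (A).

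For part (ii), assume (A) and (B) and take $x_0, x_1 \in \mathcal{X}$ with $\langle F(x_0),\, x_1 - x_0\rangle \ge 0$; the goal is $\langle F(x_1),\, x_1 - x_0\rangle \ge 0$. If $F(x_0) = 0$, condition (B) with $x^* = x_0$ gives this immediately, so assume $F(x_0) \ne 0$. The segment $x(t) = x_0 + t(x_1 - x_0)$, $t \in [0,1]$, lies in $\mathcal{X}$ by convexity and $g(0) \ge 0$. Suppose for contradiction that $g(1) < 0$, and let $t^* = \sup\{t \in [0,1] : g(t) \ge 0\}$; then $0 \le t^* < 1$, $g(t^*) = 0$ (this set is closed), and $g < 0$ on $(t^*, 1]$. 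Applying (A) at $x(t^*)$ with the direction $x_1 - x_0$ — legitimate since $g(t^*) = (x_1-x_0)^\top F(x(t^*)) = 0$ — gives $g'(t^*) \ge 0$, so $g$ cannot cross this zero strictly downward; therefore $g'(t^*) = 0$ and $g$ touches zero tangentially before turning negative.

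This tangential‑touch case is the crux, and it is exactly why (A) alone does not force quasimonotonicity. To rule it out I would push the expansion of $g$ near $t^*$ past first order and invoke (B): the degeneracy $g(t^*) = g'(t^*) = 0$, together with (A) holding along the segment (and along nearby slightly translated segments chosen to preserve the endpoint signs of $g$), should force either a direct contradiction with (A) or the vanishing of $F$ at some point $x^\circ$ on the segment; in the latter case (B) applied with $x^* = x^\circ$ and $x = x_1$ gives $\langle F(x_1),\, x_1 - x^\circ\rangle \ge 0$, which, by tracking the sign of $g$ between $x^\circ$ and $x_1$, contradicts $g(1) < 0$. The main obstacle is this last step: the careful bookkeeping of the zero set of $g$ and the localization of a critical point of $F$ at which (B) can be applied is where essentially all the difficulty resides.
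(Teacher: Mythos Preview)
The paper does not prove this theorem; it is quoted verbatim from \cite{crouzeix1996criteria} (their Theorem~3) and used as a black box throughout the analysis, so there is no in-paper proof to compare your attempt against.

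On your attempt itself: part~(i) is the standard contrapositive argument and is correct as written. Part~(ii) has the right one-dimensional reduction and the right first move---locating $t^*$ with $g(t^*)=0$, $g<0$ on $(t^*,1]$, and invoking (A) to get $g'(t^*)\ge 0$---but the step you yourself flag as ``the main obstacle'' is a real gap, not just bookkeeping. From $g(t^*)=g'(t^*)=0$ you learn only that $v=x_1-x_0$ is orthogonal to $F(x(t^*))$ and that $v^\top J(x(t^*))v=0$; there is no reason to expect $F(x(t^*))=0$, and perturbing to ``nearby slightly translated segments'' does not obviously manufacture a zero of $F$ either. Since the main-body condition (B) is informative \emph{only} at zeros of $F$, until you can produce such a zero (B) gives you nothing, and the argument stalls.

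The paper's appendix makes clear what is actually going on: the main-body (B) is deliberately a strengthening of Crouzeix's original condition (C$'$), so the intended logical route is (A)$+$(B) $\Rightarrow$ (A)$+$(C$'$) $\Rightarrow$ pseudomonotone, with the second arrow being Crouzeix's Theorem~3 in its original form. Crouzeix's condition (C$'$) is tailored precisely to handle the degenerate tangential case $g(t^*)=g'(t^*)=0$ you isolate; trying to push that case through using only the coarser (B) along a single segment, as you attempt, does not appear to work without essentially redoing Crouzeix's finer argument.
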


Condition (A) says that for a map to be quasimonotone, the map must be monotone along directions orthogonal to the vector field. In addition to this, condition (B) says that for a map to be pseudomonotone, the dynamics, $-F$, must not be leading away from the equilibrium anywhere.

Equipped with these definitions, we can conclude the following:
\begin{proposition}
\label{nonequasi}
None of the maps, including $F_{lin}$ with any setting of coefficients, is quasimonotone for the full LQ-GAN. See Corollary~\ref{lincombnotquasi} and Propositions~\ref{Fregnotquasi} through~\ref{Faltnotquasi}.
\end{proposition}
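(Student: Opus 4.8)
The plan is to apply part (i) of the quoted theorem of \cite{crouzeix1996criteria} contrapositively: a differentiable map $G$ with Jacobian $J_G$ fails to be quasimonotone on the open convex feasible set $\mathcal{X}=\{a>0\}\times\mathbb{R}^3$ as soon as there exist a feasible $x$ and a nonzero $v$ with $v^\top G(x)=0$ and $v^\top J_G(x)v<0$. So for each candidate map I only need to exhibit one such pair $(x,v)$. Whenever I can arrange $G_i(x)=0$ for a coordinate $i$, the choice $v=e_i$ reduces the second condition to the scalar inequality $\partial G_i/\partial x_i\,(x)<0$, which can be checked by inspection.

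For the bare map $F$ this already suffices: setting $w_1=-2w_2b$ forces $F_4(x)=0$, and $\partial F_4/\partial b=-2w_2<0$ for any $w_2>0$. Each of $F_{reg}$, $F_{unr}$, $F_{alt}$ is $F$ plus a polynomial perturbation whose fourth component can again be zeroed by choosing $w_1$; the derivative $\partial G_4/\partial b$ is then $-2w_2$ plus extra terms (for $F_{reg}$, explicitly $8\eta b^2+4\eta(a^2+b^2-\sigma^2-\mu^2)+2\eta$) that are bounded independently of $w_2$, so taking $w_2$ large and positive makes the derivative negative. This gives the per-map statements cited in Propositions~\ref{Fregnotquasi}--\ref{Faltnotquasi}.

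The substantive case is $F_{lin}=(\rho I+\beta J^\top-\gamma J)F$, where the counterexample must be produced for \emph{every} coefficient triple, and I would split on the coefficients. At the unique equilibrium $x^\ast=(0,0,\sigma,\mu)$ we have $F(x^\ast)=0$, so $F_{lin}(x^\ast)=0$ and the product rule gives $J_{lin}(x^\ast)=(\rho I+\beta J(x^\ast)^\top-\gamma J(x^\ast))J(x^\ast)$; since $J(x^\ast)$ is skew-symmetric with Pfaffian $-2\sigma\neq 0$, hence invertible, one computes $\mathrm{sym}\big(J_{lin}(x^\ast)\big)=(\beta+\gamma)\big(-J(x^\ast)^2\big)$ with $-J(x^\ast)^2\succ 0$. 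Thus when $\beta+\gamma<0$ this symmetric part is negative definite and (A) fails at $x^\ast$. For $\beta+\gamma\ge 0$ I would instead use the one-parameter family $x_0(w_2)=(w_2,-2w_2\mu,\sigma,\mu)$ on the rotation axis $\{a=\sigma,b=\mu\}$: a short computation from $F(x_0)=(0,0,-2w_2\sigma,0)$ shows $F_{lin}(x_0)=\big(4w_2\sigma^2(\beta+\gamma),\,0,\,\sigma(4(\beta-\gamma)w_2^2-2\rho w_2),\,0\big)$, so its second and fourth components vanish identically in $w_2$ and both $e_2,e_4$ lie in $F_{lin}(x_0)^\perp$. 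The test $v=e_2$ gives $\partial(F_{lin})_2/\partial w_1=\beta+\gamma$ (uninformative here), while $v=e_4$ gives $\partial(F_{lin})_4/\partial b=4(\beta-\gamma)w_2^2-2\rho w_2+(4\mu^2+1)(\beta+\gamma)$, a quadratic in the free parameter $w_2$ that attains negative values --- killing quasimonotonicity --- whenever $\beta<\gamma$, or $\beta=\gamma$ with $\rho\neq 0$, or $\beta>\gamma$ with $\rho^2>4(\beta^2-\gamma^2)(4\mu^2+1)$.

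What is left --- the ``benign preconditioner'' regime, roughly $\beta\ge\gamma$, $\beta+\gamma\ge0$, and $\rho$ small, which contains \emph{Crossing-the-Curl} itself ($\rho=0,\beta=\gamma=\tfrac12$) and the consensus map with large $\eta$ --- is the crux and I expect it to be the main obstacle. Here $\mathrm{sym}(J_{lin})$ is positive semidefinite at $x^\ast$ and the coordinate-direction tests at $x_0(w_2)$ are inconclusive, so one must use a non-coordinate direction inside the three-dimensional subspace $F_{lin}(x_0)^\perp$ --- spanned by $e_2$, $e_4$, and $(\,\sigma(4(\beta-\gamma)w_2^2-2\rho w_2),\,0,\,-4w_2\sigma^2(\beta+\gamma),\,0)$ --- and show that $v^\top\mathrm{sym}(J_{lin}(x_0))v$ restricted to it is indefinite for $|w_2|$ large. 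For $F_{cc}$, for instance, $F_{cc}(x_0)=(4w_2\sigma^2,0,0,0)$ and the bottom-right $3\times3$ block of $\mathrm{sym}(J_{cc}(x_0))$ has determinant $4\sigma^2(1-w_2^2)$, which is negative once $|w_2|>1$ and therefore forces a negative eigenvalue. The general residual argument amounts to showing that the analogous restricted determinant --- a polynomial in $w_2$ whose top-degree coefficient is negative --- eventually becomes negative for every admissible $(\rho,\beta,\gamma)$; organizing this so that the case split exhausts the coefficient plane without gaps while staying within elementary polynomial manipulations is precisely what makes the $F_{lin}$ proof the most involved in the paper. Assembling the pieces gives Proposition~\ref{nonequasi}, modulo the degenerate exception $\rho=\beta=\gamma=0$ where $F_{lin}\equiv0$ is vacuously quasimonotone and is understood to be excluded.
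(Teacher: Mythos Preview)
Your approach for $F$, $F_{reg}$, $F_{unr}$, $F_{alt}$ via the coordinate test $v=e_4$ is correct and tidier than the paper's route, which instead restricts to the $(w_2,a)$-subsystem (setting $w_1=b=\mu=0$) and uses a $v$ obtained by rotating $F$ by ninety degrees in that plane.

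For $F_{lin}$ your organization is genuinely different from the paper's. The paper does not split on the sign of $\beta+\gamma$ and then attack the nonnegative half-space all at once; it first pins down $\beta+\gamma=0$ (Lemmas~\ref{bpgpos}--\ref{bpgneggehalf}), then $\alpha=0$ (Lemmas~\ref{bl0}--\ref{ag0}), then $\beta=\gamma=0$ (Lemmas~\ref{bl02}--\ref{bg02}). Crucially, most of these lemmas do \emph{not} use condition~(A) at all: Lemmas~\ref{bpgpos}, \ref{bpgneggehalf}, and \ref{bl0}--\ref{bg02} use the two-point definition~(\ref{def:quasimon}) directly, exhibiting $y,x$ with $\langle F_{lin}(y),x-y\rangle>0$ and $\langle F_{lin}(x),x-y\rangle<0$. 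Only Lemma~\ref{bpgneglehalf} uses condition~(A), and its test points lie in the $(w_2,a)$- and $(a,b)$-subspaces with $a\neq\sigma$, in particular sending $a\to 0^+$.

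Your residual ``benign preconditioner'' regime is precisely the region handled by the paper's Lemmas~\ref{bpgneglehalf}--\ref{bg02}, and here your sketch is a genuine gap rather than a proof. You stay on the axis $a=\sigma,\,b=\mu$ for all $w_2$ and assert that the restricted quadratic form becomes indefinite for large $|w_2|$, but you have not computed $\mathrm{sym}(J_{lin}(x_0))$ on $F_{lin}(x_0)^\perp$ in general, and the claim that its determinant is ``a polynomial in $w_2$ whose top-degree coefficient is negative'' is unverified --- the third basis vector of $F_{lin}(x_0)^\perp$ itself depends polynomially on $w_2$, so even the degree of that determinant is not obvious, let alone its sign structure uniformly over the residual parameter region. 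The paper sidesteps this by leaving the $a=\sigma$ axis: its decisive test points have $a=c\sigma$ with $c\neq 1$ (e.g.\ $c\to 0^+$, $c=\tfrac12$, $c=2$, $c=3$), and by exploiting the two-point definition, which is logically independent of condition~(A). If you want to complete your proof along your own lines you will need either to carry out the restricted-determinant computation in full --- expect several pages of polynomial algebra with a nontrivial case split --- or to borrow the paper's device of varying $a$ and falling back on the two-point test.
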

\begin{proposition}
\label{nonehurwitz}
None of the maps, including $F_{lin}$ with any setting of coefficients, has a Hurwitz Jacobian for the full LQ-GAN. See Propositions~\ref{lincombnothurwitz} and~\ref{Fregnotquasi} through~\ref{Faltnotquasi}.
\end{proposition}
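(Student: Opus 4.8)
\emph{Plan.} Since ``Hurwitz'' means every eigenvalue has strictly positive real part, to show a map $M$ fails to have a Hurwitz Jacobian on the full LQ--GAN it suffices to pin down one point $\bar x$ at which $J_M(\bar x)$ has an eigenvalue with non-positive real part. I would work with two such points: the equilibrium $x^* = (0,0,\sigma,\mu)$ (Proposition~\ref{multivarsoln}) and the origin $x_0 = (0,0,0,0)$. For the bare map $F$ the equilibrium already does the job: with $w_2 = 0$ the two diagonal $-2w_2$ entries of $J$ vanish and all that is left is the antisymmetric coupling of $(w_2,a)$ with $(w_1,b)$, so $J(x^*)^\top = -J(x^*)$, its spectrum is purely imaginary, and $F$ is not Hurwitz.

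\emph{Why the equilibrium is not enough for the rest.} For the preconditioned family $F_{lin} = (\rho I + \beta J^\top - \gamma J)F$ — and likewise for $F_{reg}$ — the Jacobian at $x^*$ can be Hurwitz: since $F(x^*) = 0$, the product rule gives $J_{lin}(x^*) = \big(\rho I + \beta J(x^*)^\top - \gamma J(x^*)\big)J(x^*) = \rho J(x^*) - (\beta+\gamma)J(x^*)^2$ by skew-symmetry, whose symmetric part $-(\beta+\gamma)J(x^*)^2$ is positive definite once $\beta+\gamma > 0$ (this is exactly the local stability underlying \emph{Crossing-the-Curl}). So for these maps the refutation has to be witnessed away from $x^*$.

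\emph{The origin settles everything, uniformly in the coefficients.} At $x_0$ three facts line up: (i) $\partial F/\partial w_2 = (0,0,-2a,-2b)$ vanishes; (ii) the third and fourth entries of $F(x_0)$ are zero (they are $-2w_2 a$ and $-2w_2 b - w_1$); and (iii) the only $w_2$-dependence of the preconditioner $\rho I + \beta J^\top - \gamma J$ lives in its $(3,3)$ and $(4,4)$ diagonal slots. Combining (iii) with (ii) kills $(\partial_{w_2}M)F(x_0)$, and (i) kills $M\,\partial_{w_2}F|_{x_0}$, so $\partial_{w_2}F_{lin}|_{x_0} = 0$; hence $J_{lin}(x_0)$ has a zero $w_2$-column and $0$ is an eigenvalue of $J_{lin}(x_0)$ for \emph{every} $(\rho,\beta,\gamma)$. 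This one computation covers $F_{cc}$, $F_{\eta cc}$, $F_{eg}$, $F_{con}$ (Proposition~\ref{lincombnothurwitz}). The same vanishing-column phenomenon recurs for $F_{reg}$, $F_{unr}$, $F_{alt}$ (Propositions~\ref{Fregnotquasi} through~\ref{Faltnotquasi}): their discriminator block and their norm-of-gradient / unrolling corrections are $w_2$-free, and their generator block contributes only $\partial_{w_2}(-2w_2 a,\,-2w_2 b - w_1) = (-2a,-2b)$, zero at $x_0$.

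\emph{Main obstacle.} The delicate point is to dispatch the whole family $F_{lin}$ in one stroke rather than case-splitting on the sign of $\beta+\gamma$ and on whether the preconditioner degenerates somewhere; the origin argument is set up precisely to be coefficient-free, so the genuine ``work'' is the per-map bookkeeping for $F_{reg}$, $F_{unr}$, $F_{alt}$ — checking that their algorithm-specific corrections neither reinstate a nonzero $w_2$-column nor push the offending eigenvalue into the open right half-plane. The more laborious alternative I would avoid is to locate the failure at an honest fixed point of the modified map: that forces one first to exhibit spurious fixed points (where $M(x)$ is singular with $F(x) \in \ker M(x)$) for all coefficient choices, which reduces to a real-root existence argument for a cubic in $b$ and is considerably more involved.
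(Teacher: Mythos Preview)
Your origin argument has a genuine gap: the point $x_0=(0,0,0,0)$ lies outside the feasible set, since the LQ-GAN is posed with the constraint $a>0$ (see the formulation in~(\ref{eqn:Vlqgan1d})). The zero-$w_2$-column phenomenon you describe is specific to $a=b=0$ and does not survive any perturbation into the interior: at $(w_2,w_1,a,b)=(0,0,a,0)$ with $a>0$ the $w_2$-column of $J_{lin}$ is $[\,4(\beta+\gamma)a^2,\,0,\,-2\alpha a,\,0\,]^\top$, which is nonzero whenever $\alpha\neq 0$ or $\beta+\gamma\neq 0$. Worse, your coefficient-free approach cannot be rescued by continuity: for $\beta+\gamma<0$ and $|\alpha|$ sufficiently large, the $(w_2,a)$-block of $J_{lin}$ at $(0,0,a,0)$ has trace $2(\beta+\gamma)(5a^2-\sigma^2)>0$ and positive determinant for all small $a>0$, so the Jacobian \emph{is} Hurwitz there. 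The actual failure for that sign of $\beta+\gamma$ occurs only once $a>\sigma/\sqrt{5}$, i.e., far from the origin.

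The paper's route is exactly the case-split you said you wanted to avoid. On the $(w_2,a)$-slice with $w_2=0$ (and $w_1=0$, $b=\mu$, where the full Jacobian is block-diagonal), the trace of the $(w_2,a)$-block of $J_{lin}$ equals $2(\beta+\gamma)(5a^2-\sigma^2)$: pick $a<\sigma/\sqrt{5}$ if $\beta+\gamma>0$, pick $a>\sigma/\sqrt{5}$ if $\beta+\gamma<0$, and for $\beta+\gamma=0$ the eigenvalues have zero real part. The remaining maps $F_{reg},F_{unr},F_{alt}$ are handled by analogous trace/determinant sign checks at interior points (e.g., $w_2>0$ with $a<\sigma/\sqrt{3}$ for $F_{reg}$). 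So the ``laborious alternative'' you dismissed is in fact necessary; there is no single witness point that works uniformly across coefficients.
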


\subsection{Learning the Variance: The ($w_2,a$)-Subsystem}
\label{subsec:var}
Results from the previous section suggest that we cannot solve the full LQ-GAN, but given that we can solve the ($w_1,b$)-subsystem, we shift focus to the ($w_2,a$)-subsystem assuming the mean has already been learned exactly, i.e., $b=\mu$. We will revisit this assumption later.

We can conclude the following for the ($w_2,a$)-subsystem:
\begin{proposition}
$F^{w_2,a}$, $F^{w_2,a}_{reg}$, $F^{w_2,a}_{unr}$, $F^{w_2,a}_{alt}$, and $F^{w_2,a}_{con}$ are not quasimonotone. Also, their Jacobians are not Hurwitz. See Propositions~\ref{Fnotquasi} through~\ref{consensusnotquasi}.
\end{proposition}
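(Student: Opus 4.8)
The plan is to work out the $(w_2,a)$-subsystem map explicitly under the assumption $b=\mu$ and then certify failure of quasimonotonicity and of the Hurwitz property by exhibiting explicit counterexample points, one map at a time. Setting $b=\mu$ in the full $F$, the relevant two coordinates become $F^{w_2,a} = [\,a^2 - \sigma^2,\ -2w_2 a\,]^\top$, with Jacobian $J^{w_2,a} = \begin{bmatrix} 0 & 2a \\ -2a & -2w_2\end{bmatrix}$, and the equilibrium is $(w_2,a) = (0,\sigma)$. I would first derive the analogous closed forms for $F^{w_2,a}_{reg}$, $F^{w_2,a}_{unr}$, $F^{w_2,a}_{alt}$, and $F^{w_2,a}_{con}$ by specializing the general definitions in Section~\ref{sec:crosscurl} (e.g.\ $F_{con} = (I + \eta J^\top)F$, and $F_{reg}$ adds $\eta\nabla_a\|F_D\|^2$ to the generator coordinate), since each subsequent argument only needs these explicit low-degree polynomial maps in two variables.

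For the non-quasimonotone claims, the natural tool is the \emph{necessary} condition (A) from the Crouzeix–Ferland theorem quoted just above: if I can find a single point $x\in\mathcal{X}$ and a direction $v$ with $v^\top F(x)=0$ but $v^\top J(x) v < 0$, then $F$ is not quasimonotone. For the bare map $F^{w_2,a}$ this is immediate: at a point with $a>0$ small and $w_2$ of the right sign, the orthogonal direction $v \propto [\,2w_2a,\ a^2-\sigma^2\,]$ plugged into the quadratic form $v^\top J v = -2w_2 v_2^2$ can be made negative by choosing $w_2>0$; one just has to check feasibility ($a>0$) and that the chosen point is not the equilibrium. For the four derived maps I would repeat the same recipe, choosing the test point near the equilibrium or along an axis where the algebra collapses (e.g.\ $w_2=0$ or $a=\sigma$), so that $v^\top J v$ reduces to a single monomial whose sign I can control with a free parameter like $\eta$ or $\sigma$. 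In a couple of cases condition (A) may hold and I would instead fall back on the raw four-variable definition~(\ref{def:quasimon}), picking $x'$ near the equilibrium and $x$ a short step along $-F(x')$ to violate the implication directly.

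For the non-Hurwitz claims I would evaluate each Jacobian at the equilibrium $(0,\sigma)$ (or, where a map's fixed point has shifted, at its own fixed point) and exhibit an eigenvalue with nonpositive real part. For $F^{w_2,a}$ at $(0,\sigma)$ the Jacobian is $\begin{bmatrix}0 & 2\sigma\\ -2\sigma & 0\end{bmatrix}$, whose eigenvalues are $\pm 2\sigma i$ — purely imaginary, hence not Hurwitz (pure rotation, exactly the cycling phenomenon the paper emphasizes). For the regularized, unrolled, alternating, and consensus variants I would compute the characteristic polynomial of the $2\times2$ Jacobian at the fixed point and show either that the trace is $\le 0$ while the determinant stays positive (giving eigenvalues with nonpositive real part) or that the determinant is $\le 0$ (a real nonpositive eigenvalue); in the consensus case in particular the skew-symmetric preconditioning tends only to perturb the imaginary axis at first order, so I expect the real parts to remain zero or to pick up a sign I can pin down.

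The main obstacle I anticipate is bookkeeping rather than conceptual: correctly specializing each algorithm's map to the constrained subsystem (especially making sure the $b=\mu$ substitution is applied consistently to both the map and its Jacobian, and that $F_{reg}$/$F_{unr}$'s asymmetric treatment of the generator and discriminator coordinates is handled right), and then, for each of the five maps, hunting for a counterexample point that makes the relevant quadratic form or characteristic polynomial degenerate enough to read off a sign by hand. Because quasimonotonicity of cubic maps is NP-hard in general (as noted in the excerpt), I would lean on condition (A) wherever possible to keep each refutation to a one-line sign check, and only resort to the full definition as a last resort; assembling the five cases into the cited Propositions~\ref{Fnotquasi} through~\ref{consensusnotquasi} is then just organizational.
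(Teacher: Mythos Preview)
Your plan for the non-quasimonotonicity half is essentially the paper's approach: for each map, take $v$ as the $90^\circ$ rotation of $F$ (so $v^\top F=0$ automatically) and exhibit a point where $v^\top J v<0$, violating condition~(A). Your computation $v^\top J^{w_2,a} v = -2w_2(a^2-\sigma^2)^2$ for the bare map is correct and in fact cleaner than what the paper does there (the paper uses the raw definition~(\ref{def:quasimon}) for $F^{w_2,a}$ and reserves condition~(A) for the other four).

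The Hurwitz half has a real gap. You propose to evaluate each Jacobian at the equilibrium $(0,\sigma)$ and find an eigenvalue with nonpositive real part. That works for the bare $F^{w_2,a}$ (eigenvalues $\pm 2\sigma i$), but it fails for every one of the other four maps: at $(0,\sigma)$ you get, for example,
\[
J^{w_2,a}_{reg}\big|_{(0,\sigma)} = \begin{bmatrix}0 & 2\sigma\\ -2\sigma & 8\eta\sigma^2\end{bmatrix},\qquad
J^{w_2,a}_{con}\big|_{(0,\sigma)} = \begin{bmatrix}4\beta\sigma^2 & 2\sigma\\ -2\sigma & 4\beta\sigma^2\end{bmatrix},
\]
and similarly for $F_{unr}$, $F_{alt}$; in each case the trace is strictly positive and the determinant is strictly positive, so the Jacobian \emph{is} Hurwitz at the equilibrium. (Your intuition that the consensus preconditioner ``only perturbs the imaginary axis at first order'' is also off here, since $J^\top$ is not skew-symmetric for this subsystem.) The paper's claim is that the Jacobian is not Hurwitz \emph{somewhere in the feasible set}, and its proofs locate that failure away from $(0,\sigma)$: e.g.\ for $F_{reg}$ the trace $-2w_2 - 4\eta(\sigma^2-3a^2)$ is negative once $w_2>0$ and $a<\sigma/\sqrt{3}$; for $F_{con}$ the trace is negative at $w_2=0$, $a<\sigma/\sqrt{5}$; for $F_{unr}$ the determinant at $a=\sigma/\sqrt{3}$ reduces to $-2w_2$. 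So the fix is simple but necessary: for each derived map, look for a point with negative trace (or negative determinant) rather than restricting attention to the fixed point.
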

\begin{proposition}
$F^{w_2,a}_{eg}$ and $F^{w_2,a}_{cc}$ are pseudomonotone which implies an $\mathcal{O}(1/\sqrt{k})$ stochastic convergence rate. See Propositions~\ref{Fegpseudo} and~\ref{Fccpseudow2a}. Their Jacobians are not Hurwitz. See Proposition~\ref{lincombnothurwitz}.
\end{proposition}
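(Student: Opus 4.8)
The plan is to push everything down to the two‑dimensional $(w_2,a)$‑slice and then apply the criteria of~\cite{crouzeix1996criteria} quoted above. Substituting $b=\mu$ kills the $w_1$‑component of $F$, so $w_1$ is a frozen spectator and the subsystem is the planar field $F^{w_2,a}=[a^2-\sigma^2,\,-2w_2a]^\top$ on the open convex set $\mathcal{X}=\{(w_2,a):a>0\}$, with Jacobian $J^{w_2,a}=\left(\begin{smallmatrix}0&2a\\-2a&-2w_2\end{smallmatrix}\right)$ and unique zero $x^\ast=(0,\sigma)$. From these I form $F^{w_2,a}_{cc}=-\tfrac12\bigl(J^{w_2,a}-(J^{w_2,a})^\top\bigr)F^{w_2,a}=[4w_2a^2,\,2a(a^2-\sigma^2)]^\top$ and $F^{w_2,a}_{eg}=(I-\eta J^{w_2,a})F^{w_2,a}$, compute their Jacobians, and check that $x^\ast$ remains the unique zero in $\mathcal{X}$ (for $F_{eg}$ this needs $\eta$ small enough that no spurious zero enters $\mathcal{X}$). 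Since pseudomonotonicity and the Hurwitz property are invariant under multiplying the whole field by a positive scalar, the $-\tfrac12$ in $F_{cc}$ is immaterial; note also that both $F^{w_2,a}_{cc}$ and $F^{w_2,a}_{eg}$ are instances of the family $F_{lin}$.

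Next I verify conditions (A) and (B), which together yield pseudomonotonicity. For $F^{w_2,a}_{cc}$ this is clean. Because $J-J^\top$ is skew‑symmetric, $F^{w_2,a}_{cc}\perp F^{w_2,a}$, so away from $x^\ast$ the one‑dimensional orthogonal complement of $F^{w_2,a}_{cc}(x)$ is spanned by $v=F^{w_2,a}(x)=(a^2-\sigma^2,-2w_2a)$; plugging this in, the quadratic form collapses to $v^\top J_{cc}(x)v = 4a^2(a^2-\sigma^2)^2 + 8w_2^2a^2(a^2+\sigma^2)\ge 0$, which is condition (A) (at $x^\ast$ itself one instead checks $J_{cc}(x^\ast)=4\sigma^2 I\succeq 0$). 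For condition (B), $x^\ast$ is the unique zero and $\langle F^{w_2,a}_{cc}(x),x-x^\ast\rangle = 4w_2^2a^2 + 2a(a-\sigma)^2(a+\sigma)\ge 0$ on $\mathcal{X}$. For $F^{w_2,a}_{eg}$ I run the same two checks, but now the field is cubic: its orthogonal complement is spanned by the degree‑$3$ vector $v=((F_{eg})_2,-(F_{eg})_1)$, so $v^\top J_{eg}v$ is a degree‑$8$ bivariate polynomial whose nonnegativity must be certified (e.g.\ by an explicit sum‑of‑squares factorization), and (B) is a separate polynomial positivity statement. Once pseudomonotonicity is in hand, the stochastic $\mathcal{O}(1/\sqrt{k})$ rate is read off from Table~\ref{tab:VIconv} (stochastic extragradient on a pseudomonotone field~\cite{iusem2017extragradient}), using that the polynomial maps are Lipschitz on bounded subsets of $\mathcal{X}$.

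For the Hurwitz claim I exhibit, for each map, a point of $\mathcal{X}$ at which some eigenvalue of the Jacobian has non‑positive real part. The Jacobian $J_{cc}^{w_2,a}=\left(\begin{smallmatrix}4a^2&8w_2a\\0&6a^2-2\sigma^2\end{smallmatrix}\right)$ is upper triangular with eigenvalues $4a^2$ and $6a^2-2\sigma^2$, the latter negative for $a<\sigma/\sqrt{3}$ — e.g.\ at $(w_2,a)=(0,\sigma/2)$ — so $J_{cc}$ is not Hurwitz there. For $F^{w_2,a}_{eg}$, $\operatorname{tr}(J_{eg}^{w_2,a}) = -2w_2 + \eta(10a^2-2\sigma^2-4w_2^2)$ is strictly negative along $w_2=0$ as $a\to 0^+$ (limit $-2\eta\sigma^2<0$), so the two eigenvalues cannot both have positive real part. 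Alternatively, since both maps lie in the family $F_{lin}$, the claim follows from Proposition~\ref{lincombnothurwitz} by restricting its witness to the $(w_2,a)$‑slice.

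The main obstacle is the $F^{w_2,a}_{eg}$ half of the pseudomonotonicity argument. The extragradient correction turns the quadratic field into a cubic one, so the orthogonal quadratic form $v^\top J_{eg}v$ is a high‑degree polynomial with no obvious structure, and condition (B) need not hold on all of $\{a>0\}$: near $a=0$ with $w_2>0$ the dominant term of $\langle F^{w_2,a}_{eg}(x),x-x^\ast\rangle$ is $-w_2(a-\sigma)^2<0$. So part of the work is to pin down the right open convex domain (and step‑size regime) on which both conditions hold and then produce the sum‑of‑squares certificates. The $F^{w_2,a}_{cc}$ case, by contrast, is essentially mechanical once one notices $F_{cc}\perp F$ and that $J_{cc}$ is triangular, which is why it admits the short closed forms above.
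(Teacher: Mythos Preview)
Your $F_{cc}$ argument is correct and essentially identical to the paper's: take $v=F$ (orthogonal because $F_{cc}\perp F$), compute $v^\top J_{cc}v\ge 0$ and $\langle F_{cc},x-x^\ast\rangle\ge 0$, done. The Hurwitz counterexamples are also fine, and your appeal to Proposition~\ref{lincombnothurwitz} is exactly what the paper does.

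The genuine gap is in the $F_{eg}$ half: you have the wrong map. In this paper $F^{w_2,a}_{eg}$ is not the mixed field $(I-\eta J)F$ with a finite $\eta$; it is the pure extragradient preconditioning $-JF$, i.e.\ $F^{w_2,a}_{eg}=[4w_2a^2,\;2a(a^2-\sigma^2)-4w_2^2a]^\top$ (see Table~\ref{tab:maps} and Eq.~(\ref{eqn:Fegw2a})). The paper explicitly shows (in the proposition immediately preceding Proposition~\ref{Fegpseudo}) that $F-\gamma JF$ needs $\gamma\to\infty$ to be pseudomonotone, so the obstacle you identified---the negative dominant term $-w_2(a-\sigma)^2$ near $a=0$---is real for your version and cannot be fixed by shrinking the domain while keeping it open, convex, and containing $x^\ast$. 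That is precisely why the paper drops the identity term.

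With the correct $F^{w_2,a}_{eg}=-JF$, the verification is as mechanical as the $F_{cc}$ case: the orthogonal direction is again (a rotation of) $F_{eg}$ itself, and writing $a=c\sigma$ one gets
\[
v^\top J_{eg}v = 16c^4\sigma^6\bigl((c^2-1)^2\sigma^2 + 2(1+c^2)w_2^2\bigr)\ge 0,\qquad
\langle F_{eg},x-x^\ast\rangle = 2c\sigma^2\bigl((c-1)^2(c+1)\sigma^2+2w_2^2\bigr)\ge 0,
\]
both manifestly nonnegative on $\{a>0\}$. No degree-8 SOS certificate is needed.
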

\begin{proposition}
\label{noflin}
No monotone $F^{w_2,a}_{lin}$ exists. See Proposition~\ref{w2anotmonotone}.
\end{proposition}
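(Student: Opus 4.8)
The plan is to reduce to the condition that monotonicity of $F^{w_2,a}_{lin}$ forces its symmetrized Jacobian to be positive semidefinite everywhere on the feasible set, and then show this is impossible unless all coefficients vanish. First I would write out the subsystem: with the mean already learned ($b=\mu$), the relevant two components of $F$ give $F^{w_2,a}(w_2,a) = [\,a^2 - \sigma^2,\; -2w_2 a\,]^\top$ on $\mathcal{X} = \{(w_2,a) : a > 0\}$, with Jacobian $J = \big[\begin{smallmatrix} 0 & 2a \\ -2a & -2w_2\end{smallmatrix}\big]$. Forming the preconditioner $M = \rho I + \beta J^\top - \gamma J$ and expanding $F^{w_2,a}_{lin} = M\,F^{w_2,a}$ produces a cubic map, whose Jacobian $J_{lin}$ I would differentiate directly and symmetrize as $S = \tfrac12(J_{lin} + J_{lin}^\top)$.

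Second, I would use the implication: $F^{w_2,a}_{lin}$ monotone $\Rightarrow$ $S(x)\succeq 0$ for every $x\in\mathcal{X}$. This follows from the definition of monotonicity by setting $x' = x - t v$, dividing by $t$, and letting $t\to 0^+$, which gives $v^\top J_{lin}(x) v = v^\top S(x) v \ge 0$ for all $v$. A short computation yields $S_{11} = 4a^2(\beta+\gamma)$, $S_{12} = 8 a w_2 \beta$, and $S_{22} = 2(\beta+\gamma)(3a^2 - \sigma^2) - 2\rho w_2 + 4(\beta-\gamma) w_2^2$. From $S_{11}\ge 0$ for all $a>0$ we get $\beta+\gamma\ge 0$. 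The crux is the determinant: $\det S = S_{11}S_{22} - S_{12}^2$, whose coefficient of $w_2^2$ simplifies to $-16a^2(3\beta^2+\gamma^2)$. Fixing any $a>0$ and sending $w_2\to\pm\infty$ forces $3\beta^2+\gamma^2\le 0$, hence $\beta=\gamma=0$. With $\beta=\gamma=0$ we have $S_{22} = -2\rho w_2$, nonnegative for all $w_2\in\mathbb{R}$ only if $\rho=0$. Therefore the only monotone member of the family is $F^{w_2,a}_{lin}\equiv 0$, which is degenerate — its fixed-point set is all of $\mathcal{X}$, so it does not solve VI$(F^{w_2,a},\mathcal{X})$ — establishing that no (nontrivial) monotone $F^{w_2,a}_{lin}$ exists.

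I expect the main obstacle to be purely the bookkeeping: correctly differentiating the cubic map $F^{w_2,a}_{lin}$ to obtain $J_{lin}$ and then extracting the $w_2^2$ coefficient of $\det S$. As a sanity check, I would note that $S_{22}\ge 0$ alone already eliminates most of parameter space — if $\beta<\gamma$ its leading coefficient $4(\beta-\gamma)$ is negative, and if $\beta=\gamma$ then $S_{22}$ is affine in $w_2$ with slope $-2\rho$, forcing $\rho=0$ and reducing $M$ to a skew-symmetric preconditioner handled as above — so the determinant argument is strictly needed only in the regime $\beta>\gamma\ge-\beta$. This is consistent with the companion results that $F^{w_2,a}_{cc}$ (the case $\rho=0,\ \beta=\gamma=\tfrac12$) and $F^{w_2,a}_{eg}$ (the case $\rho=1,\ \beta=0,\ \gamma=\eta>0$) are only pseudomonotone, never monotone, on this subsystem.
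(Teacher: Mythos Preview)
Your proposal is correct and follows the same high-level strategy as the paper---show that monotonicity forces the symmetrized Jacobian $S=\tfrac12(J_{lin}+J_{lin}^\top)$ to be PSD everywhere on $\mathcal{X}$, then derive a contradiction---but the specific principal minor you exploit differs. The paper argues via the \emph{trace} of $S$: at $w_2=0$ the trace equals $2(\beta+\gamma)(5a^2-\sigma^2)$, which changes sign across $a=\sigma/\sqrt{5}$ unless $\beta+\gamma=0$, and then the residual case $\gamma=-\beta$ is handled separately. You instead pin down the \emph{determinant}, observing that its $w_2^2$-coefficient is $-16a^2(3\beta^2+\gamma^2)$, so $\det S\ge 0$ for all $w_2$ forces $\beta=\gamma=0$ in one stroke, after which $S_{22}=-2\rho w_2$ kills $\rho$. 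Your route is actually the tidier of the two: in the residual case $\gamma=-\beta$ the paper's trace reduces to $8\beta w_2^2-2\alpha w_2$, which by itself only forces $\alpha=0$ and $\beta\ge 0$ (the trace is then $8\beta w_2^2\ge 0$), so a tacit appeal to the off-diagonal entry $S_{12}=8\beta w_2 a$ is still needed to eliminate $\beta$. Your determinant computation absorbs that step automatically. One small clarification worth adding when you write it up: make explicit that $\det S\ge 0$ is a necessary consequence of $S\succeq 0$ regardless of whether $S_{11}>0$ or $S_{11}=0$, so the large-$|w_2|$ limit is valid uniformly across the case split $\beta+\gamma>0$ versus $\beta+\gamma=0$.
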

%
These results are not purely theoretical. Figure~\ref{fig:trajcomp} displays trajectories resulting from each of the maps.
\begin{figure}[htbp]
    \centering
    \begin{minipage}{0.5\textwidth}
        \centering
        \includegraphics[scale=0.45]{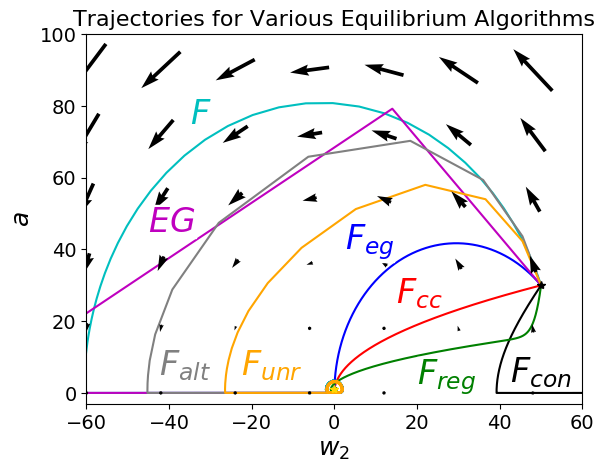}
        \end{minipage}\hfill
    \begin{minipage}{0.45\textwidth}
        \begin{align}
            F^{w_2,a}_{eg} &= \begin{bmatrix}
            4w_2a^2 \\ 2a(a^2-\sigma^2) - 4 w_2^2 a
            \end{bmatrix} \label{eqn:Fegw2a} \\
            &\downarrow \ast\sfrac{1}{4a^2} \nonumber \\
            F^{w_2,a}_{eg'} &= \begin{bmatrix}
            w_2 \\ \frac{a^2-\sigma^2 - 2w_2^2}{2a}
            \end{bmatrix} \label{eqn:Feg'w2a}; \\
            F^{w_2,a}_{cc} &= \begin{bmatrix}
            4w_2a^2 \\ 2a(a^2-\sigma^2)
            \end{bmatrix} \label{eqn:Fccw2a} \\
            &\downarrow \ast\sfrac{1}{4a^2} \nonumber \\
            F^{w_2,a}_{cc'} &= \begin{bmatrix}
            w_2 \\ \frac{a^2-\sigma^2}{2a}
            \end{bmatrix} \label{eqn:Fcc'w2a}
        \end{align}
    \end{minipage}
    \caption[Caption for Trajectories Figure]{(Left) Comparison of trajectories on the ($w_2,a$)-subsystem.\footnotemark[4] The vector field plotted is for the original system, $\dot{x}=-F^{w_2,a}(x)$. Observe how $F^{w_2,a}_{cc}$ takes a more direct route to the equilibrium. (Right) Maps derived after rescaling $F^{w_2,a}_{cc}$ and $F^{w_2,a}_{eg}$.}
    \label{fig:trajcomp}
\end{figure}
\footnotetext[4]{ODEs were simulated using Heun-Euler with Phase Space Error Control~\cite{higham2000phase}.}

We can further improve upon $F^{w_2,a}_{eg}$ and $F^{w_2,a}_{cc}$ by rescaling with $\sfrac{1}{4a^2}$: (\ref{eqn:Fegw2a})$\rightarrow$(\ref{eqn:Feg'w2a}) and (\ref{eqn:Fccw2a})$\rightarrow$(\ref{eqn:Fcc'w2a}) respectively. This results in strongly-monotone and strongly-convex systems respectively, improving the stochastic convergence rate to $\mathcal{O}(1/k)$. In deriving these results, we assumed the mean was given. We can relax this assumption and analyze the ($w_2,a$)-subsystem under the assumption that the mean is ``close enough''. Using a Hoeffding bound, we find that $k > \big( \frac{y_{hi}-y_{low}}{-|\mu|+\sqrt{\mu^2+d\sigma^2}} \big)^2 \log[\frac{\sqrt{2}}{\delta^{1/2}}]$ iterations of $F^{w_1,b}_{cc}$ are required to achieve a $1-\delta$ probability of the mean being accurate enough to ensure the ($w_2,a$)-subsystem is strongly-monotone. Note that this approach of first learning the mean, then the variance retains the overall $\mathcal{O}(1/k)$ stochastic rate. We summarize the main points here.
\begin{claim}
A nonlinear scaling of $F^{w_2,a}_{eg}$ and $F^{w_2,a}_{cc}$ results in strictly monotone and $\sfrac{1}{2}$-strongly monotone subsystems respectively. See Proposition~\ref{Fw2astrongccstricteg}.
\end{claim}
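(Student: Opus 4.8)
The claim concerns the two rescaled maps $F^{w_2,a}_{eg'} = [w_2,\ (a^2-\sigma^2-2w_2^2)/(2a)]^\top$ and $F^{w_2,a}_{cc'} = [w_2,\ (a^2-\sigma^2)/(2a)]^\top$, both defined on the half-plane $\{a > 0\}$ (which is open and convex, so the Jacobian-PSD criterion from Section~\ref{sec:theories} applies). The plan is to verify strict/strong monotonicity directly via the Jacobian. For $F^{w_2,a}_{cc'}$, one observes that it is a gradient field: $F^{w_2,a}_{cc'} = \nabla \psi$ where $\psi(w_2,a) = \tfrac{1}{2}w_2^2 + \tfrac{1}{2}a^2 - \tfrac{1}{2}\sigma^2 \log a$ (up to constants). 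Hence strong convexity of $\psi$ is equivalent to $\nabla^2 \psi \succeq sI$. I would compute $\nabla^2 \psi = \mathrm{diag}(1,\ 1 + \sigma^2/a^2 \cdot \tfrac{1}{1}) $ — more carefully, $\partial_a[(a^2-\sigma^2)/(2a)] = \partial_a[a/2 - \sigma^2/(2a)] = 1/2 + \sigma^2/(2a^2)$, so the Hessian is $\mathrm{diag}(1,\ \tfrac12 + \tfrac{\sigma^2}{2a^2})$, which is $\succeq \tfrac12 I$ on all of $\{a>0\}$, giving exactly $\tfrac12$-strong monotonicity (and the constant $\tfrac12$ is tight, approached as $a\to\infty$). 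This also confirms the ``strongly-convex'' phrasing used in the surrounding text.

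For $F^{w_2,a}_{eg'}$, the map is not symmetric, so I would form $J$ and check $J + J^\top \succ 0$. Writing the second component as $a/2 - \sigma^2/(2a) - w_2^2/a$, the partials are: $\partial_{w_2}(\text{comp }1) = 1$, $\partial_a(\text{comp }1) = 0$, $\partial_{w_2}(\text{comp }2) = -2w_2/a$, $\partial_a(\text{comp }2) = 1/2 + \sigma^2/(2a^2) + w_2^2/a^2$. Thus
\begin{align*}
J + J^\top = \begin{bmatrix} 2 & -2w_2/a \\ -2w_2/a & 1 + \sigma^2/a^2 + 2w_2^2/a^2 \end{bmatrix}.
\end{align*}
This has positive trace and its determinant is $2\big(1 + \sigma^2/a^2 + 2w_2^2/a^2\big) - 4w_2^2/a^2 = 2 + 2\sigma^2/a^2 > 0$, so $J+J^\top \succ 0$ everywhere on $\{a>0\}$: $F^{w_2,a}_{eg'}$ is strictly monotone. (If a uniform strong-monotonicity constant were wanted one would note the smallest eigenvalue can approach $0$ as $a\to\infty$, which is why the claim only asserts \emph{strictly} monotone for the EG variant — this asymmetry between the two conclusions is the substantive point and should be double-checked against the determinant bound above.)

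The remaining bookkeeping is to confirm that the rescaling by $1/(4a^2) > 0$ is legitimate, i.e. that multiplying a VI map by a strictly positive scalar function does not move the solution set and that the resulting convergence-rate statement ($\mathcal{O}(1/k)$ stochastic for strongly monotone maps, per Table~\ref{tab:VIconv}) still applies — this follows because the VI $\langle F(x^*), x - x^*\rangle \ge 0$ is unchanged under $F \mapsto c(x)F$ for $c(x)>0$, and $(w_2^*,a^*) = (0,\sigma)$ remains the unique zero. The main obstacle I anticipate is not any single computation but getting the signs right in the second component of $F^{w_2,a}_{eg'}$ after division (the $-4w_2^2 a$ term in Equation~(\ref{eqn:Fegw2a}) becomes $-w_2^2/a$, not $-w_2^2/(4a)$ or similar), since an error there flips the determinant sign and would spuriously break monotonicity; I would therefore re-derive $F^{w_2,a}_{eg}$ from the extragradient updates~(\ref{EGstep1})--(\ref{EGstep2}) applied to the original $F^{w_2,a}$ as an independent check before trusting the Jacobian.
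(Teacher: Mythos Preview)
Your approach is essentially the same as the paper's: compute the (symmetrized) Jacobian of each rescaled map and read off positive-definiteness via its eigenvalues (for $F^{w_2,a}_{cc'}$, diagonal with entries $1$ and $\tfrac12 + \tfrac{\sigma^2}{2a^2}$) or via trace and determinant (for $F^{w_2,a}_{eg'}$). Your computations of $J+J^\top$ and its determinant $2 + 2\sigma^2/a^2 > 0$ are correct and match the paper's Proposition~\ref{Fw2astrongccstricteg} (after halving, the paper's $Det = \tfrac12(1+\sigma^2/a^2)$).

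One correction to your parenthetical, which you yourself flag as ``the substantive point'': the smallest eigenvalue of the symmetrized Jacobian of $F^{w_2,a}_{eg'}$ does \emph{not} tend to $0$ as $a\to\infty$ with $w_2$ fixed---in that limit the matrix approaches $\mathrm{diag}(2,1)$, with minimum eigenvalue $1$. The degeneration occurs as $|w_2|\to\infty$ (equivalently $|w_2/a|\to\infty$): the determinant stays bounded while the trace blows up, so $\lambda_{\min} \approx Det/Tr \to 0$. The paper makes this explicit by fixing $a=\sigma=1$ and applying L'H\^opital in $w_2$. This does not affect your proof of strict monotonicity, only your explanation of why the $eg'$ map fails to be \emph{strongly} monotone.
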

\begin{claim}
If the mean is first well approximated, i.e., $b^2 \le \mu^2 + \sigma^2$, then $F^{w_2,a}_{cc'}$ remains 1) $\sfrac{1}{2}$-strongly-monotone if the ($w_1,b$)-subsystem is ``shut off'' or 2) strictly-monotone if the $(w_1,b)$-subsystem is re-weighted with a high coefficient. See Propositions~\ref{progshutoff} and~\ref{progongoing}.
\end{claim}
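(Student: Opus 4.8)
The plan is to treat the two sub-claims separately, since each amounts to verifying a monotonicity-type inequality for an explicit low-degree rational vector field on an open convex domain. Throughout I will use Condition (A) from the Crouzeix--Schaible criterion together with the $s$-strong-monotonicity definition ($\langle F(x)-F(x'),x-x'\rangle \ge s\|x-x'\|^2$), and I will exploit the fact, noted earlier, that $F^{w_2,a}_{cc'}$ is literally the gradient of the strongly convex potential $\tfrac14\big((w_2)^2 + $ something involving $a$ through $\log a$ / $a^2$-type terms$\big)$ when $b=\mu$. So the whole exercise is: show that replacing the exact condition $b=\mu$ by the inequality $b^2 \le \mu^2 + \sigma^2$ does not destroy strong/strict monotonicity, in the two regimes described.

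\textbf{Step 1 (Case 1: $(w_1,b)$-subsystem ``shut off'').} Here $b$ is frozen at whatever value it currently has, with $b^2 \le \mu^2+\sigma^2$, and we only run the $(w_2,a)$ dynamics. First I would write down $F^{w_2,a}_{cc'}$ but with the ``$\sigma^2$'' replaced everywhere by the effective target $\sigma^2 + \mu^2 - b^2 =: \tilde\sigma^2$, which is exactly how $b$ enters the original map $F$ through the term $a^2+b^2-\sigma^2-\mu^2$. The hypothesis $b^2 \le \mu^2+\sigma^2$ is precisely the statement $\tilde\sigma^2 \ge 0$, i.e. the effective variance target is still nonnegative. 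Then I would re-run the very computation that gave $\tfrac12$-strong monotonicity of $F^{w_2,a}_{cc'}$ in the $b=\mu$ case (Proposition~\ref{Fw2astrongccstricteg}), observing that it only used $\sigma^2 \ge 0$, hence it goes through verbatim with $\tilde\sigma^2$ in place of $\sigma^2$ and the same constant $\tfrac12$. Concretely, this means computing the symmetrized Jacobian $\tfrac12(J+J^\top)$ of the rescaled map and checking $\tfrac12(J+J^\top) - \tfrac12 I \succeq 0$ on the domain $a>0$; the off-diagonal terms should cancel in the relevant combination and the diagonal should be bounded below independently of $\tilde\sigma^2 \ge 0$.

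\textbf{Step 2 (Case 2: $(w_1,b)$-subsystem re-weighted with a large coefficient).} Now all four coordinates move, but the $(w_1,b)$-block is multiplied by a large constant $c$, so that $b \to \mu$ on a fast time scale while $(w_2,a)$ evolves on a slow time scale. I would set up the full $4$-dimensional map, split the Jacobian into the block-diagonal part (the fast $(w_1,b)$-block, which by Proposition~\ref{Flinw1bstrong_body} is strongly monotone, now with monotonicity modulus scaled up by $c$) plus the slow $(w_2,a)$-block (strictly monotone by Step~1's argument, since $b^2 \le \mu^2+\sigma^2$ keeps $\tilde\sigma^2\ge0$) plus the off-diagonal coupling terms (the dependence of the $(w_2,a)$ equations on $b$, and of the $(w_1,b)$ equations on $w_2,a$, coming from the $-2w_2 b - w_1$ and $a^2+b^2-\cdots$ entries). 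A standard Schur-complement / ``diagonal dominance of the symmetrized Jacobian'' argument then shows that for $c$ large enough the symmetrized Jacobian of the combined map is positive definite, hence the map is strictly monotone (one cannot in general keep it $\tfrac12$-strongly monotone because the coupling erodes the constant, which is why the claim only asserts \emph{strict} monotonicity in this case). I'd verify Condition (A) as the clean way to package this: for any $v$ with $v^\top F = 0$, bound $v^\top J v$ below using the block structure.

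\textbf{Main obstacle.} The delicate point is Case~2: controlling the cross terms. The coupling between the blocks is genuinely present (the discriminator-quadratic coordinate $w_2$ appears in the $b$-equation and $b$ appears in the $w_2$-equation), so one must check that boosting only the $(w_1,b)$ weight really does dominate \emph{all} cross contributions uniformly on the relevant region of the domain — and the region is unbounded in $a$ and in $w_2$, so the bounds on the cross terms must be shown not to blow up faster than the (also possibly $a$-dependent, after the $\sfrac{1}{4a^2}$ rescaling) diagonal terms. I expect this to require either restricting attention to a forward-invariant compact-in-the-right-coordinates set, or exploiting the specific rational structure (the $\sfrac{1}{2a}$ factors) so that the homogeneity degrees of numerator and denominator match up; identifying the correct invariant region and the correct weighting is where the real work lies. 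The rest is routine symmetrized-Jacobian bookkeeping.
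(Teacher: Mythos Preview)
Your proposal is correct and follows essentially the same route as the paper: compute the symmetrized Jacobian and check positive (semi)definiteness. Step~1 matches the paper exactly --- the paper writes the symmetrized Jacobian of $F^{w_2,a}_{cc'}$ as $\mathrm{diag}\big(1,\ \tfrac{G}{2a^2}+\tfrac12\big)$ with $G=\mu^2+\sigma^2-b^2$, and the hypothesis $G\ge0$ immediately gives the $\tfrac12$ lower bound.

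For Step~2, your instinct is right but you are overestimating the work. After the Crossing-the-Curl transformation and the $\sfrac{1}{4a^2}$ rescaling, the full $4\times4$ symmetrized Jacobian (ordering $(w_2,w_1,a,b)$, with the $(w_1,b)$-block scaled by $e$) is
\[
\begin{bmatrix}
1 & 0 & 0 & 0\\
0 & e & 0 & 0\\
0 & 0 & \tfrac{G}{2a^2}+\tfrac12 & \tfrac{b}{2a}\\
0 & 0 & \tfrac{b}{2a} & e
\end{bmatrix},
\]
so only a \emph{single} off-diagonal pair survives (the $a$--$b$ coupling), not the full discriminator/generator cross-block you anticipated. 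The analysis thus reduces to a $2\times2$ trace/determinant check on the lower-right block: trace $=\tfrac12+e+\tfrac{G}{2a^2}>0$ and determinant $=\tfrac{2e(a^2+G)-b^2}{4a^2}$, which is positive once $e$ is chosen large enough that $2eG>b^2$. No Schur complement or invariant-region argument is needed. Your suspicion that strict but not strong monotonicity is the best one can get is also confirmed by the paper: the minimum eigenvalue of that $2\times2$ block tends to $0$ as $a\to0^+$, so the constant is genuinely eroded, but by the vanishing of $a$, not by any blow-up of the coupling relative to the diagonal.
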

\begin{proposition}
$F^{W_2,A}_{eg}$ and $F^{W_2,A}_{cc}$ are not quasimonotone for the 2-d LQ-GAN system (with and without $(AA^\top)^{-1}$ scaling). See Proposition~\ref{multivarnotmonotone}.
\end{proposition}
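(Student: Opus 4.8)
The plan is to invoke the necessary condition~(A) in the cited Theorem~3 of~\cite{crouzeix1996criteria}: a differentiable map $F$ that is quasimonotone on an open convex set must satisfy $v^\top J(x)v \ge 0$ for every $x$ and every $v$ with $v^\top F(x)=0$. To refute quasimonotonicity it therefore suffices, for each of the four maps at issue — $F^{W_2,A}_{eg}$ and $F^{W_2,A}_{cc}$, each in its raw and its $(AA^\top)^{-1}$-rescaled form — to exhibit one witness pair: a state $x^\circ$ in the open parameter set and a direction $v^\circ$ with $(v^\circ)^\top F(x^\circ)=0$ but $(v^\circ)^\top J(x^\circ)v^\circ < 0$.

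First I would make the $(W_2,A)$-subsystem of the 2-d LQ-GAN explicit. By Proposition~\ref{multivarsoln} the live parameters are a symmetric $2\times2$ matrix $W_2$ and a lower-triangular $A$ with positive diagonal (the $(w_1,b)$ block frozen at the solution), so the state space is an open convex subset of $\mathbb{R}^6$; the base field has $W_2$-block $AA^\top-\Sigma$ and $A$-block $-2(W_2A)$ projected onto lower-triangular matrices, and from it $F_{cc}=-\tfrac12(J-J^\top)F$ and $F_{eg}=(I-\eta J)F$ are cubic with $6\times6$ quadratic-entry Jacobians; the scaled variants left-multiply the $A$-block by $(AA^\top)^{-1}$, mirroring the $\sfrac{1}{4a^2}$ rescaling of~(\ref{eqn:Feg'w2a}) and~(\ref{eqn:Fcc'w2a}). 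Rather than work directly in $\mathbb{R}^6$, I would use the elementary fact that quasimonotonicity passes to affine slices: if $F$ is quasimonotone on $\mathcal{X}$ and $S\subset\mathcal{X}$ is an affine subset with linear part $S_0$, then $x\mapsto P_{S_0}F(x)$ is quasimonotone on $S$, since $\langle F(x'),x-x'\rangle=\langle P_{S_0}F(x'),x-x'\rangle$ whenever $x,x'\in S$. Hence it is enough to find a two- or three-parameter slice on which (A) fails for the restricted field. The natural choice keeps $A$ mildly non-diagonal, e.g.\ $A=\mathrm{diag}(1,1)+\epsilon E_{21}$, and $W_2$ with a nonzero off-diagonal, e.g.\ $W_2=\delta(E_{12}+E_{21})$, with $\Sigma$ chosen so that $x^\circ$ sits near the equilibrium, where the scaled 1-d analysis of Section~\ref{subsec:var} would wrongly predict monotonicity; the violation then arises purely from the non-commutativity of $W_2$ and $A$ in the $A$-update. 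On such a slice I would solve the (at most two) linear equations $v^\top F(x^\circ)=0$ for the residual one- or two-parameter family of directions, evaluate $v^\top J(x^\circ)v$ symbolically, and fix numerical $\epsilon,\delta,\Sigma,v$ making it negative. The same point, substituted into $J_{eg}$ and into the $(AA^\top)^{-1}$-scaled Jacobians, is expected to keep the sign — at $x^\circ$ the scaling multiplies the $A$-block by a fixed positive-definite matrix close to $I$ — but I would verify this by direct substitution rather than by a general argument.

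The main obstacle is bookkeeping: even on a small slice, $v^\top J(x)v$ is a polynomial of degree up to four in the point and direction parameters, and assembling $J$ for $F_{eg}$, $F_{cc}$ and their scaled forms out of the cubic maps is error-prone. The substance of the proof is picking the slice and $x^\circ$ well enough that this polynomial becomes transparently sign-indefinite — ideally reducing to a negative eigenvalue of the small symmetric matrix obtained by restricting $\tfrac12(J(x^\circ)+J(x^\circ)^\top)$ to the subspace $F(x^\circ)^\perp$ — and then confirming that the $(AA^\top)^{-1}$ scaling does not repair it. I expect the skew part of the base Jacobian, surviving into $J_{cc}$ and $J_{eg}$ through the off-diagonal coupling, to be exactly what drives $v^\top Jv$ negative along a suitable null direction of $F$; this is the mechanism that was absent in the 1-d $(w_2,a)$ analysis, which is why the multivariate statement is genuinely stronger.
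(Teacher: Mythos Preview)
Your high-level strategy --- refute quasimonotonicity by violating condition~(A) of~\cite{crouzeix1996criteria} at an explicit witness $(x^\circ,v^\circ)$ --- is exactly the paper's. The execution differs, though, and the paper's is considerably simpler. Rather than passing to an affine slice and solving linear equations for $v^\circ\in F(x^\circ)^\perp$, the paper works in the full six-dimensional state and manufactures an orthogonal direction for free by setting $v=KF$ for a fixed skew-symmetric $K$, so that $v^\top F=F^\top K^\top F=0$ automatically; the check then reduces to a single numerical evaluation of $F^\top K^\top J K F$ at one concrete point, done in Mathematica. The witness itself is also not what you anticipate: the paper takes $\Sigma=\bigl(\begin{smallmatrix}1&1\\1&100\end{smallmatrix}\bigr)$, sets $W_2=0$ (not nonzero off-diagonal), and places $A$ \emph{far} from the equilibrium (diagonal entries $1$ and $0.1$, whereas $A^*_{22}\approx 10$); the same $(x,v)$ handles all four maps, scaled and unscaled, with identical negative values pairwise. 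Your slicing lemma is correct but buys nothing here --- for $v\in S_0$ both the orthogonality constraint and $v^\top J v$ coincide with their ambient values --- so it restricts the search without shrinking the computation.
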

%
Several takeaways emerge. One is that the stability of the system is highly dependent on the mean first being learned. In other words, \emph{batch norm} is required for the monotonicity of LQ-GAN, so it is not surprising that GANs typically fail without these specialized layers.

Second is that stability is achieved by first learning a simple subsystem, ($w_1,b$), then learning the more complex, ($w_2,a$)-subsystem. This theoretically confirms the intuition behind progressive training of GANs~\cite{karras2017progressive}, which have generated the highest quality images to date. 

Thirdly, because $J^{cc'}_{w_2,a}$ is symmetric (and $\succ 0$), we can integrate $F^{cc'}_{w_2,a}$ to discover the convex function it is implicitly descending via gradient descent: $f^{cc'}_{w_2,a}=1/2 [(a^2-\sigma^2) - \sigma^2 \log(\sfrac{a^2}{\sigma^2})]$. Compare this to KL-divergence: $KL(\sigma||a)=1/2[(\sigma^2/a^2) + \log(\sfrac{a^2}{\sigma^2}) - 1]$. In contrast to $KL$, $f^{cc'}_{w_2,a}$ is convex in $a$ and may be a desirable alternative due to less extreme gradients near $a=0$.

\begin{table}[htbp]
    \centering
    \caption{For convenience, we summarize many of our theoretical results in this table. Legend: $M$=Monotone, $C$=Convex, $H$=Hurwitz, $S$=Strongly, $s$=Strictly, $P$=Pseudo, $Q$=Quasi, $/$=Not.}
    \begin{tabular}{c|c|c|c|c|c|c|c|c|c}
        Subsystem & $F$ & $F_{alt}$ & $F_{unr}$ & $F_{reg}$ & $F_{con}$ & $F_{eg}$ & $F_{cc}$ & $F_{eg'}$ & $F_{cc'}$ \\ \hline
        ($w_1,b$) & $\text{M}$,$\cancel{\text{H}}$ & $\text{M}$,$\cancel{\text{H}}$ & $\text{M}$,$\cancel{\text{H}}$ & $\text{M}$,$\text{H}$ & $\text{SC}$,$\text{H}$ & $\text{SC}$,$\text{H}$ & $\text{SC}$,$\text{H}$ & NA & NA \\ \hline
        ($w_2,a$) & $\cancel{\text{QM}}$,$\cancel{\text{H}}$ & $\cancel{\text{QM}}$,$\cancel{\text{H}}$ & $\cancel{\text{QM}}$,$\cancel{\text{H}}$ & $\cancel{\text{QM}}$,$\cancel{\text{H}}$ & $\cancel{\text{QM}}$,$\cancel{\text{H}}$ & $\text{PM}$,$\cancel{\text{H}}$ & $\text{PM}$,$\cancel{\text{H}}$ & $\text{sM}$,$\text{H}$ & $\text{SC}$,$\text{H}$
    \end{tabular}
    \label{tab:monsummary}
\end{table}
%
\subsection{Learning the Covariance: The ($W_2,A$)-Off-Diagonal Subsystem}
\label{subsec:covar}
After learning both the mean and variance of each dimension, the covariance of separate dimensions can be learned. Proposition~\ref{prop:covar} in the Appendix states that the subsystem relevant to learning each row of $A$ is strictly monotone when all other rows are held fixed. In fact, the maps for these subsystems are affine and skew-symmetric just like the ($w_1,b$)-subsystem. This implies that \emph{Crossing-the-Curl} applied successively to each row of $A$ can solve for $A^*$; pseudocode is presented in Algorithm~\ref{alg:cc_lqgan} in Appendix~\ref{sub:cc_lqgan}. Note that this procedure is reminiscent of the Cholesky–Banachiewicz algorithm which computes $A$ row by row, beginning with the first row. The resulting algorithm is $\mathcal{O}(N/k)$.
\section{Experiments}
\label{sec:exp}
Our theoretical analysis proves convergence of the stagewise procedure using \emph{Crossing-the-Curl} for the N-d LQGAN. Experiments solving the ($w_2,a$)-subsystem alone for randomly generated $\mathbb{E}[(y-\mu)^2]=\sigma^2$ support the analysis of Subsection~\ref{subsec:var}\textemdash see the first row of Table~\ref{tab:lqganexp}. Not listed in the first row of the table are $F_{cc'}$ and $F_{eg'}$ which converge in $32$ and $33$ steps on average respectively with a constant step size of $0.1$. Our novel maps, $F_{cc}$ and $F_{eg}$, converge in a quarter of the iterations of the next best method ($F_{reg}$), and $F_{cc'}$ and $F_{eg'}$ in nearly a quarter of their parent counterparts. These experiments used analytical results of the expectations, i.e., the systems are deterministic.
\begin{table}[htbp]
    \centering
    \caption{Each entry in the table reports two quanities. First is the average number of steps, $k$, required for each dynamical system, e.g., $\dot{x}=-F(x)$, to reduce $||x_k-x^*||/||x_0-x^*||$ to $0.001$ for the ($W_2,A$)-subsystem. The second, in parentheses, reports the fraction of trials that the algorithm met this threshold in under 100,000 iterations. Dim denotes the dimensionality of $y \sim p(y)$ for the LQ-GAN being trained (with $|\theta|+|\phi|$ in parentheses). For each problem, $x_0$ is randomly initialized 10 times for each of ten randomly initialized $\Sigma$'s, i.e., 100 trials per cell. Extragradient (EG) is run with a fixed step size. All other ODEs are solved via Heun-Euler with Phase Space Error Control~\cite{higham2000phase}.}
    \begin{tabular}{c|c|c|c|c|c|c}
        Dim & $F$ & $EG$ & $F_{con}$ & $F_{reg}$ & $F_{eg}$ & $F_{cc}$ \\ \hline
        1 (2) & $10^5$ (0) & $83315$ (0.4) & $6354$ (0.94) & $395$ (1) & $116$ (1) & $\mathbf{110}$ (1)  \\ \hline
        2 (6) & $10^5$ (0) & $98244$ (0.05) & $33583$ (0.68) & $2595$ (1) & $\mathbf{1321}$ (1) & $1441$ (1)  \\ \hline
        4 (10) & $10^5$ (0) & $99499$ (0.01) & $77589$ (0.23) & $\mathbf{33505}$ (0.7) & $34929$ (0.67) & $34888$ (0.68)
    \end{tabular}
    \label{tab:lqganexp}
\end{table}

The second and third rows of the table reveal that convergence slows considerably for higher dimensions. However, the stagewise procedure discussed in Subsection~\ref{subsec:covar} is guaranteed to converge given the mean has been learned to a given accuracy. This procedure solves the 4-d \emph{deterministic} LQ-GAN in $\mathbf{20549}$ iterations with a $\mathbf{0.88}$ success rate. For the 4-d \emph{stochastic} LQ-GAN using two-sample minibatch estimates, this procedure achieves $||x_k-x^*||/||x_0-x^*||<0.1$ in 100,000 iterations with a 0.75 success rate.


\section{Conclusion}
In this work, we performed the first global convergence analysis for a variety of GAN training algorithms. According to Variational Inequality theory, none of the current GAN training algorithms is globally convergent for the LQ-GAN. We proposed an intuitive technique, \emph{Crossing-the-Curl}, with the first global convergence guarantees for any generative adversarial network. As a by-product of our analysis, we extract high-level explanations for why the use of \emph{batch norm} and progressive training schedules for GANs are critical to training. In experiments with the multivariate LQ-GAN, \emph{Crossing-the-Curl} achieves performance superior to any existing GAN training algorithm.

For future work, we will investigate alternate parameterizations of the discriminator such as $D(y)=w_2(y-w_1)^2$. We will also work on devising heuristics for setting the coefficients of $F_{lin}$.

\section{Acknowledgments}
\emph{Crossing-the-Curl} was independently proposed in~\cite{balduzzi2018mechanics} called \emph{Symplectic Gradient Adjustment} (SGA). Like \emph{Crossing-the-Curl}, this algorithm is motivated by attacking the challenges of rotation in differentiable games, however, it is derived by performing gradient descent on the Hamiltonian as opposed to generalizing a particular perpendicular direction selected from intuition in 2-d. Given the equivalence between SGA and \emph{Crossing-the-Curl}, our work can also be viewed as proving that a non-trivial application of this algorithm can be used to solve the LQ-GAN. On the other hand, we have also proven in Proposition~\ref{noflin} that a naive application of this algorithm is insufficient for solving LQ-GAN suggesting more research is required to understand and more efficiently solve this complex problem.

\newpage
\bibliography{bib/bib}

\begin{thebibliography}{56}
\providecommand{\natexlab}[1]{#1}
\providecommand{\url}[1]{\texttt{#1}}
\expandafter\ifx\csname urlstyle\endcsname\relax
  \providecommand{\doi}[1]{doi: #1}\else
  \providecommand{\doi}{doi: \begingroup \urlstyle{rm}\Url}\fi

\bibitem[Ahmadi et~al.(2013)Ahmadi, Olshevsky, Parrilo, and
  Tsitsiklis]{ahmadi2013np}
A.~A. Ahmadi, A.~Olshevsky, P.~A. Parrilo, and J.~N. Tsitsiklis.
\newblock Np-hardness of deciding convexity of quartic polynomials and related
  problems.
\newblock \emph{Mathematical Programming}, 2013.

\bibitem[Amari(1998)]{amari1998natural}
S.~I. Amari.
\newblock Natural gradient works efficiently in learning.
\newblock \emph{Neural Computation}, 1998.

\bibitem[Arjovsky et~al.(2017)Arjovsky, Chintala, and
  Bottou]{arjovsky2017wasserstein}
M.~Arjovsky, S.~Chintala, and L.~Bottou.
\newblock Wasserstein gan.
\newblock \emph{arXiv preprint arXiv:1701.07875}, 2017.

\bibitem[Arora et~al.(2017)Arora, Ge, Liang, Ma, and
  Zhang]{arora2017generalization}
S.~Arora, R.~Ge, Y.~Liang, T.~Ma, and Y.~Zhang.
\newblock Generalization and equilibrium in generative adversarial nets (gans).
\newblock \emph{arXiv preprint arXiv:1703.00573}, 2017.

\bibitem[Aslam~Noor(1998)]{aslam1998generalized}
M.~Aslam~Noor.
\newblock Generalized set-valued variational inequalities.
\newblock \emph{Le Matematiche}, 1998.

\bibitem[Balduzzi et~al.(2018)Balduzzi, Racaniere, Martens, Foerster, Tuyls,
  and Graepel]{balduzzi2018mechanics}
David Balduzzi, Sebastien Racaniere, James Martens, Jakob Foerster, Karl Tuyls,
  and Thore Graepel.
\newblock The mechanics of n-player differentiable games.
\newblock \emph{arXiv preprint arXiv:1802.05642}, 2018.

\bibitem[Basar and Olsder(1999)]{basar1999dynamic}
T.~Basar and G.~J. Olsder.
\newblock \emph{Dynamic noncooperative game theory}.
\newblock SIAM, 1999.

\bibitem[Borkar(2008)]{borkar:book}
V.~Borkar.
\newblock \emph{Stochastic Approximation: A Dynamical Systems Viewpoint}.
\newblock Cambridge University Press, 2008.

\bibitem[Borkar and Meyn(2000)]{borkar2000ode}
V.~S. Borkar and S.~P. Meyn.
\newblock The ode method for convergence of stochastic approximation and
  reinforcement learning.
\newblock \emph{SIAM Journal on Control and Optimization}, 2000.

\bibitem[Cai et~al.(2014)Cai, Gu, and He]{cai20141}
X.~Cai, G.~Gu, and B.~He.
\newblock On the o(1/t) convergence rate of the projection and contraction
  methods for variational inequalities with lipschitz continuous monotone
  operators.
\newblock \emph{Computational Optimization and Applications}, 2014.

\bibitem[Cavazzuti et~al.(2002)Cavazzuti, Pappalardo, and
  Passacantando]{cavazzuti2002nash}
E.~Cavazzuti, M.~Pappalardo, and M.~Passacantando.
\newblock Nash equilibria, variational inequalities, and dynamical systems.
\newblock \emph{Journal of Optimization Theory and Applications}, 2002.

\bibitem[Crouzeix and Ferland(1996)]{crouzeix1996criteria}
J.~P. Crouzeix and J.~A. Ferland.
\newblock Criteria for differentiable generalized monotone maps.
\newblock \emph{Mathematical Programming}, 1996.

\bibitem[Dafermos(1980)]{dafermos}
S.~Dafermos.
\newblock Traffic equilibria and variational inequalities.
\newblock \emph{Transportation Science}, 1980.

\bibitem[Dafermos(1983)]{dafermos1983iterative}
S.~Dafermos.
\newblock An iterative scheme for variational inequalities.
\newblock \emph{Mathematical Programming}, 1983.

\bibitem[Dang and Lan(2015)]{dang2015convergence}
C.~D. Dang and G.~Lan.
\newblock On the convergence properties of non-euclidean extragradient methods
  for variational inequalities with generalized monotone operators.
\newblock \emph{Computational Optimization and Applications}, 2015.

\bibitem[de~Oliveira et~al.(2017)de~Oliveira, Paganini, and
  Nachman]{de2017learning}
L.~de~Oliveira, M.~Paganini, and B.~Nachman.
\newblock Learning particle physics by example: location-aware generative
  adversarial networks for physics synthesis.
\newblock \emph{Computing and Software for Big Science}, 2017.

\bibitem[Descartes(1886)]{descartes1886geometrie}
Ren{\'e} Descartes.
\newblock \emph{La g{\'e}om{\'e}trie de Ren{\'e} Descartes}.
\newblock A. Hermann, 1886.

\bibitem[Drucker and Le~Cun(1992)]{drucker1992improving}
H.~Drucker and Y.~Le~Cun.
\newblock Improving generalization performance using double backpropagation.
\newblock \emph{IEEE Transactions on Neural Networks}, 1992.

\bibitem[Even-Dar et~al.(2009)Even-Dar, Mansour, and
  Nadav]{even2009convergence}
E.~Even-Dar, Y.~Mansour, and U.~Nadav.
\newblock On the convergence of regret minimization dynamics in concave games.
\newblock In \emph{Proceedings of the 41st Annual ACM symposium on Theory of
  Computing}, 2009.

\bibitem[Facchinei and J.(2003)]{facchinei-pang:vi}
F.~Facchinei and Pang J.
\newblock \emph{Finite-Dimensional Variational Inequalities and Complimentarity
  Problems}.
\newblock Springer, 2003.

\bibitem[Feizi et~al.(2017)Feizi, Suh, Xia, and Tse]{feizi2017understanding}
S.~Feizi, C.~Suh, F.~Xia, and D.~Tse.
\newblock Understanding gans: the lqg setting.
\newblock \emph{arXiv preprint arXiv:1710.10793}, 2017.

\bibitem[Friesz(2010)]{friesz2010dynamic}
T.~L. Friesz.
\newblock \emph{Dynamic optimization and differential games}.
\newblock Springer Science \& Business Media, 2010.

\bibitem[Gemp and Mahadevan(2016)]{gemp2016online}
I.~Gemp and S.~Mahadevan.
\newblock Online monotone optimization.
\newblock \emph{arXiv preprint arXiv:1608.07888}, 2016.

\bibitem[Gemp and Mahadevan(2017)]{gemp2017online}
I.~Gemp and S.~Mahadevan.
\newblock Online monotone games.
\newblock \emph{arXiv preprint arXiv:1710.07328}, 2017.

\bibitem[Gidel et~al.(2018)Gidel, Berard, Vincent, and
  Lacoste-Julien]{gidel2018variational}
Gauthier Gidel, Hugo Berard, Pascal Vincent, and Simon Lacoste-Julien.
\newblock A variational inequality perspective on generative adversarial nets.
\newblock \emph{arXiv preprint arXiv:1802.10551}, 2018.

\bibitem[Goodfellow et~al.(2014)Goodfellow, Pouget-Abadie, Mirza, Xu,
  Warde-Farley, Ozair, Courville, and Bengio]{goodfellow2014generative}
I.~Goodfellow, J.~Pouget-Abadie, M.~Mirza, B.~Xu, D.~Warde-Farley, S.~Ozair,
  A.~Courville, and Y.~Bengio.
\newblock Generative adversarial nets.
\newblock In \emph{Advances in Neural Information Processing Systems}, 2014.

\bibitem[Gordon et~al.(2008)Gordon, Greenwald, and Marks]{gordon2008no}
G.~J. Gordon, A.~Greenwald, and C.~Marks.
\newblock No-regret learning in convex games.
\newblock In \emph{Proceedings of the 25th International Conference on Machine
  learning}, 2008.

\bibitem[Gulrajani et~al.(2017)Gulrajani, Ahmed, Arjovsky, Dumoulin, and
  Courville]{gulrajani2017improved}
I.~Gulrajani, F.~Ahmed, M.~Arjovsky, V.~Dumoulin, and A.~C. Courville.
\newblock Improved training of wasserstein gans.
\newblock In \emph{Advances in Neural Information Processing Systems}, 2017.

\bibitem[Hartman and Stampacchia(1966)]{hartman-stampacchia:acta}
P.~Hartman and G.~Stampacchia.
\newblock On some nonlinear elliptic differential functional equations.
\newblock \emph{Acta Mathematica}, 1966.

\bibitem[Higham et~al.(2000)Higham, Humphries, and Wain]{higham2000phase}
D.~J. Higham, A.~R. Humphries, and R.~J. Wain.
\newblock Phase space error control for dynamical systems.
\newblock \emph{SIAM Journal on Scientific Computing}, 2000.

\bibitem[Ho and Ermon(2016)]{ho2016generative}
J.~Ho and S.~Ermon.
\newblock Generative adversarial imitation learning.
\newblock \emph{arXiv preprint arXiv:1606.03476}, 2016.

\bibitem[Iusem et~al.(2017)Iusem, Jofr{\'e}, Oliveira, and
  Thompson]{iusem2017extragradient}
A.~N. Iusem, A.~Jofr{\'e}, R.~I. Oliveira, and P.~Thompson.
\newblock Extragradient method with variance reduction for stochastic
  variational inequalities.
\newblock \emph{SIAM Journal on Optimization}, 2017.

\bibitem[Juditsky et~al.(2011)Juditsky, Nemirovski, and
  Tauvel]{juditsky2011solving}
A.~Juditsky, A.~Nemirovski, and C.~Tauvel.
\newblock Solving variational inequalities with stochastic mirror-prox
  algorithm.
\newblock \emph{Stochastic Systems}, 2011.

\bibitem[Kannan and Shanbhag(2017)]{kannan2017pseudomonotone}
A.~Kannan and U.~V. Shanbhag.
\newblock Optimal stochastic extragradient schemes for pseudomonotone
  stochastic variational inequality problems and their variants.
\newblock \emph{arXiv preprint arXiv:1410.1628}, 2017.

\bibitem[Karras et~al.(2017)Karras, Aila, Laine, and
  Lehtinen]{karras2017progressive}
T.~Karras, T.~Aila, S.~Laine, and J.~Lehtinen.
\newblock Progressive growing of gans for improved quality, stability, and
  variation.
\newblock \emph{arXiv preprint arXiv:1710.10196}, 2017.

\bibitem[Khalil(1996)]{khalil1996noninear}
H.~K. Khalil.
\newblock \emph{Nonlinear Systems}.
\newblock Prentice-Hall, New Jersey, 1996.

\bibitem[Korpelevich(1977)]{korpelevich}
G.~Korpelevich.
\newblock The extragradient method for finding saddle points and other
  problems.
\newblock 1977.

\bibitem[Mao et~al.(2017)Mao, Li, Xie, Lau, Wang, and Smolley]{mao2017least}
X.~Mao, Q.~Li, H.~Xie, R.~Y.~K. Lau, Z.~Wang, and S.~P. Smolley.
\newblock Least squares generative adversarial networks.
\newblock In \emph{IEEE International Conference on Computer Vision (ICCV)},
  2017.

\bibitem[Mescheder et~al.(2017)Mescheder, Nowozin, and
  Geiger]{mescheder2017numerics}
L.~Mescheder, S.~Nowozin, and A.~Geiger.
\newblock The numerics of gans.
\newblock In \emph{Advances in Neural Information Processing Systems}, 2017.

\bibitem[Mescheder et~al.(2018)Mescheder, Geiger, and
  Nowozin]{mescheder2018training}
Lars Mescheder, Andreas Geiger, and Sebastian Nowozin.
\newblock Which training methods for gans do actually converge?
\newblock In \emph{International Conference on Machine Learning}, pages
  3478--3487, 2018.

\bibitem[Metz et~al.(2016)Metz, Poole, Pfau, and
  Sohl-Dickstein]{metz2016unrolled}
L.~Metz, B.~Poole, D.~Pfau, and J.~Sohl-Dickstein.
\newblock Unrolled generative adversarial networks.
\newblock \emph{arXiv preprint arXiv:1611.02163}, 2016.

\bibitem[Mroueh and Sercu(2017)]{mroueh2017fisher}
Y.~Mroueh and T.~Sercu.
\newblock Fisher gan.
\newblock In \emph{Advances in Neural Information Processing Systems}, 2017.

\bibitem[Mroueh et~al.(2017)Mroueh, Sercu, and Goel]{mroueh2017mcgan}
Y.~Mroueh, T.~Sercu, and V.~Goel.
\newblock Mcgan: Mean and covariance feature matching gan.
\newblock \emph{arXiv preprint arXiv:1702.08398}, 2017.

\bibitem[Nagarajan and Kolter(2017)]{nagarajan2017gradient}
V.~Nagarajan and J.~Z. Kolter.
\newblock Gradient descent gan optimization is locally stable.
\newblock In \emph{Advances in Neural Information Processing Systems}, 2017.

\bibitem[Nagurney and Zhang(1996)]{nagurney:pdsbook}
A.~Nagurney and D.~Zhang.
\newblock \emph{Projected Dynamical Systems and Variational Inequalities with
  Applications}.
\newblock Kluwer Academic Press, 1996.

\bibitem[Nemirovski(2004)]{nemirovski2004prox}
A.~Nemirovski.
\newblock Prox-method with rate of convergence o(1/t) for variational
  inequalities with lipschitz continuous monotone operators and smooth
  convex-concave saddle point problems.
\newblock \emph{SIAM Journal on Optimization}, 2004.

\bibitem[Nowozin et~al.(2016)Nowozin, Cseke, and Tomioka]{nowozin2016f}
S.~Nowozin, B.~Cseke, and R.~Tomioka.
\newblock f-gan: Training generative neural samplers using variational
  divergence minimization.
\newblock \emph{arXiv preprint arXiv:1606.00709}, 2016.

\bibitem[Roth et~al.(2017)Roth, Lucchi, Nowozin, and
  Hofmann]{roth2017stabilizing}
Kevin Roth, Aurelien Lucchi, Sebastian Nowozin, and Thomas Hofmann.
\newblock Stabilizing training of generative adversarial networks through
  regularization.
\newblock In \emph{Advances in Neural Information Processing Systems}, pages
  2018--2028, 2017.

\bibitem[Roughgarden(2009)]{roughgarden2009intrinsic}
T.~Roughgarden.
\newblock Intrinsic robustness of the price of anarchy.
\newblock In \emph{Proceedings of the 41st annual ACM Symposium on Theory of
  Computing}, 2009.

\bibitem[Salimans et~al.(2016)Salimans, Goodfellow, Zaremba, Cheung, Radford,
  and Chen]{salimans2016improved}
T.~Salimans, I.~Goodfellow, W.~Zaremba, V.~Cheung, A.~Radford, and X.~Chen.
\newblock Improved techniques for training gans.
\newblock \emph{arXiv preprint arXiv:1606.03498}, 2016.

\bibitem[Schaible and Luc(1996)]{schaible1996generalized}
S.~Schaible and D.~Luc.
\newblock Generalized monotone nonsmooth maps.
\newblock \emph{Journal of Convex Analysis}, 1996.

\bibitem[Scutari et~al.(2010)Scutari, Palomar, Facchinei, and
  Pang]{scutari2010convex}
G.~Scutari, D.~P. Palomar, F.~Facchinei, and J.~Pang.
\newblock Convex optimization, game theory, and variational inequality theory.
\newblock \emph{IEEE Signal Processing Magazine}, 2010.

\bibitem[Thomas(2014)]{thomas2014genga}
P.~Thomas.
\newblock Genga: A generalization of natural gradient ascent with positive and
  negative convergence results.
\newblock In \emph{International Conference on Machine Learning}, 2014.

\bibitem[Uehara et~al.(2016)Uehara, Sato, Suzuki, Nakayama, and
  Matsuo]{uehara2016generative}
M.~Uehara, I.~Sato, M.~Suzuki, K.~Nakayama, and Y.~Matsuo.
\newblock Generative adversarial nets from a density ratio estimation
  perspective.
\newblock \emph{arXiv preprint arXiv:1610.02920}, 2016.

\bibitem[Yousefian et~al.(2014)Yousefian, Nedi{\'c}, and
  Shanbhag]{yousefian2014optimal}
F.~Yousefian, A.~Nedi{\'c}, and U.~V. Shanbhag.
\newblock Optimal robust smoothing extragradient algorithms for stochastic
  variational inequality problems.
\newblock In \emph{IEEE 53rd Annual Conference on Decision and Control (CDC)},
  2014.

\bibitem[Zhao et~al.(2016)Zhao, Mathieu, and LeCun]{zhao2016energy}
J.~Zhao, M.~Mathieu, and Y.~LeCun.
\newblock Energy-based generative adversarial network.
\newblock \emph{arXiv preprint arXiv:1609.03126}, 2016.

\end{thebibliography}
\bibliographystyle{plainnat}

\newpage
\tableofcontents
\addtocontents{toc}{\protect\setcounter{tocdepth}{2}}
\appendix

\section{Appendix}

\subsection{A Survey of Candidate Theories Continued}

\subsubsection{Algorithmic Game Theory}
\label{AGT}
Algorithmic Game Theory (AGT) offers results on convergence to equilibria when a game, possibly online, is convex~\cite{gordon2008no}, socially-convex~\cite{even2009convergence}, or smooth~\cite{roughgarden2009intrinsic}. A convex game is one in which all player losses are convex in their respective variables, i.e. $f_i(x_i,x_{-i})$ is convex in $x_i$. A socially-convex game adds the additional requirements that 1) there exists a strict convex combination of the player losses that is convex and 2) each player's loss is concave in the variables of each of the other players. In other words, the players as a whole are cooperative, yet individually competitive. Lastly, smoothness ensures that ``the externality imposed on any one player by the actions of the others is bounded''~\cite{roughgarden2009intrinsic}. In a zero-sum game such as~(\ref{eqn:Vorig}), one player's gain is exactly the other player's loss making smoothness an unlikely fit for studying GANs. See~\cite{gemp2017online} for examples where the three properties above overlap with monotonicity in VIs.

\subsubsection{Differential Games}
\label{DiffGames}
Differential games~\cite{basar1999dynamic,friesz2010dynamic} consider more general dynamics such as $\ddot{x}=-F(x)$, not just first order ODEs, however, the focus is on systems that separate control, $u$, and state $x$, i.e. $\dot{x} = -F(x(t),u(t),t)$. More specific to our interests, Differential Nash Games can be expressed as Differential VIs, a specific class of infinite dimensional VIs with explicit state dynamics and explicit controls; these, in turn, can be framed as infinite dimensional VIs without an explicit state.

\subsection{Nash Equilibrium vs VI Solution}
\label{vine}
\begin{theorem}
Repeated from~\cite{cavazzuti2002nash}. Let $(\mathbf{C},K)$ be a cost minimization game with player cost functions $C_i$ and feasible set $K$. Let $x^*$ be a Nash equilibrium. Let $F = [\frac{\partial C_1}{\partial x_1},\ldots,\frac{\partial C_N}{\partial x_N}]$. Then
\begin{align}
&\langle F(x^*), x - x^* \rangle \ge 0 \\
&\forall x \in (\{ x^* + \mathbf{I}_K(x^*) \} \cap K) \subseteq K
\end{align}
\end{theorem}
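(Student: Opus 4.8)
The plan is to reduce the joint inequality $\langle F(x^*),\, x-x^*\rangle \ge 0$ to a sum of per-player first-order optimality conditions supplied by the Nash property. I would fix an arbitrary $x \in \big(\{x^* + \mathbf{I}_K(x^*)\}\cap K\big)$ and set $d := x - x^*$, which by construction lies in the internal (feasible-direction) cone $\mathbf{I}_K(x^*)$ while also satisfying $x = x^* + d \in K$. Next I would decompose $d = (d_1,\dots,d_N)$ into the coordinate blocks controlled by the $N$ players. The role of $\mathbf{I}_K(x^*)$ is precisely to make each unilateral move admissible: for every player $i$ and all sufficiently small $t>0$, the profile $(x_i^* + t d_i,\, x_{-i}^*)$ still lies in $K$, i.e. $x_i^* + t d_i$ is a feasible response of player $i$ against $x_{-i}^*$. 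The first real step is therefore to unpack the definition of the cone $\mathbf{I}_K(x^*)$ and of its intersection with $K$ and pin this admissibility claim down rigorously, since it is the one genuinely load-bearing step.

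Granting admissibility, the rest is short. Since $x^*$ is a Nash equilibrium, $x_i^*$ minimizes $x_i \mapsto C_i(x_i,\, x_{-i}^*)$ over player $i$'s feasible-response set, so the one-sided directional derivative at $x_i^*$ along the admissible direction $d_i$ is non-negative; because the partial derivatives $\partial C_i/\partial x_i$ exist by hypothesis (they are exactly the blocks of $F$), this directional derivative equals $\langle \tfrac{\partial C_i}{\partial x_i}(x^*),\, d_i\rangle$, so that
\[
\Big\langle \tfrac{\partial C_i}{\partial x_i}(x^*),\ d_i \Big\rangle \;=\; \lim_{t\to 0^+} \frac{C_i(x_i^* + t d_i,\, x_{-i}^*) - C_i(x_i^*,\, x_{-i}^*)}{t} \;\ge\; 0 , \qquad i = 1,\dots,N .
\]
Summing these $N$ inequalities and recalling $F(x^*) = \big[\tfrac{\partial C_1}{\partial x_1}(x^*);\dots;\tfrac{\partial C_N}{\partial x_N}(x^*)\big]$ yields $\langle F(x^*),\, x - x^*\rangle = \sum_{i=1}^N \langle \tfrac{\partial C_i}{\partial x_i}(x^*),\, d_i\rangle \ge 0$. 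As $x$ was arbitrary in $\big(\{x^* + \mathbf{I}_K(x^*)\}\cap K\big)$, this is the assertion, and the trailing inclusion $\big(\{x^* + \mathbf{I}_K(x^*)\}\cap K\big) \subseteq K$ is immediate from the definition of intersection.

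I expect the main obstacle to be exactly the bookkeeping in the first paragraph: when $K$ is not a Cartesian product $\prod_i K_i$, a direction keeping the \emph{joint} profile feasible need not keep each \emph{unilateral} deviation feasible, so one must check that membership of $d = x-x^*$ in $\mathbf{I}_K(x^*)$ really delivers the per-player admissibility invoked above. This is the reason the conclusion is stated over the restricted set $\{x^* + \mathbf{I}_K(x^*)\}\cap K$ rather than over all of $K$: when $K = \prod_i K_i$ with each $K_i$ convex, every $x\in K$ has $x - x^* \in \mathbf{I}_K(x^*)$, the restriction evaporates, and one recovers the full VI$(F,K)$ — consistent with the remark in Section~\ref{sec:theories} that ``under mild conditions, $x^*$ constitutes a Nash equilibrium point.'' A secondary, minor check is that $C_i$ is differentiable in $x_i$ so that the one-sided derivative coincides with $\langle \partial C_i/\partial x_i(x^*),\, d_i\rangle$; this is implicit in defining $F$ from the partials.
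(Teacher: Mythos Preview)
The paper does not supply a proof of this theorem; it is merely quoted from the cited reference \cite{cavazzuti2002nash}, followed by a one-line remark about the converse under pseudoconvexity. So there is no in-paper argument to compare against.

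Your proposal is the standard route and is sound: reduce $\langle F(x^*), x-x^*\rangle$ to the sum $\sum_i \langle \partial C_i/\partial x_i(x^*), d_i\rangle$, and bound each summand from below by $0$ using the one-sided directional-derivative criterion for a minimizer. You have also correctly isolated the only nontrivial point, namely that feasibility of the \emph{joint} direction $d\in\mathbf{I}_K(x^*)$ must be parlayed into feasibility of each \emph{unilateral} perturbation $(x_i^*+t d_i,\,x_{-i}^*)$ for small $t>0$, which is automatic when $K=\prod_i K_i$ but requires the internal-cone hypothesis in general. That is precisely the content carried by $\mathbf{I}_K(x^*)$ in \cite{cavazzuti2002nash}, so once you write down its definition the admissibility step goes through; your identification of this as the load-bearing step is accurate, and the remainder of the argument is routine.
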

where $\mathbf{I}_K(x^*)$ is the internal cone at $x^*$. When $C_i(\mathbf{x}_i,\mathbf{x}_{-i})$ is pseudoconvex in $\mathbf{x}_i$ for all $i$, this condition is also sufficient. Note that this is implied if $F$ is pseudomonotone, i.e. pseudomonotonicity of $F$ is a stronger condition.

\subsection{Table of Maps Considered in Analysis}
\begingroup
\renewcommand{\arraystretch}{1.5} 
\begin{table}[htbp]
    \centering
    \begin{tabular}{c|c}
        Name & Map \\
        $F$ & $[-\nabla_{\phi} V ; \nabla_{\theta} V]$ \\ \hline
        $F^{w_1,b}$ & $[b-\mu, -w_1]^\top$ \\
        $F^{w_1,b}_{alt}$ & $[b-\mu + \rho_k w_1, -w_1]^\top$ \\
        $F^{w_1,b}_{unr}$ & $[b-\mu, \rho_k \Delta k (b-\mu) - w_1]^\top$ \\
        $F^{w_1,b}_{reg}$ & $[b-\mu, -w_1 + 2\eta (b-\mu)]^\top$ \\
        $F^{w_1,b}_{con}$ & $[w_1, b-\mu]^\top$ \\
        $F^{w_1,b}_{eg}$ & $[w_1, b-\mu]^\top$ \\
        $F^{w_1,b}_{cc}$ & $[w_1, b-\mu]^\top$ \\
        $F^{w_1,b}_{\eta cc}$ & $[b-\mu + \eta w_1, -w_1 + \eta(b-\mu)]^\top$ \\ 
        $F^{w_1,b}_{lin}$ & $[\alpha(b-\mu)+(\beta+\gamma)w_1, -\alpha w_1 + (\beta+\gamma)(b-\mu)]^\top$ \\ \hline
        $F^{w_2,a}$ & $[a^2 - \sigma^2, -2w_2 a]^\top$ \\
        $F^{w_2,a}_{alt}$ & $[a^2-\sigma^2, 2\rho_k a^3 - 2a(\rho_k \sigma^2 + w_2)]^\top$ \\
        $F^{w_2,a}_{unr}$ & $[a^2-\sigma^2, 4\rho_k \Delta k a^3 - 2a(2\rho_k \Delta k \sigma^2 + w_2)]^\top$ \\
        $F^{w_2,a}_{reg}$ & $[a^2 - \sigma^2, -2w_2 a + 4\eta a(\sigma^2 + a^2)]^\top$ \\
        $F^{w_2,a}_{con}$ & $[a^2-\sigma^2 + 4 \beta w_2 a^2, 2a\beta (a^2-\sigma^2) + 4\beta w_2^2 a - 2w_2 a]^\top$ \\
        $F^{w_2,a}_{eg}$ & $[4 w_2 a^2, 2a(a^2-\sigma^2) - 4 w_2^2 a]^\top$ \\
        $F^{w_2,a}_{cc}$ & $[4 w_2 a^2, 2a (a^2-\sigma^2)]^\top$ \\
        $F^{w_2,a}_{eg'}$ & $[w_2, \frac{a^2-\sigma^2-2w_2^2}{2a}]^\top$ \\
        $F^{w_2,a}_{cc'}$ & $[w_2, \frac{a^2-\sigma^2}{2a}]^\top$ \\
        $F^{w_2,a}_{lin}$ & $[\alpha(a^2-\sigma^2) + 4(\beta+\gamma) w_2 a^2, 2a(\beta+\gamma)(a^2-\sigma^2) + 4(\beta-\gamma)w_2^2a - 2\alpha w_2 a]^\top$ \\ \hline
        $F^{W_2,A}_{cc}$ & $2[\forall i<N: \sum_{d \le i} A_{id} A_{Nd} - \Sigma_{iN} \,\,, \forall i<N:  -\sum_{d < N} A_{di} W_{dN}]^\top$
    \end{tabular}
    \vspace{0.2cm}
    \caption{Table of vector field maps where $V$ is the minimax objective, $\rho_k$ is a stepsize, $\Delta k$ is \# of \emph{unrolled} steps, $\Sigma$ is the sample covariance matrix, $N$ is the row of $A$ being learned, and $\alpha, \gamma, \beta, \eta$ are hyperparameters. Notice that all maps require an unbiased estimate of the mean ($\mu$) or the variance ($\sigma^2, \Sigma$) of $p(y)$ which can be obtained with one (mean) or two (variance) samples.}
    \label{tab:maps}
\end{table}
\endgroup

All maps corresponding to the ($w_1,b$)-subsystem in Table~\ref{tab:maps} maintain the desired unique fixed point, $F(x^*)=0$, where $x^*=(w_1^*,b^*)=(0,\mu)$.

For the ($w_2,a$)-subsystem, all maps except $F_{lin}$ with certain settings of ($\alpha,\beta,\gamma$) and $F_{con}$ maintain the desired unique fixed point, $x^*=(w_2^*,a^*)=(0,\sigma)$. $F_{con}$ introduces an additional spurious fixed point at
\begin{align}
    a &= \sqrt{\frac{-3+\sqrt{9+32\sigma^2\beta^2}}{16\beta^2}}, \\
    w_2 &= \frac{\sigma^2-a^2}{4\beta a^2}.
\end{align}
$F_{con}$ is a special case of $F_{lin}$ where $\alpha=1$, $\beta=1$, and $\gamma=0$.

\subsection{Minimax Solution to Constrained Multivariate LQ-GAN is Unique}
\begin{proposition}
\label{multivarsoln}
Assume $z \sim p(z)$ and $y \sim p(y)$ are both in $\mathbb{R}^n$. If $W_2$ is constrained to be symmetric and $A$ is constrained to be of Cholesky form, i.e., lower triangular with positive diagonal, then the unique minimax solution to Equation~(\ref{eqn:Vlqgan1d}) is $(W_2^*,w_1^*,A^*,b^*) = (\mathbf{0},\mathbf{0},\Sigma^{1/2},\mu)$ where $\Sigma^{1/2}$ is the unique, non-negative square root of $\Sigma$.
\end{proposition}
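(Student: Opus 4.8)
The plan is to compute the objective in Equation~(\ref{eqn:Vlqgan}) in closed form as a function of the parameters, then argue that the resulting minimax problem has the claimed unique saddle point. First I would expand $V(W_2,w_1,A,b) = \mathbb{E}_{y}[D(y)] - \mathbb{E}_z[D(Az+b)]$. Using $\mathbb{E}_y[y^\top W_2 y] = \mathrm{tr}(W_2 \Sigma) + \mu^\top W_2 \mu$ and $\mathbb{E}_y[w_1^\top y] = w_1^\top \mu$, and similarly for the generator term with $\mathbb{E}_z[z]=0$, $\mathbb{E}_z[zz^\top]=I$, so that $\mathbb{E}_z[(Az+b)^\top W_2 (Az+b)] = \mathrm{tr}(A^\top W_2 A) + b^\top W_2 b$ and $\mathbb{E}_z[w_1^\top(Az+b)] = w_1^\top b$. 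This gives
\begin{align}
    V = \mathrm{tr}\big(W_2(\Sigma + \mu\mu^\top - AA^\top - bb^\top)\big) + w_1^\top(\mu - b). \nonumber
\end{align}
The key structural observation is that $V$ is \emph{linear} in the discriminator parameters $(W_2,w_1)$; writing $V = \langle W_2, M \rangle + \langle w_1, m \rangle$ with $M = \Sigma + \mu\mu^\top - AA^\top - bb^\top$ (symmetric, matching the constraint on $W_2$) and $m = \mu - b$, the inner maximization over the unconstrained symmetric $W_2$ and unconstrained $w_1$ is $+\infty$ unless $M = \mathbf{0}$ and $m = \mathbf{0}$, in which case the value is $0$.

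Next I would characterize the feasible generator parameters satisfying $M=\mathbf{0}$, $m=\mathbf{0}$. From $m=\mathbf{0}$ we get $b = \mu$, and substituting into $M=\mathbf{0}$ yields $AA^\top = \Sigma$. Here the Cholesky constraint does the work: among all matrices $A$ with $AA^\top = \Sigma$, exactly one is lower triangular with positive diagonal entries (the Cholesky factor), provided $\Sigma \succ 0$; if $\Sigma$ is only PSD one should note the statement's phrasing ``$\Sigma^{1/2}$ the unique non-negative square root,'' so I would either (a) assume $\Sigma \succ 0$ and invoke uniqueness of the Cholesky factorization directly, or (b) handle the general PSD case by a standard limiting/rank argument on the triangular structure. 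Either way, the generator's best response is the single feasible point $(A^*,b^*) = (\Sigma^{1/2}, \mu)$ achieving the minimal attainable value $0$ of $\max_{W_2,w_1} V$, which equals $0$ exactly on this point and $+\infty$ elsewhere — so it is the unique minimizer of the outer problem.

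Finally I would assemble these into the saddle-point claim: at $(W_2^*,w_1^*,A^*,b^*) = (\mathbf{0},\mathbf{0},\Sigma^{1/2},\mu)$ we have $V = 0$; for any fixed feasible $(A,b)$ the inner max is $\ge 0$ with equality iff $(A,b)=(A^*,b^*)$; and for the fixed optimal generator, $V \equiv 0$ independent of $(W_2,w_1)$, so $(\mathbf{0},\mathbf{0})$ is (one) maximizer and the value is pinned. Uniqueness of the full tuple then follows because the outer minimizer is unique and, given it, the objective is constant in the discriminator so the ``solution'' is identified by the convention that the equilibrium value is attained — I would state precisely in what sense $(W_2^*,w_1^*)$ is singled out (it is the unique discriminator for which $V$ is stationary / for which the saddle inequalities hold with the generator also at its best response). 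The main obstacle I anticipate is not the algebra but the careful handling of the uniqueness of the square root: the Cholesky constraint gives uniqueness cleanly only when $\Sigma \succ 0$, so the bulk of the rigor goes into reconciling ``Cholesky form'' with ``unique non-negative square root'' in the possibly-degenerate case, and into stating the precise sense in which the discriminator block of the solution is unique given that $V$ is flat there.
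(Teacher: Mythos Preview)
Your proposal is correct and, in some respects, cleaner than the paper's argument, but it takes a genuinely different route. The paper does not reason about the minimax value at all: it simply writes out the four gradient blocks $\dot{W_2},\dot{w_1},\dot{A},\dot{b}$ (after expanding the expectations much as you do), sets each to zero, and solves the resulting system sequentially: $\dot{w_1}=0\Rightarrow b=\mu$, then $\dot{W_2}=0\Rightarrow AA^\top=\Sigma$ so $A$ is the Cholesky factor, then $\dot{A}=(W_2+W_2^\top)A=0$ with $A$ full rank and $W_2$ symmetric forces $W_2=0$, and finally $\dot{b}=0\Rightarrow w_1=0$. That is, the paper establishes uniqueness of the \emph{stationary point}, not directly of the saddle.

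Your approach buys a stronger conclusion on the generator side: by exploiting linearity of $V$ in $(W_2,w_1)$ you actually show that the outer minimum of $\max_{W_2,w_1} V$ is attained uniquely at $(A^*,b^*)$, which the paper's first-order argument does not by itself give. Conversely, the paper's approach gives the discriminator uniqueness for free (it falls out of $\dot{A}=0$ and $\dot{b}=0$), whereas you correctly flag this as the delicate step in your argument since $V$ is flat in $(W_2,w_1)$ at the optimal generator. Your proposed fix---pin down $(W_2^*,w_1^*)$ as the unique point where the generator's first-order conditions also hold---is exactly the paper's mechanism, so once you add that one line your proof is complete and in fact subsumes the paper's. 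Your caution about $\Sigma\succ 0$ versus PSD is well placed; the paper implicitly assumes positive definiteness (it uses ``$A$ full rank'') without comment.
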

\begin{proof}
\begin{align}
    V(G,D) &= \mathbb{E}_{y \sim \mathcal{N}(\mu,\Sigma)} \Big[ y^\top W_2 y + w_1^\top y \Big] + \mathbb{E}_{z \sim \mathcal{N}(0,I_n)} \Big[ -(Az+b)^\top W_2 (Az+b) - w_1^\top (Az+b) \Big] \\
    &= \mathbb{E}_{y \sim \mathcal{N}(\mu,\Sigma)} \Big[ \sum_i \sum_j W_{2ij} y_i y_j + \sum_i w_{1i} y_i \Big] \\
    &- \mathbb{E}_{z \sim \mathcal{N}(0,I_n)} \Big[ \sum_i \sum_j W_{2ij} (b_i + \sum_k A_{ik} z_k)(b_j + \sum_k A_{jk} z_k) + \sum_i w_{1i} (b_i + \sum_k A_{ik} z_k) \Big]
\end{align}




Taking derivatives and setting equal to zero, we find that the fixed point at the interior is unique.

\begin{align}
\dot{W_2} &= \mathbb{E}_{y \sim \mathcal{N}(\mu,\Sigma)} \Big[ yy^\top \Big] - \mathbb{E}_{z \sim \mathcal{N}(0,I_n)} \Big[ (Az+b)(Az+b)^\top \Big] \\
\dot{w_1} &= \mathbb{E}_{y \sim \mathcal{N}(\mu,\Sigma)} \Big[ y \Big] - \mathbb{E}_{z \sim \mathcal{N}(0,I_n)} \Big[ (Az+b) \Big] \\
\dot{A} &= \mathbb{E}_{z \sim \mathcal{N}(0,I_n)} \Big[ (W_2+W_2^\top)Azz^\top + (W_2+W_2^\top)bz^\top + w_1z^\top \Big] \\
\dot{b} &= \mathbb{E}_{z \sim \mathcal{N}(0,I_n)} \Big[ (W_2+W_2^\top)Az + (W_2+W_2^\top)b + w_1 \Big]
\end{align}

\begin{align}
\dot{w_1} &= \mu - b = 0 \Rightarrow b = \mu \\
\dot{W_2} &= \mathbb{E}_{y \sim \mathcal{N}(\mu,\Sigma)} \Big[ (y-\mu)(y-\mu)^\top + \mu y^\top + y \mu^\top - \mu \mu^\top \Big] - \mathbb{E}_{z \sim \mathcal{N}(0,I_n)} \Big[ (Az+b)(Az+b)^\top \Big] \\
&= \Sigma + \mu \mu^\top - \mathbb{E}_{z \sim \mathcal{N}(0,I_n)} \Big[ A zz^\top A^\top + Azb^\top + b(Az)^\top + bb^\top \Big] \\
&= \Sigma + \mu \mu^\top - AA^\top - bb^\top = \Sigma - AA^\top = 0 \Rightarrow A = \Sigma^{1/2} \label{Aunique} \\
\dot{A} &= \mathbb{E}_{z \sim \mathcal{N}(0,I_n)} \Big[ (W_2+W_2^\top)Azz^\top + (W_2+W_2^\top)bz^\top + w_1z^\top \Big] \\
&= (W_2+W_2^\top)A = 0 \Rightarrow W_2 + W_2^\top = 0 \Rightarrow W_2=-W_2^\top = 0 \label{W2unique} \\
\dot{b} &= \mathbb{E}_{z \sim \mathcal{N}(0,I_n)} \Big[ (W_2+W_2^\top)Az + (W_2+W_2^\top)b + w_1 \Big] \\
&= (W_2+W_2^\top)b + w_1 = w_1 = 0
\end{align}

The last implication in Equation~(\ref{Aunique}) follows because $A$ is constrained to be of Cholesky form, i.e., lower triangular with positive diagonal, and every symmetric positive definite matrix has a unique Cholesky decomposition.

The second to last implication of Equation~(\ref{W2unique}) follows because $A=\Sigma^{1/2}$ is necessarily full rank. Note this implies $A^\top$ is also full rank. The null space of a full rank matrix is the zeros vector, which implies $W_2 + W_2^\top = 0$. $W_2$ is symmetric, so this implies $W_2=0$.
\end{proof}

\subsection{Divergence of Simultaneous Gradient Descent for the ($w_1,b$)-Subsystem}
\label{simgddiverge}
Consider the case where the mean of $p(z)$ is zero:
\begin{align}
    F^{w_1,b} &= [b, -w_1] = J^{w_1,b} x, \\
    J^{w_1,b} &= \begin{bmatrix}
        0 & 1 \\
        -1 & 0
    \end{bmatrix}, \\
    x_k &= [w_{1,k},b_k]^\top, \\
    x_{k+1} &= x_k - \rho_k F^{w_1,b}(x_k), \\
    x^* &= [0,0].
\end{align}
We will show that simultaneous gradient descent always produces an iterate that is farther away from the equilibrium than the previous iterate, i.e. $||x_{k+1}-x^*||^2/||x_{k}-x^*||^2 > 1$.
\begin{align}
    ||x_{k+1}-x^*||^2/||x_{k}-x^*||^2 &= ||x_k - \rho_k J^{w_1,b} x_k||^2 /||x_{k}||^2 \\
    &= ||(I - \rho_k J^{w_1,b}) x_k||^2 /||x_{k}||^2 \\
    &= \frac{x_k^\top (I - \rho_k J^{w_1,b})^\top (I - \rho_k J^{w_1,b}) x_k}{x_k^\top x_k} \\
    &= \frac{x_k^\top M x_k}{x_k^\top x_k} \quad \text{Rayleigh quotient of $M$} \\
    &\ge \lambda_{\min}(M),
\end{align}
where
\begin{align}
    M &= (I - \rho_k J^{w_1,b})^\top (I - \rho_k J^{w_1,b}) \\
    &= \begin{bmatrix}
    1 + \rho_k^2 & 0 \\ 0 & 1 + \rho_k^2
    \end{bmatrix}, \\
    \lambda_{\min}(M) &= 1 + \rho_k^2 > 1.
\end{align}
Therefore, simultaneous gradient descent diverges from the equilibrium of the ($w_1,b$)-subsystem for any step size scheme, $\rho_k$.

\subsection{Derivation of \emph{Crossing-the-Curl}}
\label{Fccderiv}
%
%
%
Here, we derive our proposed technique in 3-d, however, the result of the derivation can be computed in arbitrary dimensions:
\begin{align}
  (\nabla \times F) \times F &= -F \times (\nabla \times F) \\
  &= -v \times (\nabla \times F) \text{ where } v=F \\
  &= -\nabla_F (v \cdot F) + (v \cdot \nabla) F \text{ where } \nabla_F \text{ is Feynman notation} \\
  &= -\Big( v_1 \Big[ \frac{\partial F_1}{\partial x_1}, \ldots, \frac{\partial F_1}{\partial x_n} \Big] + \ldots + v_n \Big[ \frac{\partial F_n}{\partial x_1}, \ldots, \frac{\partial F_n}{\partial x_n} \Big] \Big) \\
  &+ \Big(v_1 \frac{\partial}{\partial x_1} + \ldots + v_n \frac{\partial}{\partial x_n} \Big) F \\
  &= (J-J^\top)F.
\end{align}

\subsection{Monotonicity: Definitions and Requirements}
For all $x \in \mathcal{X}$ and $x' \in \mathcal{X}$,
\begin{align}
    \langle F(x) - F(x'), x - x' \rangle (> 0 , \ge s ||x-x'||^2) \ge 0 &\quad \text{(strictly, s-strongly)-monotone,} \\
    \langle F(x'), x - x' \rangle \ge 0 \implies F(x'), x - x' \rangle (> 0) \ge 0 &\quad \text{(strictly-)pseudomonotone,} \\
    \langle F(x'), x - x' \rangle > 0 \implies F(x'), x - x' \rangle \ge 0 &\quad \text{quasimonotone.}
\end{align}

While we used these definitions in our analysis for certain cases, the following alternate requirements proposed in~\cite{crouzeix1996criteria} made the complete analysis of the system tractable. We restate them here for convenience. Note that we what we refer to as condition (B) in the main body of the paper is actually a stronger version of condition (C) below with $v=(x^*-x)/t$.

Consider the following conditions:
\begin{enumerate}[label=(\Alph*)]
    \item For all $x \in \mathcal{X}$ and $v \in \mathbb{R}^n$ such that $v^\top F(x) = 0$ we have $v^\top J(x) v \ge 0$.\label{cond:A}
    \item For all $x \in \mathcal{X}$ and $v \in \mathbb{R}^n$ such that $F(x)=0$, $v^\top J(x) v = 0$, and $v^\top F(x+\tilde{t}v)>0$ for some $\tilde{t}<0$, we have that for all $\bar{t}>0$, there exists $t \in (0,\bar{t}]$ such that $t \in I_{x,v}$ and $v^\top F(x+tv) \ge 0$.\label{cond:B'}
    \item For all $x \in \mathcal{X}$ and $v \in \mathbb{R}^n$ such that $F(x)=0$ and $v^\top J(x) v = 0$, we have that for all $\bar{t}>0$, there exists $t \in (0,\bar{t}]$ such that $t \in I_{x,v}$ and $v^\top F(x+tv) \ge 0$.\label{cond:C'}
\end{enumerate}
\begin{theorem}[\cite{crouzeix1996criteria}, Theorem 3]
Let $F: \mathcal{X} \rightarrow \mathbb{R}^n$ be differentiable on the open convex set $\mathcal{X} \subset \mathbb{R}^n$.
\begin{enumerate}[label=(\roman*)]
    \item $F$ is quasimonotone on $\mathcal{X}$ if and only if (A) and (B') hold.
    \item $F$ is pseudomonotone on $\mathcal{X}$ if and only if (A) and (C') hold.
\end{enumerate}
\end{theorem}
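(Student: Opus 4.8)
The plan is to reduce the $n$-dimensional characterization to a one-dimensional one by restricting $F$ to line segments, and then to recognize conditions (A), (B'), (C') as exactly the pointwise conditions forcing these restrictions to have the correct sign pattern. For $x\in\mathcal{X}$ and $v\in\mathbb{R}^n$, set $\phi_{x,v}(t)=\langle F(x+tv),v\rangle$ on the open interval $I_{x,v}$, so that $\phi_{x,v}'(t)=v^\top J(x+tv)v$. Taking $x'=x_1$, $x=x_2$, $v=x_2-x_1$ in the definition, and noting $\langle F(x_1+r_1v),(r_2-r_1)v\rangle=(r_2-r_1)\phi_{x_1,v}(r_1)$, one sees immediately that $F$ is quasimonotone on $\mathcal{X}$ iff every $\phi_{x,v}$ has the property ``$\phi_{x,v}(r_1)>0$ and $r_1<r_2\Rightarrow\phi_{x,v}(r_2)\ge 0$'', and $F$ is pseudomonotone iff every $\phi_{x,v}$ has the stronger property ``$\phi_{x,v}(r_1)\ge 0$ and $r_1<r_2\Rightarrow\phi_{x,v}(r_2)\ge 0$''. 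Everything afterward works with a single scalar map $\phi=\phi_{x,v}$.

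Necessity would be immediate. If $F$ is quasimonotone and $\phi(t_0)=0$ with $\phi'(t_0)<0$, then $\phi>0$ just left of $t_0$ and $<0$ just right, contradicting the pattern; hence $\phi(t_0)=0\Rightarrow\phi'(t_0)\ge 0$, which is (A). If moreover $F(x)=0$ (so $\phi(0)=0$), $v^\top J(x)v=0$ (so $\phi'(0)=0$), and $\phi(\tilde t)>0$ for some $\tilde t<0$, the pattern forces $\phi\ge 0$ on $I_{x,v}\cap(\tilde t,\infty)$, in particular on a right neighborhood of $0$; this is the conclusion of (B'). The pseudomonotone pattern yields the same conclusion without the ``$\phi(\tilde t)>0$'' hypothesis, i.e.\ (C') (and (C')$\Rightarrow$(B'), matching ``pseudomonotone $\Rightarrow$ quasimonotone'').

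For sufficiency, assume (A) and (B') (the pseudomonotone case being identical with (C')), and suppose some $\phi=\phi_{x,v}$ fails the pattern: $\phi(s)>0$ and $\phi(t)<0$ for some $s<t$ in $I_{x,v}$. Set $t^*=\sup\{\tau\in[s,t]:\phi(\tau)\ge 0\}$; then $t^*\in(s,t)$, $\phi(t^*)=0$ by continuity, and $\phi<0$ on the whole interval $(t^*,t]$, so $\phi'(t^*)\le 0$. Write $x^*=x+t^*v$. In the case $F(x^*)\neq 0$: since $\langle F(x^*+\tau v),v\rangle=\phi(t^*+\tau)\to 0$ while $F(x^*+\tau v)\to F(x^*)\neq 0$ as $\tau\downarrow 0$, the projection of $v$ onto $F(x^*+\tau v)^\perp$ differs from $v$ by only $O(|\phi(t^*+\tau)|)$; feeding that orthogonal direction into (A) at $x^*+\tau v$ and expanding gives a differential inequality $\phi'(t^*+\tau)\ge -C|\phi(t^*+\tau)|$ for small $\tau>0$, whence a Gr\"onwall argument (using $\phi(t^*)=0$) forces $\phi\ge 0$ just right of $t^*$, contradicting $\phi<0$ on $(t^*,t]$. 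In the case $F(x^*)=0$: then $v^\top F(x^*)=0$, so (A) gives $v^\top J(x^*)v\ge 0$, and with $\phi'(t^*)\le 0$ this forces $v^\top J(x^*)v=0$; now $x^*,v$ meet all the hypotheses of (B') (the point where $\langle F,v\rangle>0$ is supplied by $s$), and (B') produces a point of $(t^*,t]$ where $\phi\ge 0$, again a contradiction. Hence no $\phi$ fails the pattern and $F$ is quasimonotone.

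I expect the one-dimensional reduction and the necessity half to be routine. The hard part is the case $F(x^*)\neq 0$ in the sufficiency direction: since (B') and (C') are only asserted at genuine zeros of $F$, one must rule out, using only (A), that a sign-pattern violation gets pinned to a point with $\langle F(x^*),v\rangle=0$ and $v^\top J(x^*)v=0$ but $F(x^*)\neq 0$, and making the ``$v$ is nearly $F$-orthogonal near the crossing'' estimate and the ensuing Gr\"onwall step rigorous calls for care about local boundedness of $J$ (automatic for $C^1$ maps) and about choosing $t^*$ so that $\phi$ is \emph{uniformly} negative on a right neighborhood rather than merely along a sequence. Conceptually, the real subtlety is that (B') and (C') must be stated in precisely this form for the argument to close; isolating them is the content of~\cite{crouzeix1996criteria}, and a complete proof would follow the structure developed there.
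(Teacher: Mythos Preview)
The paper does not prove this theorem; it is quoted from \cite{crouzeix1996criteria} as an external tool (``We restate them here for convenience''), so there is no in-paper proof to compare against. Your proposal is therefore an independent attempt at the Crouzeix--Ferland result itself.

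That said, your outline follows the expected strategy and is largely sound. The line-restriction $\phi_{x,v}(t)=\langle F(x+tv),v\rangle$ correctly reduces both monotonicity notions to one-dimensional sign patterns, and the necessity direction is indeed routine. In the sufficiency direction, your case split at the last sign change $t^*$ is the right move, and the Gr\"onwall step in the $F(x^*)\neq 0$ case is essentially correct: projecting $v$ onto $F(x^*+\tau v)^\perp$ gives a vector $w=v+O(|\phi(t^*+\tau)|)$, (A) yields $w^\top J w\ge 0$, and local boundedness of $J$ turns this into $\phi'\ge -C|\phi|$ near $t^*$, from which $\phi(t^*)=0$ and $\phi<0$ on $(t^*,t]$ contradict each other via the exponential comparison. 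One point to tighten: you need $F(x^*+\tau v)\neq 0$ for $\tau$ in a right neighborhood of $0$ (so the projection is defined and the $O(|\phi|)$ estimate is uniform); this follows from $F(x^*)\neq 0$ and continuity, but should be stated. The $F(x^*)=0$ case is handled exactly as (B')/(C') are designed for, and you invoke them correctly. Your closing caveat that the precise formulation of (B') and (C') is the real content of the cited paper is apt.
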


\subsection{A Comparison of Monotonicity and Hurwitz}
\label{monvshurwitz}
The monotonicity and Hurwitz properties are complementary.

\subsubsection{Hurwitz Does Not Imply Quasimonotonicity}
\label{hurwitznotquasi}
Let $F(x)=Jx$, $J = \begin{bmatrix}
1 & 4 \\
-1 & 1
\end{bmatrix}$, $S=\begin{bmatrix}
0 & 1 \\ -1 & 0
\end{bmatrix}$, and $v=SJx = [-x_1 + x_2 , -x_1 - 4x_2]^\top$. Then $\lambda_{1,2}(J) = 1 \pm 2i$ so $J$ is Hurwitz, and
\begin{align}
[v^\top J v]\Big\vert_{(-1,1)}  = [x_1^2 + 3 x_1 x_2 + x_2^2]\Big\vert_{(-1,1)} = -1,
\end{align}
which, by condition~\ref{cond:A}, implies $F$ is not quasimonotone.

\subsubsection{Monotonicity Does Not Imply Hurwitz}
Let $F(x)=Jx$ and $J =
\begin{bmatrix}
0 & 1 \\
-1 & 0
\end{bmatrix}$. Then $\lambda_{1,2}(J) = \pm i$
so $J$ is not Hurwitz, but
\begin{align}
J + J^\top  =
\begin{bmatrix}
0 & 0 \\
0 & 0
\end{bmatrix} \succeq 0, \quad \lambda_{1,2} = 0,
\end{align}
so $F$ is monotone.

\subsubsection{Monotonicity and Hurwitz Can Overlap}
Let $F(x)=Jx$ and $J =
\begin{bmatrix}
1 & 0 \\
0 & 1
\end{bmatrix}$. Then $\lambda_{1,2}(J) = 1$
so $J$ is Hurwitz and
\begin{align}
J + J^\top  =
\begin{bmatrix}
1 & 0 \\
0 & 1
\end{bmatrix} \succeq 0, \quad \lambda_{1,2} = 1,
\end{align}
so $F$ is monotone.

\begin{proposition}[(Strict,Strong)-Monotonicity Implies Hurwitz]
\label{smon2hurwitz}
If $F$ is differentiable and strictly-monontone, then the Jacobian of $F$, $J$, is Hurwitz. If $F$ is differentiable and $s$-strongly-monotone, then $J$ is Hurwitz with $\min(\mathbb{R}(\mathbf{\lambda})) \ge s$.
\end{proposition}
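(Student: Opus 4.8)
The plan is to control the real parts of the (possibly complex) eigenvalues of $J(x)$ at an arbitrary fixed $x\in\mathcal{X}$ through the positive (semi)definiteness of the symmetric part $\tfrac12(J(x)+J(x)^\top)$, using a Rayleigh-quotient identity for the complex eigenvectors of a real matrix.

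First I would record the infinitesimal characterizations of the monotonicity hierarchy for differentiable $F$. The excerpt already supplies the direction ``PSD Jacobian $\Rightarrow$ monotone''; the converse-type facts I need are standard and follow by differentiating $t\mapsto\langle F(x+tv)-F(x),v\rangle$ at $t=0$: $s$-strong monotonicity forces $\tfrac12(J(x)+J(x)^\top)\succeq sI$ for every $x$, and strict monotonicity forces $\tfrac12(J(x)+J(x)^\top)\succ 0$ for every $x$.

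Next comes the eigenvalue computation. Fix $x$, write $J:=J(x)$, and let $\lambda=\alpha+i\beta$ be an eigenvalue of $J$ with eigenvector $v=p+iq\neq 0$, $p,q\in\mathbb{R}^n$. Since $J$ is real, $v^\ast J v=\lambda v^\ast v=\lambda\|v\|^2$, so taking real parts gives $\alpha\|v\|^2=\text{Re}(v^\ast J v)$. Expanding $v^\ast J v$ in $p,q$ and discarding the imaginary part yields $\text{Re}(v^\ast J v)=p^\top J p+q^\top J q=\tfrac12\big(p^\top(J+J^\top)p+q^\top(J+J^\top)q\big)$. Now I substitute the definiteness bounds: if $\tfrac12(J+J^\top)\succeq sI$ the right-hand side is at least $s(\|p\|^2+\|q\|^2)=s\|v\|^2$, so $\alpha\ge s$, and since $x$ and $\lambda$ were arbitrary this gives the $s$-strongly-monotone conclusion and, for $s>0$, that $J$ is Hurwitz; if instead $\tfrac12(J+J^\top)\succ 0$, then because $v\neq 0$ at least one of $p,q$ is nonzero, the right-hand side is strictly positive, hence $\alpha>0$, i.e.\ $J(x)$ is Hurwitz.

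The delicate step is the very first one in the strictly-monotone case: passing from the global inequality $\langle F(x)-F(x'),x-x'\rangle>0$ for $x\neq x'$ to the pointwise positive definiteness of $\tfrac12(J(x)+J(x)^\top)$. The naive differentiation argument only returns positive \emph{semi}definiteness (e.g.\ along a direction where $F$ behaves cubically), so I would either adopt the operator-theoretic reading of a strictly monotone map, whose Jacobian symmetric part is positive definite by hypothesis, or, keeping the inequality definition, rely on an additional nondegeneracy assumption to rule out a direction $v$ with $v^\top(J(x)+J(x)^\top)v=0$. Everything after that is the short Rayleigh-quotient calculation above and is routine.
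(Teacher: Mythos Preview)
Your approach is essentially the same as the paper's: both pick an arbitrary eigenpair $(\lambda,v)$ of the real Jacobian, compute $\operatorname{Re}(v^\ast J v)$, and relate it to the symmetric part $\tfrac12(J+J^\top)$ to pin down $\operatorname{Re}(\lambda)$. The paper works directly with the complex inner product (using $\langle A^\top v,v\rangle=\langle v,Av\rangle$ for real $A$ and then $\tfrac12(\lambda+\bar\lambda)\|v\|^2$), whereas you decompose $v=p+iq$ into real and imaginary parts; these are the same computation in slightly different notation.

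Your caveat about the strictly-monotone case is well taken and in fact applies to the paper too: the paper simply asserts that strict monotonicity of $F$ makes the Jacobian positive definite, which is the same unproven step you flag (and which fails for, e.g., $F(x)=x^3$ at $x=0$). So you have matched the paper's argument and been more scrupulous about its weak point.
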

\begin{proof}
Assume $A$ is a real, square matrix and $A$ is either positive definite or strongly-positive definite, i.e. $v^\top A v \succeq 0$ or $v^\top A v \succeq s ||v||^2$ with $v \in \mathbb{C}^n$. Let $^*$ denote the conjugate transpose and note that $\langle u, w \rangle = u^*w$. Let $\lambda=a+bi$ be a potentially complex eigenvalue of $A$ and $v$ be its corresponding eigenvector, i.e. $Av=\lambda v$. We aim to prove that if $A$ satisfies the above assumptions, then $a>0$, i.e., $A$ is Hurwitz.
\begin{align}
    \langle (A+A^\top) v, v \rangle &= \langle Av, v \rangle + \langle A^\top v,v \rangle \\
    \langle A^\top v,v \rangle &= (A^\top v)^* v \\
    &= v^* (A^\top)^* v \\
    &= v^* (Av) \text{ because $A$ is real} \\
    &= \langle v,Av \rangle \\
    0 &< (\text{ or } s||v||^2 \le)  \langle \frac{1}{2} (A+A^\top)v, v \rangle \\
    &= \frac{1}{2} (\langle Av,v \rangle + \langle v,Av \rangle) \\
    &= \frac{1}{2} ((a+bi) \langle v,v \rangle + \overline{(a+bi)} \langle v,v \rangle) \\
    &= \frac{1}{2} [(a+bi)||v||^2 + (a-bi)||v||^2] \\
    &= a||v||^2 \\
    &\Rightarrow a > 0 \text{ or } a>=s
\end{align}
If $F$ is (strictly,strongly)-monotone, then the Jacobian of $F$ is a real, square, (positive definite,strongly-positive definite) matrix, therefore, it matches the above assumptions. Hence, the conclusion follows.
\end{proof}

\subsection{\emph{Crossing-the-Curl} Can Make Monotone Fields, Non-Monotone}
\label{ccmon2nonmon}
Here, we provide examples of negative results for \emph{Crossing-the-Curl}. This is to emphasize that our proposed technique can cause problems if not used with caution. The headings below describe the before and afters when applying our proposed technique to the map $F(x)=Jx$.

Monotone to Non-Monotone.
\begin{align}
    J &= \begin{bmatrix}
    4 & 1 \\ -1 & 1
    \end{bmatrix} \\
    J^{sym} &= \begin{bmatrix}
    4 & 0 \\ 0 & 1
    \end{bmatrix} , \lambda_{1,2} = 4,1 \\
    J^{sym}_{cc} &= \begin{bmatrix}
    2 & 3 \\ 3 & 2
    \end{bmatrix} , \lambda_{1,2} = 5,-1 \label{mon2saddle}
\end{align}

Increase in condition number: $\kappa = \sfrac{11}{5} \rightarrow 4$.
\begin{align}
    J &= \begin{bmatrix}
    1 & \sfrac{1}{4} \\ -1 & 1
    \end{bmatrix} \\
    J^{sym} &= \begin{bmatrix}
    1 & -\sfrac{3}{8} \\ -\sfrac{3}{8} & 1
    \end{bmatrix} , \lambda_{1,2} = \sfrac{11}{8}, \sfrac{5}{8} \\
    J^{sym}_{cc} &= \begin{bmatrix}
    \sfrac{5}{4} & 0 \\ 0 & \sfrac{5}{16}
    \end{bmatrix} , \lambda_{1,2} = \sfrac{5}{4}, \sfrac{5}{16} \label{increaseconditionnumber}
\end{align}

Saddle becomes Monotone.
\begin{align}
    J &= \begin{bmatrix}
    -1 & 1 \\ -1 & 1
    \end{bmatrix} \\
    J^{sym} &= \begin{bmatrix}
    -1 & 0 \\ 0 & 1
    \end{bmatrix} , \lambda_{1,2} = -1,1 \\
    J^{sym}_{cc} &= \begin{bmatrix}
    2 & -2 \\ -2 & 2
    \end{bmatrix} , \lambda_{1,2} = 4, 0 \label{saddle2mon}
\end{align}

Unstable point becomes stable.
\begin{align}
    J &= \begin{bmatrix}
    -2 & 1 \\ -1 & -1
    \end{bmatrix} \\
    J^{sym} &= \begin{bmatrix}
    -2 & 0 \\ 0 & -1
    \end{bmatrix} , \lambda_{1,2} = -2,-1\\
    J^{sym}_{cc} &= \begin{bmatrix}
    2 & -1 \\ -1 & 2
    \end{bmatrix} , \lambda_{1,2} = 3,1 \label{unstable2mon}
\end{align}

$F^{w_2,a}_{eg'}$ becomes non-monotone.
\begin{align}
    F &= \begin{bmatrix}
    w_2 \\ \frac{a^2 - \sigma^2 - 2w_2^2}{2a}
    \end{bmatrix} \\
    F_{cc} &= \begin{bmatrix}
    -\frac{w_2(a^2-\sigma^2-2 w_2^2)}{2a^2} \\
    \frac{w_2^2}{a}
    \end{bmatrix} \\
    Tr[J_{cc}]\Big\vert_{w_2=0,a=2\sigma} &= -\frac{3}{8} \Rightarrow J_{cc} \cancel{\succeq} 0 \label{feg'nonmon}
\end{align}

\begin{proposition}
\emph{Crossing-the-Curl} forces monotonicity for normal, affine fields.
\label{affinemon}
\end{proposition}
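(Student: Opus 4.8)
The plan is to reduce the statement to a two-line matrix identity together with the spectral theorem for normal matrices. Write the affine field as $F(x) = Jx + c$. Its Jacobian is the \emph{constant} matrix $J$, so there is no ambiguity in the definition and \emph{Crossing-the-Curl} produces $F_{cc}(x) = -\tfrac12(J - J^\top)F(x) = \tfrac12(J^\top - J)(Jx+c)$, which is again affine with the constant Jacobian $J_{cc} = \tfrac12(J^\top - J)J$. By the Jacobian characterization of monotonicity, it suffices to show $J_{cc} + J_{cc}^\top \succeq 0$. Expanding, $J_{cc} + J_{cc}^\top = \tfrac12 M$ where $M := 2J^\top J - J^2 - (J^\top)^2$ is a real symmetric matrix, so the whole proposition comes down to proving $M \succeq 0$ whenever $J$ is normal.

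For that step I would invoke the spectral theorem: a real normal $J$ is unitarily diagonalizable, $J = U\Lambda U^*$ with $U$ unitary and $\Lambda = \operatorname{diag}(\lambda_1,\dots,\lambda_n)$, and since $J$ is real, $J^\top = J^* = U\bar\Lambda U^*$. Then $J$ and $J^\top$ are simultaneously diagonalized by $U$, hence so is every polynomial in them, giving $M = U\,\operatorname{diag}\!\big(2|\lambda_j|^2 - \lambda_j^2 - \bar\lambda_j^2\big)\,U^*$. Writing $\lambda_j = a_j + i b_j$, each diagonal entry equals $4 b_j^2 \ge 0$, so $M$ is positive semidefinite and $F_{cc}$ is monotone. (The same computation shows $F_{cc}$ is \emph{strongly} monotone exactly when $J$ has no real eigenvalue, and that the symmetric / skew-symmetric cases recover $F_{cc}\equiv 0$ / $F_{cc}$ strongly monotone with Jacobian $J^\top J$, respectively; this is the quantitative content of ``forces monotonicity'' and contrasts with the non-normal counterexamples just above.)

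A more pictorial variant avoids complex arithmetic: an orthogonal change of variables preserves monotonicity, normality, and the affine form, and commutes with the map $F \mapsto F_{cc}$, so we may replace $J$ by its real block-diagonal normal form, with $1\times 1$ blocks $a$ and $2\times 2$ blocks $\left[\begin{smallmatrix} a & b \\ -b & a \end{smallmatrix}\right]$. Block by block, $J_{cc}$ is $0$ on the $1\times1$ blocks and $\left[\begin{smallmatrix} b^2 & -ab \\ ab & b^2 \end{smallmatrix}\right]$ on the $2\times2$ blocks, whose symmetric part is $b^2 I \succeq 0$; assembling the blocks gives $J_{cc}+J_{cc}^\top \succeq 0$ directly.

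There is essentially no obstacle here: the argument is a short identity plus a standard normal-form theorem. The only points that need a moment of care are (i) noting that the constant-Jacobian structure of affine fields makes $J_{cc} = \tfrac12(J^\top-J)J$ exact, with no extra terms from differentiating a state-dependent $J(x)$, and (ii) if one prefers the real-normal-form route over the complex one, checking that orthogonal conjugation commutes with \emph{Crossing-the-Curl} (which follows immediately from $\tfrac12(Q^\top J Q)^\top - \tfrac12 Q^\top J Q = Q^\top\big(\tfrac12(J^\top-J)\big)Q$).
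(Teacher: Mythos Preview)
Your proof is correct, but it takes a different route from the paper's. You invoke the spectral theorem for normal matrices to unitarily diagonalize $J$ and then check positivity eigenvalue by eigenvalue, obtaining the explicit spectrum $4b_j^2$ of $M$. The paper instead stays purely algebraic: writing $J_{cc}^{sym} = \tfrac{1}{2}\bigl(2J^\top J - J^2 - (J^\top)^2\bigr)$, it uses normality only to rewrite $2J^\top J = J^\top J + JJ^\top$, after which the expression factors directly as $\tfrac{1}{2}(J-J^\top)^\top(J-J^\top)$, which is manifestly PSD as a Gram matrix, and $z^\top J_{cc}^{sym} z = \tfrac{1}{2}\|(J-J^\top)z\|^2$. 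That argument is shorter and needs no diagonalization or complex arithmetic; it also makes transparent that the strength of monotonicity is governed entirely by the skew part $J-J^\top$, so $F_{cc}$ is strongly monotone exactly when $J-J^\top$ is nonsingular. Your spectral computation recovers this same conclusion (nonsingularity of $J-J^\top$ is equivalent to all $b_j \neq 0$) and additionally yields the eigenvalues explicitly, which is a mild bonus, but the algebraic factorization is the cleaner path here.
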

\begin{proof}
Let $F=Jx+b$ and assume $J$ is normal, i.e., $JJ^\top = J^\top J$. Then
\begin{align}
F_{cc} &= (J^\top - J) F \\
&= (J^\top - J) (Jx + b) \\
J_{cc} &= (J^\top -J) J \\
&= (J^\top J - J J) \\
J^{sym}_{cc} &= \frac{2J^\top J - JJ - J^\top J^\top}{2} \\
&= \frac{J^\top J + J J^\top - JJ - J^\top J^\top}{2} + \frac{J^\top J - J J^\top}{2} \\
&= \frac{J^\top J + J J^\top - JJ - J^\top J^\top}{2} \text{ because $J$ is normal} \\
&= \frac{-(J-J^\top)(J-J^\top)}{2} \\
&= \frac{(J-J^\top)^\top (J-J^\top)}{2} \\
z^\top J^{sym}_{cc} z &= \frac{1}{2} [(J-J^\top)z]^\top [(J-J^\top)z] \\
&= \frac{1}{2} ||(J-J^\top)z||^2 \ge 0 \Rightarrow J_{cc} \succeq 0.
\end{align}    
\end{proof}

\subsection{Analysis of the ($w_1,b$)-Subsystem}
\begin{proposition}
\label{unrolledalt}
Unrolled GANs and Alternating Updates are Monotone for the ($w_1,b$)-subsystem.
\end{proposition}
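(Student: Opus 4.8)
The plan is to verify monotonicity directly through the Jacobian criterion recalled earlier: a differentiable map $F$ is monotone on $\mathcal{X}$ whenever its Jacobian $J$ satisfies $(J + J^\top) \succeq 0$ everywhere. Both $F^{w_1,b}_{alt}$ and $F^{w_1,b}_{unr}$ (see Table~\ref{tab:maps}) are affine in $x = [w_1, b]^\top$, so their Jacobians are constant matrices and it suffices to check the condition at a single point.

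First I would compute the Jacobian of $F^{w_1,b}_{alt} = [b - \mu + \rho_k w_1,\, -w_1]^\top$, obtaining
\[
J_{alt} = \begin{bmatrix} \rho_k & 1 \\ -1 & 0 \end{bmatrix}, \qquad J_{alt} + J_{alt}^\top = \begin{bmatrix} 2\rho_k & 0 \\ 0 & 0 \end{bmatrix}.
\]
Since the step size $\rho_k \ge 0$, this symmetric matrix is diagonal with non-negative entries, hence PSD, so $F^{w_1,b}_{alt}$ is monotone. The persistent zero eigenvalue shows that strict (and hence strong) monotonicity fails. Next, for $F^{w_1,b}_{unr} = [b - \mu,\, \rho_k \Delta k (b - \mu) - w_1]^\top$ I would compute
\[
J_{unr} = \begin{bmatrix} 0 & 1 \\ -1 & \rho_k \Delta k \end{bmatrix}, \qquad J_{unr} + J_{unr}^\top = \begin{bmatrix} 0 & 0 \\ 0 & 2\rho_k \Delta k \end{bmatrix}.
\]
Because $\rho_k \ge 0$ and the number of unrolled steps $\Delta k \ge 0$, the product $\rho_k \Delta k \ge 0$, so again the symmetric part is PSD and $F^{w_1,b}_{unr}$ is monotone, while the zero eigenvalue again rules out strictness.

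There is no genuine obstacle here beyond bookkeeping; the only points requiring care are (i) invoking the non-negativity of the hyperparameters $\rho_k$ and $\Delta k$, and (ii) observing that the zero eigenvalue of $(J+J^\top)$ in each case precludes strict monotonicity, consistent with the summary statements that follow. If one prefers to avoid the Jacobian shortcut, the same conclusions follow by expanding $\langle F(x) - F(x'),\, x - x' \rangle$ for these affine maps directly: the skew-symmetric parts contribute nothing to the inner product, and what remains reduces to $\rho_k (w_1 - w_1')^2 \ge 0$ for $F^{w_1,b}_{alt}$ and $\rho_k \Delta k (b - b')^2 \ge 0$ for $F^{w_1,b}_{unr}$.
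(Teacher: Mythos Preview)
Your proposal is correct and follows essentially the same approach as the paper: compute the constant Jacobians of the affine maps $F^{w_1,b}_{alt}$ and $F^{w_1,b}_{unr}$, symmetrize, and observe that the resulting diagonal matrices are PSD (with a zero eigenvalue, precluding strict monotonicity). The only cosmetic difference is that the paper first derives these maps from the definitions of unrolled and alternating updates rather than quoting them from Table~\ref{tab:maps}, and it also records explicitly that the zero eigenvalue implies the Jacobians are not Hurwitz.
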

\begin{proof}
In Unrolled GANs, the generator computes the gradient of $V$ assuming the discriminator has already made several updates. Define the discriminator's update as
\begin{align}
    w_{1,k+1} &= w_{1,k} - \rho F_{w_1}(w_{1,k},b_k) = U_{k}(w_{1,k}),
\end{align}
and denote the composition of $U$, $\Delta k$-times as
\begin{align}
    U^{\Delta k}_{k}(w_{1,k}) &= U_{k}(\cdots(U_{k}(U_{k}(w_{1,k}))\cdots)
\end{align}
where $\Delta k$ is some positive integer. Then the update for Unrolled GANs is
\begin{align}
    w_{1,k+1} &= w_{1,k} - \rho \frac{\partial V(w_{1,k},b_k)}{\partial w_1} \\
    b_{k+1} &= b_k - \rho \frac{\partial V(U^{\Delta k}_{k}(w_{1,k}),b_k)}{\partial b}.
\end{align}
In the case of the ($w_1,b$)-subsystem, we can write these unrolled updates out explicitly. Remember $F=[b-\mu,-w_1]^\top$, so
\begin{align}
    U_{k}(w_{1,k}) &= w_{1,k} - \rho (b_k-\mu), \\
    U^{\Delta k}_{k}(w_{1,k}),b_k) &= w_{1,k} - \rho \Delta k (b_k-\mu).
\end{align}
Plugging this back in, we find
\begin{align}
    w_{1,k+1} &= w_{1,k} - \rho (b_k-\mu) \\
    b_{k+1} &= b_k - \rho (\rho \Delta k (b_k-\mu) - w_{1,k}) \label{eqn:bunr},
\end{align}
where the corresponding map is $F^{unr}= [b_k-\mu,\rho \Delta k (b_k-\mu) - w_{1,k}]$. Taking a look at the Jacobian, we find
\begin{align}
    J^{unr} &= \begin{bmatrix}
    0 & 1 \\ -1 & \rho \Delta k
    \end{bmatrix} \\
    J^{unr}_{sym} &= \begin{bmatrix}
    0 & 0 \\ 0 & \rho \Delta k
    \end{bmatrix}\succeq 0.
\end{align}
Now, consider alternating updates:
\begin{align}
    w_{1,k+1} &= w_{1,k} - \rho (b_{k+1}-\mu) \\
    &= w_{1,k} - \rho (b_k - \rho (-w_{1,k}) -\mu) \\
    b_{k+1} &= b_k - \rho (-w_{1,k}).
\end{align}
Here, we considered updating $b$ first, but the ($w_1,b$)-subsystem is perfectly symmetric, so the analysis holds either way. If $w_1$ is updated first, this is equivalent to Unrolled GAN with $\Delta k = 1$ (see Equation~\ref{eqn:bunr}). The Jacobian is
\begin{align}
    J^{alt} &= \begin{bmatrix}
    \rho & 1 \\ -1 & 0
    \end{bmatrix} \\
    J^{alt}_{sym} &= \begin{bmatrix}
    \rho & 0 \\ 0 & 0
    \end{bmatrix}\succeq 0.
\end{align}
The Jacobian's for Unrolled GAN and alternating descent are both positive semidefinite, therefore, their maps are monotone (but not strictly-monotone). Note that these results imply neither is Hurwitz either because both Jacobians exhibit a zero eigenvalue.
\end{proof}

\begin{proposition}
\label{w1bmonotone}
$F_{lin}$, $F_{cc}$, $F_{eg}$, and $F_{con}$ are strongly-monotone for the ($w_1,b$)-subsystem (includes multivariate case). $F$ and $F_{reg}$ are monotone, but not strictly monotone. Moreover, $F_{lin}$, $F_{cc}$, $F_{eg}$, $F_{con}$, and $F_{reg}$ are Hurwitz for the ($w_1,b$)-subsystem (includes multivariate case). $F$ is not Hurwitz.
\end{proposition}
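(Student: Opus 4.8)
The plan is to verify each claim by writing out the Jacobian of the map in question for the $(w_1,b)$-subsystem, checking positive (semi)definiteness of its symmetric part for (strong/strict) monotonicity, and checking the signs of the eigenvalue real parts for the Hurwitz condition. Since all the relevant maps are affine in $x=[w_1,b]^\top$ (as displayed in Table~\ref{tab:maps}), each Jacobian is a constant $2\times2$ matrix, so both conditions reduce to elementary $2\times2$ linear algebra. Concretely, from Table~\ref{tab:maps} the linear-combination map is $F_{lin}^{w_1,b}=[\alpha(b-\mu)+(\beta+\gamma)w_1,\,-\alpha w_1+(\beta+\gamma)(b-\mu)]^\top$, whose Jacobian is $J_{lin}=\begin{bmatrix}\beta+\gamma & \alpha\\ -\alpha & \beta+\gamma\end{bmatrix}$.

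First I would handle $F_{lin}^{w_1,b}$, since it subsumes $F_{cc}$, $F_{eg}$, and $F_{con}$. Its symmetric part is $(\beta+\gamma)I$, which is $\succ 0$ precisely when $\beta+\gamma>0$ — exactly the hypothesis used in Proposition~\ref{Flinw1bstrong_body} (at least one of $\beta,\gamma$ positive, both nonnegative) — giving $(\beta+\gamma)$-strong monotonicity; then I would note $F_{cc},F_{eg},F_{con}$ each correspond to $\beta+\gamma=\eta>0$, hence strong monotonicity with modulus $\eta$. For the Hurwitz property, the eigenvalues of $J_{lin}$ are $(\beta+\gamma)\pm i\alpha$, so $\mathrm{Re}(\lambda)=\beta+\gamma>0$, i.e. Hurwitz. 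Next, for $F^{w_1,b}$ itself, $J=\begin{bmatrix}0&1\\-1&0\end{bmatrix}$ has $J+J^\top=\mathbf{0}\succeq0$, so it is monotone but not strictly (the quadratic form vanishes identically), and its eigenvalues are $\pm i$ with zero real part, so it is not Hurwitz — this is already essentially recorded in the paper's preamble to Section~\ref{sec:crosscurl} and in Appendix~\ref{simgddiverge}. For $F_{reg}^{w_1,b}=[b-\mu,\,-w_1+2\eta(b-\mu)]^\top$, the Jacobian is $\begin{bmatrix}0&1\\-1&2\eta\end{bmatrix}$ with symmetric part $\mathrm{diag}(0,2\eta)\succeq0$ (monotone, not strict, since the form vanishes on the $w_1$-axis), and characteristic polynomial $\lambda^2-2\eta\lambda+1$ giving roots $\eta\pm\sqrt{\eta^2-1}$; I would check that $\mathrm{Re}(\lambda)=\eta>0$ in the complex case and that in the real case both roots are positive (product $=1>0$, sum $=2\eta>0$), so $J_{reg}$ is Hurwitz.

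I would then address the multivariate claim: in $N$ dimensions the $(w_1,b)$-subsystem map becomes $[b-\mu,\,-w_1]^\top$ with $w_1,b\in\mathbb{R}^N$, so the Jacobian is the block matrix $\begin{bmatrix}0 & I_N\\ -I_N & 0\end{bmatrix}$, and applying the $F_{lin}$ preconditioner $\rho I+\beta J^\top-\gamma J$ yields $\begin{bmatrix}(\beta+\gamma)I_N & \alpha I_N\\ -\alpha I_N & (\beta+\gamma)I_N\end{bmatrix}$ (after reusing $\rho\to\alpha$-style bookkeeping as in Table~\ref{tab:maps}); this is block-diagonalizable / has the same spectrum and definiteness structure as the $2\times2$ case repeated $N$ times, so all the scalar conclusions carry over verbatim. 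I do not anticipate a genuine obstacle here — the whole proposition is a sequence of routine $2\times2$ (or block-scalar) computations — so the only real care needed is (i) being explicit that "monotone but not strictly" requires exhibiting a nonzero $v$ with $v^\top(J+J^\top)v=0$ (take $v$ in the $w_1$-coordinates), and (ii) making sure the Hurwitz check covers both the complex-conjugate and the real-root regimes for $F_{reg}$, since for $\eta\ge1$ the eigenvalues are real and one must confirm neither is $\le0$.
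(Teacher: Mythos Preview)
Your proposal is correct and follows essentially the same approach as the paper: compute each constant Jacobian on the $(w_1,b)$-subsystem, check positive (semi)definiteness of the symmetric part for (strong) monotonicity, and check the real parts of the eigenvalues for Hurwitz. The only cosmetic differences are that the paper treats $F_{cc}=F_{eg}=F_{con}$ first as the single map $[w_1,\,b-\mu]^\top$ with Jacobian $I$ (rather than viewing them as instances of $F_{lin}$), and for those strongly-monotone maps it invokes the ``strong monotonicity $\Rightarrow$ Hurwitz'' proposition instead of computing the eigenvalues directly; your direct eigenvalue computation and your slightly more careful case split for $F_{reg}$ (complex vs.\ real roots) are equally valid and arguably cleaner.
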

\begin{proof}
We start with the original map, $F^{w_1,b}$, and its Jacobian.
\begin{align}
    F &= \begin{bmatrix}
    b-\mu \\ -w_1
    \end{bmatrix} \\
    J &= \begin{bmatrix}
    0 & I \\ -I & 0
    \end{bmatrix} \\
    J = J_{sym} &= \begin{bmatrix}
    0 & 0 \\ 0 & 0
    \end{bmatrix} \succeq 0
\end{align}
The symmetrized Jacobian is positive semidefinite, therefore this system is monotone. Also, the real parts of the eigenvalues of its Jacobian are zero, therefore, $J$ is not Hurwitz.

Now we analyze $F^{w_1,b}_{cc}$, $F^{w_1,b}_{eg}$, and $F^{w_1,b}_{con}$, which as discussed in the main body, are equivalent.
\begin{align}
    F_{cc} = F_{eg} = F_{con} &= \begin{bmatrix}
    w_1 \\ b-\mu
    \end{bmatrix} \\
    J = J_{sym} &= \begin{bmatrix}
    I & 0 \\ 0 & I
    \end{bmatrix} \succeq 1
\end{align}
The symmetrized Jacobian is positive definite with a minimum eigenvalue of $1$, therefore this system is $1$-strongly-monotone. By Proposition~\ref{smon2hurwitz}, the Jacobians of these maps are Hurwitz for the $(w_1,b)$-subsystem.

Now we analyze the generalization $F^{w_1,b}_{lin} = (\alpha I - \beta J^\top - \gamma J)F^{w_1,b}$.
\begin{align}
    F_{lin} &= \begin{bmatrix}
    \alpha (b-\mu) + (\beta+\gamma) w_1 \\ -\alpha w_1 + (\beta+\gamma) (b-\mu)
    \end{bmatrix} \\
    J &= \begin{bmatrix}
    (\beta+\gamma) I & \alpha I \\ -\alpha I & (\beta+\gamma) I
    \end{bmatrix} \\
    J_{sym} &= \begin{bmatrix}
    (\beta+\gamma) I & 0 \\ 0 & (\beta+\gamma) I
    \end{bmatrix} \succeq \beta+\gamma
\end{align}
The symmetrized Jacobian is positive definite with a minimum eigenvalue of $(\beta+\gamma)$, therefore this system is $(\beta+\gamma)$-strongly-monotone. By Proposition~\ref{smon2hurwitz}, $J^{w_1,b}_{lin}$ is Hurwitz for the $(w_1,b)$-subsystem.

Now we analyze the regularized-gradient algorithm, $F^{w_1,b}_{reg}$.
\begin{align}
    F_{reg} &= \begin{bmatrix}
    b-\mu \\ -w_1 + 2 \eta (b-\mu)
    \end{bmatrix}, \eta>0 \\
    J_{reg} &= \begin{bmatrix}
    0 & I \\
    -I & 2\eta I
    \end{bmatrix}, \lambda_{1,2} = \eta \pm \sqrt{\eta^2-1} \Rightarrow \mathbb{R}(\lambda_{1,2}) > 0 \\
    J_{regsym} &= \begin{bmatrix}
    0 & 0 \\
    0 & 2\eta I
    \end{bmatrix} \succeq 0
\end{align}
Therefore, this map is monotone (but not strictly or strongly-monotone). Also, the real parts of the eigenvalues of its Jacobian are strictly positive, therefore, $J^{w_1,b}_{reg}$ is Hurwitz.

Note that for $F_{cc}$, $F_{eg}$, $F_{con}$, and $F_{lin}$, $J$ is symmetric, therefore, $F$ is the gradient of some function, $f(w_1,b)=\frac{1}{2}(w_1^2 + (b-\mu)^2)$. Also, note that the standard algorithm with step size $\rho_k=\frac{1}{k+1}$ is equivalent to the standard running estimate of the mean: $\mu_{k+1} = \frac{k}{k+1}\mu_k + \frac{1}{k+1}x_k$ where $x_k$ is the $k$-th sample.


\end{proof}

\subsection{A Linear Combination of $F$, $JF$, and $J^\top F$ is Not Quasimontone for the 1-d LQ-GAN}
\label{subsec:lincombnotquasi}
The Jacobian of $F_{lin}$, written below, will be useful for the proof. The proof proceeds by process of elimination, ruling out different regions of the space $[\alpha,\beta,\gamma] \in \mathbb{R}^3$ by showing that any $F_{lin}$ with those constants is not quasimonotone.

\begin{align}
(\alpha I + \beta J^\top  - \gamma J) F &=
\begin{bmatrix}
\alpha & 0 & -2(\beta + \gamma) a & -2(\beta+\gamma) b \\
0 & \alpha & 0 & -(\beta + \gamma) \\
2 (\beta + \gamma) a & 0 &  \alpha - 2 (\beta - \gamma) w_2 & 0 \\
2 (\beta + \gamma) b & (\beta + \gamma) & 0 & \alpha - 2 (\beta - \gamma) w_2
\end{bmatrix}
\begin{bmatrix}
-\sigma^2 + a^2 + b^2 \\
b \\
-2 w_2 a \\
-2 w_2 b - w_1
\end{bmatrix} \\
&=
\begin{bmatrix}
\alpha (-\sigma^2 + a^2 + b^2) + 4(\beta + \gamma) w_2 (a^2 + b^2) + 2(\beta + \gamma) w_1 b \\
\alpha b + (\beta + \gamma) (2 w_2 b + w_1) \\
2a (\beta + \gamma) (- \sigma^2 + a^2 + b^2) + 4 (\beta - \gamma) w_2^2 a - 2 \alpha w_2 a \\
2 (\beta + \gamma) b (-\sigma^2 + a^2 + b^2) + (\beta + \gamma) b + (2 (\beta - \gamma) w_2 - \alpha) (2 w_2 b + w_1)
\end{bmatrix}
\end{align}

Specifically, we first consider the sign of $\beta+\gamma$. Lemma~\ref{bpgpos} rules out negative values. Lemma~\ref{bpgneglehalf} rules out positive values when $\sigma^2 \le \sfrac{1}{2}$, and Lemma~\ref{bpgneggehalf} rules out positive values when $\sigma^2 > \sfrac{1}{2}$. Corollary~\ref{bpgeq0} concludes that $\beta+\gamma=0$.

Next, given $\beta+\gamma=0$, we consider the sign of $\alpha$. Lemmas~\ref{bl0} and~\ref{al0} rule out positive values of $\alpha$ when $\beta$ is greater than or less than or equal to zero respectively, i.e., $\alpha$ cannot be positive. Similarly, Lemmas~\ref{bg0} and~\ref{ag0} rule out negative values of $\alpha$ when $\beta$ is less than or greater than or equal to zero respectively, i.e., $\alpha$ cannot be negative. Corollary~\ref{aeq0} concludes that $\alpha=0$.

Lastly, given that $\beta+\gamma=\alpha=0$, Lemmas~\ref{bl02} and~\ref{bg02} prove that $\beta$ cannot be greater than or less than or equal to zero respectively. Corollary~\ref{beq0} concludes that $\beta=\gamma=0$. Therefore, the only quasimonotone linear combination is the trivial one resulting in $F=0$, which completes the proof.

\begin{lemma}
For $F_{lin}$ to be quasimonotone, $\beta+\gamma$ must not be strictly less than zero, i.e. $\beta+\gamma \cancel{<} 0$.
\label{bpgpos}
\end{lemma}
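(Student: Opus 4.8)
The plan is to contradict the necessary condition~\ref{cond:A} from the criterion of~\cite{crouzeix1996criteria}: a quasimonotone $F_{lin}$ on the open convex set $\mathcal{X} = \{a > 0\}$ must satisfy $v^\top J_{lin}(x) v \ge 0$ for every $x \in \mathcal{X}$ and every $v$ with $v^\top F_{lin}(x) = 0$. So I only need to exhibit one offending pair $(x,v)$ whenever $\beta+\gamma < 0$. I would take $x$ to be the equilibrium $x^* = (w_2^*, w_1^*, a^*, b^*) = (0,0,\sigma,0)$ (setting $\mu = 0$ without loss of generality), which is interior to $\mathcal{X}$ because $\sigma > 0$. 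Since $F(x^*) = 0$ and $F_{lin} = (\alpha I + \beta J^\top - \gamma J)F$, we get $F_{lin}(x^*) = 0$; hence the orthogonality requirement $v^\top F_{lin}(x^*) = 0$ holds for every $v$, and condition~\ref{cond:A} at $x^*$ collapses to positive semidefiniteness of the symmetrized Jacobian $\tfrac12\big(J_{lin}(x^*)+J_{lin}(x^*)^\top\big)$.

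Next I would evaluate $J_{lin}(x^*)$. Writing $P(x) := \alpha I + \beta J(x)^\top - \gamma J(x)$, differentiating $F_{lin} = PF$ gives $J_{lin} = PJ + (\text{terms proportional to the entries of } F)$, and the second group vanishes at $x^*$ since $F(x^*) = 0$, leaving $J_{lin}(x^*) = P(x^*)J(x^*)$. Substituting $w_2 = 0$, $a = \sigma$, $b = 0$ into the matrices $J$ and $P$ displayed just before Lemma~\ref{bpgpos} and multiplying, the $(w_1,b)$-block of $J_{lin}(x^*)$ should come out as $\left(\begin{smallmatrix} \beta+\gamma & \alpha \\ -\alpha & \beta+\gamma \end{smallmatrix}\right)$ --- which is exactly the Jacobian of the $(w_1,b)$-subsystem map $F^{w_1,b}_{lin}$ from Table~\ref{tab:maps}, as one expects since at $x^*$ the vanishing of $w_2$ decouples the two coordinate pairs. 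In particular, the $w_1$-diagonal entry of $J_{lin}(x^*)$ --- hence of its symmetric part --- equals $\beta+\gamma$, with no dependence on $\alpha$.

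It then remains to run the sign test with $v = e_{w_1}$, the $w_1$ coordinate direction: $v^\top F_{lin}(x^*) = 0$ while $v^\top J_{lin}(x^*)v = \beta+\gamma$. If $\beta+\gamma < 0$ this violates condition~\ref{cond:A}, so $F_{lin}$ is not quasimonotone; therefore $\beta+\gamma \not< 0$, which is the claim. The computation is short and poses no genuine difficulty; the one step deserving a word of care is the Jacobian of the preconditioned field --- namely that the derivative-of-$P$ terms drop out at $x^*$ because they multiply $F(x^*) = 0$ --- since this is precisely what reduces the quadratic form to the clean, $\alpha$-free value $\beta+\gamma$.
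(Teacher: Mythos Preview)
Your argument is correct, and it is genuinely different from the paper's proof. The paper does not use condition~\ref{cond:A} here at all: it tests the defining implication~(\ref{def:quasimon}) directly, taking $y=[0,0,\sigma,-\sigma]$ and $x=[0,0,\sigma,\sigma]$ and computing $\langle F_{lin}(y),x-y\rangle=-2\sigma^2(\beta+\gamma)(1+2\sigma^2)$ and $\langle F_{lin}(x),x-y\rangle=2\sigma^2(\beta+\gamma)(1+2\sigma^2)$; when $\beta+\gamma<0$ the first is strictly positive and the second strictly negative, violating quasimonotonicity. Your approach instead evaluates the Crouzeix necessary condition at the equilibrium $x^*$, where $F_{lin}(x^*)=0$ makes every direction admissible and the product-rule cross terms in $J_{lin}$ vanish, reducing the test to positive semidefiniteness of $\tfrac12(J_{lin}(x^*)+J_{lin}(x^*)^\top)$. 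Because $J(x^*)$ is skew-symmetric, this symmetric part is exactly $(\beta+\gamma)\,\mathrm{diag}(4\sigma^2,1,4\sigma^2,1)$, and picking $v=e_{w_1}$ gives $\beta+\gamma$ on the nose. What your route buys is a clean, $\alpha$-free linearized test with no need to hunt for two well-chosen points; what the paper's route buys is that it stays at the level of the definition and sidesteps any appeal to differentiability or to the Crouzeix criterion. Your ``$\mu=0$ without loss of generality'' is consistent with the paper, which also carries out this subsection's computations with $\mu=0$.
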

\begin{proof}
Consider
\begin{align}
y &= [0,0,\sigma,-\sigma] \\
x &= [0,0,\sigma,\sigma] \\
\langle F(y), x-y \rangle &= 2 \sigma F_b(y) = - 2 \sigma^2 (\beta + \gamma) (1 - 2\sigma^2 + 2\sigma^2 + 2\sigma^2) \\
&= -2 \sigma^2 (\beta + \gamma) (1 + 2\sigma^2) \\
\langle F(x), x-y \rangle &= 2 \sigma F_b(x) = 2 \sigma^2 (\beta + \gamma) (1 + 2\sigma^2)
\end{align}
If $(\beta + \gamma) < 0$, then this system is not quasimonotone. Therefore, assume $(\beta+\gamma)\ge0$ from now on.
\end{proof}

\begin{lemma}
\label{bpgneglehalf}
If $\sigma^2 \le \frac{1}{2}$, for $F_{lin}$ to be quasimonotone, $\beta+\gamma$ must not be strictly greater than zero, i.e. $\beta+\gamma \cancel{>} 0$.
\end{lemma}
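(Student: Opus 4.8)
The plan is to obtain a contradiction by exhibiting one point of $\mathcal{X}$ and one direction at which condition~\ref{cond:A} fails; since (A) is necessary for quasimonotonicity (Theorem~3 of~\cite{crouzeix1996criteria}, restated above), this shows $F_{lin}$ is not quasimonotone. By Lemma~\ref{bpgpos} it remains only to rule out $\beta+\gamma>0$, so assume $\beta+\gamma>0$, $\sigma^2\le\frac{1}{2}$, and $\sigma>0$ (the case $\sigma=0$ being degenerate). Throughout I restrict to the slice $w_1=w_2=0$, using test points $x=(0,0,a,b)$ and directions $v=(0,0,v_3,v_4)$, so that only the $(a,b)$-components of $F_{lin}$ and the lower-right $2\times2$ block of its Jacobian $J_{lin}$ are relevant.

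On this slice the $(a,b)$-components of $F_{lin}$ reduce to $\tilde F_a=2(\beta+\gamma)a(a^2+b^2-\sigma^2)$ and $\tilde F_b=(\beta+\gamma)b\bigl(2(a^2+b^2-\sigma^2)+1\bigr)$, and the corresponding $2\times2$ Jacobian block has diagonal entries $2(\beta+\gamma)(3a^2+b^2-\sigma^2)$ and $(\beta+\gamma)(2a^2+6b^2-2\sigma^2+1)$ and off-diagonal entry $4(\beta+\gamma)ab$. I would take $(v_3,v_4)=(\tilde F_b,-\tilde F_a)$, which forces $v^\top F_{lin}(x)=0$; note $v\ne 0$ as soon as $b\ne 0$, since then $\tilde F_b\ne 0$ because $2(a^2+b^2-\sigma^2)+1=2a^2+2b^2+(1-2\sigma^2)>0$ under our hypotheses.

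The crucial step is the algebra for $v^\top J_{lin}(x)v$. Writing $P=a^2+b^2-\sigma^2$ and $Q=2P+1$, substituting $v=(0,0,\tilde F_b,-\tilde F_a)$ and expanding, all $a^2b^2$ cross-terms collapse by the identity $Q-2P\equiv 1$, leaving
\begin{align}
v^\top J_{lin}(x)\,v = (\beta+\gamma)^3\bigl(2PQ^2b^2 + 4P^2Q\,a^2 + 4a^2b^2\bigr).
\end{align}
This is the one non-routine computation; the rest is sign-bookkeeping, and it is here that $\sigma^2\le\frac{1}{2}$ is used: it is precisely the condition that forces $P\ge -\sigma^2\ge -\frac{1}{2}$ on all of $\mathcal{X}$, hence $Q=2P+1\ge 0$ there. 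Consequently $4P^2Q\,a^2\ge 0$ and $4a^2b^2\ge 0$, so the right-hand side is at most $(\beta+\gamma)^3\,2PQ^2b^2$. Picking $b=b_0$ with $0<b_0^2<\sigma^2$ makes $P\big|_{a=0}=b_0^2-\sigma^2<0$ and $Q\big|_{a=0}=2b_0^2-2\sigma^2+1>0$, so the bracket at $a=0$ equals $2(b_0^2-\sigma^2)(2b_0^2-2\sigma^2+1)^2b_0^2<0$; by continuity in $a$ it stays negative for some $a_0>0$. At the point $x=(0,0,a_0,b_0)\in\mathcal{X}$ the direction $v$ above then satisfies $v^\top F_{lin}(x)=0$ and $v^\top J_{lin}(x)v<0$, violating (A), which completes the argument.

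I expect the main obstacle to be spotting and carrying out the cancellation that reduces $v^\top J_{lin}(x)v$ to the trinomial $2PQ^2b^2+4P^2Q\,a^2+4a^2b^2$: before simplification the expression is opaque, and this identity is exactly what lets the hypothesis $\sigma^2\le\frac{1}{2}$ pin down the sign through the single term $2PQ^2b^2$. When $\sigma^2>\frac{1}{2}$, $Q$ can become negative and this construction no longer isolates one sign-determining term, so that regime must be treated separately (Lemma~\ref{bpgneggehalf}).
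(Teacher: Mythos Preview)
Your argument is correct and actually streamlines the paper's proof. Both you and the paper violate condition~(A) on the $(a,b)$-block at a point with $w_1=w_2=0$, using $v=(0,0,\tilde F_b,-\tilde F_a)$. The paper, however, first runs a separate Case~1 on the $(w_2,a)$-subsystem to reduce to $\alpha=0$, and only then evaluates the $(a,b)$-block at the specific numerical point $(a,b)=(\sigma/10,\sigma/2)$. You correctly observe that on the slice $w_1=w_2=0$ every $\alpha$-term already drops out of both the $(a,b)$-components of $F_{lin}$ and of the lower-right $2\times 2$ block of $J_{lin}$, so no case split on $\alpha$ is needed. Your $P,Q$ substitution also gives the closed form $(\beta+\gamma)^3\bigl[2PQ^2b^2+4P^2Qa^2+4a^2b^2\bigr]$, from which the role of the hypothesis $\sigma^2\le\tfrac{1}{2}$ (forcing $Q\ge 0$) is more transparent than in the paper's numerical evaluation.

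One small slip to fix: the sentence ``so the right-hand side is at most $(\beta+\gamma)^3\,2PQ^2b^2$'' has the inequality the wrong way round---adding two nonnegative terms gives a quantity at \emph{least} the remaining term, not at most. This does not affect your proof, because you never actually use that bound: your real argument is to evaluate the bracket at $a=0$, where the two nonnegative terms vanish identically, obtain $2(b_0^2-\sigma^2)(2b_0^2-2\sigma^2+1)^2b_0^2<0$, and then invoke continuity in $a$ to find $a_0>0$ with the bracket still negative. That step is sound; just delete or correct the misleading ``at most'' clause.
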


\begin{proof}
We will use a different parameterization of $F_{lin}$ for this part of the proof.

\begin{align}
    J_{skew} &= (J^\top -J)/2 \\
    J_{sym} &= (J^\top +J)/2 \\
    \beta &= (\hat{\beta}+\hat{\gamma})/2 \\
    \gamma &= (\hat{\beta}-\hat{\gamma})/2 \\
    \hat{\beta} &= \beta+\gamma \\
    \hat{\gamma} &= \beta-\gamma
\end{align}

The linear combination is now defined as

\begin{align}
(\alpha I + \hat{\beta} J_{skew} + \hat{\gamma} J_{sym}) F &=
\begin{bmatrix}
\alpha & 0 & -2\hat{\beta} a & -2\hat{\beta} b \\
0 & \alpha & 0 & -\hat{\beta} \\
2 \hat{\beta} a & 0 &  \alpha - 2 \hat{\gamma} w_2 & 0 \\
2 \hat{\beta} b & \hat{\beta} & 0 & \alpha - 2 \hat{\gamma} w_2
\end{bmatrix}
\begin{bmatrix}
-\sigma^2 + a^2 + b^2 \\
b \\
-2 w_2 a \\
-2 w_2 b - w_1
\end{bmatrix} \\
&=
\begin{bmatrix}
\alpha (-\sigma^2 + a^2 + b^2) + 4\hat{\beta} w_2 (a^2 + b^2) + 2\hat{\beta} w_1 b \\
\alpha b + \hat{\beta} (2 w_2 b + w_1) \\
2a \hat{\beta} (- \sigma^2 + a^2 + b^2) + 4 \hat{\gamma} w_2^2 a - 2 \alpha w_2 a \\
2 \hat{\beta} b (-\sigma^2 + a^2 + b^2) + \hat{\beta} b + (2 \hat{\gamma} w_2 - \alpha) (2 w_2 b + w_1)
\end{bmatrix}
\end{align}

In order for a system to be quasimonotone, we require condition~\ref{cond:A} (among other properties). We will now show that this property is not satisfied for $F_{lin}$ with $\hat{\beta} > 0$ by considering two different cases.

\textbf{Case 1}: Consider the $(w_2,a)$-subsystem. Let

\begin{align}
    v &= \begin{bmatrix}
    0 & 0 & -1 & 0 \\
    0 & 0 & 0 & 0 \\
    1 & 0 & 0 & 0 \\
    0 & 0 & 0 & 0
    \end{bmatrix}
    \overbrace{
    \begin{bmatrix}
\alpha (-\sigma^2 + a^2 + b^2) + 4\hat{\beta} w_2 (a^2 + b^2) + 2\hat{\beta} w_1 b \\
\alpha b + \hat{\beta} (2 w_2 b + w_1) \\
2a \hat{\beta} (- \sigma^2 + a^2 + b^2) + 4 \hat{\gamma} w_2^2 a - 2 \alpha w_2 a \\
2 \hat{\beta} b (-\sigma^2 + a^2 + b^2) + \hat{\beta} b + (2 \hat{\gamma} w_2 - \alpha) (2 w_2 b + w_1)
\end{bmatrix}}^{F_{lin}} \\
&= \begin{bmatrix}
-2a \hat{\beta} (- \sigma^2 + a^2 + b^2) - 4 \hat{\gamma} w_2^2 a + 2 \alpha w_2 a \\
0 \\
\alpha (-\sigma^2 + a^2 + b^2) + 4\hat{\beta} w_2 (a^2 + b^2) + 2\hat{\beta} w_1 b \\
0
\end{bmatrix}
\end{align}

Above, we premultiply $F_{lin}$ by a skew symmetric matrix, which ensures $v^\top  F_{lin} = F_{lin}^\top  A_{skew} F_{lin} = 0$.

The relevant portion of the Jacobian of $F_{lin}$ is

\begin{align}
    J^{w_2,a}_{lin} &= \begin{bmatrix}
    4\hat{\beta} (a^2+b^2) & 2a\alpha + 8\hat{\beta} w_2 a \\
    8\hat{\gamma}w_2 a - 2\alpha a & 2\hat{\beta}(-\sigma^2 + 3a^2 + b^2) + 4 \hat{\gamma} w_2^2 - 2\alpha w_2
    \end{bmatrix}
\end{align}

Consider $x=[0,0,c\sigma,0]$ and both $\hat{\beta}$ and $\alpha$ fixed.

\begin{align}
    v^\top  J^{w_2,a}_{lin} v &= \lim_{c\rightarrow 0^+} 2 \hat{\beta} (-1 + c^2)^2 \sigma^6 [\alpha^2 (-1 + 3 c^2) + 
   8 \hat{\beta}^2 c^4 \sigma^2] \label{bpgneghalflim} \\
   &= -2 \hat{\beta} \sigma^6 \alpha^2 \ge 0
\end{align}

This implies either $\alpha=0$ or $\hat{\beta} \le 0$ for the system to be quasimonotone.

\textbf{Case 2}: Consider the $(a,b)$-subsystem. Let

\begin{align}
    v &= \begin{bmatrix}
    0 & 0 & 0 & 0 \\
    0 & 0 & 0 & 0 \\
    0 & 0 & 0 & -1 \\
    0 & 0 & 1 & 0
    \end{bmatrix}
    \overbrace{
    \begin{bmatrix}
\alpha (-\sigma^2 + a^2 + b^2) + 4\hat{\beta} w_2 (a^2 + b^2) + 2\hat{\beta} w_1 b \\
\alpha b + \hat{\beta} (2 w_2 b + w_1) \\
2a \hat{\beta} (- \sigma^2 + a^2 + b^2) + 4 \hat{\gamma} w_2^2 a - 2 \alpha w_2 a \\
2 \hat{\beta} b (-\sigma^2 + a^2 + b^2) + \hat{\beta} b + (2 \hat{\gamma} w_2 - \alpha) (2 w_2 b + w_1)
\end{bmatrix}}^{F_{lin}} \\
&= \begin{bmatrix}
0 \\
0 \\
-2 \hat{\beta} b (-\sigma^2 + a^2 + b^2) - \hat{\beta} b - (2 \hat{\gamma} w_2 - \alpha) (2 w_2 b + w_1) \\
2a \hat{\beta} (- \sigma^2 + a^2 + b^2) + 4 \hat{\gamma} w_2^2 a - 2 \alpha w_2 a
\end{bmatrix}
\end{align}

The relevant portion of the Jacobian of $F_{lin}$ is

\begin{align}
    J^{a,b}_{lin} &= \begin{bmatrix}
    2\hat{\beta} (-\sigma^2 + 3a^2 + b^2) + 4\hat{\gamma} w_2^2 - 2\alpha w_2 & 4ab \hat{\beta} \\
    4ab \hat{\beta} & 2\hat{\beta} (-\sigma^2+a^2 + 3b^2) + \hat{\beta} + 2w_2(2\hat{\gamma}w_2-\alpha)
    \end{bmatrix}
\end{align}

Consider $\alpha=0$ and $x=[0,0,\frac{\sigma}{10},\frac{\sigma}{2}]$. Then

\begin{align}
    v^\top  J^{a,b}_{lin} v &= \hat{\beta}^3 \sigma^4 \underbrace{(-1.44 + 4.46842 \sigma^2 - 3.37146 \sigma^4)}_{< 0 \,\, \forall \,\, \sigma^2 \in (0,1/2]}
\end{align}

Then $\alpha=0 \Rightarrow \hat{\beta} \le 0$. In either case, $\hat{\beta}$ must be nonpositive. Therefore, $\hat{\beta} = \beta + \gamma \cancel{>} 0$.
\end{proof}

\begin{proof}
[Alternate Proof for Lemma~\ref{bpgneglehalf}]
Part of the proof in Lemma~\ref{bpgneglehalf} looks at the limit in which $a$ approaches 0. One might presume a simple fix is to constrain $a$ to be larger than some small value, e.g., 1e-10, and use a large $\hat{\beta}$ value. Here, we show that even using $a=\frac{\sigma}{100}$ breaks quasimonotonicity. The variance of the data distribution is assumed to be unknown, which would make it very difficult to select a proper lower bound for $a$ that maintains quasimonotonicity within the feasible region.

Consider $x=[0,-1,\frac{\sigma}{100},\frac{\sigma}{2}]$ and the $(a,b)$-subsystem as in~\ref{bpgneglehalf}. Then

\begin{align}
    v^\top  J v &= \Big( -5.9976 \hat{\beta} \sigma^2 \Big) \alpha^2 + \Big( \hat{\beta}^2 \sigma^3 (-5.9976 + 8.9976 \sigma^2) \Big) \alpha + \hat{\beta}^3 \sigma^4 (-1.4994 + 4.4997 \sigma^2 - 3.375 \sigma^4).
\end{align}

If $\hat{\beta} > 0$, then this is a concave quadratic form in $\alpha$. To find where this function is positive, we need to find its roots.

\begin{align}
    \alpha_{\pm} &= \frac{-\Big( \hat{\beta}^2 \sigma^3 (-5.9976 + 8.9976 \sigma^2) \Big) }{2\Big( -5.9976 \hat{\beta} \sigma^2 \Big)} \\
    &\pm \frac{\sqrt{\Big( \hat{\beta}^2 \sigma^3 (-5.9976 + 8.9976 \sigma^2) \Big)^2-4\Big( -5.9976 \hat{\beta} \sigma^2 \Big)\Big( \hat{\beta}^3 \sigma^4 (-1.4994 + 4.4997 \sigma^2 - 3.375 \sigma^4) \Big)}}{2\Big( -5.9976 \hat{\beta} \sigma^2 \Big)} \\
    \sqrt{\cdot}^2 &= \hat{\beta}^4 \sigma^6 (5.9976^2 - (2)(5.9976)(8.9976) \sigma^2 + 8.7616^2 \sigma^4) \\
    &+ 4(5.9976) \hat{\beta}^4 \sigma^6 (-1.4994 + 4.4997 \sigma^2 - 3.375 \sigma^4) \\
    &= \hat{\beta}^4 \sigma^6 \Big( 5.9976^2 - (4)(1.4994)(5.9976) + (5.9976)[(4.4997)(4)-(2)(8.9976)]\sigma^2 \\
    &+ [8.9976^2-(4)(5.9976)(3.375)]\sigma^4 \Big) \\
    &= \hat{\beta}^4 \sigma^6 \Big( 0.02159136\sigma^2 - 0.01079424\sigma^4 \Big) \\
    &= \hat{\beta}^4 \sigma^8 \Big( 0.02159136 - 0.01079424\sigma^2 \Big) \\
    \frac{\sqrt{\cdot}}{2 ( -5.9976 \hat{\beta} \sigma^2)} &= -\hat{\beta} \sigma^2 \sqrt{( 0.02159136 - 0.01079424\sigma^2 )/(2^2*5.9976^2)} \\
    &= -\hat{\beta} \sigma^2 \sqrt{ 0.00015006002 - 0.00007502\sigma^2} \\
    \frac{-b}{2a} &= \frac{-\Big( \hat{\beta}^2 \sigma^3 (-5.9976 + 8.9976 \sigma^2) \Big)}{2\Big( -5.9976 \hat{\beta} \sigma^2 \Big)} \\
    &= \hat{\beta} \sigma (-\frac{1}{2} + 0.75010004001 \sigma^2) \\
    \alpha_{\pm} &= \hat{\beta} \sigma \Big( -\frac{1}{2} + 0.75010004001 \sigma^2 \pm \sigma \sqrt{ 0.00015006002 - 0.00007502\sigma^2} \Big) \\
    \alpha^2 &> \hat{\beta}^2 \sigma^2 (-0.48 + .751 \sigma^2)^2 \hspace{1.0cm} \text{ assuming $\sigma^2<1/2$} \\
    \hat{\beta}^2 &< \frac{1}{\sigma^2 (-0.48 + .751 \sigma^2)^2} \alpha^2
\end{align}

The $\alpha$ root with smaller magnitude provides an upper bound for $\hat{\beta}^2$.

Now consider again $x=[0,0,\frac{\sigma}{100},0]$ and equation~\ref{bpgneghalflim} with $c=\frac{1}{100}$.

\begin{align}
    v^\top  J_{lin} v &= 2 \hat{\beta} (-1 + c^2)^2 \sigma^6 [\alpha^2 (-1 + 3 c^2) + 
   8 \hat{\beta}^2 c^4 \sigma^2] \\
   \hat{\beta}^2 &\ge \alpha^2 \frac{1 - 3c^2}{8c^4\sigma^2} \\
   \hat{\beta}^2 &> \frac{12496250}{\sigma^2} \alpha^2
\end{align}

This provides a lower bound for $\hat{\beta}^2$.

\begin{align}
    \hat{\beta}^{hi}-\hat{\beta}_{lo} &=
    \alpha^2 \Big( \frac{1}{\sigma^2 (-0.48 + .751 \sigma^2)^2} - \frac{12496250}{\sigma^2} \Big) \\
    &= \frac{\alpha^2}{\sigma^2} \Big( \frac{1}{(-0.48 + .751 \sigma^2)^2} - 12496250 \Big) \\
    &< \frac{\alpha^2}{\sigma^2} \Big( 95-12496250 \Big) \hspace{1.0cm} \text{ assuming $\sigma^2<1/2$} \\
    &< 0
\end{align}

The upper bound we require for $\hat{\beta}$ is greater than the lower bound, therefore, no $\hat{\beta}$ will satisfy quasimonotonicity.
\end{proof}

\begin{lemma}
If $\sigma^2 > \frac{1}{2}$, for $F_{lin}$ to be quasimonotone, $\beta+\gamma$ must not be strictly greater than zero, i.e. $\beta+\gamma \cancel{>} 0$.
\label{bpgneggehalf}
\end{lemma}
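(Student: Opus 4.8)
The plan is to follow the same two-case template used in the proof of Lemma~\ref{bpgneglehalf}, reusing its Case~1 essentially verbatim and replacing its Case~2 witness by one tailored to $\sigma^2 > \tfrac12$. So assume $F_{lin}$ is quasimonotone; the goal is to deduce $\hat\beta := \beta + \gamma \le 0$, which is the assertion $\beta+\gamma \cancel{>} 0$.

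First I would observe that Case~1 of Lemma~\ref{bpgneglehalf}, the $(w_2,a)$-subsystem argument, never uses the bound $\sigma^2 \le \tfrac12$. Evaluating $v^\top J^{w_2,a}_{lin} v$ at $x = [0,0,c\sigma,0]$ with the skew-symmetric construction of $v$ and letting $c \to 0^+$ gives $-2\hat\beta\sigma^6\alpha^2$, so condition~\ref{cond:A} (necessary for quasimonotonicity by~\cite{crouzeix1996criteria}, Theorem~3) forces $\hat\beta\alpha^2 \le 0$, i.e. $\alpha = 0$ or $\hat\beta \le 0$. If $\hat\beta \le 0$ we are finished, so from here on assume $\alpha = 0$ and, for contradiction, $\hat\beta > 0$.

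Next I would produce a single violating pair $(x,v)$ in the $(a,b)$-subsystem ($w_1 = w_2 = 0$), again defining $v = A_{skew}F_{lin}$ with the skew-symmetric matrix of Case~2 of Lemma~\ref{bpgneglehalf} so that $v^\top F_{lin}(x) = 0$ holds identically. Specializing the already-computed $J^{a,b}_{lin}$ to $\alpha = w_1 = w_2 = 0$ and evaluating at the point with $b = 0$ and $a > 0$ chosen so that $a^2 < \sigma^2 - \tfrac12$ — possible precisely because $\sigma^2 > \tfrac12$ — one obtains $v = [0,0,0,\,2a\hat\beta(a^2-\sigma^2)]^\top \ne 0$ together with
\begin{align}
v^\top J^{a,b}_{lin} v = 4\hat\beta^3 a^2 (a^2-\sigma^2)^2\,(2a^2 - 2\sigma^2 + 1) .
\end{align}
Here $a>0$, $(a^2-\sigma^2)^2 > 0$, $\hat\beta^3 > 0$, and $2a^2-2\sigma^2+1 < 0$ by the choice of $a$, so this quantity is strictly negative, contradicting condition~\ref{cond:A}. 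Hence $\hat\beta \le 0$, completing the proof.

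I expect the only genuinely new part to be the choice of the Case~2 witness. For $\sigma^2 \le \tfrac12$ there is no $b = 0$ point with $a^2 < \sigma^2 - \tfrac12$, which is exactly why that regime needed the more delicate argument with nonzero $b$ and two sub-cases; the complementary hypothesis $\sigma^2 > \tfrac12$ makes the clean $b = 0$ witness available, after which everything reduces to substitution into expressions already derived for Lemma~\ref{bpgneglehalf}. The one thing worth double-checking is that this $v$ is nonzero at the chosen point (it is, since $a>0$ and $a^2 \ne \sigma^2$), so that the failure of condition~\ref{cond:A} is non-vacuous.
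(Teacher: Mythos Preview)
Your argument is correct, but the paper takes a markedly simpler and more direct route. Rather than invoking condition~\ref{cond:A} and splitting into cases on $\alpha$, the paper uses the traditional definition of quasimonotonicity with a single pair of points. Setting $c=\tfrac12\sqrt{\sigma^2-\tfrac12}$, $y=[0,0,c,-c]$, $x=[0,0,c,c]$, one computes $\langle F_{lin}(y),x-y\rangle = 2(\beta+\gamma)c^2(\sigma^2-\tfrac12)$ and $\langle F_{lin}(x),x-y\rangle = -2(\beta+\gamma)c^2(\sigma^2-\tfrac12)$, which directly violates quasimonotonicity whenever $\beta+\gamma>0$. Crucially, because $w_1=w_2=0$ at both points and $x-y$ lives entirely in the $b$-coordinate, every $\alpha$- and $\hat\gamma$-dependent term drops out, so there is no need to first reduce to $\alpha=0$ via the $(w_2,a)$-subsystem. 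Your approach has the virtue of reusing Lemma~\ref{bpgneglehalf}'s machinery wholesale and exposing exactly where the hypothesis $\sigma^2>\tfrac12$ enters (the existence of $a$ with $a^2<\sigma^2-\tfrac12$), but the paper's one-shot witness is shorter and avoids the limit argument entirely.
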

\begin{proof}
For this proof, we make use of the traditional definition of quasimonotonicity. Consider
\begin{align}
    c &= \frac{1}{2} \sqrt{\sigma^2-\frac{1}{2}} \\
    y &= [0,0,c,-c] \\
    x &= [0,0,c,c] \\
    \langle F(y), x-y \rangle &= 2c F_b(y) = -2 (\beta + \gamma) c^2 (1 + 2c^2 + 2c^2 - 2\sigma^2) \\
    &= -2 (\beta + \gamma) c^2 (1 + \sigma^2 - \frac{1}{2} - 2\sigma^2) = -2 (\beta + \gamma) c^2 (\frac{1}{2} - \sigma^2) \\
    &= 2 (\beta + \gamma) c^2 \underbrace{(\sigma^2 - \frac{1}{2})}_{>0} \\
\langle F(x), x-y \rangle &= 2c F_b(x) = -2 (\beta + \gamma) c^2 \overbrace{(\sigma^2 - \frac{1}{2})}_{>0}
\end{align}
If $(\beta + \gamma) > 0$, then this system is not quasimonotone. In either case, $(\beta+\gamma) \cancel{>} 0$.
\end{proof}

\begin{corollary}[$F_{lin}$ requires $\beta+\gamma=0$ for quasimonotonicity.]
\label{bpgeq0}
Together, Lemmas~\ref{bpgpos},~\ref{bpgneglehalf} and~\ref{bpgneggehalf} imply that $(\beta+\gamma)$ must be $0$ to satisfy quasimonotonicity.
\end{corollary}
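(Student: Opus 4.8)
\textbf{Proof proposal for Corollary~\ref{bpgeq0}.} The plan is to combine the three preceding lemmas by a short case analysis on the (fixed but arbitrary) data variance $\sigma^2 > 0$. First I would invoke Lemma~\ref{bpgpos}, which already establishes that quasimonotonicity of $F_{lin}$ forces $\beta+\gamma \not< 0$, i.e.\ $\beta+\gamma \ge 0$. This gives the lower bound on $\beta+\gamma$ unconditionally, with no dependence on $\sigma^2$.

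Next I would obtain the matching upper bound $\beta+\gamma \le 0$ by splitting on the size of $\sigma^2$. If $\sigma^2 \le \tfrac12$, Lemma~\ref{bpgneglehalf} applies and yields $\beta+\gamma \not> 0$. If instead $\sigma^2 > \tfrac12$, Lemma~\ref{bpgneggehalf} applies and yields the same conclusion $\beta+\gamma \not> 0$. Since these two cases exhaust all positive $\sigma^2$, in every case a quasimonotone $F_{lin}$ must have $\beta+\gamma \le 0$.

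Finally I would simply intersect the two bounds: $\beta+\gamma \ge 0$ and $\beta+\gamma \le 0$ together force $\beta+\gamma = 0$, which is the claimed conclusion. I would also remark that, as promised in the roadmap at the start of Appendix~\ref{subsec:lincombnotquasi}, this reduces the remaining parameter search from $[\alpha,\beta,\gamma]\in\mathbb{R}^3$ to the slice $\{\beta+\gamma=0\}$, on which the subsequent lemmas (\ref{bl0}--\ref{beq0}) operate.

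There is essentially no technical obstacle here: all the work has already been done inside the three lemmas, and the only thing to be careful about is that the dichotomy $\sigma^2 \le \tfrac12$ versus $\sigma^2 > \tfrac12$ is genuinely exhaustive (it is, since $\sigma^2$ is a fixed positive real), so no value of $\sigma^2$ escapes the upper-bound argument. Hence the ``hard part'' is really just bookkeeping — making sure the inequalities are combined in the correct direction ($\beta+\gamma \ge 0$ from Lemma~\ref{bpgpos}, $\beta+\gamma \le 0$ from Lemmas~\ref{bpgneglehalf}/\ref{bpgneggehalf}).
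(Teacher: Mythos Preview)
Your proposal is correct and matches the paper's approach exactly: the corollary is stated without a separate proof in the paper because it follows immediately from combining Lemma~\ref{bpgpos} (ruling out $\beta+\gamma<0$) with Lemmas~\ref{bpgneglehalf} and~\ref{bpgneggehalf} (ruling out $\beta+\gamma>0$ via the case split on $\sigma^2$). Your case analysis on $\sigma^2$ and intersection of the two inequalities is precisely the intended argument.
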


\begin{lemma}
If $(\beta+\gamma)=0$ and $\alpha > 0$, for $F_{lin}$ to be quasimonotone, $\beta$ must not be strictly greater than zero, i.e. $\beta \cancel{>} 0$.
\label{bl0}
\end{lemma}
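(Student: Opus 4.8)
The plan is to apply criterion~\ref{cond:A} of the Crouzeix--Ferland theorem in its contrapositive form: since quasimonotonicity of $F_{lin}$ forces $v^\top J_{lin}(x)\,v \ge 0$ for every $x \in \mathcal{X}$ and every $v$ with $v^\top F_{lin}(x) = 0$, it is enough to produce one pair $(x,v)$ that violates this inequality while $\alpha > 0$ and $\beta > 0$. First I would impose $\gamma = -\beta$ (from $\beta+\gamma=0$) in the expanded form of $F_{lin}$; the components collapse to $\alpha(a^2+b^2-\sigma^2)$ in the $w_2$-slot, $\alpha b$ in the $w_1$-slot, $2aw_2(4\beta w_2-\alpha)$ in the $a$-slot, and $(4\beta w_2-\alpha)(2w_2 b + w_1)$ in the $b$-slot.

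Next I would pass to the slice $w_1 = b = 0$, on which the $w_1$- and $b$-components of $F_{lin}$ vanish identically, so that $F_{lin}$ acts there like a planar field in $(w_2,a)$. Choosing $v$ to be the $90^{\circ}$ rotation of that planar field (and zero in the $w_1,b$ coordinates) gives $v^\top F_{lin}(x)=0$ for free, and since $v$ has no component along $w_1$ or $b$, the form $v^\top J_{lin}(x)\,v$ collapses to $v^\top J^{w_2,a}_{lin}(x)\,v$ with the $2\times 2$ block read off by differentiating the reduced $w_2$- and $a$-components directly (the preconditioner $\alpha I + \beta J^\top - \gamma J$ depends on $x$, so the honest Jacobian comes from differentiating the already-expanded components, not from multiplying $J$'s). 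A short calculation then gives the factorization
\[
v^\top J^{w_2,a}_{lin}(x)\,v \;=\; 2 w_2\,(4\beta w_2 - \alpha)\, v_a\,\bigl(v_a - 16\beta a^2 w_2\bigr), \qquad v_a := \alpha(a^2 - \sigma^2).
\]

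Finally I would pick $a = 2\sigma$, so $v_a = 3\alpha\sigma^2 > 0$, and $w_2$ a small positive number --- concretely $w_2 = \alpha/(64\beta)$, which is legitimate precisely because $\beta > 0$; then $4\beta w_2 - \alpha = \alpha/16 - \alpha < 0$ while $v_a - 16\beta a^2 w_2 = \sigma^2(3\alpha - 64\beta w_2) = 2\alpha\sigma^2 > 0$, so the displayed expression is strictly negative. This contradicts criterion~\ref{cond:A}, hence $F_{lin}$ is not quasimonotone and $\beta$ cannot be strictly positive under the standing assumptions. The one genuinely fiddly step is the third one: correctly assembling $J^{w_2,a}_{lin}$ (being careful about the $x$-dependence of the preconditioner) and coaxing $v^\top J^{w_2,a}_{lin}\,v$ into the transparent factored form above; once that is in hand, choosing the witness point is routine, just as in the neighbouring lemmas.
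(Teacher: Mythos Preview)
Your argument is correct. The factorization
\[
v^\top J^{w_2,a}_{lin}(x)\,v \;=\; 2 w_2\,(4\beta w_2 - \alpha)\, v_a\,\bigl(v_a - 16\beta a^2 w_2\bigr)
\]
checks out (with $v_{w_2}=-F_a$, $v_a=F_{w_2}$ on the slice $w_1=b=0$), and the witness $a=2\sigma$, $w_2=\alpha/(64\beta)$ indeed makes it strictly negative. Your care about differentiating the expanded components rather than naively composing Jacobians is also warranted and handled correctly; since $v$ has zero $w_1$- and $b$-entries, the full $4\times 4$ quadratic form really does collapse to the $(w_2,a)$ block.

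The paper, however, does \emph{not} use condition~\ref{cond:A} here. It appeals directly to the defining inequality of quasimonotonicity: with $\beta+\gamma=0$ it picks $y=[0,0,c\sigma,0]$ for some $c>1$ and $x=[1,0,(c-\sqrt{c^2-1})\sigma,0]$, then computes $\langle F_{lin}(y),x-y\rangle=\alpha\sigma^2(c^2-1)>0$ while $\langle F_{lin}(x),x-y\rangle=-8(c\sqrt{c^2-1}-c^2+1)\sigma^2\beta$, which is strictly negative whenever $\beta>0$. That route avoids the Jacobian altogether and needs only first-order evaluations at two points, at the price of a slightly magical choice of $x$ (engineered so the $\alpha$-contribution cancels exactly). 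Your approach trades that cleverness for a systematic construction of the orthogonal direction $v$ plus a Jacobian computation, and rewards you with a transparent factored expression whose sign is easy to read off. Both are short; the paper's is more elementary, yours is more mechanical and arguably easier to discover.
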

\begin{proof}
For this proof, we make use of the traditional definition of quasimonotonicity. Consider
\begin{align}
    y &= [0,0,c\sigma,0], c>1 \\
    x &= [1,0,\underbrace{(c-\sqrt{c^2-1})}_{>0}\sigma,0] \\
    \langle F(y), x-y \rangle &= F_{w_2}(y) - \sqrt{c^2-1}\sigma F_a(y) = \alpha \sigma^2 \overbrace{(-1 + c^2)}^{>0} \\
\langle F(x), x-y \rangle &= F_{w_2}(x) - \sqrt{c^2-1}\sigma F_a(x) \\
&= \alpha \sigma^2 (-1 + (c-\sqrt{c^2-1})^2) - \sqrt{c^2-1} \sigma (8\beta (c-\sqrt{c^2-1})\sigma - 2\alpha (c-\sqrt{c^2-1})\sigma) \\
&= \alpha \sigma^2 (-1 + (c-\sqrt{c^2-1})^2 + 2(c-\sqrt{c^2-1})\sqrt{c^2-1}) - 8(c-\sqrt{c^2-1}) \sqrt{c^2-1} \sigma^2 \beta \\
&= \alpha \sigma^2 (-1 + c^2 - 2c\sqrt{c^2-1} + c^2 - 1 + 2c\sqrt{c^2-1} - 2(c^2-1)) - 8(c\sqrt{c^2-1}-c^2+1) \sigma^2 \beta \\
&= -8\underbrace{(c\sqrt{c^2-1}-c^2+1)}_{>0} \sigma^2 \beta
\end{align}
If $(\beta+\gamma)=0$ and $\alpha>0$, then $\beta \le 0$ for the system to be quasimonotone.
\end{proof}

\begin{lemma}
If $(\beta+\gamma)=0$, for $F_{lin}$ to be quasimonotone, $\alpha$ must not be strictly greater than zero, i.e. $\alpha \cancel{>} 0$.
\label{al0}
\end{lemma}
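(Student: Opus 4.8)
The plan is to contradict quasimonotonicity by violating condition~\ref{cond:A} of the restated criterion of \cite{crouzeix1996criteria}: it suffices to find an interior point $x \in \mathcal{X}$ and a direction $v \in \mathbb{R}^4$ with $v^\top F_{lin}(x) = 0$ but $v^\top J_{lin}(x) v < 0$. Suppose, for contradiction, that $\beta+\gamma = 0$, $\alpha > 0$, and $F_{lin}$ is quasimonotone. Lemma~\ref{bl0} already forces $\beta \le 0$ in this situation, so I may restrict attention to the regime $\alpha > 0$, $\beta + \gamma = 0$, $\beta \le 0$, and it remains only to exhibit the bad pair $(x,v)$ there.

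First I would specialize $F_{lin}$ to the hyperplane $\beta+\gamma = 0$. Writing $\hat\gamma := \beta - \gamma = 2\beta \le 0$ (the same reparameterization used in Lemma~\ref{bpgneglehalf}), every term carrying the factor $\beta+\gamma = \hat\beta$ vanishes, leaving $F_{lin} = [\,\alpha(a^2 + b^2 - \sigma^2),\ \alpha b,\ 2aw_2(2\hat\gamma w_2 - \alpha),\ (2\hat\gamma w_2 - \alpha)(2w_2 b + w_1)\,]^\top$. Evaluating at the interior point $x^\dagger = (w_2, w_1, a, b) = (1, 0, \sigma, 0)$ (which is feasible since $a = \sigma > 0$ and $\mathcal{X}$ is open) gives $F_{lin}(x^\dagger) = (0,\ 0,\ 2\sigma(2\hat\gamma - \alpha),\ 0)^\top$, a nonzero vector — because $2\hat\gamma - \alpha \le -\alpha < 0$ — that points purely along the $a$-coordinate.

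Next I would take $v = (0,0,0,1)^\top$, which is orthogonal to $F_{lin}(x^\dagger)$ by inspection. Then $v^\top J_{lin}(x^\dagger) v$ is simply the $(b,b)$-entry of the Jacobian, namely $\partial_b\big[(2\hat\gamma w_2 - \alpha)(2w_2 b + w_1)\big]\big|_{x^\dagger} = 2w_2(2\hat\gamma w_2 - \alpha)\big|_{w_2 = 1} = 2(2\hat\gamma - \alpha)$. Since $\hat\gamma = 2\beta \le 0$ and $\alpha > 0$, this equals $2(2\hat\gamma - \alpha) \le -2\alpha < 0$. Hence condition~\ref{cond:A} fails at $(x^\dagger, v)$, so $F_{lin}$ is not quasimonotone, contradicting the hypothesis. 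Combined with Lemma~\ref{bl0} (which disposes of $\beta > 0$), this shows $\alpha$ cannot be strictly positive, proving the lemma.

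The only step requiring any thought is the choice of witness in the second paragraph: recognizing that on $\beta+\gamma = 0$ the field decouples enough that a point with $a = \sigma$, $b = 0$, $w_2 = 1$ produces an $F_{lin}$ value lying entirely along the $a$-axis, whose orthogonal complement contains the $b$-axis, and that the Jacobian's $(b,b)$-entry $2w_2(2\hat\gamma w_2 - \alpha)$ is negative there precisely because $\hat\gamma \le 0$. Everything after that is a single partial derivative, so there is no substantive obstacle beyond locating this point.
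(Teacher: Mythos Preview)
Your argument is correct. After specializing to $\beta+\gamma=0$ and invoking Lemma~\ref{bl0} to secure $\hat\gamma=2\beta\le 0$, the witness $x^\dagger=(1,0,\sigma,0)$ and $v=e_b$ indeed give $v^\top F_{lin}(x^\dagger)=0$ while the $(b,b)$-Jacobian entry $2w_2(2\hat\gamma w_2-\alpha)\big|_{x^\dagger}=2(2\hat\gamma-\alpha)\le -2\alpha<0$, violating condition~\ref{cond:A} and hence quasimonotonicity.

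This is a genuinely different route from the paper. The paper argues directly from the definition~(\ref{def:quasimon}): it picks two points $y=[1,0,4\sigma,0]$ and $x=[0,0,2\sigma,0]$, computes $\langle F_{lin}(y),x-y\rangle=\alpha\sigma^2-64\beta\sigma^2>0$ (using $\alpha>0$, $\beta\le 0$) and $\langle F_{lin}(x),x-y\rangle=-3\alpha\sigma^2<0$, which is the forbidden sign pattern. Your approach instead exploits the differential criterion~\ref{cond:A}, trading two function evaluations and two inner products for a single point and a single Jacobian entry. The payoff is economy: once you observe that on $\beta+\gamma=0$ the field at $a=\sigma$, $b=w_1=0$ collapses to the $a$-axis, the whole computation reduces to one partial derivative. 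The paper's version has the minor advantage of being self-contained (no appeal to the Crouzeix--Ferland characterization), but both are equally valid and of comparable length.
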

\begin{proof}
We will assume $\alpha>0$, which by Lemma~\ref{bl0} implies $\beta \le 0$. This will lead to a contradiction. Consider
\begin{align}
    y &= [1,0,4\sigma,0] \\
    x &= [0,0,2\sigma,0] \\
    \langle F(y), x-y \rangle &= -F_{w_2}(y) - 2\sigma F_a(y) = -15\alpha \sigma^2 - 2\sigma (32 \sigma \beta - 8 \sigma \alpha) \\
    &= \alpha \sigma^2 - 64 \beta \sigma^2 \\
\langle F(x), x-y \rangle &= -F_{w_2}(x) - 2\sigma F_a(x) = -3 \alpha \sigma^2
\end{align}
If $(\beta+\gamma)=0$ and $\alpha>0$ (implies $\beta \le 0$), then $\langle F(y), x-y \rangle > 0$ and $\langle F(x), x-y \rangle < 0$, which breaks quasimonotonicity. Therefore, $\alpha \cancel{>} 0$.
\end{proof}

\begin{lemma}
\label{bg0}
If $(\beta+\gamma)=0$ and $\alpha < 0$, for $F_{lin}$ to be quasimonotone, $\beta$ must not be strictly less than zero, i.e. $\beta \cancel{<} 0$.
\end{lemma}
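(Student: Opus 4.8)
The plan is to follow the elimination-by-counterexample style of Lemmas~\ref{bl0} and~\ref{al0}: with $\beta+\gamma=0$ and $\alpha<0$ fixed, I would show that if in addition $\beta<0$ then $F_{lin}$ fails the traditional definition of quasimonotonicity, by exhibiting two feasible points $x,y$ with $\langle F(y),x-y\rangle>0$ but $\langle F(x),x-y\rangle<0$.

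First I would specialize the explicit expression for $F_{lin}$ to $\beta+\gamma=0$ (so $\gamma=-\beta$ and $\beta-\gamma=2\beta$) and restrict to points with $w_1=b=0$. At such points the first and last components vanish ($F_{w_1}=\alpha b=0$ and $F_b=(4\beta w_2-\alpha)(2w_2b+w_1)=0$), leaving the two-variable field
\begin{align}
F_{w_2}=\alpha(a^2-\sigma^2), \qquad F_a=2aw_2(4\beta w_2-\alpha). \nonumber
\end{align}
Since a displacement between two such points is again supported on the $(w_2,a)$ coordinates, any quasimonotonicity violation found here is a violation for the full four-dimensional $F_{lin}$, so it suffices to work in this subsystem.

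Next I would choose $y=[0,0,\tfrac{\sigma}{2},0]$ and $x=[t,0,\sigma,0]$ with $t=1+\tfrac{\alpha}{4\beta}$; since $\alpha$ and $\beta$ are both negative, $t>1>0$, and both $a$-coordinates are positive, so the points are feasible. At $y$ we have $w_2=0$, hence $F_a(y)=0$ and $F_{w_2}(y)=\alpha(\tfrac{1}{4}-1)\sigma^2=-\tfrac{3}{4}\alpha\sigma^2$, giving
\begin{align}
\langle F(y),x-y\rangle = t\,F_{w_2}(y)=-\tfrac{3}{4}\alpha\sigma^2 t>0 \nonumber
\end{align}
because $\alpha<0$. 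At $x$ we have $a=\sigma$, so $F_{w_2}(x)=0$, and $t$ is chosen precisely so that $4\beta t-\alpha=4\beta$, whence $F_a(x)=2\sigma t\cdot 4\beta=8\beta\sigma t$ and
\begin{align}
\langle F(x),x-y\rangle = \tfrac{\sigma}{2}\,F_a(x)=4\beta\sigma^2 t<0 \nonumber
\end{align}
because $\beta<0$. Thus $\langle F(y),x-y\rangle>0$ while $\langle F(x),x-y\rangle<0$, which contradicts quasimonotonicity, so $\beta\cancel{<}0$.

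The only delicate point is making $4\beta w_2-\alpha$ negative at the second point: with $\alpha,\beta<0$ this forces $w_2>\alpha/(4\beta)$, which is why the counterexample's $w_2$-coordinate must depend on $\alpha$ and $\beta$ (just as $c$ depended on $\sigma$ in the earlier lemmas). Beyond that the argument is a routine sign check, and unlike Lemma~\ref{bpgneglehalf} it requires no case analysis on the size of $\sigma^2$.
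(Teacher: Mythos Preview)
Your proof is correct. Both your argument and the paper's exhibit a pair of points in the $(w_2,a)$-subspace (with $w_1=b=0$) violating the quasimonotonicity inequality, but the constructions differ. The paper takes $y=[0,0,c\sigma,0]$ with $c>1$ and $x=[-1,0,(c+\sqrt{c^2-1})\sigma,0]$; the $a$-coordinate of $x$ is chosen so that the $\alpha$-contributions from $F_{w_2}(x)$ and from $F_a(x)$ cancel exactly, leaving $\langle F(x),x-y\rangle$ proportional to $\beta$ alone. Your construction instead kills $F_{w_2}(x)$ outright by setting $a=\sigma$ and then tunes $w_2=t=1+\alpha/(4\beta)$ so that $4\beta t-\alpha=4\beta$, again leaving a quantity proportional to $\beta$.

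What each buys: the paper's points are \emph{independent of $\alpha$ and $\beta$}, so a single fixed pair witnesses the failure uniformly over all $\alpha<0,\beta<0$; this directly contradicts your closing remark that the $w_2$-coordinate ``must'' depend on $\alpha,\beta$. On the other hand, your choice avoids the $\sqrt{c^2-1}$ algebra entirely and makes the sign verification a two-line computation, so it is arithmetically cleaner. Either route is perfectly adequate for the lemma.
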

\begin{proof}
Consider
\begin{align}
    y &= [0,0,c\sigma,0], c>1 \\
    x &= [-1,0,(c+\sqrt{c^2-1})\sigma,0] \\
    \langle F(y), x-y \rangle &= -F_{w_2}(y) + \sqrt{c^2-1}\sigma F_a(y) = -\alpha \sigma^2 \overbrace{(-1 + c^2)}^{>0} \\
\langle F(x), x-y \rangle &= -F_{w_2}(x) + \sqrt{c^2-1}\sigma F_a(x) \\
&= -\alpha \sigma^2 (-1 + (c+\sqrt{c^2-1})^2) + \sqrt{c^2-1} \sigma (8\beta (c+\sqrt{c^2-1}) \sigma + 2\alpha (c+\sqrt{c^2-1})\sigma) \\
&= \alpha \sigma^2 (1 - (c+\sqrt{c^2-1})^2 + 2\sqrt{c^2-1}(c+\sqrt{c^2-1})) + 8(c+\sqrt{c^2-1}) \sqrt{c^2-1} \sigma^2 \beta \\
&= \alpha \sigma^2 (1 - c^2 - c^2 + 1 -2c\sqrt{c^2-1} +2c\sqrt{c^2-1} + 2c^2 - 2) + 2\sqrt{c^2-1}(c+\sqrt{c^2-1})) \beta \\
&= 2\underbrace{\sqrt{c^2-1}(c+\sqrt{c^2-1})}_{> 0} \beta
\end{align}
If $\alpha<0$, then $\beta \ge 0$ to maintain quasimonotonicity.
\end{proof}

\begin{lemma}
\label{ag0}
If $(\beta+\gamma)=0$, for $F_{lin}$ to be quasimonotone, $\alpha$ must not be strictly less than zero, i.e. $\alpha \cancel{<} 0$.
\end{lemma}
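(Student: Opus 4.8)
The plan is to mirror the proof of Lemma~\ref{al0}. Assume for contradiction that $\alpha < 0$. Since $\beta+\gamma = 0$ and $\alpha < 0$, Lemma~\ref{bg0} applies and gives $\beta \ge 0$, so it suffices to exhibit one pair of feasible points whose inner products violate the quasimonotonicity requirement~(\ref{def:quasimon}) simultaneously for every admissible $\beta \ge 0$.

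First I would restrict attention to the invariant slice $w_1 = b = 0$, on which (writing $\gamma = -\beta$) the map collapses to $F_{w_2} = \alpha(a^2-\sigma^2)$, $F_{w_1} = 0$, $F_a = 2w_2 a(4\beta w_2 - \alpha)$, $F_b = 0$. The structural fact driving the choice of witness — the analogue of the observation behind Lemma~\ref{al0} — is that evaluating $F$ at a point with $w_2 = 0$ annihilates $F_a$, so the inner product against such a point is $\beta$-free; at the other point the coefficient of $\beta$ in the inner product is a non-negative multiple of $w_2^2 a$ times $(a_x - a_y)$, whose sign is under our control through the ordering of the two $a$-coordinates.

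Concretely, I would take the point with a nonzero discriminator entry to be $[-1,\,0,\,\tfrac{\sigma}{4},\,0]$ and the point with a zero discriminator entry to be $[0,\,0,\,2\sigma,\,0]$, in $(w_2,w_1,a,b)$ coordinates; both are feasible since $a>0$. A short computation then gives $\langle F([0,0,2\sigma,0]),\,x-x'\rangle = 3\alpha\sigma^2 < 0$ (with no $\beta$ dependence, because $w_2 = 0$ there), while $\langle F([-1,0,\tfrac{\sigma}{4},0]),\,x-x'\rangle = -\tfrac{1}{16}\alpha\sigma^2 + \tfrac{7}{2}\beta\sigma^2$, which is strictly positive for all $\alpha < 0$ and $\beta \ge 0$. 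Taking $x' = [-1,0,\tfrac{\sigma}{4},0]$ and $x = [0,0,2\sigma,0]$, the premise of~(\ref{def:quasimon}) holds while the conclusion fails, contradicting quasimonotonicity of $F_{lin}$; hence $\alpha \not< 0$. Together with Lemma~\ref{al0} and Corollary~\ref{bpgeq0}, this forces $\alpha = 0$.

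The one delicate point — and the step I would double-check most carefully — is arranging the witness so that the premise inequality is uniform in $\beta \ge 0$. Denoting the two $a$-values by $a_x$ (for the $w_2 = 0$ point) and $a_y$ (for the $w_2 = -1$ point), one needs $a_x > \sigma$, together with the negative $w_2$-entry, to make the conclusion term negative; $a_x \ge a_y$ so that the $\beta$-coefficient in the premise term is non-negative; and $a_y < a_x - \sqrt{a_x^2 - \sigma^2}$ so that the $\beta$-independent part of the premise term is already positive on its own. The displayed choice $(a_x,a_y) = (2\sigma,\tfrac{\sigma}{4})$ satisfies all three with room to spare; verifying the third inequality is the part that actually requires thought, everything else being routine arithmetic and an appeal to Lemma~\ref{bg0}.
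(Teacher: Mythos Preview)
Your proposal is correct and follows essentially the same approach as the paper's proof: assume $\alpha<0$, invoke Lemma~\ref{bg0} to get $\beta\ge 0$, then exhibit a pair of points on the slice $w_1=b=0$ with $w_2=-1$ at one point and $w_2=0$ at the other so that the premise inner product is positive for all $\beta\ge 0$ while the conclusion inner product is $\beta$-free and strictly negative. The only difference is cosmetic: the paper uses $(a_y,a_x)=(\tfrac{\sigma}{4},\tfrac{3\sigma}{2})$ whereas you use $(\tfrac{\sigma}{4},2\sigma)$, and your structural discussion of the constraints on $(a_x,a_y)$ is in fact more explicit than what the paper provides.
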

\begin{proof}
We will assume $\alpha<0$, which by~\ref{bg0} implies $\beta \ge 0$. This will lead to a contradiction.
\begin{align}
    y &= [-1,0,c\sigma,0], c=\frac{1}{4} \\
    x &= [0,0,d\sigma,0], d=\frac{3}{2} \\
    \langle F(y), x-y \rangle &= F_{w_2}(y) + (d-c)\sigma F_a(y) = \alpha \sigma^2 \overbrace{(-1 + c^2)}^{<0} + (d-c)\sigma (8c \sigma \beta + 2c \sigma \alpha) \\
    &= \alpha \sigma^2 (-1 + c^2 + 2c(d-c)) + 8c(d-c) \sigma^2 \beta \\
    &= \alpha \sigma^2 (-1 - c^2 + 2cd) + 8c(d-c) \sigma^2 \beta \\
    &= -\frac{5}{16} \alpha \sigma^2 + 40 \sigma^2 \beta \\
\langle F(x), x-y \rangle &= F_{w_2}(x) + (d-c)\sigma F_a(x) = \alpha \sigma^2 \underbrace{(-1 + d^2)}_{>0} \\
&= \frac{5}{4} \alpha \sigma^2
\end{align}
If $(\beta+\gamma)=0$ and $\alpha<0$ (implies $\beta \ge 0$), then $\langle F(y), x-y \rangle > 0$ and $\langle F(x), x-y \rangle < 0$, which breaks quasimonotonicity. Therefore, $\alpha \ge 0$.
\end{proof}

\begin{corollary}
\label{aeq0}
Together, Corollary~\ref{bpgeq0} and Lemmas~\ref{bl0}-\ref{ag0} imply that $\alpha$ must equal zero for $F_{lin}$ to be quasimonotone.
\end{corollary}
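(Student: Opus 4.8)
The plan is to read this off directly from the pieces already assembled, since the corollary is pure bookkeeping over the preceding lemmas. First I would invoke Corollary~\ref{bpgeq0}, which combines Lemmas~\ref{bpgpos},~\ref{bpgneglehalf}, and~\ref{bpgneggehalf} to conclude that any quasimonotone $F_{lin}$ must satisfy $\beta+\gamma=0$. Under exactly this hypothesis, Lemma~\ref{al0} asserts $\alpha\cancel{>}0$ and Lemma~\ref{ag0} asserts $\alpha\cancel{<}0$. Intersecting these two one-sided exclusions forces $\alpha=0$, which is the claim.

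The one thing to be careful about is the dependency order among the cited results. Lemma~\ref{al0}'s proof assumes $\alpha>0$ and then uses Lemma~\ref{bl0} (which itself presumes $\beta+\gamma=0$ and $\alpha>0$) to derive $\beta\le0$, reaching a contradiction; symmetrically, Lemma~\ref{ag0} routes through Lemma~\ref{bg0}. So the argument is really the chain: Corollary~\ref{bpgeq0} establishes $\beta+\gamma=0$; then (Lemma~\ref{bl0} $\Rightarrow$ Lemma~\ref{al0}) rules out $\alpha>0$ and (Lemma~\ref{bg0} $\Rightarrow$ Lemma~\ref{ag0}) rules out $\alpha<0$; hence $\alpha=0$. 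Every link is already in place, so no new computation is required, and the corollary follows in a single line once the lemmas are granted.

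There is no genuine obstacle at the level of the corollary. Had these lemmas not been available, the real work would sit inside them: for each sign regime of $(\alpha,\beta,\gamma)$ one must exhibit explicit feasible points $y$ and $x$ (typically living on the $(w_2,a)$ axis, with carefully matched quadratic-root offsets of the form $c\pm\sqrt{c^2-1}$ as in Lemmas~\ref{bl0} and~\ref{bg0}) for which $\langle F_{lin}(y),\, x-y\rangle>0$ while $\langle F_{lin}(x),\, x-y\rangle<0$, directly contradicting the traditional definition of quasimonotonicity in~(\ref{def:quasimon}). Selecting those witnesses is the delicate part of the overall development, but for the corollary as stated it suffices to assemble the exclusions already proven.
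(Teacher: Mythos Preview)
Your proposal is correct and matches the paper's approach: the corollary is stated without an explicit proof precisely because it is pure bookkeeping, and your reconstruction of the dependency chain (Corollary~\ref{bpgeq0} forces $\beta+\gamma=0$; then Lemma~\ref{bl0}$\Rightarrow$Lemma~\ref{al0} excludes $\alpha>0$ and Lemma~\ref{bg0}$\Rightarrow$Lemma~\ref{ag0} excludes $\alpha<0$) is exactly the intended reading. Nothing further is needed.
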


\begin{lemma}
\label{bl02}
If $(\beta+\gamma)=0$ and $\alpha=0$, for $F_{lin}$ to be quasimonotone, $\beta$ must not be strictly greater than zero, i.e. $\beta \cancel{>} 0$.
\end{lemma}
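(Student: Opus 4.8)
The plan is to rule out $\beta>0$ by exhibiting an explicit violation of the (necessary) condition~(A) for quasimonotonicity from the criterion of~\cite{crouzeix1996criteria}. First I would specialize $F_{lin}$ to the regime already forced by the preceding corollaries and lemmas, namely $\alpha=0$ and $\gamma=-\beta$ (from $\beta+\gamma=0$). Plugging $\alpha=0$, $\beta+\gamma=0$ into the matrix $\alpha I+\beta J^\top-\gamma J$ written at the start of this subsection, it collapses to $\mathrm{diag}(0,0,-4\beta w_2,-4\beta w_2)$, so that $F_{lin}=\big[\,0,\ 0,\ 8\beta w_2^2 a,\ 8\beta w_2^2 b+4\beta w_2 w_1\,\big]^\top$. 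The key structural observation is that this map has zero $w_2$- and $w_1$-components and is scaled throughout by $w_2$: on any slice with $w_2$ and $w_1$ held fixed, the restricted Jacobian in $(a,b)$ is $8\beta w_2^2 I\succ 0$, so any obstruction to monotonicity must come from letting $w_2$ (or $w_1$) vary. This tells us where to search for a counterexample — perturb $w_1$ and $b$ simultaneously at a point with $w_2\neq 0$.

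Next I would compute the full Jacobian $J_{lin}=\partial F_{lin}/\partial(w_2,w_1,a,b)$, whose only nonzero rows are the $a$-row $[\,16\beta w_2 a,\ 0,\ 8\beta w_2^2,\ 0\,]$ and the $b$-row $[\,16\beta w_2 b+4\beta w_1,\ 4\beta w_2,\ 0,\ 8\beta w_2^2\,]$. Then I would take the feasible point $x=(w_2,w_1,a,b)=(1,0,1,0)$ (feasible since $a=1>0$), at which $F_{lin}(x)=[\,0,0,8\beta,0\,]^\top$, together with the direction $v=[\,0,-3,0,1\,]^\top$. One checks at once that $v^\top F_{lin}(x)=0$, while $v^\top J_{lin}(x)\,v = 16\beta v_1v_3 + 8\beta v_3^2 + 4\beta v_2 v_4 + 8\beta v_4^2 = 4\beta(-3)(1)+8\beta(1)^2 = -4\beta < 0$ whenever $\beta>0$. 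Hence condition~(A) fails for this $F_{lin}$, and since condition~(A) is necessary for quasimonotonicity, $F_{lin}$ with $\alpha=0$, $\beta+\gamma=0$ and $\beta>0$ is not quasimonotone, which is the claim. (As a cross-check using the traditional definition one may instead take $y=(1,3,1,-1)$ and $x=(1,-3,1,1)$: a short computation gives $\langle F_{lin}(y),x-y\rangle=8\beta>0$ but $\langle F_{lin}(x),x-y\rangle=-8\beta<0$.)

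I do not expect a real obstacle here. Once $F_{lin}$ is written explicitly in this degenerate regime the counterexample is one line, and the only points needing care are: carrying the specialization $\alpha=0,\ \gamma=-\beta$ correctly through the matrix product; checking the test point respects the constraint $a>0$ (so that it lies in the open convex set required by the theorem); and invoking the right half of the Crouzeix–Ferland criterion (necessity of (A), not sufficiency). The single genuinely useful idea is the reduction noted in the first paragraph — that the failure cannot come from any fixed-$w_2$ slice — which immediately points to the perturbation used above.
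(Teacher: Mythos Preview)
Your proposal is correct. Your primary argument via condition~(A) differs in form from the paper's proof, which uses the raw definition of quasimonotonicity directly: the paper picks $y=[1,0,1,0]$, $x=[1,-7,2,1]$ and computes $\langle F_{lin}(y),x-y\rangle=8\beta>0$ while $\langle F_{lin}(x),x-y\rangle=-4\beta<0$. Your cross-check with $y=(1,3,1,-1)$, $x=(1,-3,1,1)$ is exactly this style of argument with different test points, so the two proofs are morally identical once the specialization $F_{lin}=[0,0,8\beta w_2^2 a,\,8\beta w_2^2 b+4\beta w_2 w_1]^\top$ is written down. Your Jacobian route buys a slightly more systematic way to locate the counterexample (the observation that any fixed-$w_2$ slice is monotone points straight at the off-diagonal $4\beta w_2$ entry in the $b$-row, $w_1$-column), whereas the paper simply exhibits two points; neither approach has any real advantage over the other here.
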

\begin{proof}
Consider
\begin{align}
    y &= [1,0,1,0] \\
    x &= [1,-7,2,1] \\
    \langle F(y), x-y \rangle &= -7F_{w_1}(y) + F_a(y) + F_b(y) = 8\beta \\
\langle F(x), x-y \rangle &= -7F_{w_1}(x) + F_a(x) + F_b(x) = 16\beta+4\beta(2-7) \\
&= -4\beta
\end{align}
If $\beta > 0$, then this system is not quasimonotone. Therefore, $\beta \le 0$.
\end{proof}

\begin{lemma}
\label{bg02}
If $(\beta+\gamma)=0$ and $\alpha=0$, for $F_{lin}$ to be quasimonotone, $\beta$ must not be strictly less than zero, i.e. $\beta \cancel{<} 0$.
\end{lemma}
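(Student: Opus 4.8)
The plan is to run the same counterexample strategy as Lemma~\ref{bl02}, but re-tuned for negative $\beta$, using the traditional definition of quasimonotonicity in~(\ref{def:quasimon}).

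First I would specialize the general expression for $F_{lin}$ (the product $(\alpha I + \beta J^\top - \gamma J)F$ written out earlier) to the standing hypotheses $\alpha = 0$ and $\beta+\gamma = 0$, setting $\mu = 0$ as in the surrounding lemmas. Substituting $\gamma = -\beta$ (so $\beta - \gamma = 2\beta$) kills every term carrying a factor of $\alpha$ or of $\beta+\gamma$, leaving
\[
F_{lin} = \big[\,0,\; 0,\; 8\beta w_2^2 a,\; 4\beta w_2(2 w_2 b + w_1)\,\big]^\top .
\]
The structural point to extract is that the $w_2$- and $w_1$-components are identically zero, so in any pairing $\langle F(\cdot), x - y\rangle$ only the $a$- and $b$-coordinates of $x-y$ survive; if $y$ and $x$ agree in $w_2, w_1, a$ and differ only in $b$, the whole question collapses to the scalar map $b \mapsto 8\beta w_2^2 b$.

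Second I would display the violating pair. Take $y = [w_2,w_1,a,b] = [1,0,1,-1]$ and $x = [1,0,1,1]$ (both feasible, since the only constraint is $a>0$ and here $a=1$). Then $x-y = [0,0,0,2]$ with $F_b(y) = -8\beta$ and $F_b(x) = 8\beta$, so
\[
\langle F(y), x-y\rangle = -16\beta, \qquad \langle F(x), x-y\rangle = 16\beta .
\]
If $\beta < 0$ the first quantity is strictly positive while the second is strictly negative, contradicting the implication $\langle F(x'),x-x'\rangle > 0 \Rightarrow \langle F(x),x-x'\rangle \ge 0$ that defines quasimonotonicity. Hence $\beta \cancel{<} 0$, and combined with Corollaries~\ref{bpgeq0}--\ref{aeq0} and Lemma~\ref{bl02} this forces $\beta = \gamma = 0$ (Corollary~\ref{beq0}), so the only quasimonotone member of the $F_{lin}$ family is the trivial map.

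I expect no real obstacle here: simplifying $F_{lin}$ under the two constraints is routine bookkeeping, and evaluating the two inner products at the chosen points is immediate. The only thing needing a moment's care is that the test points remain in the feasible region (positive diagonal $a$), which is why I keep $a=1$ rather than chasing a cleaner $a$-coordinate asymmetry; and, as always with these $\beta<0$ versus $\beta>0$ complements, one must check that the previous lemma's points do \emph{not} already work — the premise of the implication in~(\ref{def:quasimon}) becomes vacuous for them once $\beta$ flips sign, which is exactly why a fresh pair, with the $b$-displacement reversed, is required.
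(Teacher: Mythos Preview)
Your proof is correct and uses the same strategy as the paper---exhibiting an explicit pair of points at which the quasimonotonicity implication~(\ref{def:quasimon}) fails---only with a different (and cleaner) choice of witnesses: the paper takes $y=[1,0,2,0]$, $x=[1,1,1,1]$ and obtains $\langle F(y),x-y\rangle=-16\beta$, $\langle F(x),x-y\rangle=4\beta$, whereas your reduction to the one-dimensional map $b\mapsto 8\beta w_2^2 b$ and the pair $y=[1,0,1,-1]$, $x=[1,0,1,1]$ yields $-16\beta$ and $16\beta$ with essentially no computation. Both choices certify the same conclusion.
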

\begin{proof}
Consider
\begin{align}
    y &= [1,0,2,0] \\
    x &= [1,1,1,1] \\
    \langle F(y), x-y \rangle &= F_{w_1}(y) - F_a(y) + F_b(y) = -16\beta \\
\langle F(x), x-y \rangle &= F_{w_1}(x) - F_a(x) + F_b(x) = -8\beta + 12\beta = 4\beta
\end{align}
If $\beta < 0$, then this system is not quasimonotone. Therefore, $\beta \ge 0$.
\end{proof}

\begin{corollary}
[$(\beta+\gamma)=0,\alpha=0 \Rightarrow \beta = \gamma = 0$]
\label{beq0}
Together, Lemmas~\ref{bl02} and~\ref{bg02} imply that $\beta=0$, which, along with Corollary~\ref{bpgeq0}, imply that $\gamma=0$ as well.
\end{corollary}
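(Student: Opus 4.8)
The plan is straightforward: this corollary is the closing bookkeeping step of the process-of-elimination argument, so I would not prove anything new but simply assemble the facts already in hand. First I would record the two standing necessary conditions established upstream: Corollary~\ref{aeq0} forces $\alpha=0$ and Corollary~\ref{bpgeq0} forces $\beta+\gamma=0$ for $F_{lin}$ to be quasimonotone on the 1-d LQ-GAN. These are exactly the hypotheses under which Lemmas~\ref{bl02} and~\ref{bg02} are stated, so under the assumption that $F_{lin}$ is quasimonotone both lemmas now apply unconditionally.

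Next I would invoke Lemma~\ref{bl02} to get $\beta\le 0$ and Lemma~\ref{bg02} to get $\beta\ge 0$, hence $\beta=0$. Substituting back into the relation $\beta+\gamma=0$ yields $\gamma=0$ as well. Combined with $\alpha=0$, this shows the only coefficient triple $(\alpha,\beta,\gamma)$ for which $F_{lin}$ can be quasimonotone on the full 1-d LQ-GAN is $(0,0,0)$, i.e.\ $F_{lin}\equiv 0$. That is precisely the claim of the surrounding subsection: the only quasimonotone linear combination of $F$, $JF$, and $J^\top F$ is the trivial one.

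There is essentially no mathematical obstacle remaining at this step — all of the real work lives in the lemmas, each of which rules out one sign of $\beta$ by exhibiting an explicit pair of witness points in $\mathbb{R}^4$ at which the traditional quasimonotonicity inequality fails. The only subtlety worth verifying is the logical ordering: Lemmas~\ref{bl02} and~\ref{bg02} presuppose $\alpha=0$ and $\beta+\gamma=0$, so one must be careful to have Corollaries~\ref{aeq0} and~\ref{bpgeq0} established first, which they are. (An alternative route would be to collapse the two lemmas into a single test pair that simultaneously obstructs $\beta>0$ and $\beta<0$, but keeping them separate mirrors the symmetric structure of the preceding sign-elimination steps and is easier to check.)
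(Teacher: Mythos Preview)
Your proposal is correct and follows exactly the paper's approach: the corollary in the paper has no separate proof beyond the statement itself, which records precisely the bookkeeping you describe---combine $\beta\le 0$ from Lemma~\ref{bl02} with $\beta\ge 0$ from Lemma~\ref{bg02} to get $\beta=0$, then use $\beta+\gamma=0$ from Corollary~\ref{bpgeq0} to get $\gamma=0$.
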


\begin{corollary}
\label{lincombnotquasi}
[$\alpha=\beta=\gamma=0$]
Together, Corollaries~\ref{bpgeq0} and~\ref{aeq0}, and~\ref{beq0} imply that there is no non-trivial linear combination that induces a quasimonotone LQ-GAN system.
\end{corollary}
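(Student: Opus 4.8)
The plan is to obtain Corollary~\ref{lincombnotquasi} purely as a bookkeeping consequence of the three sign-elimination corollaries already established, so essentially no new computation is required; the only content is to check that the nested necessary conditions they impose on the coefficient triple $(\alpha,\beta,\gamma)$ of $F_{lin}=(\alpha I + \beta J^\top - \gamma J)F$ intersect only at the origin.

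First I would invoke Corollary~\ref{bpgeq0} — itself the conjunction of Lemmas~\ref{bpgpos}, \ref{bpgneglehalf}, and~\ref{bpgneggehalf} — to conclude that quasimonotonicity of $F_{lin}$ on the full $1$-d LQ-GAN forces $\beta+\gamma=0$ (Lemma~\ref{bpgpos} rules out $\beta+\gamma<0$ via the witness pair $y=[0,0,\sigma,-\sigma]$, $x=[0,0,\sigma,\sigma]$; Lemmas~\ref{bpgneglehalf} and~\ref{bpgneggehalf} rule out $\beta+\gamma>0$, splitting on $\sigma^2\le\tfrac12$ versus $\sigma^2>\tfrac12$). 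Next, feeding the now-established hypothesis $\beta+\gamma=0$ into Corollary~\ref{aeq0} (the conjunction of Lemmas~\ref{bl0}--\ref{ag0}), I get $\alpha=0$: Lemmas~\ref{bl0}, \ref{al0} eliminate $\alpha>0$ and Lemmas~\ref{bg0}, \ref{ag0} eliminate $\alpha<0$, each time using the sign restriction on $\beta$ derived in the preceding lemma to close the argument. Finally, with $\beta+\gamma=0$ and $\alpha=0$ both in hand, Corollary~\ref{beq0} (from Lemmas~\ref{bl02} and~\ref{bg02}) forces $\beta=0$, and then $\beta+\gamma=0$ gives $\gamma=0$. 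Hence $(\alpha,\beta,\gamma)=(0,0,0)$, i.e.\ $F_{lin}\equiv 0$, which is precisely the trivial linear combination; any non-trivial choice of coefficients yields a map that violates quasimonotonicity at one of the explicit witness configurations.

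There is no real obstacle in this concluding step: the difficulty is entirely front-loaded into the lemmas, whose work was choosing, for each sign regime, a witness pair $(x,y)$ in the traditional definition (or a direction $v$ with $v^\top F(x)=0$ and $v^\top J(x)v<0$, via condition~\ref{cond:A}) that exposes non-quasimonotonicity, and — in the delicate $\sigma^2\le\tfrac12$ case — showing that the upper bound on $\hat{\beta}^2$ imposed by one witness lies strictly below the lower bound imposed by another, so no magnitude of $\hat{\beta}$ can rescue the system. The capstone corollary merely records that the intersection of the three necessary conditions is $\{\mathbf{0}\}$, which, combined with Propositions~\ref{Fregnotquasi} through~\ref{Faltnotquasi} for the remaining named maps, yields Proposition~\ref{nonequasi}.
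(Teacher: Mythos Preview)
Your proposal is correct and matches the paper's approach exactly: the paper offers no separate proof for this corollary, treating it as an immediate bookkeeping consequence of Corollaries~\ref{bpgeq0}, \ref{aeq0}, and~\ref{beq0}, which is precisely what you spell out. Your additional commentary recapitulating the witness constructions in the underlying lemmas is accurate but not required for this concluding step.
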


\begin{corollary}
\label{Fccnotquasi}
$F_{cc}$, $F_{eg}$, $F_{con}$, and $F$ are not quasimonotone for the LQ-GAN system.
\end{corollary}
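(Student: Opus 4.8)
The plan is to reduce this corollary directly to Corollary~\ref{lincombnotquasi}. Recall that $F_{lin} = (\alpha I + \beta J^\top - \gamma J)F$ is exactly the family of maps obtained by taking linear combinations of $F$, $JF$, and $J^\top F$, and that Corollary~\ref{lincombnotquasi} establishes that the only member of this family that is quasimonotone for the full LQ-GAN is the trivial one, $\alpha = \beta = \gamma = 0$ (i.e.\ $F_{lin} \equiv 0$). So it suffices to exhibit, for each of $F$, $F_{eg}$, $F_{con}$, and $F_{cc}$, a coefficient triple $(\alpha,\beta,\gamma) \neq (0,0,0)$ realizing it as an instance of $F_{lin}$.

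First I would read off the triples from the definitions already given in the excerpt: $F$ itself is $F_{lin}$ with $(\alpha,\beta,\gamma) = (1,0,0)$; from the Taylor expansion~(\ref{eqn:tayeg}), $F_{eg} = (I - \eta J)F$ corresponds to $(1,0,\eta)$; from~(\ref{eqn:con}), $F_{con} = (I + \eta J^\top)F$ corresponds to $(1,\eta,0)$; and from the curl derivation in Appendix~\ref{Fccderiv}, $F_{cc} = -\tfrac{1}{2}(J - J^\top)F = \tfrac{1}{2}(J^\top - J)F$ corresponds to $(0,\tfrac{1}{2},\tfrac{1}{2})$. In every case the triple is nonzero (the first three have $\alpha = 1$, and $F_{cc}$ has $\beta = \gamma = \tfrac{1}{2} \neq 0$), independently of the value of $\eta$.

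Applying Corollary~\ref{lincombnotquasi} to each of these nonzero triples then immediately yields that none of $F$, $F_{eg}$, $F_{con}$, $F_{cc}$ is quasimonotone for the LQ-GAN system, which is the claim. Since quasimonotonicity is the weakest notion in the hierarchy~(\ref{def:quasimon}), this also precludes pseudomonotonicity, monotonicity, and the strict/strong variants for these maps on the full system.

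The substantive work has already been carried out in establishing Corollary~\ref{lincombnotquasi} (the chain of Lemmas~\ref{bpgpos} through~\ref{bg02}); this corollary is pure bookkeeping on top of it. The only place an error could creep in is the sign-and-scalar matching of each algorithm's update map to its $F_{lin}$ representation, so I would double-check those identifications against the update equations~(\ref{EGstep1}), (\ref{EGstep2}), (\ref{eqn:con}) and the derivation in Appendix~\ref{Fccderiv}. That verification is the ``hard part,'' such as it is.
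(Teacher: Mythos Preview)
Your proposal is correct and takes essentially the same approach as the paper: both argue that $F$, $F_{eg}$, $F_{con}$, and $F_{cc}$ are nontrivial members of the $F_{lin}$ family and then invoke Corollary~\ref{lincombnotquasi}. Your version is slightly more explicit in exhibiting the coefficient triples, but the argument is identical in substance.
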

\begin{proof}
These maps are all linear combinations of $F$, $JF$ and $J^\top F$, therefore, by Corollary~\ref{lincombnotquasi}, they are not quasimonotone for the LQ-GAN system.
\end{proof}

\subsection{Analysis of the $(w_2,a)$-Subsystem}

Note that if a map is not quasimonotone for the $(w_2,a)$-subsystem, then it is not quasimonotone for the full system. This is because an analysis of the $(w_2,a)$-subsystem is equivalent to an analysis of a subspace of the full system with $w_1=b=0$.

\begin{proposition}
\label{Fnotquasi}
$F$ is not quasimontone for the $(w_2,a)$-subsystem. Also, its Jacobian is not Hurwitz.
\end{proposition}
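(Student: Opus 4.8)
Recall from Table~\ref{tab:maps} that the $(w_2,a)$-subsystem map is $F^{w_2,a}(w_2,a)=[\,a^2-\sigma^2,\ -2w_2a\,]^\top$, whose Jacobian is
\[
J(w_2,a)=\begin{bmatrix}0 & 2a\\ -2a & -2w_2\end{bmatrix}.
\]
For the Hurwitz claim, the plan is simply to evaluate $J$ at the unique fixed point $x^*=(w_2^*,a^*)=(0,\sigma)$ (noted just after Table~\ref{tab:maps}): then $J(x^*)=\begin{bmatrix}0 & 2\sigma\\ -2\sigma & 0\end{bmatrix}$ has characteristic polynomial $\lambda^2+4\sigma^2$, so $\lambda_{1,2}=\pm 2\sigma i$. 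Both eigenvalues have zero real part, hence $J$ is not Hurwitz; this is exactly the undamped rotation that motivates the paper.

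For the quasimonotonicity claim, I would appeal to the necessity of condition~(A) in the Crouzeix--Ferland criterion (\cite{crouzeix1996criteria}, restated in the main text): if $F^{w_2,a}$ were quasimonotone, then $v^\top J(x)v\ge 0$ for every feasible $x$ and every $v$ with $v^\top F(x)=0$. To obtain such a $v$ for free, take $v=SF(x)$ with $S=\begin{bmatrix}0 & 1\\ -1 & 0\end{bmatrix}$; skew-symmetry of $S$ gives $v^\top F(x)=F(x)^\top S^\top F(x)=-F(x)^\top S F(x)=0$ automatically. Writing $v=(-2w_2a,\ \sigma^2-a^2)^\top$ and expanding the quadratic form, the cross terms cancel and one is left with $v^\top J(x)v=-2w_2(\sigma^2-a^2)^2$. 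Choosing, e.g., $(w_2,a)=(1,2\sigma)$ — which lies in the feasible region $a>0$, is a point where $v\neq 0$, and uses $\sigma^2>0$ — yields $v^\top J(x)v=-18\sigma^4<0$, violating condition~(A). Hence $F^{w_2,a}$ is not quasimonotone; and since the $(w_2,a)$-subsystem is the subspace $w_1=b=0$ of the full system, $F$ fails quasimonotonicity there too.

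The whole argument reduces to one short $2\times 2$ computation once the skew-symmetric trick is in hand, so I do not anticipate a genuine obstacle. The only points to be careful about are (i) verifying the test point respects the generator constraint $a>0$ and the nondegeneracy $\sigma^2>0$ (otherwise the quadratic form merely vanishes instead of being strictly negative), and (ii) observing that taking $v$ to be a $90^\circ$ rotation of $F(x)$ collapses the four-variable traditional definition of quasimonotonicity to a single-point check through condition~(A), which is what makes the calculation trivial.
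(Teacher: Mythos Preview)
Your proof is correct, but both parts take a different route from the paper.

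For quasimonotonicity, the paper uses the defining implication~(\ref{def:quasimon}) directly: it exhibits two points $y=(\sigma,3\sigma)$ and $x=(3\sigma,5\sigma)$ (embedded in the full 4-vector with $w_1=b=0$) and checks that $\langle F(y),x-y\rangle=4\sigma^3>0$ while $\langle F(x),x-y\rangle=-12\sigma^3<0$. You instead invoke the necessary differential condition~(A) with the skew-symmetric trick $v=SF(x)$ at a single point, obtaining $v^\top J v=-2w_2(\sigma^2-a^2)^2$. Your approach is exactly the machinery the paper deploys for the harder maps ($F_{reg}$, $F_{unr}$, $F_{con}$, $F_{lin}$), so it is more uniform with the rest of the appendix and yields a clean closed-form expression valid for all $(w_2,a)$; the paper's two-point check is slightly more elementary for this particular low-degree map and avoids citing the Crouzeix--Ferland criterion.

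For Hurwitz, the paper argues that $\mathrm{Tr}\,J^{w_2,a}=-2w_2<0$ whenever $w_2>0$, forcing an eigenvalue with negative real part over an entire half-space. You instead evaluate $J$ at the equilibrium $x^*=(0,\sigma)$ and obtain purely imaginary eigenvalues $\pm 2\sigma i$. Your check is the one directly relevant to the ODE-method local-stability story in Section~\ref{sec:theories}; the paper's trace argument is stronger in that it rules out Hurwitzness on an open region, not just at a single point. Either suffices for the stated proposition.
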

\begin{proof}
\begin{align}
    F &= \begin{bmatrix}
    -\sigma^2 + a^2 + b^2 \\
    b \\
    -2 w_2 a \\
    -2 w_2 b - w_1
    \end{bmatrix}
\end{align}

\begin{align}
    y &= [\sigma,0,3\sigma,0] \\
    x &= [3\sigma,0,5\sigma,0] \\
    \langle F(y), x-y \rangle &= 2 \sigma F_{w_2}(y) + 2 \sigma F_a(y) = 2\sigma (-\sigma^2 + 9 \sigma^2) + 2\sigma (-6 \sigma^2) \\
    &= 4 \sigma^3 \\
\langle F(x), x-y \rangle &= 2 \sigma F_{w_2}(x) + 2 \sigma F_a(x) = 2 \sigma^3 ( -1 + 25) + 2\sigma^3 (-30) \\
    &= -12 \sigma^3
\end{align}
Therefore, $F$ is not quasimonotone.

The Jacobian of $F$ for the ($w_2,a$)-subsystem is
\begin{align}
    J^{w_2,a} &= \begin{bmatrix}
    0 & 2a \\ -2a & -2w_2
    \end{bmatrix}.
\end{align}
The trace of $J^{w_2,a}$ is strictly negative for $w_2>0$, which implies $J^{w_2,a}$ has an eigenvalue with strictly negative real part. Therefore, $J^{w_2,a}$ is not Hurwitz.
\end{proof}

\begin{proposition}
\label{Fregnotquasi}
$F_{reg}$ is not quasimonotone for the $(w_2,a)$-subsystem. Also, its Jacobian is not Hurwitz.
\end{proposition}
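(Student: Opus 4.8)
The plan is to follow the template used for the preceding $(w_2,a)$-subsystem results: show that the necessary condition~\ref{cond:A} for quasimonotonicity fails at one carefully chosen point, and exhibit a point where the Jacobian has negative trace so that it cannot be Hurwitz. From Table~\ref{tab:maps}, $F^{w_2,a}_{reg}=[\,a^2-\sigma^2,\ -2w_2 a+4\eta a(\sigma^2+a^2)\,]^\top$, whose Jacobian in the coordinates $(w_2,a)$ is
\begin{align}
J^{w_2,a}_{reg}=\begin{bmatrix} 0 & 2a \\ -2a & -2w_2+4\eta\sigma^2+12\eta a^2 \end{bmatrix}.
\end{align}
The $(1,1)$ entry vanishes and the off-diagonal entries are $\pm 2a$ simply because the $w_2$-component of the map is a function of $a$ alone while $w_2$ appears (linearly) only in the $a$-component; the regularizer merely adds an $a$-dependent term to the $a$-row, so it never touches the $-2w_2 a$ term, and this structure drives everything below.

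For non-quasimonotonicity, I would produce a direction orthogonal to the field for free by rotating: take $v = S\,F^{w_2,a}_{reg}(x)$ with $S=\begin{bmatrix} 0 & 1\\ -1 & 0\end{bmatrix}$, so that $v^\top F^{w_2,a}_{reg}(x)=0$ automatically since $S$ is skew-symmetric. Expanding $v^\top J^{w_2,a}_{reg}(x)\,v$ with $J$ of the form above, the $(1,1)$ term is zero and the off-diagonal contributions $\pm 2a$ cancel, leaving
\begin{align}
v^\top J^{w_2,a}_{reg}(x)\,v=(-2w_2+4\eta\sigma^2+12\eta a^2)\,(a^2-\sigma^2)^2.
\end{align}
Now choose $a$ with $a^2\neq\sigma^2$, say $a=2\sigma$ (which also keeps $v\neq0$, since its second coordinate is $\sigma^2-a^2=-3\sigma^2$), and pick $w_2$ in the half-line on which $-2w_2+4\eta\sigma^2+12\eta a^2<0$; this half-line is nonempty because the expression is affine in the unconstrained parameter $w_2$ with nonzero slope. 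At such $(w_2,a)$ one has $v^\top F^{w_2,a}_{reg}=0$ but $v^\top J^{w_2,a}_{reg} v<0$, so~\ref{cond:A} is violated and, by Theorem~3 of~\cite{crouzeix1996criteria}, $F^{w_2,a}_{reg}$ is not quasimonotone. (If the two-point form used elsewhere in this appendix is preferred, the same witness unpacks into explicit points $y=x-tv$ and $x$ for small $t>0$.)

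For the Jacobian, the trace of $J^{w_2,a}_{reg}$ is exactly $-2w_2+4\eta\sigma^2+12\eta a^2$, which is strictly negative at the point just chosen (indeed at any point with $w_2$ large enough). A real $2\times2$ matrix has the real parts of its two eigenvalues summing to its trace, so a negative trace forces an eigenvalue with strictly negative real part; hence $J^{w_2,a}_{reg}$ is not Hurwitz, exactly as in Proposition~\ref{Fnotquasi}. I do not anticipate a genuine obstacle here: the only care needed is (i) choosing the orthogonal direction as a rotation of $F$ rather than wrestling with the four-variable definition of quasimonotonicity, and (ii) observing that $w_2$ ranges over all of $\mathbb{R}$ in this subsystem, so a choice dominating the fixed hyperparameter $\eta$ is always available.
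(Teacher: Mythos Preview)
Your argument is correct, and it is in the same spirit as the paper's (exhibit a point where condition~\ref{cond:A} fails, then reuse the trace computation to kill Hurwitz), but the details differ in a useful way. The paper does not take $v=SF_{reg}$; it instead chooses $v$ to be the \emph{Crossing-the-Curl} direction $[\,2w_2a^2+4\eta a^2(\sigma^2-a^2),\,a(a^2-\sigma^2)\,]^\top$ and, after a longer calculation, obtains
\[
\tfrac{1}{2}\,v^\top J_{reg} v=-w_2 a^2(a^2-\sigma^2)^2+2\eta a^2(a^2-\sigma^2)^2(3a^2-\sigma^2),
\]
which is negative for \emph{every} $\eta\ge 0$ once $w_2>0$ and $a<\sigma/\sqrt{3}$. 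Your rotation trick is cleaner---the skew off-diagonals cancel and only $J_{22}$ survives---but the witness you produce depends on $\eta$ (you need $w_2$ large relative to the fixed hyperparameter). Both are valid; the paper's version buys an $\eta$-independent failure region, yours buys a one-line computation. The Hurwitz parts are essentially identical: both observe that the trace equals $J_{22}$ and is driven negative on the same set.

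One small caution: the entry you lifted from Table~\ref{tab:maps} carries $+4\eta a(\sigma^2+a^2)$, whereas the proof in the paper (and the definition $F_{reg}=[F_D;\,F_G+\eta\nabla_G\|F_D\|^2]$) gives $+4\eta a(a^2-\sigma^2)$, so the correct $(2,2)$ entry is $-2w_2-4\eta(\sigma^2-3a^2)$ rather than $-2w_2+4\eta\sigma^2+12\eta a^2$. Your argument is robust to this sign discrepancy since it only uses that $J_{22}$ is affine in $w_2$ with slope $-2$, but you may want to correct the formula you quote.
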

\begin{proof}
\begin{align}
    F_{reg} &= \begin{bmatrix}
    -\sigma^2 + a^2 + b^2 \\
    b \\
    -2 w_2 a + 4 \eta a (-\sigma^2 + a^2 + b^2) \\
    -2 w_2 b - w_1 + 4 \eta b (-\sigma^2 + a^2 + b^2) + 2 \eta b
    \end{bmatrix}
\end{align}

In order for a system to be quasimonotone, we require condition~\ref{cond:A} (among other properties). We will now show that this property is not satisfied for the gradient-regularized system.

Consider the point $x=[w_2,0,a,0]$ and let $v$ be defined as follows:
\begin{align}
    v &= [2w_2 a^2 + 4 \eta a^2 (\sigma^2 - a^2) , 0, a(a^2 - \sigma^2), 0]
\end{align}
where $v$ is actually derived by considering the field formed by \emph{crossing the curl} for the 2-d subspace with $w_2$ and $a$ only.

$F_{reg}^\top  v$ is 0 as expected.
\begin{align}
    F_{reg}^\top  v &= - 2 w_2 a^2 (\sigma^2 - a^2) - 4 \eta a^2 (\sigma^2 - a^2)^2 + 2 w_2 a^2 (\sigma^2 - a^2) + 4 \eta a^2 (\sigma^2 - a^2)^2 \\
    &= 0
\end{align}

It suffices to consider the submatrix of the Jacobian corresponding to $w_2$ and $a$ only when computing $v^\top  J v$:
\begin{align}
    \frac{1}{2} v^\top  J_{reg} &=
\begin{bmatrix}
2w_2 a^2 + 4 \eta a^2 (\sigma^2 - a^2) & a(a^2 - \sigma^2)
\end{bmatrix}
\begin{bmatrix}
0 & a \\
-a & -w_2 - 2 \eta (\sigma^2 - 3a^2)
\end{bmatrix} \\
&=
\begin{bmatrix}
-a^2 (a^2 - \sigma^2) & 2 w_2 a^3 + 4 \eta a^3 (\sigma^2 - a^2) - w_2 a(a^2 - \sigma^2) + 2 \eta a (a^2 - \sigma^2) (3a^2 - \sigma^2)
\end{bmatrix} \\
&=
\begin{bmatrix}
-a^2 (a^2 - \sigma^2) & w_2 a (a^2 + \sigma^2) + 2 \eta a (a^2 - \sigma^2)^2
\end{bmatrix} \\
\frac{1}{2} v^\top  J_{reg} v &=
\begin{bmatrix}
-a^2 (a^2 - \sigma^2) & w_2 a (a^2 + \sigma^2) + 2 \eta a (a^2 - \sigma^2)^2
\end{bmatrix}
\begin{bmatrix}
2w_2 a^2 + 4 \eta a^2 (\sigma^2 - a^2) \\
a(a^2 - \sigma^2)
\end{bmatrix} \\
&= -2w_2 a^4 (a^2 - \sigma^2) + 4 \eta a^4 (a^2 - \sigma^2)^2 + w_2 a^2 (a^2 + \sigma^2) (a^2 - \sigma^2) + 2 \eta a^2 (a^2 - \sigma^2)^3 \\
&= w_2 a^2 (a^2 - \sigma^2) [ -2a^2 + (a^2 + \sigma^2) ] + 2 \eta a^2 (a^2 - \sigma^2)^2 [ 2a^2 + (a^2 - \sigma^2) ] \\
&= -w_2 a^2 (a^2 - \sigma^2)^2 + 2 \eta a^2 (a^2 - \sigma^2)^2 ( 3a^2 - \sigma^2 )
\end{align}

If $w_2>0$ and $a<\frac{\sigma}{\sqrt{3}}$, then there isn't an $\eta \ge 0$ that will make this system quasimonotone.

The Jacobian of $F^{w_2,a}_{reg}$ for the ($w_2,a$)-subsystem is
\begin{align}
    J^{w_2,a}_{reg} &= \begin{bmatrix}
    0 & 2a \\ -2a & -2w_2 - 4\eta(\sigma^2-3a^2)
    \end{bmatrix}.
\end{align}
The trace of $J^{w_2,a}$ is strictly negative for $w_2>0$ and $a<\sigma/\sqrt{3}$, which implies $J^{w_2,a}_{reg}$ has an eigenvalue with strictly negative real part. Therefore, $J^{w_2,a}_{reg}$ is not Hurwitz.
\end{proof}

\begin{proposition}
\label{Funrnotquasi}
$F_{unr}$ is not quasimonotone or Hurwitz for the $(w_2,a)$-subsystem. Also, its Jacobian is not Hurwitz.
\end{proposition}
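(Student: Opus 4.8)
The plan is to mirror the proof of Proposition~\ref{Fregnotquasi} almost line for line, because on the $(w_2,a)$-subsystem the unrolled map and the gradient-regularized map have the \emph{same functional form}. From Table~\ref{tab:maps}, $F^{w_2,a}_{unr} = [\,a^2-\sigma^2,\; \kappa\, a(a^2-\sigma^2) - 2 w_2 a\,]^\top$, where $\kappa := 4\rho_k\Delta k$ is a strictly positive constant (the step size times the number of unrolled steps). This is structurally identical to $F^{w_2,a}_{reg} = [\,a^2-\sigma^2,\; 4\eta\, a(a^2-\sigma^2) - 2 w_2 a\,]^\top$ with $4\eta$ replaced by $\kappa$, and the only property of $4\eta$ actually used in Proposition~\ref{Fregnotquasi} is $4\eta \ge 0$; hence every step of that argument goes through verbatim under the substitution $\eta \mapsto \rho_k\Delta k$.

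First I would record the Jacobian of the subsystem, whose entries are $J_{11}=0$, $J_{12}=2a$, $J_{21}=-2a$, and $J_{22}= \kappa(3a^2-\sigma^2) - 2 w_2$. For non-quasimonotonicity I would apply the necessary condition~\ref{cond:A} of~\cite{crouzeix1996criteria}: at a point $x=(w_2,a)$ with $w_2>0$ take the direction $v = \bigl(\,2 w_2 a^2 + \kappa a^2(\sigma^2-a^2),\; a(a^2-\sigma^2)\,\bigr)$, which is a rescaling of the crossing-the-curl direction $(J^\top-J)F^{w_2,a}_{unr}$ and therefore automatically satisfies $v^\top F^{w_2,a}_{unr}(x)=0$. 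A short computation — identical to the one in Proposition~\ref{Fregnotquasi} up to $\eta \mapsto \rho_k\Delta k$ — gives
\[
  \tfrac{1}{2}\, v^\top J^{w_2,a}_{unr}(x)\, v = -\,w_2\, a^2 (a^2-\sigma^2)^2 \;+\; 2\rho_k\Delta k\, a^2 (a^2-\sigma^2)^2 (3 a^2 - \sigma^2).
\]
For any $w_2>0$ and any $0<a<\sigma/\sqrt{3}$ the first term is strictly negative and, since $3a^2-\sigma^2<0$ and $\rho_k\Delta k>0$, the second term is non-positive, so $v^\top J^{w_2,a}_{unr}(x)\, v < 0$, violating condition~\ref{cond:A}. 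Hence $F_{unr}$ is not quasimonotone on the $(w_2,a)$-subsystem, and — because that subsystem is the restriction of the full LQ-GAN to $w_1=b=0$ — it is not quasimonotone for the full system either.

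For the Jacobian, I would simply observe that $\operatorname{tr}\bigl(J^{w_2,a}_{unr}(x)\bigr) = \kappa(3a^2-\sigma^2) - 2 w_2$, which is strictly negative at, e.g., $w_2=1$, $a=\sigma/2$; since the trace equals the sum of the two eigenvalues, at least one eigenvalue has strictly negative real part, so $J^{w_2,a}_{unr}$ is not Hurwitz.

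The only thing requiring a moment of care — and thus the ``main obstacle,'' though it is minor — is confirming the form of $F^{w_2,a}_{unr}$: that freezing the generator parameter $a$ while the discriminator performs $\Delta k$ gradient steps on $V$ sends $w_2 \mapsto w_2 - \rho_k\Delta k(a^2-\sigma^2)$, so that substituting this unrolled $w_2$ into the generator gradient $-2 w_2 a$ produces exactly the extra term $\kappa\, a(a^2-\sigma^2)$ with $\kappa = 4\rho_k\Delta k \ge 0$ (the precise factor being inherited from the normalization of $V$ used in Table~\ref{tab:maps}, and in any case immaterial beyond its sign). Once the non-negativity of $\kappa$ is in hand, the sign analysis above is immediate and no idea beyond those already used for $F_{reg}$ is needed.
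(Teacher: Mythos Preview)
Your proof is correct and follows essentially the same strategy as the paper: both use condition~\ref{cond:A} with a direction $v\perp F$ to break quasimonotonicity, and both inspect the Jacobian's spectrum to break Hurwitz. Your observation that $F^{w_2,a}_{unr}$ is \emph{exactly} $F^{w_2,a}_{reg}$ with $4\eta$ replaced by $\kappa=4\rho_k\Delta k$ is a clean way to recycle Proposition~\ref{Fregnotquasi} wholesale; the paper instead re-derives the unrolled map and evaluates $v^\top J_{unr} v$ at a single numerical point (deferring the arithmetic to a Mathematica notebook), whereas you give the closed-form
\[
\tfrac12\,v^\top J_{unr} v=-w_2 a^2(a^2-\sigma^2)^2+2\rho_k\Delta k\,a^2(a^2-\sigma^2)^2(3a^2-\sigma^2)
\]
and read off negativity on the whole region $w_2>0$, $0<a<\sigma/\sqrt{3}$. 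For non-Hurwitz the paper argues via the determinant of $J_{unr}$ while you use the trace $\kappa(3a^2-\sigma^2)-2w_2$; your trace argument is the one the paper itself uses for $F_{reg}$, and it is arguably the more robust of the two here (since $J_{11}=0$ forces $\det J_{unr}=4a^2>0$, so a negative-determinant argument would not go through as stated).
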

\begin{proof}
We consider Unrolled GAN as described in~\cite{metz2016unrolled}. Some of the necessary arithmetic can be found in the supplementary Mathematica notebook. Define the discriminator's update as
\begin{align}
    w_{2,k+1} &= w_{2,k} - \alpha F_{w_2}(w_{2,k},a_k) = U_{k}(w_{2,k}),
\end{align}
where $\alpha > 0$ is a step size, and denote the composition of $U$, $\Delta k$-times as
\begin{align}
    U^{\Delta k}_{k}(w_{2,k}) &= U_{k}(\cdots(U_{k}(U_{k}(w_{2,k}))\cdots)
\end{align}
where $\Delta k$ is some positive integer. Then the update for Unrolled GANs is
\begin{align}
    w_{2,k+1} &= w_{2,k} - \alpha \frac{\partial V(w_{2,k},a_k)}{\partial w_2} \\
    a_{k+1} &= a_k - \alpha \frac{\partial V(U^{\Delta k}_{k}(w_{2,k}),a_k)}{\partial a}.
\end{align}
In the case of the ($w_2,a$)-subsystem, we can write these unrolled updates out explicitly. Remember $F=[a^2-\sigma^2,-2aw_2]$, so
\begin{align}
    U_{k}(w_{1,k}) &= w_{2,k} - \alpha (a_k^2-\sigma^2), \\
    U^{\Delta k}_{k}(w_{2,k}),a_k) &= w_{2,k} - \alpha \Delta k (a_k^2-\sigma^2).
\end{align}
Plugging this back in, we find
\begin{align}
    \begin{bmatrix}
    w_{2,k+1} \\ a_{k+1}
    \end{bmatrix} &= \begin{bmatrix}
    w_{2,k} \\ a_{k}
    \end{bmatrix} - \alpha F_{unr},
\end{align}
where the corresponding map is
\begin{align}
    F_{unr} &= \begin{bmatrix}
    a^2 -\sigma^2 \\
    4 \alpha \Delta k a^3 - 2 a (2 \alpha \Delta k \sigma^2 + w_2)
    \end{bmatrix} \label{eqn:Funr}.
\end{align}
We will use the following vector to test condition~\ref{cond:A} for quasimonotonicity of $F_{unr}$:
\begin{align}
    v &= \begin{bmatrix}
    0 & 1 \\ -1 & 0
    \end{bmatrix} F_{unr}. \label{eqn:vunr}
\end{align}
Computing $v^\top J_{unr} v$ and evaluating at $(w_2=1,a=\frac{\sigma^2}{\sqrt{3}})$ gives
\begin{align}
    v^\top J_{unr} v &= -\frac{8}{9} \sigma^4 < 0, \label{eqn:vJvunr}
\end{align}
therefore, $F_{unr}$ is not quasimonotone.

If we examine the determinant of $J_{unr}$ and evaluate it at $a=\frac{\sigma}{\sqrt{3}}$, we get
\begin{align}
    Det[J_{unr}]\Big\vert_{a=\frac{\sigma}{\sqrt{3}}} &= -2 w_2,
\end{align}
which is less than zero for positive $w_2$. Therefore, the Jacobian exhibits negative eigenvalues which means the system is not Hurwitz.
%
\end{proof}

\begin{proposition}
\label{Faltnotquasi}
$F_{alt}$ is not quasimonotone or Hurwitz for the $(w_2,a)$-subsystem. Also, its Jacobian is not Hurwitz.
\end{proposition}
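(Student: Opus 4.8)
The plan is to follow the same template used above for $F$, $F_{reg}$, and $F_{unr}$, exploiting the fact that $F^{w_2,a}_{alt}$ is structurally Unrolled GAN with a single inner step. First I would read off the map from Table~\ref{tab:maps}, $F^{w_2,a}_{alt} = [a^2-\sigma^2,\ 2\rho_k a^3 - 2a(\rho_k\sigma^2 + w_2)]^\top$, and compute its Jacobian
$J^{w_2,a}_{alt} = \bigl[\begin{smallmatrix} 0 & 2a \\ -2a & 6\rho_k a^2 - 2\rho_k\sigma^2 - 2w_2 \end{smallmatrix}\bigr]$.

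For the failure of quasimonotonicity I would invoke condition~\ref{cond:A}: it suffices to exhibit $x$ and $v$ with $v^\top F^{w_2,a}_{alt}(x) = 0$ but $v^\top J^{w_2,a}_{alt}(x)\, v < 0$. Taking $v = \bigl[\begin{smallmatrix} 0 & 1 \\ -1 & 0 \end{smallmatrix}\bigr] F^{w_2,a}_{alt}(x)$ — a $90^\circ$ rotation of the field, which is automatically orthogonal to $F^{w_2,a}_{alt}$ — a one-line computation collapses $v^\top J^{w_2,a}_{alt} v$ to $(6\rho_k a^2 - 2\rho_k\sigma^2 - 2w_2)(a^2 - \sigma^2)^2$, i.e.\ the lower-right Jacobian entry times $(F_1)^2$. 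Evaluating at $w_2 = 1$, $a = \sigma/\sqrt{3}$ — the point at which the $\rho_k$-dependent terms cancel — gives $-\tfrac{8}{9}\sigma^4 < 0$, so condition~\ref{cond:A} fails and $F^{w_2,a}_{alt}$ is not quasimonotone for any step size $\rho_k > 0$.

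For the Jacobian being non-Hurwitz, I would note that at $a = \sigma/\sqrt{3}$ the trace of $J^{w_2,a}_{alt}$ equals $6\rho_k\sigma^2/3 - 2\rho_k\sigma^2 - 2w_2 = -2w_2$, strictly negative for $w_2 > 0$; hence $J^{w_2,a}_{alt}$ has an eigenvalue with strictly negative real part and is not Hurwitz. (Equivalently, $\det J^{w_2,a}_{alt} = 4a^2 > 0$ together with a negative trace shows both eigenvalues have negative real part.)

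There is no real obstacle here: everything reduces to a cubic polynomial identity, and the only mild subtlety — choosing a test point at which the step-size parameter drops out, so the argument is uniform in $\rho_k$ — is already present in the $F_{unr}$ proof, from which this one is essentially inherited, since $F^{w_2,a}_{alt}$ coincides with $F^{w_2,a}_{unr}$ under $2\alpha\,\Delta k = \rho_k$.
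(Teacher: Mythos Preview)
Your proof is correct and follows essentially the same approach as the paper: both exploit the fact that $F^{w_2,a}_{alt}$ is structurally the Unrolled GAN map (the paper observes it is $F_{unr}$ with $\Delta k=\tfrac12$ and simply cites that result, while you carry out the identical computation explicitly at the same test point $(w_2,a)=(1,\sigma/\sqrt{3})$). For the Hurwitz failure you use a cleaner test point ($a=\sigma/\sqrt{3}$, yielding trace $=-2w_2$) than the paper's choice $(w_2,a)=(5\rho_k\sigma^2,\sigma)$, but both are valid; the only thing you omit is the paper's additional analysis of the reverse ordering $F_{alt'}$ (generator updates first), which goes beyond the map $F_{alt}$ as defined in Table~\ref{tab:maps} and is not strictly required by the proposition as stated.
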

\begin{proof}
We consider an alternating gradient descent scheme. Some of the necessary arithmetic can be found in the supplementary Mathematica notebook. First, we begin with the case where the discriminator updates first. The updates are
\begin{align}
    w_{2,k+1} &= w_{2,k} - \alpha (a_k^2 - \sigma^2) \\
    a_{k+1} &= a_k - \alpha (-2a_k w_{2,k+1}) \\
    &= a_k - \alpha (-2a_k w_{2,k} + 2 a_k \alpha (a_k^2 - \sigma^2)) \\
    &= a_k - \alpha (2 \alpha a_k^3  - 2 a_k (\alpha \sigma^2 + w_{2,k})),
\end{align}
where $\alpha > 0$ is a step size. The corresponding map is
\begin{align}
    F_{alt} &= \begin{bmatrix}
    a^2 -\sigma^2 \\
    2 \alpha a^3 - 2 a (\alpha \sigma^2 + w_2)
    \end{bmatrix}.
\end{align}
Note the similarity to the Unrolled GAN map Equation~(\ref{eqn:Funr}). The maps are equivalent if $\Delta k = \sfrac{1}{2}$. Unrolled GANs was shown to be not quasimonotone for any $\Delta k$, therefore, $F_{alt}$ is not quasimonotone as well.

If we examine the trace of $J_{alt}$ and evaluate it at ($w_2=5 \alpha \sigma^2, a=\sigma$), we get
\begin{align}
    Tr[J_{alt}]\Big\vert_{(w_2=5 \alpha \sigma^2, a=\sigma)} &= -6 \alpha \sigma^2,
\end{align}
which is strictly negative. Therefore, the Jacobian exhibits negative eigenvalues which means the system is not Hurwitz.

Now, consider the generator updating first. The updates are
\begin{align}
    w_{2,k+1} &= w_{2,k} - \alpha (a_{k+1}^2 - \sigma^2) \\
    &= w_{2,k} - \alpha ((a_k - \alpha (-2a_k w_{2,k}))^2 - \sigma^2) \\
    a_{k+1} &= a_k - \alpha (-2a_k w_{2,k}),
\end{align}
where the corresponding map is
\begin{align}
    F_{alt'} &= \begin{bmatrix}
    a^2 -\sigma^2 \\
    2 \alpha a^3 - 2 a (\alpha \sigma^2 + w_2)
    \end{bmatrix}.
\end{align}
Testing for condition~\ref{cond:A} as before (see Equations~(\ref{eqn:Funr})-~(\ref{eqn:vJvunr})), we find that
\begin{align}
    v^\top J_{alt'} v &= -\frac{1}{2} \sigma^4 w_2 + 4 \alpha \sigma^4 w_2^2 + 16 c^2 \sigma^4 w_2^3 + 
 16 c^3 \sigma^4 w_2^4 + 8 c^4 \sigma^4 w_2^5.
\end{align}
Using Descartes' Rule of Signs~\cite{descartes1886geometrie}, we can determine that this expression has exactly one positive root for $w_2$. This implies that $v^\top J_{alt'} v$ changes sign locally around this root when varying $w_2$, which means $v^\top J_{alt'} v < 0$ for some positive $w_2$. Therefore $F_{alt'}$ is not quasimonotone.

If we examine the determinant of $J_{alt'}$ and evaluate it at ($w_2=1,a=\sigma$), we get
\begin{align}
    Det[J_{alt'}]\Big\vert_{(w_2=1,a=\sigma)} &= -8 \alpha (1 + 2 \alpha (2 + \alpha (2 + \alpha))) \sigma^2,
\end{align}
which is less than zero for positive $w_2$. Therefore, the Jacobian exhibits negative eigenvalues which means the system is not Hurwitz.
%
\end{proof}

\subsubsection{Monotonicity of $F_{cc}$, $F_{eg}$, and $F_{con}$ for the ($w_2,a$)-Subsystem}
The following propositions concern the monotonicity of $F_{cc}$, $F_{eg}$, and $F_{con}$ for the ($w_2,a$)-subsystem. The field and Jacobian for $F_{lin}$ will be helpful for proofs of their properties.

\begin{align}
F^{w_2,a}_{lin} &=
\begin{bmatrix}
\alpha (-\sigma^2 + a^2) + 4(\beta + \gamma) w_2 a^2 \\
2a (\beta + \gamma) (- \sigma^2 + a^2) + 4 (\beta - \gamma) w_2^2 a - 2 \alpha w_2 a
\end{bmatrix}
\end{align}

\begin{align}
J^{w_2,a}_{lin} &=
\begin{bmatrix}
4 (\beta + \gamma) a^2 & 2\alpha a + 8(\beta+\gamma)w_2 a \\
8(\beta-\gamma)w_2 a - 2\alpha a & 2(\beta+\gamma)(-\sigma^2+3a^2) + 4(\beta-\gamma)w_2^2 - 2\alpha w_2
\end{bmatrix}
\end{align}

\begin{proposition}
$F_{con} = F + \beta J^\top F$ is not quasimontone for the ($w_2,a$)-subsystem. Also, its Jacobian is not Hurwitz.
\end{proposition}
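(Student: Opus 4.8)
The plan is to reuse the two-part template already used above for $F$, $F_{reg}$, $F_{unr}$ and $F_{alt}$. First note that $F_{con}$ is precisely the instance of $F_{lin}$ with $(\alpha,\beta,\gamma)=(1,\beta,0)$, so the explicit field $F^{w_2,a}_{con}$ and its Jacobian $J^{w_2,a}_{con}$ follow by substituting these values into the formulas for $F^{w_2,a}_{lin}$ and $J^{w_2,a}_{lin}$ stated just above. To disprove quasimonotonicity on the open convex set $\mathbb{R}\times(0,\infty)$ it suffices, by the necessary condition~\ref{cond:A}, to exhibit a point $x$ and a nonzero $v$ with $v^\top F^{w_2,a}_{con}(x)=0$ but $v^\top J^{w_2,a}_{con}(x)\,v<0$; and to disprove the Hurwitz property it suffices to exhibit a point at which $J^{w_2,a}_{con}$ has an eigenvalue with strictly negative real part.

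For the quasimonotonicity step I would take, following the proof of Proposition~\ref{Funrnotquasi}, $v=\begin{bmatrix}0 & 1 \\ -1 & 0\end{bmatrix}F^{w_2,a}_{con}(x)$; premultiplying $F^{w_2,a}_{con}(x)$ by a skew-symmetric matrix forces $v^\top F^{w_2,a}_{con}(x)=0$ automatically. Evaluating on the slice $w_2=0$, the field degenerates to $F^{w_2,a}_{con}(0,a_0)=(a_0^2-\sigma^2)\,[1,\,2\beta a_0]^\top$, so $v=(a_0^2-\sigma^2)\,[2\beta a_0,\,-1]^\top$, which is nonzero whenever $a_0\neq\sigma$. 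A short computation should then yield
\begin{align}
v^\top J^{w_2,a}_{con}(0,a_0)\,v &= 2\beta\,(a_0^2-\sigma^2)^2\,\big(8\beta^2 a_0^4 + 3a_0^2 - \sigma^2\big).
\end{align}
For $\beta>0$ this is strictly negative for all sufficiently small $a_0>0$ (so that $8\beta^2 a_0^4+3a_0^2<\sigma^2$); for $\beta<0$ it is strictly negative for all sufficiently large $a_0$; and $\beta=0$ reduces $F_{con}$ to $F$, already shown not quasimonotone in Proposition~\ref{Fnotquasi}. In every case condition~\ref{cond:A} fails, so $F^{w_2,a}_{con}$ is not quasimonotone, and hence not quasimonotone for the full LQ-GAN system.

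For the Hurwitz step I would evaluate the trace of $J^{w_2,a}_{con}$ at the same point, obtaining
\begin{align}
\mathrm{Tr}\big[J^{w_2,a}_{con}(0,a_0)\big] &= 2\beta\,(5a_0^2-\sigma^2),
\end{align}
which is strictly negative for $\beta>0$ and $a_0<\sigma/\sqrt{5}$ (and symmetrically for $\beta<0$ with $a_0>\sigma/\sqrt{5}$). Since a real $2\times2$ matrix with negative trace necessarily has an eigenvalue with strictly negative real part, $J^{w_2,a}_{con}$ is not Hurwitz on $\mathbb{R}\times(0,\infty)$.

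The hard part is not the algebra but selecting a test point at which the quadratic form $v^\top J v$ (a high-degree polynomial in $(w_2,a)$) is sign-definite on a nonempty open subset of the feasible region. Restricting to the slice $w_2=0$ is the device that makes this transparent: there $F^{w_2,a}_{con}$ is a scalar multiple of $[1,\,2\beta a]^\top$, so both $v^\top J v$ and $\mathrm{Tr}[J]$ factor through $(a_0^2-\sigma^2)$ and the whole question collapses to an elementary polynomial inequality in $a_0^2$. The only remaining obligations, both routine, are to confirm that the chosen $v$ is nonzero and that $a_0$ lies in $(0,\infty)$.
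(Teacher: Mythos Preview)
Your proposal is correct and follows the same template as the paper: use the skew-symmetric premultiplier $v=SF_{con}$ to force $v^\top F_{con}=0$ and then check the sign of $v^\top J_{con}v$ (condition~(A)), and use the trace of $J_{con}$ for the Hurwitz failure. Your closed form $v^\top J_{con}(0,a_0)v=2\beta(a_0^2-\sigma^2)^2(8\beta^2 a_0^4+3a_0^2-\sigma^2)$ is correct (it reproduces the paper's Case~1 and Case~2 values at $a_0=2\sigma$ and $a_0=\sigma/2$), as is $\mathrm{Tr}[J_{con}(0,a_0)]=2\beta(5a_0^2-\sigma^2)$.

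Your execution is in fact tighter than the paper's. The paper evaluates $v^\top J_{con}v$ at three specific points (two on the slice $w_2=0$ and one with $w_2=2\sigma$) and derives mutually incompatible constraints on $\beta$; you instead keep $a_0$ free, obtain a single factored polynomial, and observe that for any fixed $\beta>0$ the bracket $8\beta^2 a_0^4+3a_0^2-\sigma^2$ is negative for all sufficiently small $a_0$. This removes the need for the off-slice test point entirely. The only omission is that your Hurwitz trace argument is vacuous at $\beta=0$; either note (as you did for quasimonotonicity) that $\beta=0$ reduces to $F$, already handled, or simply state that the consensus algorithm requires $\beta>0$ by definition.
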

\begin{proof}
This corresponds to $F_{lin}$ with $\alpha=1,\beta=\beta,\gamma=0$. We consider three cases. Let

\begin{align}
F^{w_2,a}_{con} &=
\begin{bmatrix}
(-\sigma^2 + a^2) + 4\beta w_2 a^2 \\
2a \beta (- \sigma^2 + a^2) + 4\beta w_2^2 a - 2 w_2 a
\end{bmatrix},
\end{align}

\begin{align}
J^{w_2,a}_{con} &=
\begin{bmatrix}
4 \beta a^2 & 2 a + 8 \beta w_2 a \\
8\beta w_2 a - 2 a & 2 \beta (-\sigma^2+3a^2) + 4 \beta w_2^2 - 2 w_2
\end{bmatrix},
\end{align}

\begin{align}
    v &= \begin{bmatrix}
    0 & -1 \\
    1 & 0
    \end{bmatrix}
    \begin{bmatrix}
(-\sigma^2 + a^2) + 4\beta w_2 a^2 \\
2a \beta (- \sigma^2 + a^2) + 4\beta w_2^2 a - 2 w_2 a
\end{bmatrix} \\
    &= \begin{bmatrix}
-2a \beta (- \sigma^2 + a^2) - 4\beta w_2^2 a + 2 w_2 a \\
(-\sigma^2 + a^2) + 4\beta w_2 a^2
\end{bmatrix}.
\end{align}

\textbf{Case 1}: Consider $x=[0,2\sigma]$. Then

\begin{align}
    v^\top  J^{w_2,a}_{con} v &= 18 \beta \sigma^6 (11 + 128 \beta^2 \sigma^2),
\end{align}
which implies $\beta\ge0$ for the system to be quasimonotone.

\textbf{Case 2}: Consider $x=[0,1/2\sigma]$. Then

\begin{align}
    v^\top  J^{w_2,a}_{con} v &= \frac{9}{32} \beta \sigma^6 (-1 + 2 \beta^2 \sigma^2),
\end{align}
which, combined with above, implies $\beta\ge \frac{1}{\sqrt{2}\sigma} \approx \frac{0.707}{\sigma}$ for the system to be quasimonotone.

\textbf{Case 3}: Consider $x=[2\sigma,\sigma]$. Then

\begin{align}
    v^\top  J^{w_2,a}_{con} v &= 64 \beta \sigma^6 (1 + 4 \beta \sigma (1 - 7 \beta \sigma)).
\end{align}
The quantity in parentheses must be positive for this system to be quasimonotone. This quantity is a concave quadratic form with an upper root of $\approx \frac{0.273}{\sigma}$. This implies $\beta\le \approx \frac{0.273}{\sigma}$ for the system to be quasimonotone.

The last two results cannot be satisfied by a single $\beta$, therefore, this system is not quasimonotone.

For completeness, we analyze the limit where the $F$ term is ignored. Consider $a=c\sigma$.

\begin{align}
    v^\top  J^{w_2,a}_{con} v &= 16 c^4 (1 + 6 c^2 - 119 c^4) \sigma^8
\end{align}
This is negative for $c=1$, therefore, this system is not quasimonotone.

The trace of $J^{w_2,a}_{con}$ is strictly negative for $w_2=0$ and $a<\sigma/\sqrt{5}$, which implies $J^{w_2,a}_{con}$ has an eigenvalue with strictly negative real part. Therefore, $J^{w_2,a}_{con}$ is not Hurwitz.
\end{proof}

\begin{proposition}
\label{consensusnotquasi}
$F_{con} = \beta J^\top F$ is not quasimontone for the ($w_2,a$)-subsystem. Also, its Jacobian is not Hurwitz.
\end{proposition}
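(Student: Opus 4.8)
The plan is to reduce everything to the ``$F$-term-ignored'' calculation that already appears at the end of the previous proof, now tracking the coefficient $\beta$ explicitly. First I would note that $F^{w_2,a}_{con}=\beta J^\top F$ is precisely $F^{w_2,a}_{lin}$ with $\alpha=0$ and $\gamma=0$, so from the displayed formulas for $F^{w_2,a}_{lin}$ and $J^{w_2,a}_{lin}$,
\begin{align}
F^{w_2,a}_{con} &= \beta \begin{bmatrix} 4 w_2 a^2 \\ 2a(a^2-\sigma^2) + 4 w_2^2 a \end{bmatrix}, \nonumber \\
J^{w_2,a}_{con} &= \beta \begin{bmatrix} 4 a^2 & 8 w_2 a \\ 8 w_2 a & 2(3a^2-\sigma^2) + 4 w_2^2 \end{bmatrix}, \nonumber
\end{align}
and $J^{w_2,a}_{con}$ is symmetric because $\beta J^\top F=\tfrac{\beta}{2}\nabla\|F\|^2$ is a gradient; the claim is therefore that $\tfrac{\beta}{2}\|F\|^2$ is not even quasiconvex on this subsystem. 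Since $\beta=0$ gives the zero map, it suffices to treat $\beta\neq0$.

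For non-quasimonotonicity I would show that condition~\ref{cond:A} of the Crouzeix--Ferland criterion fails. Take the feasible point $x=(w_2,a)=(2c\sigma,\,c\sigma)$ with $c>0$ and the direction $v=S\,F^{w_2,a}_{con}(x)$, where $S=\begin{bmatrix} 0 & -1 \\ 1 & 0 \end{bmatrix}$; since $S$ is skew-symmetric, $v^\top F^{w_2,a}_{con}(x)=F_{con}^\top S^\top F_{con}=0$, so $v$ is admissible. A direct substitution (compare the $F$-term-ignored calculation at the end of the previous proof, with the $\beta^3$ now kept) gives
\begin{align}
v^\top J^{w_2,a}_{con} v = 16\,c^4\,\beta^3\,(1 + 6c^2 - 119 c^4)\,\sigma^8. \nonumber
\end{align}
The quartic $p(c)=1+6c^2-119c^4$ satisfies $p(1)=-112<0$ and $p(c)\to1$ as $c\to0^{+}$, so for $\beta>0$ I pick $c=1$ and for $\beta<0$ I pick $c$ small; in either case $v^\top J^{w_2,a}_{con}v<0$ at a feasible point, which violates condition~\ref{cond:A}, so $F^{w_2,a}_{con}$ is not quasimonotone by Theorem~3 of~\cite{crouzeix1996criteria} (condition~\ref{cond:A} being necessary for quasimonotonicity). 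As noted at the start of this subsection, failure of quasimonotonicity on the $(w_2,a)$-subsystem (the restriction $w_1=b=0$) also rules it out for the full LQ-GAN.

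For the Jacobian, I would compute $\det J^{w_2,a}_{con}=8\beta^2 a^2\,(3a^2-\sigma^2-6w_2^2)$ and evaluate it at $w_2=0$ with $0<a<\sigma/\sqrt3$, where it is strictly negative; a $2\times2$ matrix with negative determinant has two real eigenvalues of opposite sign, so $J^{w_2,a}_{con}$ has an eigenvalue with strictly negative real part and hence is not Hurwitz, for every $\beta\neq0$ (the prefactor $\beta^2$ is always positive).

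The only genuine computation is the single polynomial identity for $v^\top J^{w_2,a}_{con}v$ together with the determinant, both of which are routine; the one place that needs care is carrying the factor $\beta^3$ through, so that the sign of $\beta$ is handled and the statement holds for all nontrivial coefficients rather than merely $\beta>0$. I anticipate no conceptual obstacle beyond this bookkeeping.
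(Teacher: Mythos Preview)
Your proposal is correct and follows essentially the same approach as the paper: both violate condition~\ref{cond:A} using $v=SF_{con}$ at the point family $(2c\sigma,c\sigma)$, obtaining the identical polynomial $16\beta^3 c^4\sigma^8(1+6c^2-119c^4)$. Your version is slightly more economical---you extract both signs of $\beta$ from this single family by varying $c$, whereas the paper treats $\beta<0$ with a separate test point $x=[0,c\sigma]$; likewise, for the Hurwitz failure you use the determinant (whose $\beta^2$ prefactor covers both signs at once) while the paper uses the trace at $w_2=0$, $a<\sigma/\sqrt5$, which as written only dispatches $\beta>0$.
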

\begin{proof}
This corresponds to $F_{lin}$ with $\alpha=0,\beta=\beta,\gamma=0$. We consider two cases.

\begin{align}
F^{w_2,a}_{con} &=
\begin{bmatrix}
4\beta w_2 a^2 \\
2a \beta (- \sigma^2 + a^2) + 4\beta w_2^2 a
\end{bmatrix}
\end{align}

\begin{align}
J^{w_2,a}_{con} &=
\begin{bmatrix}
4 \beta a^2 & 8 \beta w_2 a \\
8\beta w_2 a & 2 \beta (-\sigma^2+3a^2) + 4 \beta w_2^2
\end{bmatrix}
\end{align}

\begin{align}
    v &= \begin{bmatrix}
    0 & -1 \\
    1 & 0
    \end{bmatrix}
    \begin{bmatrix}
4\beta w_2 a^2 \\
2a \beta (- \sigma^2 + a^2) + 4\beta w_2^2 a
\end{bmatrix} \\
    &= \begin{bmatrix}
-2a \beta (- \sigma^2 + a^2) - 4\beta w_2^2 a \\
4\beta w_2 a^2
\end{bmatrix}
\end{align}

\textbf{Case 2}: Consider $x=[0,c\sigma]$. Then

\begin{align}
    v^\top  J^{w_2,a}_{con} v &= 16 \beta^3 c^4 \sigma^8 (c^2 - 1)^2
\end{align}
which, for $c\ne 1$, implies $\beta\ge0$ for the system to be quasimonotone.

\textbf{Case 2}: Consider $x=[2c\sigma,c\sigma]$. Then

\begin{align}
    v^\top  J^{w_2,a}_{con} v &= -16 \beta^3 c^4 \sigma^8 (-1 - 6c^2 + 119c^4)
\end{align}
which, for $c = 1$, implies $\beta\le0$ for the system to be quasimonotone. Combined with above, this implies $\beta=0$ for the system to be quasimonotone. In conclusion, $\beta J^\top  F$ is not quasimonotone.

The trace of $J^{w_2,a}_{con}$ is strictly negative for $w_2=0$ and $a<\sigma/\sqrt{5}$, which implies $J^{w_2,a}_{con}$ has an eigenvalue with strictly negative real part. Therefore, $J^{w_2,a}_{con}$ is not Hurwitz.
\end{proof}

\begin{proposition}
$F_{eg} = F - \gamma JF$ requires $\gamma \rightarrow \infty$ to be pseudomonotone for ($w_2,a$)-subsystem
\end{proposition}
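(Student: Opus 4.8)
The plan is to show that the Crouzeix necessary condition~\ref{cond:A} already fails for $F^{w_2,a}_{eg}=F^{w_2,a}-\gamma J^{w_2,a}F^{w_2,a}$ at every finite $\gamma$, so the map is not even quasimonotone (hence not pseudomonotone), and then to observe that pseudomonotonicity is recovered only in the rescaled limit $\gamma\to\infty$, where $F^{w_2,a}_{eg}/\gamma\to -J^{w_2,a}F^{w_2,a}$, the pseudomonotone map of Proposition~\ref{Fegpseudo}.

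First I would specialize the $F^{w_2,a}_{lin}$ field and Jacobian written just above to $\alpha=1$, $\beta=0$ (so $\beta+\gamma=\gamma$ and $\beta-\gamma=-\gamma$), which gives $F^{w_2,a}_{eg}$ and $J^{w_2,a}_{eg}$ explicitly. Following the template of the earlier $(w_2,a)$ proofs, I would take $v = S F^{w_2,a}_{eg}$ for a fixed skew-symmetric $90^\circ$ rotation $S$, so that $v^\top F^{w_2,a}_{eg}=0$ holds automatically, and then evaluate $v^\top J^{w_2,a}_{eg}v$ along the feasible ray $x=(w_2,a)=(0,c\sigma)$, $c>0$. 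The routine computation collapses to $v^\top J^{w_2,a}_{eg}v = 2\gamma\,\sigma^6\,(c^2-1)^2\,\big(8\gamma^2\sigma^2 c^4 + 3c^2 - 1\big)$, with $v\neq 0$ whenever $c\neq 1$.

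Then I would read off signs. If $\gamma>0$, the bracket equals $-1$ at $c=0$ and is continuous in $c$, so there is an interval $c\in(0,\delta_\gamma)$ with $\delta_\gamma<1$ on which the bracket, hence $v^\top J^{w_2,a}_{eg}v$, is strictly negative; thus~\ref{cond:A} fails. If $\gamma<0$, the prefactor $2\gamma\sigma^6(c^2-1)^2$ is negative while the bracket is positive for all large $c$, so~\ref{cond:A} again fails. The case $\gamma=0$ is just $F^{w_2,a}$, already shown not quasimonotone in Proposition~\ref{Fnotquasi}. Since~\ref{cond:A} is necessary for quasimonotonicity and pseudomonotone implies quasimonotone, $F^{w_2,a}_{eg}$ fails to be pseudomonotone for every finite $\gamma$. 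To close, I would note that $F^{w_2,a}_{eg}/\gamma\to[4w_2 a^2,\ 2a(a^2-\sigma^2)-4w_2^2 a]^\top=-J^{w_2,a}F^{w_2,a}$ as $\gamma\to\infty$, which is pseudomonotone by Proposition~\ref{Fegpseudo}, and pseudomonotonicity is invariant under positive rescaling; hence $\gamma\to\infty$ is what is required.

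The one real subtlety --- and the reason the statement is phrased as a limit --- is that no single $\gamma$-independent test point can do the job: at any fixed $c>0$ the $8\gamma^2\sigma^2 c^4$ term eventually dominates $3c^2-1$ and the bracket turns positive, so the counterexample must track $\gamma$. The ``bad'' set $c\in(0,\delta_\gamma)$ shrinks toward the single point $c=0$ as $\gamma\to\infty$ yet is nonempty for every finite $\gamma$; making this shrinkage explicit (exhibiting a concrete $\delta_\gamma$, or simply invoking continuity of the bracket at $c=0$) is the only step that needs care.
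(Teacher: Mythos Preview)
Your proof is correct but takes a genuinely different route from the paper. The paper does not use condition~\ref{cond:A} here; instead it checks quasimonotonicity directly via the definition on two concrete pairs of points, $y=[\sigma,3\sigma]$, $x=[3\sigma,5\sigma]$ and $y=[\sigma,20\sigma]$, $x=[20\sigma,5\sigma]$, and extracts the necessary condition $\gamma\ge 1/(60\sigma)$, concluding that $\gamma$ must be arbitrarily large as $\sigma\to 0$. Your argument via condition~\ref{cond:A} along the ray $(0,c\sigma)$ is cleaner and actually yields a sharper statement: for \emph{every} fixed $\sigma$ and every finite $\gamma$, the map already fails to be quasimonotone on the open domain $a>0$, with the counterexample living at small $c$ near the boundary $a=0^+$. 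The paper's version only shows that no $\sigma$-independent $\gamma$ can work. Both proofs close by passing to the limit $-JF$ and invoking Proposition~\ref{Fegpseudo}. Your remark that the failure region shrinks toward $a=0$ as $\gamma\to\infty$ also meshes nicely with Proposition~\ref{Fegpseudocon}, where pseudomonotonicity is recovered for finite $\gamma$ once $a$ is bounded away from zero.
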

\begin{proof}
This corresponds to $F_{lin}$ with $\alpha=1,\beta=0,\gamma=\gamma$. We consider two cases.

\begin{align}
F^{w_2,a}_{eg} &=
\begin{bmatrix}
(-\sigma^2 + a^2) + 4\gamma w_2 a^2 \\
2a \gamma (- \sigma^2 + a^2) - 4 \gamma w_2^2 a - 2 w_2 a
\end{bmatrix}
\end{align}

\begin{align}
J^{w_2,a}_{eg} &=
\begin{bmatrix}
4 \gamma a^2 & 2 a + 8 \gamma w_2 a \\
-8\gamma w_2 a - 2 a & 2 \gamma (-\sigma^2+3a^2) - 4 \gamma w_2^2 - 2 w_2
\end{bmatrix}
\end{align}

\begin{align}
    v &= \begin{bmatrix}
    0 & -1 \\
    1 & 0
    \end{bmatrix}
    \begin{bmatrix}
(-\sigma^2 + a^2) + 4\gamma w_2 a^2 \\
2a \gamma (- \sigma^2 + a^2) - 4 \gamma w_2^2 a - 2 w_2 a
\end{bmatrix} \\
    &= \begin{bmatrix}
-2a \gamma (- \sigma^2 + a^2) + 4 \gamma w_2^2 a + 2 w_2 a \\
(-\sigma^2 + a^2) + 4\gamma w_2 a^2
\end{bmatrix}
\end{align}

\textbf{Case 1}: Consider $y=[\sigma,3\sigma]$ and $x=[3\sigma,5\sigma]$. Then

\begin{align}
    \langle F(y), x-y \rangle &= 2\sigma F_{w_2}(y) + 2\sigma F_{a}(y) = 2\sigma^3 \Big[ 8 + 36 \gamma \sigma + 48 \gamma \sigma - 12 \gamma \sigma - 6 \Big] \\
    &= 4\sigma^3 (1 + 36 \sigma \gamma) \\
    \langle F(x), x-y \rangle &= 2\sigma F_{w_2}(x) + 2\sigma F_{a}(x) = 12\sigma^3 ( -1 + 60 \sigma \gamma )
\end{align}
Then $\gamma \le -\frac{1}{36\sigma} \approx -\frac{0.027}{\sigma}$ or $\gamma \ge \frac{1}{60\sigma} \approx \frac{0.017}{\sigma}$ for the system to be quasimonotone.

\textbf{Case 2}: Consider $y=[\sigma,20\sigma]$ and $x=[20\sigma,5\sigma]$. Then

\begin{align}
    \langle F(y), x-y \rangle &= 19\sigma F_{w_2}(y) - 15\sigma F_{a}(y) = \sigma^3 (8181 - 207800 \sigma \gamma) \\
    \langle F(x), x-y \rangle &= 19\sigma F_{w_2}(x) - 15\sigma F_{a}(x) = 32\sigma^3 (108 + 4825 \sigma \gamma)
\end{align}
Then $\gamma \ge \frac{8181}{207800\sigma} \approx \frac{0.039}{\sigma}$ or $\gamma \ge \frac{108}{4825\sigma} \approx -\frac{0.022}{\sigma}$ for the system to be quasimonotone. The latter condition is more lenient, so the former is unnecessary.

For the system to be quasimonotone in both scenarios, we require that $\gamma \ge \frac{1}{60\sigma}$. This implies $\gamma$ must be arbitrarily large for small $\sigma$. In the limit, the effect of $F$ on the system is negligible. We consider this limit next.
\end{proof}

\begin{proposition}
\label{Fegpseudo}
$F_{eg} = - \gamma JF$ is pseudomonotone for ($w_2,a$)-subsystem.
\end{proposition}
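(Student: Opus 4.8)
The plan is to invoke the Crouzeix--Ferland criterion (Theorem~3 quoted above): since pseudomonotonicity is unchanged under multiplication of the map by a positive scalar, I may set $\gamma=1$ and work with $F_{eg}=-JF=[\,4w_2a^2,\;2a(a^2-\sigma^2)-4w_2^2a\,]^\top$ (the entry $F^{w_2,a}_{eg}$ of Table~\ref{tab:maps}) on the open convex feasible set $\mathcal{X}=\{(w_2,a):a>0\}$; it then suffices to verify conditions (A) and (B). First I note that $F_{eg}$ has a unique zero on $\mathcal{X}$, namely $x^\ast=(0,\sigma)$: since $a>0$ the first component forces $w_2=0$, and then the second forces $a=\sigma$.

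Next I would verify (B). Expanding and using $a^2-\sigma^2=(a-\sigma)(a+\sigma)$,
\[
F_{eg}(x)^\top(x-x^\ast)=4w_2^2a^2+\bigl(2a(a^2-\sigma^2)-4w_2^2a\bigr)(a-\sigma)=4w_2^2a\sigma+2a(a+\sigma)(a-\sigma)^2,
\]
which is $\ge 0$ for all $a>0$ (recall $\sigma>0$); so (B) holds. Then I would verify (A). The symmetrized Jacobian is diagonal,
\[
\tfrac12\bigl(J_{eg}+J_{eg}^\top\bigr)=\mathrm{diag}\bigl(4a^2,\;6a^2-2\sigma^2-4w_2^2\bigr),
\]
so $v^\top J_{eg}v=4a^2v_1^2+(6a^2-2\sigma^2-4w_2^2)v_2^2$, which is automatically $\ge0$ when $6a^2-2\sigma^2-4w_2^2\ge0$. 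In the remaining regime I would use the constraint $v^\top F_{eg}(x)=0$: if $w_2\neq 0$ it gives $v_1=-\tfrac{a^2-\sigma^2-2w_2^2}{2w_2a}\,v_2$, and substituting and simplifying collapses the quadratic form to
\[
v^\top J_{eg}v=\frac{(a^2-\sigma^2)^2+2w_2^2(a^2+\sigma^2)}{w_2^2}\,v_2^2\ \ge\ 0 .
\]
If $w_2=0$ the constraint forces $v_2=0$ unless $a=\sigma$ (i.e.\ $x=x^\ast$, where $J_{eg}=4\sigma^2 I\succ0$), and when $v_2=0$ we have $v^\top J_{eg}v=4a^2v_1^2\ge0$. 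Hence (A) holds, and Theorem~3(ii) yields that $F_{eg}$ is pseudomonotone on $\mathcal{X}$.

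The step I expect to be the main obstacle is verifying (A) in the region where the symmetric part of $J_{eg}$ is indefinite (small $a$, or large $|w_2|$): there one must genuinely exploit the orthogonality constraint $v^\top F_{eg}=0$, and the crux is the algebraic simplification showing the resulting rational expression is a sum of squares divided by $w_2^2$. Everything else --- the identification of $x^\ast$, the factorization in (B), and the two degenerate cases with $w_2=0$ --- is routine.
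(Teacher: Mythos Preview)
Your proof is correct and follows essentially the same route as the paper: both verify the Crouzeix--Ferland conditions (A) and (B), arriving at the same nonnegative expressions $(a^2-\sigma^2)^2+2w_2^2(a^2+\sigma^2)$ for (A) and $4w_2^2a\sigma+2a(a+\sigma)(a-\sigma)^2$ for (B). Your case split on the sign of $6a^2-2\sigma^2-4w_2^2$ and on $w_2=0$ is slightly more explicit than the paper's one-line computation with the rotated vector $v=[-F_{eg,2},\,F_{eg,1}]$, but the content is identical.
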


\begin{proof}
Consider $x = [w_2,c\sigma]$ w.l.o.g.

Note this system is 2-d, therefore, there is only 1 vector $v$ (aside from scaling) that is perpendicular to $F$.
\begin{align}
    v^\top  J v &= 16 c^4 \sigma^6 ((-1 + c^2)^2 \sigma^2 + 2 (1 + c^2) w2^2) \ge 0 \,\, \forall \,\, c>0, w_2 \label{vjvfeg} \\
    \langle F(x) , x- x^* \rangle &= 2 c \sigma^2 ((-1 + c)^2 (1 + c) \sigma^2 + 2 w_2^2) \ge 0 \,\, \forall \,\, c>0, w_2
\end{align}

This satisfies conditions~\ref{cond:A} and~\ref{cond:C'}, therefore, this system is pseudomonotone.
\end{proof}

\begin{proposition}
\label{Fegpseudocon}
$F_{eg} = F-\gamma JF$ is pseudomonotone for the constrained $(w_2,a)$-subsystem.
\end{proposition}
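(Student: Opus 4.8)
The plan is to apply the Crouzeix criterion (\cite{crouzeix1996criteria}, Theorem~3, part~(ii)): on the constrained feasible set $\mathcal{X}$ it suffices to verify condition~\ref{cond:A} together with the main-body condition (B), $\langle F(x),x-x^{*}\rangle\ge 0$ for every $x\in\mathcal{X}$ and every zero $x^{*}$ of $F$, since (B) implies \ref{cond:C'}. Here $F\equiv F^{w_2,a}_{eg}$ and $J\equiv J^{w_2,a}_{eg}$ are read off from the displayed formulas for $F^{w_2,a}_{lin}$ and $J^{w_2,a}_{lin}$ by setting $\alpha=1$, $\beta=0$, with $\gamma>0$. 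The first thing to record is that on the constrained set the only zero of $F$ is $x^{*}=(w_2^{*},a^{*})=(0,\sigma)$: solving $F^{w_2,a}_{eg}=0$ with $a>0$ gives either $(w_2,a)=(0,\sigma)$ or the branch $w_2=-2\gamma a^{2}-\tfrac{1}{2\gamma}<0$, and the latter is excluded precisely by the constraint on $w_2$. This is exactly why the subsystem must be constrained, and it contrasts with Proposition~\ref{Fegpseudo}, where the $\alpha=0$ map has $(0,\sigma)$ as its unique zero outright.

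For condition~\ref{cond:A} I would use that the subsystem is two-dimensional, so the orthogonal complement of $F(x)$ is one-dimensional and, up to scaling, the only admissible test vector is $v=SF(x)$ with $S$ the $90^{\circ}$ rotation ($S^{\top}=-S$, $S^{\top}S=I$)---the same $v$ used in Propositions~\ref{Fnotquasi}--\ref{Fegpseudo}. Hence \ref{cond:A} collapses to the single scalar inequality $v^{\top}Jv=F(x)^{\top}\bigl(S^{\top}J(x)S\bigr)F(x)\ge 0$. Expanding, $v^{\top}Jv$ is a polynomial in $w_2,a$ with parameters $\sigma,\gamma$; I would pull out the manifest factor $a^{2}$ and show the remaining factor is non-negative on $\mathcal{X}$: the $\gamma^{2}$-part reproduces the sum-of-squares structure already seen in Equation~(\ref{vjvfeg}), and the new cross terms coming from $\alpha=1$ are precisely what the constraint on $w_2$ (and, if required, a lower bound on $a$) is chosen to dominate. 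For condition (B) I would substitute $x-x^{*}=(w_2,\,a-\sigma)$ and the explicit $F^{w_2,a}_{eg}$ into $\langle F(x),x-x^{*}\rangle$, rewrite $a^{2}-\sigma^{2}=(a-\sigma)(a+\sigma)$, and complete the square in $w_2$; the dominant contribution is $\gamma$ times a perfect square and the lower-order remainder has fixed sign once $w_2\ge 0$ and $a>0$, generalizing the second displayed inequality of Proposition~\ref{Fegpseudo}. With \ref{cond:A} and (B) verified on $\mathcal{X}$, the Crouzeix theorem gives pseudomonotonicity of $F^{w_2,a}_{eg}$ on the constrained set.

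The main obstacle is the bookkeeping in the first step: unlike the $\alpha=0$ case, $v^{\top}Jv$ does not factor into an obvious sum of squares, so one must pin down exactly which constraint (non-negativity of $w_2$, forced because the spurious zero of $F$ lies at negative $w_2$, possibly together with a bound on $a$) renders the residual quartic sign-definite; an incautious choice of domain leaves a genuinely indefinite polynomial, which is the same phenomenon underlying the companion fact that the unconstrained $F-\gamma JF$ requires $\gamma\to\infty$. Once the region is identified, both inequalities reduce to routine factorizations, and a sign-of-coefficients check (Descartes' rule of signs, as used elsewhere in this appendix) confirms that no sign change occurs inside it.
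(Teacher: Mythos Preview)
Your overall framework is right---the paper also verifies condition~\ref{cond:A} and the stronger form of \ref{cond:C'} via the Crouzeix criterion, with the same test vector $v=SF$---but you have misdiagnosed what ``constrained'' means here and consequently the heart of the argument is missing.

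The constraint in this proposition is not (merely) $w_2\ge 0$ to kill the spurious zero; it is a user-specified bounded box $w_2\in[w_2^{\min},w_2^{\max}]$, $a\in[a_{\min},a_{\max}]$, and the actual content of the proof is to exhibit a finite $\gamma$, depending on $a_{\min},a_{\max},w_2^{\max}$, for which both inequalities hold on that box. Your claim that ``the lower-order remainder has fixed sign once $w_2\ge 0$ and $a>0$'' is false: in
\[
\langle F_{eg}(x),x-x^{*}\rangle
=2\gamma\bigl(a(a-\sigma)^2(a+\sigma)+2a\sigma w_2^2\bigr)-(a-\sigma)^2 w_2,
\]
the $\gamma$-free term is \emph{negative} for $w_2>0$, and as $a\to 0^{+}$ both $\gamma$-terms vanish while $-(a-\sigma)^2 w_2\to -\sigma^2 w_2<0$. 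So for any fixed $\gamma$ the inequality fails near $a=0$; a genuine lower bound $a\ge a_{\min}>0$ is required, and the paper then solves for the worst $w_2$ and extracts $\gamma\ge \tfrac{a_{\max}}{4\sqrt{2}\,a_{\min}^{2}}$. The same issue arises in $v^{\top}Jv$: it is cubic (not quadratic) in $\gamma$, the $\gamma^{1}$-coefficient changes sign when $3a^{2}<\sigma^{2}$, and the constant term $-2w_2(a^{2}-\sigma^{2})^{2}$ is negative for $w_2>0$. The paper handles this by a case split on $3a^{2}\gtrless\sigma^{2}$, in each case choosing which positive $\gamma$-power dominates the negative pieces, and obtains $\gamma\ge\max\bigl\{\tfrac{a_{\max}}{2\sqrt{2}a_{\min}^{2}},\ \tfrac{w_2^{\max}}{4a_{\min}^{2}}+\tfrac{1}{2\sqrt{2}a_{\min}}\bigr\}$.

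In short: the spurious-zero story is a side remark, not the reason for the constraint, and neither inequality is sign-definite on $\{w_2\ge 0,\,a>0\}$ for any finite $\gamma$. What you need to supply is the quantitative step---derive explicit lower bounds on $\gamma$ in terms of the box parameters---which is exactly where the paper spends its effort.
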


\begin{proof}
We consider $\alpha=1$ in this case and let the user define a feasible region for which they are confident the equilibrium exists: $w_2 \in [w_2^{\min},w_2^{\max}]$ and $a \in [a_{\min},a_{\max}]$\textemdash the most important bounds being those on $a$. We will attempt to find a value for $\gamma$ that ensures the system is pseudomonotone within this region.

A partially sufficient (and necessary) condition for pseudomonotonicity is the following (see condition~\ref{cond:C'}).

\begin{align}
    \langle F(x), x-x^* \rangle &= 2 \gamma \Big( a (a - \sigma)^2 (a + \sigma) + 2 a \sigma w_2^2 \Big) - (a - \sigma)^2 w_2 \ge 0 \\
    \Rightarrow \gamma &\ge \frac{\overbrace{(a - \sigma)^2}^{a_1} w_2}{2 \Big( \underbrace{a (a - \sigma)^2 (a + \sigma)}_{a_0} + \underbrace{2 a \sigma}_{a_2} w_2^2 \Big)}
\end{align}

We can find the $w_2$ that maximizes this equation for a given $a$ by setting the derivative equal to zero and taking the positive root of the resulting quadratic. The denominator of the derivative is non-negative and only zero at equilibrium\textemdash this is not a concern because $\langle F(x), x-x^* \rangle = 0$ at equilibrium. Continuing and looking at the numerator of the derivative, we find

\begin{align}
    0 &= a_1 (a_0+a_2 d^2) - 2 a_1 a_2 d^2 \\
    &= a_1 ( a_0 - a_2 d^2 ) \\
    d^* &= \sqrt{a_0/a_2} \\
    &= \sqrt{\frac{(a-\sigma)^2(a+\sigma)}{2\sigma}}.
\end{align}

If we plug that back into the lower bound for $\gamma$, we get

\begin{align}
    \gamma &\ge \frac{|a - \sigma|^3 \sqrt{a+\sigma}/\sqrt{2\sigma}}{4 a (a - \sigma)^2 (a + \sigma)} \\
    &= \frac{|a - \sigma|}{4\sqrt{2} a \sigma^{1/2} \sqrt{a + \sigma}} \le \frac{a_{\max}}{4\sqrt{2} a_{\min}^2} \\
    &\ge \frac{a_{\max}}{4\sqrt{2} a_{\min}^2}
\end{align}

The condition above along with the following (see condition~\ref{cond:A}) are sufficient to ensure pseudomonotonicity.

\begin{align}
    v^\top  J v &= 16 a^4 \gamma^3 ( (a^2-\sigma^2)^2 + 2 w_2^2 (a^2 + \sigma^2)) \\
    &+ 16 \gamma^2 w_2 a^2 (2\sigma^2 w_2^2 + (a^2-\sigma^2)^2) \\
    &+ 2 \gamma ((a^2 - \sigma^2)^2 (3a^2 - \sigma^2) + w_2^2 (8 a^2 \sigma^2 - 2(a^2-\sigma^2)^2)) \\
    &-2 w_2 (a^2-\sigma^2)^2
\end{align}

If $w_2 \le 0$, then this quantity is greater than or equal to zero due to the result in equation~(\ref{vjvfeg}), which we have already shown to be greater than zero. Therefore, we focus on $w_2 > 0$. We can divide the analysis into two cases.

Consider $3a^2 \ge \sigma^2$. In this case, all coefficients of $\gamma$ terms except a $\gamma^1$ term and the last term (the constant) are positive. For simplicity, we can find the value for $\gamma$ such that the first part of the $\beta^2$ coefficient is greater than the two negative terms.

\begin{align}
    &16 w_2 a^2 \gamma^2 (a^2-\sigma^2)^2 - 4 \gamma  w_2^2 (a^2-\sigma^2)^2 - 2w_2 (a^2-\sigma^2)^2 \\
    &= 2 w_2 (a^2-\sigma^2) ( 8a^2 \gamma^2 - 2w_2 \gamma - 1 ) \ge 0 \\
    \Rightarrow & \gamma \ge \frac{2w_2 + \sqrt{4w_2^2 + 4(8a^2)}}{16a^2} \le \frac{w_2}{8a^2} + \frac{w_2 + \sqrt{8}a}{8a^2} \\
    &\Rightarrow \gamma \ge 
    \frac{w_2^{\max}}{4a_{\min}^2} + \frac{1}{2\sqrt{2}a_{\min}}
\end{align}

Now consider $3a^2 < \sigma^2$. One of the terms in the $\gamma^1$ coefficient is now negative. We will find a value for $\gamma$ such that the $\gamma^3$ term can drown out that negative term.

\begin{align}
    &16 a^4 \gamma^3 (a^2-\sigma^2)^2 - 2 \gamma (a^2 - \sigma^2)^2 (\sigma^2 - 3a^2) \\
    &\ge 2\gamma (a^2-\sigma^2)^2 (8a^4 \gamma^2 - \sigma^2) \\
    \Rightarrow & \gamma \ge \frac{\sigma}{2\sqrt{2}a^2} \\
    \Rightarrow & \gamma \ge \frac{a_{\max}}{2\sqrt{2}a_{\min}^2}
\end{align}

Combining the results, we have that

\begin{align}
    \gamma &\ge \max \Big\{ \frac{a_{\max}}{2\sqrt{2}a_{\min}^2} , \frac{w_2^{\max}}{4a_{\min}^2} + \frac{1}{2\sqrt{2}a_{\min}} \Big\}
\end{align}

Note this bound is not tight; it is just meant to provide a satisfactory estimate.    
\end{proof}

\begin{proposition}
\label{Fccpseudolim}
$F_{cc} = F + \beta (J^\top -J) F$ requires $\beta \rightarrow \infty$ to be pseudomonotone for the ($w_2,a$)-subsystem.
\end{proposition}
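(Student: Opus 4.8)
The first step is to make the map concrete by specializing the formulas for $F^{w_2,a}_{lin}$ and $J^{w_2,a}_{lin}$ displayed above. Since $F^{w_2,a}_{cc}=F^{w_2,a}+\beta(J^\top-J)F^{w_2,a}=(I+\beta J^\top-\beta J)F^{w_2,a}$ is the member of the $F_{lin}$ family with $\alpha=1$, $\beta+\gamma=2\beta$, and $\beta-\gamma=0$, reading off those entries gives
\[
F^{w_2,a}_{cc}=\begin{bmatrix}(a^2-\sigma^2)+8\beta w_2 a^2\\[2pt] 4\beta a(a^2-\sigma^2)-2w_2 a\end{bmatrix},\qquad
J^{w_2,a}_{cc}=\begin{bmatrix}8\beta a^2 & 2a+16\beta w_2 a\\[2pt] -2a & 4\beta(3a^2-\sigma^2)-2w_2\end{bmatrix}.
\]
The unique fixed point $(w_2^*,a^*)=(0,\sigma)$ is unchanged. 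Because a pseudomonotone map is quasimonotone and a quasimonotone map satisfies condition~\ref{cond:A} (Theorem~3 of~\cite{crouzeix1996criteria}), it suffices to exhibit, for every finite $\beta$, a feasible $x$ and a vector $v$ with $v^\top F^{w_2,a}_{cc}(x)=0$ but $v^\top J^{w_2,a}_{cc}(x)\,v<0$.

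For the witness I would follow the pattern used for $F^{w_2,a}_{eg}$ and $F^{w_2,a}_{con}$: in two dimensions the subspace orthogonal to the field is one-dimensional, so take $v$ to be the $90^\circ$ rotation of $F^{w_2,a}_{cc}(x)$ (which forces $v^\top F^{w_2,a}_{cc}(x)=0$ automatically) and evaluate along the $a$-axis at $x=(w_2,a)=(0,c\sigma)$, $c>0$. A direct expansion should collapse to a factored form of the shape
\[
v^\top J^{w_2,a}_{cc}(x)\,v \;=\; 4\beta\,\sigma^{6}\,(c^{2}-1)^{2}\bigl(32\beta^{2}c^{4}\sigma^{2}+3c^{2}-1\bigr).
\]
For any fixed $\beta>0$ the last factor is negative once $c>0$ is small enough (certainly whenever $c<\tfrac{1}{\sqrt3}$ and $32\beta^{2}c^{4}\sigma^{2}<1-3c^{2}$, which holds for all sufficiently small $c$), so condition~\ref{cond:A} fails and $F^{w_2,a}_{cc}$ is not quasimonotone; the case $\beta<0$ is handled identically by instead taking, say, $c=2$ (then $4\beta\sigma^6<0$ while the other two factors are positive), and $\beta=0$ is Proposition~\ref{Fnotquasi}. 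Hence no finite $\beta$ makes $F^{w_2,a}_{cc}$ pseudomonotone. Conversely, for a fixed $c>0$ the last factor becomes positive as $\beta\to\infty$, and in that limit $\tfrac1\beta F^{w_2,a}_{cc}\to(J^\top-J)F^{w_2,a}$, a positive rescaling of the map shown pseudomonotone in Proposition~\ref{Fccpseudow2a}; this is the precise sense in which pseudomonotonicity ``requires $\beta\to\infty$''.

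The only genuinely computational step is the expansion of $v^\top J^{w_2,a}_{cc}v$ at $x=(0,c\sigma)$ into the displayed product, which is elementary once the rotation trick fixes $v$; the main thing to get right is the bookkeeping that isolates the offending factor $3c^{2}-1$. The conceptual point to state carefully is the asymmetry with the $(w_1,b)$-subsystem, where $F_{\eta cc}$ is strongly monotone for every $\eta>0$: here the threshold on $c$ below which condition~\ref{cond:A} is violated is strictly positive for each finite $\beta$ (shrinking to $0$ only as $\beta\to\infty$), so there is no finite choice of $\beta$ — not even one tuned to the unknown data variance $\sigma^{2}$ — for which $F^{w_2,a}_{cc}$ is pseudomonotone on all of $\mathcal X$.
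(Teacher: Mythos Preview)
Your approach is essentially the paper's: violate condition~\ref{cond:A} by taking $v$ to be the $90^\circ$ rotation of $F^{w_2,a}_{cc}$ and evaluating along the axis $x=(0,c\sigma)$. Your displayed formula $v^\top J_{cc}v=4\beta\sigma^6(c^2-1)^2\bigl(32\beta^2 c^4\sigma^2+3c^2-1\bigr)$ is correct (I checked it directly and against the paper's two special cases $c=2$ and $c=\tfrac12$), and your case split on the sign of $\beta$ is sound.

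Two minor remarks. First, your identification $\beta_{lin}+\gamma_{lin}=2\beta$ follows the statement literally, whereas the paper's own proof silently uses $\beta_{lin}=\gamma_{lin}=\beta/2$ (so $\beta_{lin}+\gamma_{lin}=\beta$); this is why your coefficients are twice the paper's, but it is a harmless reparameterization. Second, by keeping $c$ symbolic you actually obtain a \emph{stronger} conclusion than the paper: the paper only tests $c=2$ and $c=\tfrac12$, which yields the constraint $\beta\ge \mathrm{const}/\sigma$ and hence forces $\beta\to\infty$ only by letting $\sigma\to 0$; your argument lets $c\to 0^+$ and shows that for \emph{every} fixed $\sigma$ and every finite $\beta$, condition~\ref{cond:A} fails somewhere in the feasible set. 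That is the cleaner statement, and your closing paragraph articulates it correctly.
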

\begin{proof}
This corresponds to $F_{lin}$ with $\alpha=1,\gamma=\beta/2,\beta=\beta/2$.

\begin{align}
F^{w_2,a}_{cc} &=
\begin{bmatrix}
(-\sigma^2 + a^2) + 4\beta w_2 a^2 \\
2a \beta (- \sigma^2 + a^2) - 2 w_2 a
\end{bmatrix}
\end{align}

\begin{align}
J^{w_2,a}_{cc} &=
\begin{bmatrix}
4 \beta a^2 & 2 a + 8 \beta w_2 a \\
- 2 a & 2 \beta (-\sigma^2+3a^2) - 2 w_2
\end{bmatrix}
\end{align}

\begin{align}
    v &= \begin{bmatrix}
    0 & -1 \\
    1 & 0
    \end{bmatrix}
    \begin{bmatrix}
(-\sigma^2 + a^2) + 4\beta w_2 a^2 \\
2a \beta (- \sigma^2 + a^2) - 2 w_2 a
\end{bmatrix} \\
    &= \begin{bmatrix}
-2a \beta (- \sigma^2 + a^2) + 2 w_2 a \\
(-\sigma^2 + a^2) + 4\beta w_2 a^2
\end{bmatrix}
\end{align}

\textbf{Case 1}: Consider $x=[0,2\sigma]$. Then

\begin{align}
    v^\top  J^{w_2,a}_{cc} v &= 18 \beta \sigma^6 (11 + 128 \beta^2 \sigma^2)
\end{align}
implies that $\beta \ge 0$.

\textbf{Case 2}: Consider $x=[0,1/2\sigma]$. Then

\begin{align}
    v^\top  J^{w_2,a}_{cc} v &= \frac{9}{32} \beta \sigma^6 (-1 + 2 \beta^2 \sigma^2)
\end{align}
this, combined with above, implies that $\beta \ge \frac{1}{\sqrt{2}\sigma}$.

This implies $\beta$ must be arbitrarily large for small $\sigma$. In the limit, the effect of $F$ on the system is negligible. We consider this limit in Subsubsection~\ref{Fccpseudow2a}.
\end{proof}

\begin{proposition}
\label{Fccpseudow2a}
$F_{cc} = (J^\top -J) F$ is pseudomonotone for the $(w_2,a)$-subsystem.
\end{proposition}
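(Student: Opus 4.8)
The plan is to verify, directly, the two sufficient conditions for pseudomonotonicity from Theorem~3 of~\cite{crouzeix1996criteria}: condition~\ref{cond:A} (monotonicity of $J$ along directions orthogonal to the field) together with the main-text condition (B), $\langle F(x), x - x^*\rangle \ge 0$ for every zero $x^*$ of $F$ (which is stronger than condition~\ref{cond:C'}). Both requirements will collapse to showing that an explicit scalar expression is a sum of manifestly nonnegative terms on the feasible set $\mathcal{X} = \{(w_2,a) : a > 0\}$.

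First I would record the map and its Jacobian. From Equation~(\ref{eqn:Fccw2a}), $F^{w_2,a}_{cc} = [\,4 w_2 a^2,\; 2a(a^2-\sigma^2)\,]^\top$, a polynomial field and hence differentiable on the open convex set $\mathcal{X}$, with
\begin{align}
J^{w_2,a}_{cc} &= \begin{bmatrix} 4a^2 & 8 w_2 a \\ 0 & 6a^2 - 2\sigma^2 \end{bmatrix}. \nonumber
\end{align}
Setting $F^{w_2,a}_{cc} = 0$ and using $a>0$ shows that the unique zero in $\mathcal{X}$ is $x^* = (w_2^*, a^*) = (0,\sigma)$, so condition (B) need only be checked against this single point.

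Next, for condition~\ref{cond:A}: because the subsystem is two-dimensional, the orthogonal complement of $F^{w_2,a}_{cc}(x)$ is one-dimensional, so it suffices to test the single direction $v = [\,-2a(a^2-\sigma^2),\; 4 w_2 a^2\,]^\top$ obtained by rotating $F^{w_2,a}_{cc}(x)$ a quarter turn; multiplication by a skew rotation automatically forces $v^\top F^{w_2,a}_{cc}(x) = 0$. A direct computation then gives
\begin{align}
v^\top J^{w_2,a}_{cc} v &= 16 a^4 \Big[ (a^2-\sigma^2)^2 + 2 w_2^2 (a^2 + \sigma^2) \Big] \;\ge\; 0 \nonumber
\end{align}
for every $x \in \mathcal{X}$, since $a^4 \ge 0$, $(a^2-\sigma^2)^2 \ge 0$, and $a^2 + \sigma^2 > 0$; this is precisely Equation~(\ref{vjvfeg}) after the substitution $a = c\sigma$, reflecting that the unique orthogonal direction coincides up to scaling with the one used for $F^{w_2,a}_{eg}$. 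Then for condition (B), with $x^* = (0,\sigma)$ one computes
\begin{align}
\langle F^{w_2,a}_{cc}(x), x - x^* \rangle &= 4 w_2^2 a^2 + 2a(a-\sigma)^2(a+\sigma) \;\ge\; 0, \nonumber
\end{align}
again a sum of nonnegative terms because $a > 0$ and $a + \sigma > 0$. With conditions~\ref{cond:A} and (B) established, Theorem~3(ii) of~\cite{crouzeix1996criteria} yields that $F^{w_2,a}_{cc}$ is pseudomonotone on $\mathcal{X}$, which together with its Lipschitz continuity gives the $\mathcal{O}(1/\sqrt{k})$ stochastic rate of Table~\ref{tab:VIconv}.

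The arithmetic is routine; the only step requiring care is the reduction used for condition~\ref{cond:A} — one must argue that in a genuinely two-dimensional subsystem there is, up to scalar multiples, a \emph{unique} vector $v$ with $v^\top F = 0$, so that checking the inequality for the single rotated field is genuinely sufficient. A secondary subtlety is confining the analysis to $a > 0$: this discards the spurious root $a = -\sigma$ and makes $x^*$ the unique equilibrium in $\mathcal{X}$, which is exactly what permits condition (B) to be verified against one point only.
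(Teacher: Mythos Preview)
Your proposal is correct and follows essentially the same approach as the paper: verify condition~\ref{cond:A} by testing the single (up to scaling) direction orthogonal to the two-dimensional field, then verify the main-text condition (B) at the unique interior equilibrium $(0,\sigma)$, and invoke Theorem~3(ii) of~\cite{crouzeix1996criteria}. The only cosmetic difference is the choice of orthogonal test vector: the paper uses $v = F = [a^2-\sigma^2,\,-2w_2 a]^\top$ (the \emph{original} map, which is automatically orthogonal to $F_{cc}=(J^\top-J)F$), whereas you use the quarter-turn rotation of $F_{cc}$ itself; since your $v$ equals $-2a$ times the paper's, the two quadratic forms $v^\top J_{cc} v$ differ by the positive factor $4a^2$ and the conclusions are identical.
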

\begin{proof}
\begin{align}
F^{w_2,a}_{cc} &= [8w_2 a^2, 4a(a^2-\sigma^2)] \\
J^{w_2,a}_{cc} &=
\begin{bmatrix}
8a^2 & 16w_2a \\
0 & 4(3a^2-\sigma^2)
\end{bmatrix}
\end{align}

Note that the skew part of the Jacobian of $F$ is full rank except at the boundary ($a=0$), so $F_{cc}=(J^\top -J)F$ maintains the same fixed points. This can be seen by looking at $F_{cc}$ above. We will simply need to constrain $a$ to be greater than 0.

In order for a system to be quasimonotone, we require condition~\ref{cond:A} (among other properties). We will now show that this property is satisfied for the ($w_2,a$)-subsystem.

\textbf{Case 1}: Consider the point $x=[w_2,a]$ and let $v$ be defined as follows:
\begin{align}
    v = F &= [-\sigma^2 + a^2, -2w_2 a]^\top .
\end{align}

$v^\top  F^{w_2,a}_{cc}$ is 0 as expected.
\begin{align}
    v^\top  F^{w_2,a}_{cc} &= -8 w_2 a^2 \sigma^2 + 8 w_2 a^4 - 8w_2 a^4 + 8w_2a^2 \sigma^2 \\
    &= 0
\end{align}

Now, we will compute $v^\top  J^{w_2,a}_{cc} v$ to see if it is greater than zero.

\begin{align}
v^\top  J^{w_2,a}_{cc} &=
\begin{bmatrix}
-\sigma^2 + a^2 & -2w_2 a
\end{bmatrix}
\begin{bmatrix}
8a^2 & 16w_2a \\
0 & 4(3a^2-\sigma^2)
\end{bmatrix} \\
&=
\begin{bmatrix}
-8 \sigma^2 a^2 + 8 a^4 & 16w_2a(a^2 - \sigma^2) - 8 w_2 a (3a^2-\sigma^2)
\end{bmatrix} \\
&=
\begin{bmatrix}
8 a^2 (a^2 - \sigma^2) & -8w_2a (a^2 + \sigma^2)
\end{bmatrix} \\
v^\top  J^{w_2,a}_{cc} v &=
\begin{bmatrix}
8 a^2 (a^2 - \sigma^2) & -8w_2a (a^2 + \sigma^2)
\end{bmatrix}
\begin{bmatrix}
-\sigma^2 + a^2 \\
-2w_2 a
\end{bmatrix} \\
&= 8 a^2 (a^2 - \sigma^2)^2 + 16 w_2^2 a^2 (a^2 + \sigma^2) \ge 0 \label{vjvfcc}
\end{align}

In addition to this, proving that $\langle F(x), x-x^* \rangle \ge 0$ is sufficient for proving condition~\ref{cond:C'}.

\begin{align}
\langle F^{w_2,a}_{cc}(y), y-x^* \rangle &= 8w_2 a^2 w_2 + 4a(a^2-\sigma^2)(a-\sigma) \ge 0
\end{align}

The last two terms of the sum are always the same sign due to the square function being ``monotone'' and the fact that $a$ is constrained to be non-negative. Therefore, $F_{cc}$ is pseudomonotone.
\end{proof}

\begin{proposition}
\label{Fccpseudocon}
$F_{cc} = F+\beta(J^\top -J)F$ is pseudomonotone for the constrained $(w_2,a)$-subsystem.
\end{proposition}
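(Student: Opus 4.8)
The plan is to follow the proof of Proposition~\ref{Fegpseudocon} almost verbatim, replacing $F_{eg}=F-\gamma JF$ by $F_{cc}=F+\beta(J^\top-J)F$. As there, the user fixes a feasible box $w_2\in[w_2^{\min},w_2^{\max}]$, $a\in[a_{\min},a_{\max}]$ with $a_{\min}>0$ that is believed to contain the equilibrium $x^*=(0,\sigma)$ (so in particular $\sigma\in[a_{\min},a_{\max}]$), and we exhibit a threshold $\beta^*$, depending only on $a_{\min}$, $a_{\max}$, and $w_2^{\max}$, such that for every $\beta\ge\beta^*$ the map $F^{w_2,a}_{cc}$ satisfies both condition~\ref{cond:A} and condition~\ref{cond:C'} throughout the box; by \cite{crouzeix1996criteria}, Theorem~3, this yields pseudomonotonicity on the open constrained region. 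The explicit $F^{w_2,a}_{cc}$, $J^{w_2,a}_{cc}$, and the (up to scaling) unique $v$ with $v^\top F^{w_2,a}_{cc}=0$ are already written down in Proposition~\ref{Fccpseudolim}.

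For condition~\ref{cond:C'} I would verify the stronger inequality $\langle F^{w_2,a}_{cc}(x),x-x^*\rangle\ge 0$. A short expansion shows that the $F$-part telescopes, leaving
\begin{align}
\langle F^{w_2,a}_{cc}(x),x-x^*\rangle &= \beta\Big(4w_2^2a^2+2a(a-\sigma)^2(a+\sigma)\Big)-w_2(a-\sigma)^2. \nonumber
\end{align}
For $w_2\le 0$ this is non-negative for every $\beta\ge 0$ (recall $a>0$). For $w_2>0$, the inequality is equivalent to $\beta\ge w_2(a-\sigma)^2/\big(4w_2^2a^2+2a(a-\sigma)^2(a+\sigma)\big)$, and maximizing the right-hand side over $w_2>0$ at fixed $a$ gives its value at $w_2=|a-\sigma|\sqrt{(a+\sigma)/(2a)}$, namely $|a-\sigma|/\big(4\sqrt2\,a^{3/2}\sqrt{a+\sigma}\big)\le a_{\max}/(4\sqrt2\,a_{\min}^{2})$ on the box. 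This is the first of the lower bounds defining $\beta^*$.

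For condition~\ref{cond:A} I would expand $v^\top J^{w_2,a}_{cc}v$ as a polynomial in $\beta$: because $v$ and $J^{w_2,a}_{cc}$ are each affine in $\beta$, this is a cubic, whose $\beta^3$-coefficient is $2a^2$ times the quantity in equation~(\ref{vjvfcc}), i.e. $16a^4\big((a^2-\sigma^2)^2+2w_2^2(a^2+\sigma^2)\big)\ge 0$, and whose $\beta^0$-coefficient is $-2w_2(a^2-\sigma^2)^2$. For $w_2\le 0$ the whole expression is non-negative by the same reasoning as in Proposition~\ref{Fegpseudocon} (using~(\ref{vjvfcc})). For $w_2>0$ I would split on the sign of $3a^2-\sigma^2$: if $3a^2\ge\sigma^2$, choose $\beta$ so that the positive part of the $\beta^2$-term dominates the remaining (at most two) negative lower-order terms — a quadratic-in-$\beta$ condition yielding a bound of the form $\beta\ge w_2^{\max}/(4a_{\min}^2)+1/(2\sqrt2\,a_{\min})$; if $3a^2<\sigma^2$, choose $\beta$ so that the $\beta^3$-term dominates the now-negative piece of the $\beta^1$-term, yielding $\beta\ge a_{\max}/(2\sqrt2\,a_{\min}^2)$. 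Taking $\beta^*$ to be the maximum of these bounds and the one from condition~\ref{cond:C'} completes the proof; as in the $F_{eg}$ case, this bound is loose and only needs to certify that an admissible $\beta$ exists.

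The main obstacle is the bookkeeping in condition~\ref{cond:A}, identical in spirit to Proposition~\ref{Fegpseudocon}: writing $v^\top J^{w_2,a}_{cc}v$ out explicitly, correctly signing each of its four $\beta$-coefficients, and then — inside the box and only for $w_2>0$ — dominating the negative coefficients by the positive higher-order ones. Everything else (the telescoping in condition~\ref{cond:C'}, the $w_2\le 0$ cases, and assembling $\beta^*$) is routine once that cubic is in hand.
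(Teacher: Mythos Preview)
Your overall strategy---verify the stronger form of condition~\ref{cond:C'}, expand $v^\top J^{w_2,a}_{cc}v$ as a cubic in $\beta$, dispose of $w_2\le 0$ via~(\ref{vjvfcc}), and for $w_2>0$ split on the sign of $3a^2-\sigma^2$---is exactly the paper's approach, and your treatment of condition~\ref{cond:C'} matches the paper line for line.

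The gap is in condition~\ref{cond:A}, where you transplant the $F_{eg}$ domination scheme verbatim. For $F_{eg}$ the $\gamma^2$-coefficient of $v^\top J v$ is $16 w_2 a^2\big(2\sigma^2 w_2^2+(a^2-\sigma^2)^2\big)$, and it is the $(a^2-\sigma^2)^2$ piece of that coefficient that the paper uses to absorb the constant $-2w_2(a^2-\sigma^2)^2$ via a quadratic in $\gamma$. For $F_{cc}$ the $\beta^2$-coefficient collapses to $32 a^4 w_2^3$: the $(a^2-\sigma^2)^2$ piece is absent. Your proposed ``$\beta^2$ dominates the constant'' step would then read $32 a^4 w_2^3\beta^2\ge 2w_2(a^2-\sigma^2)^2$, i.e.\ $\beta^2\ge (a^2-\sigma^2)^2/(16a^4 w_2^2)$, which blows up as $w_2\to 0^+$ and so cannot produce a uniform threshold on the box. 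The paper instead uses a piece of the $\beta^3$-coefficient, namely $16a^4\beta^3(a^2-\sigma^2)^2$, to control the constant in the case $3a^2\ge\sigma^2$, giving $\beta\ge\tfrac12\big(w_2^{\max}/a_{\min}^4\big)^{1/3}$. As a knock-on effect, in the case $3a^2<\sigma^2$ the $\beta^3$-term has to dominate \emph{both} the negative $\beta^1$-piece and the constant (not just the $\beta^1$-piece as you write), which is why the paper's bound there is the sum $\tfrac{a_{\max}}{2\sqrt2\,a_{\min}^2}+\tfrac12\big(w_2^{\max}/a_{\min}^4\big)^{1/3}$ rather than the single $F_{eg}$-style term you quote.
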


\begin{proof}
We consider $\alpha=1$ in this case and let the user define a feasible region for which they are confident the equilibrium exists: $w_2 \in [w_2^{\min},w_2^{\max}]$ and $a \in [a_{\min},a_{\max}]$\textemdash the most important bounds being those on $a$. We will attempt to find a value for $\beta$ that ensures the system is pseudomonotone within this region.

A partially sufficient (and necessary) condition for pseudomonotonicity is the following (see condition~\ref{cond:C'}).

\begin{align}
    \langle F(x), x-x^* \rangle &= 2 \beta \Big( a (a - \sigma)^2 (a + \sigma) + 2 a^2 w_2^2 \Big) - (a - \sigma)^2 w_2 \ge 0 \\
    \Rightarrow \beta &\ge \frac{\overbrace{(a - \sigma)^2}^{a_1} w_2}{2 \Big( \underbrace{a (a - \sigma)^2 (a + \sigma)}_{a_0} + \underbrace{2 a^2}_{a_2} w_2^2 \Big)}
\end{align}

We can find the $w_2$ that maximizes this equation for a given $a$ by setting the derivative equal to zero and taking the positive root of the resulting quadratic. The denominator of the derivative is non-negative and only zero at equilibrium\textemdash this is not a concern because $\langle F(x), x-x^* \rangle = 0$ at equilibrium. Continuing and looking at the numerator of the derivative, we find

\begin{align}
    0 &= a_1 (a_0+a_2 d^2) - 2 a_1 a_2 d^2 \\
    &= a_1 ( a_0 - a_2 d^2 ) \\
    d^* &= \sqrt{a_0/a_2} \\
    &= \sqrt{\frac{(a-\sigma)^2(a+\sigma)}{2a}}.
\end{align}

If we plug that back into the lower bound for $\beta$, we get

\begin{align}
    \beta &\ge \frac{|a - \sigma|^3 \sqrt{a+\sigma}/\sqrt{2a}}{4 a (a - \sigma)^2 (a + \sigma)} \\
    &= \frac{|a - \sigma|}{4\sqrt{2} a^{3/2} \sqrt{a + \sigma}} \le \frac{a_{\max}}{4\sqrt{2} a_{\min}^2} \\
    &\ge \frac{a_{\max}}{4\sqrt{2} a_{\min}^2}
\end{align}

The condition above along with the following (see condition~\ref{cond:A}) are sufficient to ensure pseudomonotonicity.

\begin{align}
    v^\top  J v &= 16 a^4 \beta^3 ( (a^2-\sigma^2)^2 + 2 w_2^2 (a^2 + \sigma^2)) \\
    &+ 32 \beta^2 w_2^3 a^4 \\
    &+ 2 \beta ((a^2 - \sigma^2)^2 (3a^2 - \sigma^2) + 8 a^4 w_2^2) \\
    &-2 w_2 (a^2-\sigma^2)^2
\end{align}

If $w_2 \le 0$, then this quantity is greater than or equal to zero due to the result in equation~(\ref{vjvfcc}), which we have already shown to be greater than zero. Therefore, we focus on $w_2 > 0$. We can divide the analysis into two cases.

Consider $3a^2 \ge \sigma^2$. In this case, all coefficients of $\beta$ terms except the last term (the constant) are positive. For simplicity, we can find the value for $\beta$ such that the first part of the $\beta^3$ coefficient is greater than the last term (the constant).

\begin{align}
    16 a^4 \beta^3 (a^2-\sigma^2)^2 - 2 w_2 (a^2-\sigma^2)^2 &\ge 0 \\
    \Rightarrow & \beta \ge \frac{1}{2} \Big( \frac{w_2^{\max}}{a_{\min}^4} \Big)^{1/3}
\end{align}

Now consider $3a^2 < \sigma^2$. One of the terms in the $\beta^1$ coefficient is now negative. We will find a value for $\beta$ such that the $\beta^3$ term can drown out the two negative terms.

\begin{align}
    &16 a^4 \beta^3 (a^2-\sigma^2)^2 - 2 \beta (a^2 - \sigma^2)^2 (\sigma^2 - 3a^2) - 2 w_2 (a^2-\sigma^2)^2 \\
    &= \frac{(a^2-\sigma^2)^2}{16a^4} \Big[ \beta^3 - \frac{2(\sigma^2 - 3a^2)}{16a^4} \beta - \frac{2w_2}{16a^4} \Big] \\
    &\ge \frac{(a^2-\sigma^2)^2}{16a^4} \Big[ \beta^3 - \underbrace{\frac{\sigma^2}{8a^4}}_{a_0} \beta - \underbrace{\frac{w_2}{8a^4}}_{a_1} \Big] \\
    &= \frac{(a^2-\sigma^2)^2}{16a^4} \Big[ 3a_0^{1/2} a_1^{2/3} + 2a_1 a_2^{2/3} \Big] \text{ for } \beta = a_0^{1/2} + a_1^{1/3} \\
    &\ge 0 \\
    \Rightarrow& \beta \ge a_0^{1/2} + a_1^{1/3} = \frac{1}{2\sqrt{2}} \frac{a_{\max}}{a_{\min}^2} + \frac{1}{2} \Big (\frac{w_2^{\max}}{a_{\min}^4} \Big)^{1/3}
\end{align}

This last lower bound is the greatest of the three, so it suffices to set $\beta$ greater than this value to ensure the system is pseudomonotone within the given feasible region.   
\end{proof}





\begin{proposition}
\label{w2anotmonotone}
$F_{lin}$ is not monotone for the ($w_2,a)$-subsystem (before scaling).
\end{proposition}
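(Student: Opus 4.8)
The plan is to show directly that the symmetrized Jacobian $J_{sym} = \tfrac12\big(J^{w_2,a}_{lin} + (J^{w_2,a}_{lin})^\top\big)$ of $F^{w_2,a}_{lin}$ fails to be positive semidefinite somewhere on the feasible set (where $a > 0$ and $w_2 \in \mathbb{R}$) unless $\alpha = \beta = \gamma = 0$. Reading off from the displayed expression for $J^{w_2,a}_{lin}$, and using the cancellation that the $\alpha$-terms drop out of the off-diagonal while the two $w_2 a$-terms combine to $8\beta w_2 a$, one gets
\[
J_{sym} = \begin{bmatrix} 4(\beta+\gamma)a^2 & 8\beta w_2 a \\ 8\beta w_2 a & 2(\beta+\gamma)(3a^2-\sigma^2) + 4(\beta-\gamma)w_2^2 - 2\alpha w_2 \end{bmatrix}.
\]

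First I would extract necessary sign conditions on the coefficients. Nonnegativity of the $(1,1)$ entry for all $a > 0$ forces $\beta + \gamma \ge 0$. Evaluating the $(2,2)$ entry at $w_2 = 0$ and any $a$ with $3a^2 < \sigma^2$ gives $2(\beta+\gamma)(3a^2 - \sigma^2)$, which is strictly negative when $\beta+\gamma > 0$; hence $\beta + \gamma = 0$. But then the $(1,1)$ entry vanishes identically, so positive semidefiniteness of the $2 \times 2$ matrix requires its off-diagonal $8\beta w_2 a$ to be zero for all feasible $(w_2,a)$; choosing $w_2 \ne 0$ then forces $\beta = 0$, and therefore $\gamma = -\beta = 0$ as well.

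It then remains to handle the surviving possibility $\beta = \gamma = 0$ with $\alpha$ free, for which $F^{w_2,a}_{lin} = \alpha F^{w_2,a}$ and $J_{sym} = \mathrm{diag}(0,\,-2\alpha w_2)$; this is positive semidefinite on the whole $w_2$-line only if $\alpha = 0$. Hence the only monotone $F^{w_2,a}_{lin}$ is the trivial map $F^{w_2,a}_{lin} \equiv 0$, which is the content of the proposition (cf.\ Proposition~\ref{noflin}). I expect no real obstacle; the only care needed is to respect the feasible region — that $w_2$ is unconstrained and $a$ may be taken arbitrarily small and positive — and to record the degenerate all-zero case. Alternatively the same conclusion follows by exhibiting two explicit test points in the spirit of the earlier propositions (one with small $a$ to neutralize the diagonal, one with large $|w_2|$ to expose the indefinite off-diagonal block), but the Jacobian computation is shorter.
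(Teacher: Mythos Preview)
Your proposal is correct and takes essentially the same approach as the paper: both compute the symmetrized Jacobian $J_{sym}$ of $F^{w_2,a}_{lin}$ and show it fails to be positive semidefinite on the feasible set unless $\alpha=\beta=\gamma=0$. The only cosmetic difference is that the paper argues via the trace of $J_{sym}$ (evaluating at $w_2=0$ with $a$ on either side of $\sigma/\sqrt{5}$ to kill $\beta+\gamma$, then varying $w_2$ to kill $\alpha$ and $\beta$), whereas you read off the individual diagonal entries and the off-diagonal directly; your organization is arguably a touch cleaner, since after forcing $\beta+\gamma=0$ the vanishing $(1,1)$ entry immediately forces the off-diagonal to vanish without any further trace bookkeeping.
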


\begin{proof}
Let $F^{w_2,a}_{lin}$ be defined as follows:
\begin{align}
(\alpha I + \beta J^\top  - \gamma J) F &=
\begin{bmatrix}
\alpha & -2(\beta + \gamma) a  \\
2 (\beta + \gamma) a & \alpha - 2 (\beta - \gamma) w_2
\end{bmatrix}
\begin{bmatrix}
-\sigma^2 + a^2 \\
-2 w_2 a
\end{bmatrix} \\
&=
\begin{bmatrix}
\alpha (-\sigma^2 + a^2) + 4(\beta + \gamma) w_2 a^2 \\
2a (\beta + \gamma) (- \sigma^2 + a^2) + 4 (\beta - \gamma) w_2^2 a - 2 \alpha w_2 a.
\end{bmatrix}
\end{align}
Its Jacobian is then
\begin{align}
J^{w_2,a}_{lin} &= \begin{bmatrix}
4 (\beta+\gamma) a^2 & 2\alpha a + 8(\beta + \gamma) w_2 a \\
8 (\beta-\gamma) w_2 a - 2 \alpha a & 2(\beta + \gamma)(-\sigma^2 + 3a^2) + 4(\beta-\gamma)w_2^2 - 2 \alpha w_2
\end{bmatrix} \\
J_{sym} &= \begin{bmatrix}
4 (\beta+\gamma) a^2 & 8\beta w_2 a \\
8 \beta w_2 a & 2(\beta + \gamma)(-\sigma^2 + 3a^2) + 4(\beta-\gamma)w_2^2 - 2 \alpha w_2
\end{bmatrix}
\end{align}
The trace of the symmetrized Jacobian must be non-negative to ensure monotonicity because a negative trace implies the existence of a negative eigenvalue:
\begin{align}
Tr &= 2(\beta + \gamma)(-\sigma^2 + 5a^2) + 4(\beta-\gamma)w_2^2 - 2 \alpha w_2 \le 0 \,\, \forall a < \frac{\sigma}{\sqrt{5}}, w_2=0.
\end{align}
Assume $\beta+\gamma>0$. If $a<\sigma/\sqrt{5}$ and $w_2=0$, then the trace is less than zero.

Assume $\beta+\gamma<0$. If $a>\sigma/\sqrt{5}$ and $w_2=0$, then the trace is less than zero.

Assume $\gamma=-\beta$. Then
\begin{align}
    Tr &= 8\beta w_2^2 - 2 \alpha w_2 = 2 w_2 (4 \beta w_2 - \alpha).
\end{align}

If $w_2 < 0$, then $\beta \le \frac{\alpha}{4w_2}$. If $w_2 > 0$, then $\beta \ge \frac{\alpha}{4w_2}$. Therefore, $\beta = \frac{\alpha}{4w_2}$, however, $\beta$ and $\alpha$ are constants while $w_2$ is a variable. Therefore, $\alpha$ and $\beta$ must equal zero to satisfy this for all $w_2$ proving that no monotone linear combination exists.
\end{proof}

\begin{proposition}
\label{lincombnothurwitz}
$F_{lin}$ is not Hurwitz for the ($w_2,a)$-subsystem.
\end{proposition}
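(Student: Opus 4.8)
The plan is to use the elementary fact that a real $2\times 2$ matrix has its two eigenvalues either both real or forming a complex conjugate pair, so the sum of their real parts equals the trace; hence exhibiting a single feasible point $(w_2,a)$ with $a>0$ at which $\mathrm{Tr}(J^{w_2,a}_{lin})\le 0$, or at which $J^{w_2,a}_{lin}$ has a zero eigenvalue, already refutes the Hurwitz property (which requires every eigenvalue to have strictly positive real part, by this paper's convention). I would start from the trace already computed inside the proof of Proposition~\ref{w2anotmonotone},
\[
\mathrm{Tr}(J^{w_2,a}_{lin}) \;=\; 2(\beta+\gamma)(5a^2-\sigma^2) + 4(\beta-\gamma)w_2^2 - 2\alpha w_2,
\]
and split on the sign of $\beta+\gamma$.

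First, if $\beta+\gamma\neq 0$ I would take $w_2=0$, reducing the trace to $2(\beta+\gamma)(5a^2-\sigma^2)$; picking $a\in(0,\sigma/\sqrt5)$ when $\beta+\gamma>0$ and $a>\sigma/\sqrt5$ when $\beta+\gamma<0$ makes it strictly negative, so some eigenvalue of $J^{w_2,a}_{lin}$ has real part at most $\mathrm{Tr}/2<0$ there. Next, if $\beta+\gamma=0$ but $\alpha\neq 0$, the trace becomes $8\beta w_2^2-2\alpha w_2$ (independent of $a$), a function of $w_2$ that vanishes at $w_2=0$ with nonzero slope $-2\alpha$, hence is strictly negative for $w_2$ just to one side of $0$ (the positive side if $\alpha>0$, the negative side if $\alpha<0$), and the same eigenvalue argument applies. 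Both of these cases essentially re-use the sign computations already present in Proposition~\ref{w2anotmonotone}, so I would just cite it for the trace formula and the needed evaluations.

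The only genuinely new case, and the one I expect to be the sole obstacle, is $\beta+\gamma=0$ together with $\alpha=0$: there the trace is $8\beta w_2^2\ge 0$ everywhere, so the trace test is inconclusive. For this I would instead exploit a structural degeneracy: the first coordinate of $F^{w_2,a}_{lin}$ is $\alpha(a^2-\sigma^2)+4(\beta+\gamma)w_2a^2$, which is identically zero under these constraints, so the first row of $J^{w_2,a}_{lin}$ vanishes at every feasible point and $0$ is an eigenvalue of the Jacobian everywhere (this subsumes the trivial subcase $\alpha=\beta=\gamma=0$, where $F^{w_2,a}_{lin}\equiv 0$). Since a zero real part is not strictly positive, $F^{w_2,a}_{lin}$ fails to be Hurwitz here too. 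Combining the cases, for every choice of $(\alpha,\beta,\gamma)$ there is a feasible $(w_2,a)$ at which $J^{w_2,a}_{lin}$ is not Hurwitz, which proves the proposition.
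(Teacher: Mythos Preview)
Your proof is correct and follows the same core strategy as the paper: compute the trace of $J^{w_2,a}_{lin}$ and exhibit feasible points where it fails strict positivity. The handling of the case $\beta+\gamma\neq 0$ via $w_2=0$ is identical to the paper's.

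The only difference is in the case $\beta+\gamma=0$. The paper stays at $w_2=0$ throughout: there the Jacobian reduces to the skew-symmetric matrix $\begin{bmatrix}0 & 2\alpha a\\ -2\alpha a & 0\end{bmatrix}$ with purely imaginary eigenvalues $\pm 2i\alpha a$, so the real parts are exactly zero regardless of $\alpha$, and Hurwitz fails in one stroke. You instead split into $\alpha\neq 0$ (where you move $w_2$ slightly off zero to force a strictly negative trace) and $\alpha=0$ (where you invoke the degeneracy that the first row of the Jacobian vanishes identically). Your route is a bit longer but has the mild advantage that in the $\alpha\neq 0$ subcase you actually produce an eigenvalue with strictly negative real part rather than merely zero. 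The paper's version is more uniform and avoids the extra case split.
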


\begin{proof}
Consider $J^{w_2,a}_{lin}$ at $w_2=0$.
\begin{align}
J^{w_2,a}_{lin} &= \begin{bmatrix}
4 (\beta+\gamma) a^2 & 2\alpha a \\
- 2 \alpha a & 2(\beta + \gamma)(-\sigma^2 + 3a^2)
\end{bmatrix} \\
Tr &= 2(\beta + \gamma)(5a^2 - \sigma^2) \\
Det &= 8(\beta + \gamma)^2 (-\sigma^2 + 3a^2) a^2 + 4\alpha^2 a^2
\end{align}

If $\beta + \gamma < 0$, then $a>\sigma/\sqrt{5}$ implies the existence of an eigenvalue with negative real part. If $\beta + \gamma > 0$, then $a<\sigma/\sqrt{5}$ implies the existence of an eigenvalue with negative real part. If $\beta + \gamma = 0$, then the real part is zero.    
\end{proof}

\begin{proposition}
\label{Flinw2amonotone}
There exists an $F_{lin'}$ family after scaling by $\sfrac{1}{4a^2}$ that exhibits strict-monotonicity.
\end{proposition}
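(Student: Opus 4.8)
The plan is to exhibit an explicit sub-family of coefficients $(\alpha,\beta,\gamma)$ for which the map rescaled by $\frac{1}{4a^2}$ has a positive definite symmetrized Jacobian at every point of the feasible set $\mathcal{X}=\{(w_2,a):a>0\}$, and then invoke the standard fact that a $C^1$ map on an open convex set whose symmetric Jacobian is everywhere positive definite is strictly monotone.

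First I would rescale the $(w_2,a)$-subsystem map and its Jacobian. Writing $p=\beta+\gamma$ and $q=\beta-\gamma$, dividing $F^{w_2,a}_{lin}$ by $4a^2$ gives
\[
F^{w_2,a}_{lin'}=\begin{bmatrix}\dfrac{\alpha(a^2-\sigma^2)}{4a^2}+p\,w_2\\[6pt]\dfrac{p(a^2-\sigma^2)}{2a}+\dfrac{q\,w_2^2}{a}-\dfrac{\alpha w_2}{2a}\end{bmatrix}.
\]
Differentiating and then setting $\alpha=0$ — which removes the entries that blow up as $a\to 0$ and leaves $J_{11}=p$ constant — the symmetrized Jacobian is
\[
J^{sym}=\begin{bmatrix}p & \dfrac{q w_2}{a}\\[6pt] \dfrac{q w_2}{a} & \dfrac{p}{2}\Big(1+\dfrac{\sigma^2}{a^2}\Big)-\dfrac{q w_2^2}{a^2}\end{bmatrix}.
\]

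The core step is the $2\times 2$ positive-definiteness test. The leading entry is $p$, and a short calculation gives $\det J^{sym}=\frac{p^2}{2}\big(1+\frac{\sigma^2}{a^2}\big)-\frac{q(p+q)}{a^2}\,w_2^2$. Because $w_2$ ranges over all of $\mathbb{R}$, this determinant is bounded below by $\frac{p^2}{2}(1+\sigma^2/a^2)>0$ exactly when the coefficient $q(p+q)$ is non-positive; together with $p>0$ this forces $J^{sym}\succ 0$ throughout $\mathcal{X}$. In terms of the original coefficients the conditions read $\alpha=0$, $\gamma>0$, and $0\le\beta\le\gamma$ — a genuine two-parameter family that contains $F^{w_2,a}_{cc'}$ (taking $\beta=\gamma=\frac{1}{2}$, i.e.\ $q=0$) and $F^{w_2,a}_{eg'}$ (taking $\beta=0,\gamma=1$, i.e.\ $q=-p$) as its two extreme cases. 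To finish, I would apply the segment identity $\langle F(x)-F(x'),x-x'\rangle=\int_0^1 (x-x')^\top J^{sym}\big(x'+t(x-x')\big)(x-x')\,dt$, valid since $\mathcal{X}$ is open and convex; positivity of the integrand for $x\ne x'$ yields strict monotonicity of every member of the family. (Remark, consistent with Table~\ref{tab:monsummary}: along $\beta=\gamma$ the smallest eigenvalue of $J^{sym}$ stays $\ge p/2$, giving even strong monotonicity, whereas for $\beta=0$ it tends to $0$ as $|w_2|\to\infty$, so the family as a whole is only strictly — not strongly — monotone.)

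I do not expect a single deep step; the main risk is bookkeeping. Getting the rescaled Jacobian right, and in particular the coefficient $q(p+q)$ of the $w_2^2$ term in $\det J^{sym}$, is what the argument turns on, since that sign is exactly the obstruction to positivity over the unbounded $w_2$-axis. The one conceptual point to treat carefully is that pointwise positive-definiteness of the symmetric part on the non-compact domain $\{a>0\}$ really does upgrade to strict monotonicity; it does, because convexity of the domain is all the segment integral needs and no compactness is required.
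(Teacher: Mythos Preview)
Your proposal is correct and follows essentially the same route as the paper: rescale by $1/(4a^2)$, set $\alpha=0$, compute the symmetrized Jacobian, and read off the positivity conditions on the $2\times 2$ matrix to obtain the family $\gamma\ge\beta\ge 0$, $\gamma>0$. Your determinant $\tfrac{p^2}{2}(1+\sigma^2/a^2)-\tfrac{q(p+q)}{a^2}w_2^2$ agrees with the paper's $\bigl[(\beta+\gamma)^2(a^2+\sigma^2)+4(\gamma-\beta)\beta w_2^2\bigr]/(2a^2)$ once you expand $q(p+q)=2\beta(\beta-\gamma)$, and your endpoint identifications with $F_{cc'}$ and $F_{eg'}$ are exactly the paper's Corollary~\ref{Fccegw2amonotone}.

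Two minor differences worth noting. First, the paper spends several paragraphs arguing that $\alpha=0$ is \emph{necessary} for monotonicity of the rescaled map; you simply set $\alpha=0$ from the start, which is perfectly adequate since the proposition only asks for existence of a strictly monotone family. Second, you use the Sylvester criterion (leading minor and determinant) where the paper uses trace and determinant; for symmetric $2\times2$ matrices these are equivalent. Your explicit invocation of the segment integral to pass from pointwise positive-definiteness to strict monotonicity on the open convex set $\{a>0\}$ is a nice touch that the paper leaves implicit.
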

\begin{proof}
If we consider the same linear combinations above, but divide $F$ by $4a^2$, we can obtain a family of monotone fields (see Mathematica notebook).

The trace of the corresponding symmetrized Jacobian is

\begin{align}
    Tr &= \frac{(\beta + \gamma) (3 a^2 + \sigma^2) + \alpha w_2 + 2 (\gamma - \beta) w_2^2}{2 a^2}.
\end{align}

For constant $\beta$ and $\gamma$ and nonzero $\alpha$, there exists a value for $w_2$ that will force the trace to be negative, therefore $\alpha$ must be zero. Note that $\gamma$ must be greater than or equal to $\beta$ to ensure that the trace cannot be made negative in the limit as $w_2^2$ grows to infinity.

\textbf{Case 1}: Consider the case where $\beta=\gamma$. Then for any fixed $\beta$, $\gamma$, and nonzero $\alpha$,

\begin{align}
    w_2 &= -(3a^2+\sigma^2)\frac{\beta + \gamma}{\alpha} - \alpha
\end{align}

will cause the trace to be negative.

\textbf{Case 2}: Otherwise, consider solving the quadratic form for $w_2$ when $\beta + \gamma > 0$:

\begin{align}
    w_2 &= \frac{-\alpha \pm \sqrt{\alpha^2 - 8 (3a^2+\sigma^2) (\gamma - \beta) (\beta + \gamma)}}{4(\gamma - \beta)}.
\end{align}

For the trace to be non-negative, we need the leading coefficient of the quadratic to be positive, i.e., $\gamma - \beta > 0$. We also need there to be at most 1 real root, meaning the square root must be non-positive. If $\beta+\gamma>0$, then setting $a$ and $\sigma$ using the following formula will force the root to be positive:

\begin{align}
    3a^2 + \sigma^2 &< \frac{\alpha^2}{8(\gamma - \beta)(\beta+\gamma)}
\end{align}

For example, set $a=\sigma$, and then set $\sigma$ and $w_2$ as follows to force the trace to be negative:

\begin{align}
    \sigma &= \frac{3}{4}\frac{\alpha}{\sqrt{32(\gamma - \beta)(\beta+\gamma)}}, \\
    w_2 &= -\frac{\alpha}{4(\gamma-\beta)}.
\end{align}

\textbf{Case 3}: If $\beta+\gamma \le 0$, then the root is necessarily positive. Therefore, $\alpha$ must be set to zero.

The field and Jacobian are now wieldy enough to state:

\begin{align}
    F^{w_2,a}_{lin'} &= (\beta + \gamma) \begin{bmatrix}
    w_2, \frac{(a - \sigma) (a + \sigma)}{2a} - 4\Big(\frac{\gamma-\beta}{\beta+\gamma}\Big) \Big(\frac{w_2^2}{a}\Big)
    \end{bmatrix},
\end{align}
and
\begin{align}
    J^{w_2,a}_{lin'} &= (\beta+\gamma) \begin{bmatrix}
    1 & 0 \\
    -2 \Big(\frac{\gamma-\beta}{\beta+\gamma}\Big) \Big(\frac{w_2}{a}\Big) &
  \frac{1}{2} + \frac{\sigma^2}{2a^2} + \Big(\frac{\gamma-\beta}{\beta+\gamma}\Big) \Big(\frac{w_2^2}{a^2}\Big)
    \end{bmatrix}.
\end{align}

The trace is now

\begin{align}
    Tr &= \frac{(\beta + \gamma) (3 a^2 + \sigma^2) + 2 (\gamma - \beta) w_2^2}{2 a^2},
\end{align}

and is non-negative as long as both $\beta+\gamma \ge 0$ and $\gamma - \beta \ge 0$.

The determinant is

\begin{align}
    Det &= \frac{(\beta + \gamma)^2 (a^2 + \sigma^2) + 
 4 (\gamma - \beta) \beta w_2^2}{2 a^2},
\end{align}

which is non-negative as long as, in addition to the previous conditions, we have $\beta \ge 0$. The trace and determinant are both strictly positive if $\beta+\gamma > 0$.

In summary, $F^{w_2,a}_{lin'}$ is strictly-monotone, i.e., $J^{w_2,a}_{lin'} \succ 0$, if $\gamma \ge \beta \ge 0$ and $\gamma>0$.
%
\end{proof}

\begin{corollary}
\label{Fccegw2amonotone}
The $F_{lin'}$ family includes $F_{eg'}$ $(\gamma=\gamma,\beta=0)$ and $F_{cc'}$ $(\gamma=\beta)$. By Proposition~\ref{Flinw2amonotone}, $F_{eg'}$ and $F_{cc'}$ are at least strictly-monotone.
\end{corollary}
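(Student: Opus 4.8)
The plan is to recognize $F^{w_2,a}_{eg'}$ and $F^{w_2,a}_{cc'}$ as particular instances of the rescaled family $F^{w_2,a}_{lin'}$ and then invoke Proposition~\ref{Flinw2amonotone} verbatim. Recall that $F_{lin}=(\alpha I+\beta J^\top-\gamma J)F$ and that $F_{lin'}$ is what one obtains after multiplying $F^{w_2,a}_{lin}$ (with $\alpha=0$) by the positive scalar field $\tfrac{1}{4a^2}$; Proposition~\ref{Flinw2amonotone} already establishes that $J^{w_2,a}_{lin'}\succ 0$, i.e.\ $F^{w_2,a}_{lin'}$ is strictly-monotone, precisely when $\gamma\ge\beta\ge 0$ and $\gamma>0$. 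So the entire argument reduces to identifying the triple $(\alpha,\beta,\gamma)$ attached to each of the two maps and checking these three inequalities.

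For $F^{w_2,a}_{eg'}$: in the regime of Section~\ref{subsec:var}, extragradient is $F_{eg}=-\gamma JF$, i.e.\ $(\alpha,\beta,\gamma)=(0,0,\gamma)$ with $\gamma>0$; substituting these into the closed form of $F^{w_2,a}_{lin}$ gives $\gamma[4w_2a^2,\ 2a(a^2-\sigma^2)-4w_2^2a]^\top$, and the $\tfrac{1}{4a^2}$ rescaling reproduces $F^{w_2,a}_{eg'}=[\,w_2,\ \tfrac{a^2-\sigma^2-2w_2^2}{2a}\,]^\top$ of~(\ref{eqn:Feg'w2a}) up to the positive constant $\gamma$. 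Since $\gamma\ge\beta=0\ge 0$ and $\gamma>0$, Proposition~\ref{Flinw2amonotone} applies and yields strict-monotonicity. For $F^{w_2,a}_{cc'}$: \emph{Crossing-the-Curl} is $F_{cc}=(J^\top-J)F$, i.e.\ $(\alpha,\beta,\gamma)=(0,\beta,\beta)$ with $\beta>0$ — the case $\gamma=\beta$ named in the statement; here the mixed $w_2^2$ term cancels, substitution gives $2\beta[4w_2a^2,\ 2a(a^2-\sigma^2)]^\top$, and the $\tfrac{1}{4a^2}$ rescaling reproduces $F^{w_2,a}_{cc'}=[\,w_2,\ \tfrac{a^2-\sigma^2}{2a}\,]^\top$ of~(\ref{eqn:Fcc'w2a}) up to the positive constant $2\beta$, whose symmetrized Jacobian is the diagonal matrix $\beta\,\mathrm{diag}\!\big(2,\,1+\sigma^2/a^2\big)\succ 0$ on the feasible set $a>0$. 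The inequalities $\gamma=\beta$, $\beta=\gamma>0$ hold, so Proposition~\ref{Flinw2amonotone} again gives strict-monotonicity (in fact, since the diagonal entries are bounded below by $\beta$, this case is moreover $\beta$-strongly-monotone, consistent with Table~\ref{tab:monsummary}).

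There is no genuinely hard step here: once the coefficient dictionary is written down, everything is a one-line inequality check. The only point requiring care is that we must quote strict-monotonicity of the \emph{already-rescaled} field $F_{lin'}$ directly from Proposition~\ref{Flinw2amonotone}, because multiplying a map by the non-constant positive function $\tfrac{1}{4a^2}$ need not preserve monotonicity; hence the corollary genuinely rests on that proposition and cannot be shortcut through any property of the unscaled $F_{eg}$ or $F_{cc}$, which (by the preceding propositions) are at best pseudomonotone for this subsystem. Finally, since $\tfrac{1}{4a^2}>0$ throughout $a>0$, the rescaling leaves the fixed point unchanged, so $F^{w_2,a}_{eg'}$ and $F^{w_2,a}_{cc'}$ still vanish exactly at the desired equilibrium $(w_2^*,a^*)=(0,\sigma)$.
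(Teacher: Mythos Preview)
Your proposal is correct and follows exactly the route the paper intends: identify the parameter triples $(\alpha,\beta,\gamma)$ for $F_{eg'}$ and $F_{cc'}$ within the rescaled family $F_{lin'}$, verify the inequalities $\gamma\ge\beta\ge 0$ and $\gamma>0$, and invoke Proposition~\ref{Flinw2amonotone}. The paper itself gives no separate proof for this corollary beyond that observation, so your write-up is simply a fleshed-out version of the same argument, with the added (and accurate) remarks that the $\tfrac{1}{4a^2}$ rescaling must be handled via Proposition~\ref{Flinw2amonotone} directly and that the $F_{cc'}$ case is in fact strongly-monotone.
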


\begin{proposition}
\label{Fw2astrongccstricteg}
$F^{w_2,a}_{cc'}$ is $\sfrac{1}{2}$-strongly monotone and $F^{w_2,a}_{eg'}$ is only strictly-monotone.
\end{proposition}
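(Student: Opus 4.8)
The plan is to read both facts directly off the symmetrized Jacobians, using the criterion invoked repeatedly elsewhere in this appendix: a differentiable $F$ on a convex domain is $s$-strongly monotone iff $\tfrac12(J+J^\top)\succeq sI$ pointwise, and strictly monotone iff $\tfrac12(J+J^\top)\succ 0$ pointwise. Here the feasible set is $\{(w_2,a):a>0\}$, which is convex, so the criterion applies, and integrating the quadratic form $\langle F(x)-F(x'),x-x'\rangle=\int_0^1 (x-x')^\top J\big(x'+t(x-x')\big)(x-x')\,dt$ along the segment (which stays in the domain) converts pointwise bounds on $J_{\mathrm{sym}}$ into the corresponding monotonicity statements. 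Both maps are affine-plus-rational, so their Jacobians are immediate.

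\noindent\textbf{The map $F^{w_2,a}_{cc'}$.} Differentiating $F^{w_2,a}_{cc'}=[\,w_2,\ \tfrac a2-\tfrac{\sigma^2}{2a}\,]^\top$ gives
\begin{align}
J^{w_2,a}_{cc'} = \begin{bmatrix} 1 & 0 \\ 0 & \tfrac12+\tfrac{\sigma^2}{2a^2} \end{bmatrix}, \nonumber
\end{align}
which is symmetric and diagonal. On $\{a>0\}$ both eigenvalues, $1$ and $\tfrac12+\tfrac{\sigma^2}{2a^2}$, exceed $\tfrac12$, so $J^{w_2,a}_{cc'}\succeq\tfrac12 I$ everywhere and hence $F^{w_2,a}_{cc'}$ is $\tfrac12$-strongly monotone; it is a gradient field (matching the potential identified in the body). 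The constant is sharp: taking $w_2=w_2'=0$ and $a,a'\to\infty$ drives $\tfrac12+\tfrac{\sigma^2}{2a^2}\to\tfrac12$, so no larger $s$ works.

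\noindent\textbf{The map $F^{w_2,a}_{eg'}$.} Differentiating $F^{w_2,a}_{eg'}=[\,w_2,\ \tfrac{a^2-\sigma^2-2w_2^2}{2a}\,]^\top$ gives
\begin{align}
J^{w_2,a}_{eg'} &= \begin{bmatrix} 1 & 0 \\ -\tfrac{2w_2}{a} & \tfrac12+\tfrac{\sigma^2}{2a^2}+\tfrac{w_2^2}{a^2} \end{bmatrix}, \nonumber \\
J^{w_2,a}_{eg',\mathrm{sym}} &= \begin{bmatrix} 1 & -\tfrac{w_2}{a} \\ -\tfrac{w_2}{a} & \tfrac12+\tfrac{\sigma^2}{2a^2}+\tfrac{w_2^2}{a^2} \end{bmatrix}. \nonumber
\end{align}
Its trace $\tfrac32+\tfrac{\sigma^2}{2a^2}+\tfrac{w_2^2}{a^2}$ is positive and its determinant simplifies to $\tfrac12+\tfrac{\sigma^2}{2a^2}>0$ on $\{a>0\}$, so $J^{w_2,a}_{eg',\mathrm{sym}}\succ 0$ everywhere and $F^{w_2,a}_{eg'}$ is strictly monotone. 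To see it is \emph{only} strictly monotone, test the direction $u=[\,w_2/a,\ 1\,]^\top$: a short computation gives $u^\top J^{w_2,a}_{eg',\mathrm{sym}}u=\tfrac12+\tfrac{\sigma^2}{2a^2}$ while $\|u\|^2=1+w_2^2/a^2$, so the Rayleigh quotient equals $\big(\tfrac12+\tfrac{\sigma^2}{2a^2}\big)/\big(1+w_2^2/a^2\big)\to 0$ as $|w_2|\to\infty$ with $a$ fixed. Hence $\inf_x\lambda_{\min}\!\big(J^{w_2,a}_{eg',\mathrm{sym}}(x)\big)=0$, and no $s>0$ makes $F^{w_2,a}_{eg'}$ $s$-strongly monotone.

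Everything here is a short computation; the only step requiring a little thought is the negative half of the $F^{w_2,a}_{eg'}$ claim—recognizing that $J^{w_2,a}_{eg',\mathrm{sym}}$ becomes asymptotically rank-one as $|w_2|\to\infty$ and that the surviving near-null direction $[w_2/a,1]^\top$ keeps its quadratic form bounded while its norm blows up. That is the main, though still minor, obstacle.
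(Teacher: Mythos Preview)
Your proof is correct. For $F^{w_2,a}_{cc'}$ you do exactly what the paper does: read off the diagonal eigenvalues $1$ and $\tfrac12+\tfrac{\sigma^2}{2a^2}$ and note both are at least $\tfrac12$.

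For $F^{w_2,a}_{eg'}$ the positive half (strict monotonicity via positive trace and determinant of $J_{\mathrm{sym}}$) is the same as the paper's. The negative half is where you diverge: the paper writes $\lambda_{\min}$ explicitly via the trace--determinant formula, specializes to $a=\sigma=1$, and then applies L'H\^opital to the resulting $0/0$ form to show $\lambda_{\min}\to 0$ as $w_2\to\infty$. You instead exhibit the concrete test direction $u=[w_2/a,\,1]^\top$ and compute its Rayleigh quotient directly as $\bigl(\tfrac12+\tfrac{\sigma^2}{2a^2}\bigr)/\bigl(1+w_2^2/a^2\bigr)$. Your route is shorter and more transparent: it avoids the eigenvalue formula and the limit manipulation entirely, and it makes explicit \emph{which} direction is becoming degenerate. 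The paper's route has the minor advantage of confirming that this is genuinely the minimum eigenvalue (not just an upper bound on it), but since any Rayleigh quotient upper-bounds $\lambda_{\min}$, your argument already suffices to rule out strong monotonicity.
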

\begin{proof}
We will look at both maps individually.

\textbf{Case $F^{w_2,a}_{cc'}$}:
The eigenvalues of $J^{w_2,a}_{cc'}$ are $\lambda_{1} = 1$ and $\lambda_2 = \frac{1}{2}\Big(1+\frac{\sigma^2}{a^2}\Big)$. Therefore, $J^{w_2,a}_{cc'} \succeq \frac{1}{2}$ and $F^{w_2,a}_{cc'}$ is $\sfrac{1}{2}$-strongly monotone.

\textbf{Case $F^{w_2,a}_{eg'}$}:
The eigenvalues of a $2 \times 2$ matrix can be written in terms of the trace and determinant as
\begin{align}
    \lambda_{1,2} &= \frac{Tr \pm \sqrt{Tr^2 - 4 Det}}{2} \\
    &= \frac{Tr}{2} \Big( 1 \pm \sqrt{1-\frac{4Det}{Tr^2}} \Big).
\end{align}

Therefore, if the term $\frac{4Det}{Tr^2}$ can be made arbitrarily small, then one of the eigenvalues can made arbitrarily close to zero. On the other hand, if this quantity has a finite lower bound, then the eigenvalues are lower bounded as a constant multiple of the trace.

The trace and determinant of $J^{w_2,a}_{eg'}$ are

\begin{align}
    Tr &= \frac{1}{2} \Big( 3 + \frac{\sigma^2}{a^2} \Big) + \frac{w_2^2}{a^2} \\
    Det &= \frac{1}{2} \Big( 1 + \frac{\sigma^2}{a^2} \Big).
\end{align}

and the quantity, $Q$, described is
\begin{align}
    Q &= \frac{8 a^2 (a^2 + \sigma^2)}{(3 a^2 + \sigma^2 + 2 w_2^2)^2}.
\end{align}

This term can be made arbitrarily small as $w_2$ goes to infinity. To be more rigorous, let $a=\sigma=1$ so that $Tr=2+w_2^2$ and $Det=1$. Then 

\begin{align}
    \lambda_{1,2} &= \frac{1}{2}(w_2^2+2)\Big(1-\sqrt{1-\frac{4}{w_2^2+2}}\Big) \\
    &= \frac{1}{2}\frac{\overbrace{\Big(1-\sqrt{1-\frac{4}{w_2^2+2}}\Big)}^{top}}{\underbrace{(w_2^2+2)^{-1}}_{bot}}.
\end{align}

An application of L'Hopital's rule shows that
\begin{align}
    \lim_{w_2 \rightarrow \infty} \frac{\sfrac{\partial top}{\partial w_2}}{\sfrac{\partial bot}{\partial w_2}} &= \frac{4}{(w_2^2 + 2) \sqrt{1 - \frac{4}{(w_2^2 + 2)^2}}} = 0.
\end{align}

The minimum eigenvalue only approaches zero in the limit, so $F^{w_2,a}_{eg'}$ is strictly-monotone.
\end{proof}

\begin{claim}
$F^{w_2,a}_{cc'}$ is the gradient of the following convex function: $f^{w_2,a}_{cc'} = w_2^2 + 1/2 \Big((a^2-\sigma^2) - \sigma^2 \log(\frac{a^2}{\sigma^2}\Big)$.
\end{claim}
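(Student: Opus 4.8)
The plan is to verify the claim by exhibiting $f^{w_2,a}_{cc'}$ directly as a potential, rather than integrating from scratch, though I would note in passing why a potential must exist at all. Recall from Equation~(\ref{eqn:Fcc'w2a}) that $F^{w_2,a}_{cc'} = [\,w_2,\ \frac{a^2-\sigma^2}{2a}\,]^\top$, and that its Jacobian $J^{w_2,a}_{cc'}$ was shown in Proposition~\ref{Fw2astrongccstricteg} to be symmetric and positive definite on the open, simply connected domain $\{(w_2,a):a>0\}$. By the Poincar\'e lemma, symmetry of the Jacobian there forces $F^{w_2,a}_{cc'}$ to be conservative, so a potential exists; the claim only asserts a particular closed form for it, so it suffices to differentiate a candidate.

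First I would compute the two partial derivatives of $f^{w_2,a}_{cc'} = w_2^2 + \tfrac12\big((a^2-\sigma^2) - \sigma^2\log(a^2/\sigma^2)\big)$. Writing $\log(a^2/\sigma^2) = 2\log a - 2\log \sigma$ for $a>0$, one obtains $\partial_{w_2} f^{w_2,a}_{cc'} = 2w_2$ and $\partial_a f^{w_2,a}_{cc'} = a - \sigma^2/a = \frac{a^2-\sigma^2}{a}$. Hence $\nabla f^{w_2,a}_{cc'} = 2\,[\,w_2,\ \frac{a^2-\sigma^2}{2a}\,]^\top = 2\,F^{w_2,a}_{cc'}$, so $F^{w_2,a}_{cc'}$ is a (positive) scalar multiple of $\nabla f^{w_2,a}_{cc'}$; equivalently one may absorb the factor $\sfrac12$ into $f^{w_2,a}_{cc'}$ so that it is exactly the gradient. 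This step is pure differentiation and presents no difficulty.

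Second I would establish convexity of $f^{w_2,a}_{cc'}$. The cleanest route is separability: $f^{w_2,a}_{cc'}$ is the sum of $w_2\mapsto w_2^2$, which is convex, and the single-variable function $a\mapsto \tfrac12(a^2-\sigma^2) - \tfrac{\sigma^2}{2}\log(a^2/\sigma^2)$, whose second derivative is $1 + \sigma^2/a^2 > 0$ on $a>0$, hence convex; a sum of convex functions is convex. Alternatively, the Hessian $\nabla^2 f^{w_2,a}_{cc'}$ equals $2J^{w_2,a}_{cc'}$, which is positive definite by the $\sfrac12$-strong-monotonicity of $F^{w_2,a}_{cc'}$ already proved in Proposition~\ref{Fw2astrongccstricteg}, and positive semidefiniteness of the Hessian is exactly convexity.

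The only place anything could go wrong is the domain: the logarithm, the function $f^{w_2,a}_{cc'}$, and the potential-existence argument all require $a>0$, which is precisely the feasible region for the generator scale $a$ and is open and simply connected, so the argument is legitimate. I would also explicitly flag the harmless factor of two between $\nabla f^{w_2,a}_{cc'}$ and $F^{w_2,a}_{cc'}$ so the reader is not confused; it affects neither conservativity nor convexity, which are the two assertions of the claim.
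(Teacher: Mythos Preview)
Your proposal is correct and follows essentially the same reasoning as the paper: both arguments rest on the Jacobian of $F^{w_2,a}_{cc'}$ being symmetric and positive (semi)definite, so that a convex potential exists, and then identify that potential explicitly. The only stylistic difference is that the paper phrases the last step as ``integrating $F^{w_2,a}_{cc'}$'' while you verify the candidate by differentiation; your explicit flagging of the harmless factor of two between $\nabla f^{w_2,a}_{cc'}$ and $F^{w_2,a}_{cc'}$ is a genuine clarification the paper glosses over.
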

\begin{proof}
The Jacobian of $F^{w_2,a}_{cc'}$ is symmetric and PSD, therefore it is the Hessian of some convex function. We can integrate $F^{w_2,a}_{cc'}$ to arrive at a convex function (with arbitrary constant). Integrating $F^{w_2,a}_{cc'}$ results in the following:
\begin{align}
    f^{w_2,a}_{cc'} &= w2^2 + 1/2 \Big((a^2-\sigma^2) - \sigma^2 \log\big(\frac{a^2}{\sigma^2}\big)\Big)
\end{align}
Note that $f^{w_2,a}_{cc'}$ must be convex along the subspace with $w_2=0$ as well, which implies that
\begin{align}
    g(a||\sigma) &= 1/2 \Big((a^2-\sigma^2) - \sigma^2 \log\big(\frac{a^2}{\sigma^2}\big)\Big)
\end{align}
is convex as well. This function is of individual interest because it may serve as a preferred alternative to KL-divergence.
\end{proof}

\subsection{Progressive Learning of LQ-GAN}

Here, we consider the stochastic setting where the GAN is trained using samples from $p(y)$ and $p(z)$. There are two ways to learn both the mean and variance of a distribution using $F^{w_2,a}_{cc}$. One is to first learn the mean to a high degree of accuracy, then stop learning the mean and start learning the variance. The other is to keep learning the mean with an appropriate weighting of the two systems to maintain stability. We discuss the former option first.

\begin{proposition}
\label{progshutoff}
Assume all $y \sim p(y)$ lie in $[y_{low},y_{hi}]$. After $k > \big( \frac{y_{hi}-y_{low}}{-|\mu|+\sqrt{\mu^2+d\sigma^2}} \big)^2 \log[\frac{\sqrt{2}}{\delta^{1/2}}]$ iterations, with probability, $1-\delta$, the $(w_1,b)$-subsystem can be ``shut-off'' and the ($w_2,a$)-subsystem safely ``turned-on'' resulting in a $\sfrac{1}{2}$-strongly-monotone $F^{w_2,a}_{cc'}$.
\end{proposition}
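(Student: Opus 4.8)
The plan is to split the statement into a deterministic \emph{structural} fact about the map obtained after freezing $b$, and a \emph{concentration} fact about where the running mean estimate produced by $F^{w_1,b}_{cc}$ lies after $k$ steps; the two then combine immediately.

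\emph{Structural part.} Suppose the $(w_1,b)$-subsystem has been shut off with $b$ frozen at some value $\hat b$. I would first observe that $a^2+b^2-\sigma^2-\mu^2 = a^2-\tilde\sigma^2$ with $\tilde\sigma^2 := \sigma^2+\mu^2-\hat b^2$, so that the $(w_2,a)$-components of $F$ are exactly the original $(w_2,a)$-map of Section~\ref{subsec:var} with $\sigma^2$ replaced by $\tilde\sigma^2$, namely $[\,a^2-\tilde\sigma^2,\ -2w_2 a\,]^\top$. Applying \emph{Crossing-the-Curl} and then rescaling by $\tfrac{1}{4a^2}$ (legitimate since $a>0$ is enforced, and a positive scaling preserves the fixed-point set, here $\{w_2=0,\ a^2=\tilde\sigma^2\}$) repeats the computation behind~(\ref{eqn:Fcc'w2a}) and Proposition~\ref{Fw2astrongccstricteg} verbatim, yielding $F^{w_2,a}_{cc'} = [\,w_2,\ \tfrac{a^2-\tilde\sigma^2}{2a}\,]^\top$ with the diagonal Jacobian $J^{w_2,a}_{cc'} = \operatorname{diag}\!\big(1,\ \tfrac{1}{2}+\tfrac{\tilde\sigma^2}{2a^2}\big)$. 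Hence $J^{w_2,a}_{cc'}\succeq\tfrac{1}{2} I$ — i.e.\ the map is $\sfrac{1}{2}$-strongly monotone — exactly when $\tilde\sigma^2\ge 0$, which is the hypothesis $\hat b^2\le\mu^2+\sigma^2$. So it remains to show the frozen value $b_k$ obeys $b_k^2\le\mu^2+\sigma^2$ with probability at least $1-\delta$.

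\emph{Concentration part.} By Proposition~\ref{w1bmonotone}, running $F^{w_1,b}_{cc}$ with step sizes $\rho_k=\tfrac{1}{k+1}$ and a fresh sample $y_k\sim p(y)$ in place of $\mu$ makes $b_k$ the empirical mean $\tfrac{1}{k}\sum_i y_i$ of $k$ i.i.d.\ draws supported in $[y_{low},y_{hi}]$ (and $w_{1,k}\to 0$, harmlessly). I would set $t := -|\mu|+\sqrt{\mu^2+\sigma^2}$, note $t>0$ since $\sigma^2>0$, and use the triangle inequality: $|b_k-\mu|\le t$ implies $|b_k|\le|\mu|+t=\sqrt{\mu^2+\sigma^2}$, hence $b_k^2\le\mu^2+\sigma^2$. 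Hoeffding's inequality then gives $\mathbb{P}[\,|b_k-\mu|>t\,]\le 2\exp\!\big(-\tfrac{2kt^2}{(y_{hi}-y_{low})^2}\big)$; forcing the right side below $\delta$ and solving for $k$ gives $k>\big(\tfrac{y_{hi}-y_{low}}{t}\big)^2\log\!\big(\tfrac{\sqrt{2}}{\delta^{1/2}}\big)$, which is exactly the stated threshold in the one-dimensional case. The $d$-dimensional version follows the same template with $t$ replaced by $-\|\mu\|+\sqrt{\|\mu\|^2+\operatorname{tr}\Sigma}$ and a coordinatewise Hoeffding bound together with a union bound over coordinates, which is where the $d\sigma^2$ in the statement originates.

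\emph{Assembly and the hard part.} On the event $\{|b_k-\mu|\le t\}$, which has probability $\ge 1-\delta$ once $k$ exceeds the threshold, the structural computation makes $F^{w_2,a}_{cc'}$ $\sfrac{1}{2}$-strongly monotone, proving the claim. I do not expect a genuine obstacle: the structural step is a one-symbol perturbation ($\sigma^2\mapsto\tilde\sigma^2$) of the already-established eigenvalue computation, and the tail bound is textbook. The only points that need care are (i) checking $t>0$ so the target region is nonempty and the bound is meaningful; (ii) recording that freezing $b\ne\mu$ shifts the variance fixed point to $a^*=\sqrt{\sigma^2+\mu^2-\hat b^2}$, a small controlled bias that does not affect strong monotonicity; and (iii) lining up the constants in the multivariate Hoeffding/union-bound step with the $d\sigma^2$ written in the statement.
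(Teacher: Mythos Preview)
Your structural and concentration decomposition is exactly the paper's approach: compute the Jacobian of $F^{w_2,a}_{cc'}$ with $b$ frozen, read off that $\sfrac{1}{2}$-strong monotonicity is equivalent to $G:=\mu^2+\sigma^2-b^2\ge 0$, identify $b_k$ with the running empirical mean, and apply Hoeffding. That part is fine and matches the paper line for line.

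There is, however, a genuine misreading that breaks the match with the stated bound. The symbol $d$ in the iteration count is \emph{not} a dimension; this proposition is one-dimensional throughout. In the paper's proof $d$ is a free slack parameter with $0<d<1$: one sets $t:=-|\mu|+\sqrt{\mu^2+d\sigma^2}$, so that $|b_k-\mu|<t$ forces $b_k^2<\mu^2+t^2+2|\mu|t=\mu^2+d\sigma^2<\mu^2+\sigma^2$ and hence $G>(1-d)\sigma^2>0$. The role of $d$ is to leave a strictly positive margin in $G$ (used later in Proposition~\ref{progongoing}), at the cost of a larger $k$; $d\to 1$ recovers your choice of $t$. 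Your proposed multivariate story with $\operatorname{tr}\Sigma$ and a union bound is therefore not where $d\sigma^2$ comes from and should be dropped; simply carry the scalar $d\in(0,1)$ through your concentration step and the stated threshold falls out directly from Hoeffding as you wrote it.
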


\begin{proof}
We begin by observing the symmetrized Jacobian of $F^{w_2,a}_{cc'}$:
\begin{align}
    J^{w_2,a}_{cc'} &= \begin{bmatrix}
        1 & 0 \\
        0 & \frac{a^2-b^2+\mu^2+\sigma^2}{2a^2}
    \end{bmatrix}
    = \begin{bmatrix}
        1 & 0 \\
        0 & \frac{G}{2a^2} + \frac{1}{2}
    \end{bmatrix},
\end{align}
where $G=\mu^2+\sigma^2-b^2$. In order for $F^{w_2,a}_{cc'}$ to be strongly monotone, we require $G \ge 0$. In other words, the square of the generator's estimate of the mean, $b_k$, learned from training the ($w_1,b$)-subsystem needs to be less than or equal to $\mu^2 + \sigma^2$.

Assume we are using $F^{w_1,b}_{cc'}$ with step size $\rho_k = \frac{1}{k+1}$ to train the ($w_1,b$)-subsystem. Note that this was shown equivalent to the standard running mean in Proposition~\ref{w1bmonotone}. Therefore, $b_k = Z = \frac{1}{K} \sum_{i=1}^k y_i$. Also, $\mathbb{E}[Z] = \mu$. Then, using Hoeffding's inequality, we find
\begin{align}
    Pr(|Z-\mathbb{E}[Z]|\ge t) &\le 2e^{-\frac{2kt^2}{(y_{hi}-y_{low})^2}} \\
    \Rightarrow Pr(|b_k-\mu|< t) &\ge 1-2e^{-\frac{2kt^2}{(y_{hi}-y_{low})^2}} = 1 - \delta \label{hoeffstop}
\end{align}

Assume $|b_k-\mu|< t$ and introduce a scalar: $0 < d < 1$. Remember, we require $b_k^2 < \mu^2 + \sigma^2$. And we know $\mu - t < b_k < \mu + t$ which implies
\begin{align}
    b_k^2 &< \mu^2 + \underbrace{t^2 + 2 |\mu|t}_{=d\sigma^2} < \mu^2 + \sigma^2 \label{baccurate} \\
    \Rightarrow 0 &= t^2 + 2 |\mu|t - d\sigma^2, t>0
\end{align}
This expression has two roots for $t$, one positive and one negative. $|b_k-\mu|$ can only be upper bounded by a positive number, so we select the positive root.
\begin{align}
    t_{roots} &= \frac{-2|\mu| \pm \sqrt{4\mu^2 + d4\sigma^2}}{2} \\
    &= -|\mu| \pm \sqrt{\mu^2 + d\sigma^2} \\
    t_{+} &= -|\mu| + \sqrt{\mu^2 + d\sigma^2}
\end{align} 
Plugging $t_+$ back into equation~(\ref{baccurate}) for $t$, we find that
\begin{align}
    G &= \mu^2 + \sigma^2 - b_k^2 > (1-d) \sigma^2.
\end{align}



Rearranging~(\ref{hoeffstop}) and plugging in $t$, we can derive the number of iterations required:
\begin{align}
    k &> \Big( \frac{y_{hi}-y_{low}}{-|\mu|+\sqrt{\mu^2+d\sigma^2}} \Big)^2 \log\big[\frac{\sqrt{2}}{\delta^{1/2}}\big]. \label{hoeff}
\end{align}

If we assume $p(y) \sim \mathcal{N}(\mu, \sigma^2)$ and use a Chernoff bound, we find
\begin{align}
    Pr(|b_k-\mu|< t) &\ge 1-2e^{-\frac{kt^2}{\sigma^2}} = 1 - \delta \label{chernstop} \\
    k &> \Big( \frac{\sigma}{-|\mu|+\sqrt{\mu^2+d\sigma^2}} \Big)^2 \log\big[\frac{2}{\delta}\big]. \label{chern}
\end{align}

The number of samples needed to maintain stability of the system grows as the true mean $\mu$ deviates from zero. This is not an artifact of the concentration inequalities (it occurs with both), but of the parameterization of the LQ-GAN\textemdash the samples are not mean centered before being passed to the quadratic discriminator, i.e., $w_2 y^2$ rather than $w_2 (y-\mu)^2$. This may explain why batch norm is so helpful (almost required) in stabilizing training.

\end{proof}

\begin{proposition}
\label{progongoing}
Assume all $y \sim p(y)$ lie in $[y_{low},y_{hi}]$. After $k > \big( \frac{y_{hi}-y_{low}}{-|\mu|+\sqrt{\mu^2+d\sigma^2}} \big)^2 \log[\frac{\sqrt{2}}{\delta^{1/2}}]$ iterations, with probability, $1-\delta$, the ($w_1,b$)-subsystem can be up-weighted and the ($w_2,a$)-subsystem ``turned-on'', resulting in a strictly-monotone LQ-GAN.
\end{proposition}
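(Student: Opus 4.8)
The plan is to split the statement into two essentially independent pieces: (i) the Hoeffding bound already proved for Proposition~\ref{progshutoff}, which with probability $1-\delta$ confines the mean estimate $b$ to a region where $G:=\mu^2+\sigma^2-b^2$ is bounded below by a positive constant, and (ii) a direct positive-definiteness check on the symmetrized Jacobian of the combined ``up-weighted plus turned-on'' vector field, in which the up-weighting coefficient $c$ is taken large enough to dominate the single genuine cross-coupling term between the $(w_1,b)$ and $(w_2,a)$ blocks.

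For (i) I would reuse the concentration argument of Proposition~\ref{progshutoff} verbatim: training $(w_1,b)$ with $F^{w_1,b}_{cc'}$ and step sizes $\rho_k=1/(k+1)$ makes $b_k$ the running sample mean, so after the stated number of iterations Hoeffding's inequality gives $|b_k-\mu|<t_+:=-|\mu|+\sqrt{\mu^2+d\sigma^2}$ with probability at least $1-\delta$, hence $b_k^2<\mu^2+d\sigma^2$ and $G>(1-d)\sigma^2>0$. I would then add a short forward-invariance remark: once $(w_1,b)$ is driven by the up-weighted field $c\,F^{w_1,b}_{cc}=[\,c\,w_1,\;c(b-\mu)\,]^\top$ with $c>0$, the coordinate $b$ obeys the decoupled law $\dot b=-c(b-\mu)$ and therefore stays between $b_k$ and $\mu$; since $|\mu|<\sqrt{\mu^2+d\sigma^2}$, this keeps $b^2<\mu^2+d\sigma^2$, so $G>(1-d)\sigma^2$ for the remainder of training. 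Thus it suffices to prove strict monotonicity of the combined field on the feasible region $\mathcal R=\{\,(w_1,b,w_2,a):b^2<\mu^2+d\sigma^2,\;a>0\,\}$.

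For (ii) I would write the combined map on $(w_1,b,w_2,a)$ together with its symmetrized Jacobian,
\[
\tilde F=\Big[\,c\,w_1,\;c(b-\mu),\;w_2,\;\tfrac{a^2+b^2-\sigma^2-\mu^2}{2a}\,\Big]^{\top},
\qquad
\tfrac{1}{2}\big(\tilde J+\tilde J^\top\big)=\begin{bmatrix} c&0&0&0\\ 0&c&0&\tfrac{b}{2a}\\ 0&0&1&0\\ 0&\tfrac{b}{2a}&0&\tfrac{a^2+G}{2a^2}\end{bmatrix}.
\]
Here the $(w_1,b)$ block is the Crossing-the-Curl field of the $(w_1,b)$-subsystem scaled by $c$, and the $(w_2,a)$ block is $F^{w_2,a}_{cc'}$ evaluated at the current (not yet converged) value of $b$, whose symmetrized Jacobian was already displayed in the proof of Proposition~\ref{progshutoff}. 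Since the $w_1$- and $w_2$-rows decouple with positive diagonals $c$ and $1$, positive-definiteness reduces to the $2\times2$ block coupling $b$ and $a$: its trace is positive, and its determinant equals $\big(2c(a^2+G)-b^2\big)/(4a^2)$, which on $\mathcal R$ is strictly positive for every $a>0$ once $c>\tfrac{\mu^2+d\sigma^2}{2(1-d)\sigma^2}$ (because then $2c(a^2+G)\ge 2c(1-d)\sigma^2>\mu^2+d\sigma^2>b^2$). Hence $\tilde J+\tilde J^\top\succ0$ on $\mathcal R$, so $\tilde F$ is strictly monotone there; since the $2\times2$ block's smallest eigenvalue is a continuous, strictly positive function of $a\in(0,\infty)$ with positive limits at $0^+$ and $\infty$, one in fact obtains $s$-strong monotonicity for some $s>0$, giving an $\mathcal O(e^{-k})$ deterministic and $\mathcal O(1/k)$ stochastic rate by Table~\ref{tab:VIconv} and a Hurwitz Jacobian by Proposition~\ref{smon2hurwitz}.

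The main obstacle, and the only real difference from Proposition~\ref{progshutoff}, is the cross-coupling: with the mean subsystem still active, the $a$-gradient of $F^{w_2,a}_{cc'}$ genuinely depends on $b$, producing the off-diagonal entry $\tfrac{b}{2a}$, and one must verify that up-weighting can absorb it — which works precisely because up-weighting inflates the $(b,b)$ diagonal by $c$ while leaving the coupling of order $|b|/a$, and the $1/a^2$ growth of the $(a,a)$ entry keeps the determinant controlled as $a\to0$. A related subtlety is that for the argument to go through one must use the decoupled form $F^{w_1,b}_{cc}=[w_1,\,b-\mu]^\top$ of the mean-subsystem update (as the paper defines that subsystem, with $w_2$ fixed at $0$); the fully coupled variant $[\,w_1+2w_2b,\,b-\mu\,]^\top$ would introduce $w_2$-dependent entries in the $w_1$-row that up-weighting alone does not obviously dominate, so that modeling choice — not any hidden computation — is the crux of why the proposition holds.
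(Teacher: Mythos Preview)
Your argument is essentially the same as the paper's: both reuse the Hoeffding bound from Proposition~\ref{progshutoff} to confine $b$ to the region $b^2<\mu^2+d\sigma^2$, both write down the symmetrized Jacobian of the combined up-weighted field, both observe that positive-definiteness reduces to the $2\times2$ $(a,b)$-block, and both choose the up-weighting factor (your $c$, the paper's $e$) larger than $\tfrac{\mu^2+d\sigma^2}{2(1-d)\sigma^2}$ so that the block's determinant $\tfrac{2c(a^2+G)-b^2}{4a^2}$ stays positive. Your explicit forward-invariance step for $b$ (via $\dot b=-c(b-\mu)$) is a clean addition that the paper omits; it justifies why $G>(1-d)\sigma^2$ persists after the $(w_2,a)$-subsystem is switched on.

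Where you and the paper diverge is on \emph{strong} versus \emph{strict} monotonicity. The paper argues the system is \emph{not} strongly monotone by claiming $\lambda_{\min}\to0$ as $a\to0^+$; you argue the opposite. You are right and the paper's limit is miscomputed: the paper drops the $-4\,\mathrm{Det}$ under the radical, which is illegitimate since $\mathrm{Det}\sim H/(4a^2)$ also blows up. A correct expansion gives
\[
\lambda_{\min}\;\longrightarrow\; e-\frac{b^2}{2G}\;=\;\frac{2eG-b^2}{2G}\;>\;0\qquad(a\to0^+),
\]
and on $\mathcal R$ this is bounded below by $e-\tfrac{\mu^2+d\sigma^2}{2(1-d)\sigma^2}>0$ uniformly in $b$. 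Combined with the limit $\lambda_{\min}\to\min(e,\tfrac12)$ as $a\to\infty$, this yields a uniform positive lower bound. The only soft spot in your write-up is that you phrase the eigenvalue bound as a function of $a$ alone; to get $s$-strong monotonicity you need uniformity over $b$ in the compact slab $b^2\le\mu^2+d\sigma^2$, which follows from the explicit limits above together with continuity and compactness. With that caveat, your stronger conclusion (and hence the $\mathcal O(1/k)$ stochastic rate via Table~\ref{tab:VIconv}) goes through.
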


\begin{proof}
As before, assume we are running $F^{w_1,b}_{cc}$ on the ($w_1,b$)-subsystem and $F^{w_2,a}_{cc'}$ on the ($w_2,a$)-subsystem. Also, multiply $F^{w_1,b}_{cc}$ by $e>0$, i.e., increase the learning rate by $e$ or divide the learning rate of $F^{w_2,a}_{cc'}$ by $e$. The full symmetrized Jacobian of this system is:
\begin{align}
    J_{cc'} &= \begin{bmatrix}
        1 & 0 & 0 & 0 \\
        0 & e & 0 & 0 \\
        0 & 0 & \frac{a^2-b^2+\mu^2+\sigma^2}{2a^2} & \frac{b}{2a} \\
        0 & 0 & \frac{b}{2a} & e
    \end{bmatrix} = 
    \begin{bmatrix}
        1 & 0 & 0 & 0 \\
        0 & e & 0 & 0 \\
        0 & 0 & \frac{G}{2a^2} + \frac{1}{2} & \frac{b}{2a} \\
        0 & 0 & \frac{b}{2a} & e
    \end{bmatrix}
\end{align}

The upper left $2 \times 2$ block of this matrix is positive definite. In order to show the whole matrix is positive definite, it suffices to prove the lower right block is positive definite. The trace and determinant of that block are
\begin{align}
    Tr_{ab} &= 1/2 + e + \frac{G}{2 a^2} \\
    Det_{ab} &= \frac{2 e (a^2 + G) - b^2}{4 a^2}.
\end{align}

where $G=\mu^2+\sigma^2-b^2$ as before. We need $G \ge 0$ for $Tr_{ab}>0$ (for $\lim_{a\rightarrow0+}$) and $2eG \ge b^2$ for $Det > 0$. As before, Hoeffding's inequality says $k$ iterations are required for an accurate estimate of the mean (see Equation~(\ref{hoeff})). And as before, we find that $G=(1-d)\sigma^2$. We will focus on the determinant condition here. Let
\begin{align}
    G &= (1-d)\sigma^2 \ge \frac{b^2}{2e} \label{Gdef} \\
    \Rightarrow e &\ge \frac{b^2}{2(1-d)\sigma^2} \\
    \Rightarrow e &\ge \frac{\mu^2 + d\sigma^2}{2(1-d)\sigma^2} \\
    \text{or } \Rightarrow d &\le 1 - \frac{b^2}{2e\sigma^2} \\
    \Rightarrow d &\le 1 - \frac{\mu^2 + d\sigma^2}{2e\sigma^2}.
\end{align}

More simply, let $d=1/2$. Then set $e > \frac{\mu_{\max}^2}{\sigma_{\min}^2} + \frac{1}{2}$. This ensures the trace and determinant are both strictly positive which implies that the resulting system is at least strictly monotone.

We can show that this system is not strongly-monotone by upper bounding the minimum eigenvalue. To ease the analysis, let $H=2eG-b^2$ and note that $H < 2e\sigma^2$ (see Equation~(\ref{Gdef})), i.e., $H$ is finite. This allows us to upper bound the determinant, in turn, upper bounding the minimum eigenvalue. The determinant simplifies to
\begin{align}
    Det_{ab} &= \frac{e}{2} + \frac{H}{4a^2}.
\end{align}
The minimum eigenvalue is upper bounded as follows:
\begin{align}
    \lambda_{min} &= \frac{1}{2} \Big( Tr - \sqrt{Tr^2-4Det} \Big) \\
    &= \frac{1}{2} \Big( 1/2 + e + \frac{G}{2a^2} - \sqrt{(1/2 + e + \frac{G}{2a^2})^2 - 2e - \frac{H}{a^2}} \Big) \\
    \lim_{a \rightarrow 0^+} \lambda_{min} &= \frac{1}{2} \Big( 1/2 + e + \frac{G}{2a^2} - \sqrt{(1/2 + e + \frac{G}{2a^2})^2} \Big) = 0
\end{align}



As the system continues learning a more accurate mean (iterations, $k$, is increasing), $d$ is effectively decreasing towards zero. In the limit $\lim_{d\rightarrow0+} e \ge \frac{\mu^2}{2\sigma^2}$.

Given, $[y_{low},y_{hi}]$, we can set $\mu_{max}=\max(|y_{low}|,|y_{hi}|)$. Also, note that if the distribution is known to support $\epsilon$ balls at the ends of the specified interval, $[y_{low},y_{hi}]$, with some nonzero probabilities, $P_{low}$ and $P_{hi}$, then we can lower bound the variance as well. Specifically, let $P_{low}=\frac{\epsilon}{2}\big( p(y_{low}) + p(y_{low}+\epsilon)\big)$ and $P_{hi}=\frac{\epsilon}{2}\big( p(y_{hi}) + p(y_{hi}-\epsilon)\big)$. Then
\begin{align}
    \sigma^2 = \mathbb{E}[ (y - \mu)^2] &= \int_{y_{low}}^{y_{hi}} p(y) (y - \mu)^2 dy \\
    &\ge \int_{y_{low}}^{y_{low}+\epsilon} p(y) (y - \mu)^2 dy + \int_{y_{hi}+\epsilon}^{y_{hi}} p(y) (y - \mu)^2 dy \\
    &= \frac{\epsilon}{2}\big( p(y_{low}) + p(y_{low}+\epsilon)\big) (y_{low}-\mu)^2 \\ &+ \frac{\epsilon}{2}\big( p(y_{hi}) + p(y_{hi}-\epsilon)\big) (y_{hi}-\mu)^2 + \mathcal{O}(\epsilon^2) \\
    &\approx P_{low} (y_{low} - \mu)^2 + P_{hi} (y_{hi} - \mu)^2 \\
    &\ge P_{low} P_{hi} (y_{hi}-y_{low})^2 = \sigma_{\min}^2.
\end{align}
\end{proof}

\subsection{Analysis of the ($W_2,A$)-Subsystem for the N-d LQ-GAN}

Let $A$ be a lower triangular matrix with positive diagonal\textemdash $A$ represents the generator's guess at the square root of $\Sigma$.

\begin{proposition}
\label{multivarnotmonotone}
The 2-d LQ-GAN is not quasimonotone for $F_{cc}$ or $F_{eg}$ with or without scaling.
\end{proposition}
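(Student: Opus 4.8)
The plan is to violate condition~\ref{cond:A} of the Crouzeix--Schaible criterion, which by \cite{crouzeix1996criteria}, Theorem~3(i) is necessary for quasimonotonicity: since quasimonotonicity of a differentiable map $G$ on an open convex set forces $v^\top J_G(x)\,v \ge 0$ for every $x$ and every $v$ with $v^\top G(x) = 0$, it suffices, for each of the four maps in question --- $F^{W_2,A}_{cc}$, $F^{W_2,A}_{eg}$, and their $(AA^\top)^{-1}$-rescaled variants --- to exhibit one feasible point $x$ (with $W_2$ symmetric and $A$ lower triangular with positive diagonal) together with one direction $v$ orthogonal to the map at $x$ for which $v^\top J_\bullet(x)\,v < 0$.

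First I would write the raw $(W_2,A)$-subsystem map explicitly. With the mean already learned ($w_1 = 0$, $b = \mu$), Proposition~\ref{multivarsoln} gives $F = [\,AA^\top - \Sigma\,;\ -2W_2A\,]$, where the second block is taken over the three free (lower-triangular) entries of $A$; parametrizing $W_2$ by $p,q,r$ (rows $(p,q)$ and $(q,r)$) and $A$ by $a_1,a_2,a_3$ (rows $(a_1,0)$ and $(a_2,a_3)$), this is a cubic map on $\mathbb{R}^6$ generalizing the scalar $F^{w_2,a} = [a^2-\sigma^2,\ -2w_2a]$. Its Jacobian $J$ is a $6\times 6$ polynomial matrix, from which $F_{cc} = (J^\top - J)F$ and $F_{eg} = (I-\eta J)F$ are formed, and the rescaled versions are obtained by left-multiplying the raw field's $W_2$-block by $\tfrac14(AA^\top)^{-1}$ (the matrix analogue of the scalar $\tfrac{1}{4a^2}$ used in Subsection~\ref{subsec:var}) before re-deriving the $cc$/$eg$ adjustment. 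I would then compute the Jacobians $J_{cc}$, $J_{eg}$ of these adjusted maps symbolically, deferring the routine algebra to a supplementary notebook as elsewhere in the paper, since $J_{cc}$ in particular contracts a derivative of $J$ against $F$ and has no compact closed form.

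For the counterexample itself I would take a genuinely non-diagonal target covariance, e.g. $\Sigma$ with rows $(1,\rho)$ and $(\rho,1)$ for a suitable $\rho\in(0,1)$, and evaluate at a point where the first diagonal entry $a_1$ is displaced from its equilibrium value $\sqrt{\Sigma_{11}}$ with $W_2$ at or near the origin --- exactly the regime in which Proposition~\ref{prop:covar} no longer helps, because the row-wise subsystem is only strictly monotone once the earlier row has been solved exactly. Restricting $v$ to a suitable two-dimensional coordinate plane (e.g. a plane mixing the off-diagonal entry $a_2$ with a diagonal entry) and taking $v = S\,F_\bullet(x)$ for $S$ the $90^\circ$ rotation in that plane --- so that $v^\top F_\bullet(x) = 0$ automatically --- collapses $v^\top J_\bullet(x)\,v$ to a single polynomial in $a_1,a_2,a_3,q,r,\rho$, which I would display as strictly negative for an explicit numerical choice, and repeat for each of the four maps.

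The main obstacle is the bookkeeping for the $cc$ and $eg$ adjustments in six variables together with the $(AA^\top)^{-1}$ rescaling: unlike the scalar case, where $\tfrac{1}{4a^2}$ cancels exactly the cross terms and leaves a strongly monotone field, in two dimensions the rescaled field retains a coupling between the off-diagonal entry $a_2$ and the diagonal entries $a_1,a_3$, and the sign of $v^\top J_\bullet v$ can be flipped by pushing $a_1$ off $\sqrt{\Sigma_{11}}$ --- this is precisely what makes the stagewise, row-by-row procedure of Subsection~\ref{subsec:covar} necessary. Making the counterexample work uniformly for all four maps, and for arbitrary $\eta$ in $F_{eg}$ and after any admissible rescaling, is the delicate part; I expect either a one-parameter family of test points or a separate small-perturbation argument for the scaled maps, checking in each case both that $v^\top F_\bullet(x) = 0$ holds exactly and that the resulting quadratic form is bounded away from zero.
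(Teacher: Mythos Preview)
Your overall approach is the same as the paper's: violate condition~\ref{cond:A} by exhibiting a point $x$ and a direction $v\perp F_\bullet(x)$ with $v^\top J_\bullet(x)\,v<0$, constructing $v$ as $SF_\bullet(x)$ for some fixed skew-symmetric $S$ so that orthogonality is automatic.

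Two differences are worth noting. First, the paper does not restrict $v$ to a two-dimensional coordinate plane; it takes $S=K$ to be the block-diagonal $6\times6$ matrix with three $\begin{bsmallmatrix}0&1\\-1&0\end{bsmallmatrix}$ blocks, so that $v=KF_\bullet$ is generically full-dimensional. Your plane restriction may still work, but it is an unnecessary constraint that could make the search for a negative value harder. Second, the paper uses a single explicit numerical point --- $\Sigma=\begin{bsmallmatrix}1&1\\1&100\end{bsmallmatrix}$, $W_2=0$, $A=\begin{bsmallmatrix}1&0\\0.1&0.1\end{bsmallmatrix}$ --- and simply evaluates $v^\top J_\bullet v$ for all four maps at that same point, obtaining strictly negative numbers in each case. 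There is no one-parameter family, no separate perturbation argument for the scaled maps, and no need to treat ``arbitrary $\eta$'': in this proposition $F_{eg}$ denotes the limit map $-JF$, not $(I-\eta J)F$. Your anticipated difficulties therefore do not arise, and the proof is considerably shorter than your plan suggests. Your choice of $\Sigma$ with equal diagonal entries is also less favorable than the paper's highly anisotropic covariance, which drives the negative value.
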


\begin{proof}
We will show that this system fails condition~\ref{cond:A}. Please refer to the Mathematica notebook for our derivations of these results.

Define the following skew symmetric matrix.
\begin{align}
    K &= \begin{bmatrix}
        0 & 1 & 0 & 0 & 0 & 0 \\
        -1 & 0 & 0 & 0 & 0 & 0 \\
        0 & 0 & 0 & 1 & 0 & 0 \\
        0 & 0 & -1 & 0 & 0 & 0 \\
        0 & 0 & 0 & 0 & 0 & 1 \\
        0 & 0 & 0 & 0 & -1 & 0
    \end{bmatrix} 
\end{align}

Let $v_{cc}=KF_{cc}$ and $v_{eg}=KF_{eg}$. Similarly, with scaling, let $v_{cc'}=KF_{cc'}$ and $v_{eg'}=KF_{eg'}$. Let

\begin{align}
    \Sigma &= \begin{bmatrix}
        1 & 1 \\
        1 & 100
    \end{bmatrix}
\end{align}

\begin{align}
    x &= \begin{bmatrix}
        W11 \\
        W12 \\
        W22 \\
        A11 \\
        A22 \\
        A21
    \end{bmatrix} =
    \begin{bmatrix}
        0 \\
        0 \\
        0 \\
        1 \\
        0.1 \\
        0.1
    \end{bmatrix}
\end{align}

Then 

\begin{align}
    v_{cc}^\top  J_{cc}(x) v_{cc}^\top  \Big\vert_{x} &= -189684 < 0 \\
    v_{eg}^\top  J_{eg}(x) v_{eg}^\top  \Big\vert_{x} &= -189684 < 0 \\
    v_{cc'}^\top  J_{cc'}(x) v_{cc'}^\top  \Big\vert_{x} &= -2.95426 \cdot 10^9 < 0 \\
    v_{eg'}^\top  J_{eg'}(x) v_{eg'}^\top  \Big\vert_{x} &= -2.95426 \cdot 10^9 < 0
\end{align}

This implies that neither system is quasimonotone (with, $cc'/eg'$, or without, $cc/eg$, scaling).
\end{proof}

\begin{proposition}
The 2-d LQ-GAN with $W_{11}$ and $A_{11}$ already learned, i.e., $W_{11}=0$ and $A_{11}=A_{11}^*$, is not quasimonotone for $F_{cc}$ or $F_{eg}$.
\end{proposition}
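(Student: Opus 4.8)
The plan is to reproduce the argument of Proposition~\ref{multivarnotmonotone}, now on the residual game obtained after freezing the already-learned parameters. With $W_{11}=0$ and $A_{11}=A_{11}^{*}=\sqrt{\Sigma_{11}}$ held constant, the remaining game has the four variables $(W_{12},W_{22},A_{22},A_{21})$; I would write $F$ for its gradient map, $J$ for the resulting $4\times 4$ Jacobian, and $F_{cc}=(J^{\top}-J)F$, $F_{eg}=(I-\eta J)F$ for the two maps in question. The relevant feasible region is the slice $\{W_{11}=0,\ A_{11}=A_{11}^{*}\}$ intersected with the constraint that $A$ is lower triangular with positive diagonal, i.e.\ $A_{22}>0$; this is an open convex set in $(W_{12},W_{22},A_{22},A_{21})$, so Theorem~3 of~\cite{crouzeix1996criteria} applies: if $F_{cc}$ (resp.\ $F_{eg}$) were quasimonotone, condition~\ref{cond:A} would have to hold, namely $v^{\top}J_{cc}(x)v\ge 0$ for every $x$ in the region and every $v$ with $v^{\top}F_{cc}(x)=0$. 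It therefore suffices to exhibit a single point $x$ together with a single orthogonal test direction $v$ for which $v^{\top}J_{cc}(x)v<0$, and likewise for $F_{eg}$.

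To construct such a $v$ automatically, I would, exactly as in Proposition~\ref{multivarnotmonotone}, fix a skew-symmetric $4\times 4$ matrix $K$ and set $v=KF_{cc}(x)$; skew-symmetry forces $v^{\top}F_{cc}(x)=F_{cc}(x)^{\top}KF_{cc}(x)=0$, so $v$ is a legitimate test vector for condition~\ref{cond:A}. I would then fix the same $\Sigma$ used in Proposition~\ref{multivarnotmonotone} (so that $A_{11}^{*}=1$ and $A_{22}^{*},A_{21}^{*}$ are the corresponding Cholesky entries), choose a block rotation $K$ supported on the free coordinates (for instance, rotating the $W_{12}$ and $W_{22}$ axes into each other, and likewise $A_{22}$ and $A_{21}$), and evaluate the polynomial $v^{\top}J_{cc}(x)v$ at a candidate point with $W_{12}=W_{22}=0$ and small positive $A_{22}$. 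This is the same symbolic computation carried out for Proposition~\ref{multivarnotmonotone} (and recorded in the accompanying Mathematica notebook); since that proposition already shows the unrestricted system violates condition~\ref{cond:A} by large margins, I expect a witness for the restricted map to be easy to locate, and the analogous evaluation of $v^{\top}J_{eg}(x)v$ disposes of $F_{eg}$.

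The main obstacle is purely bookkeeping. The residual game is not simply the full six-dimensional map with two coordinates deleted: although $F$ restricts cleanly, $J$ here is the $4\times 4$ submatrix of the full Jacobian, so $F_{cc}=(J^{\top}-J)F$ built from it differs from the six-dimensional $F_{cc}$ restricted, and the Cholesky parameterization couples $A_{21}$ to the frozen constant $A_{11}^{*}$ through $F_{W_{12}}\propto A_{11}^{*}A_{21}-\Sigma_{12}$. Consequently the convenient block structure exploited by the $K$ of Proposition~\ref{multivarnotmonotone} is lost — its associated $v$ is not even tangent to the slice at the old witness point — so the skew-symmetric $K$ and the numerical point must be re-chosen for the smaller system and $v^{\top}J v$ recomputed from scratch; the intuition is that the remaining obstruction comes from the interaction between off-diagonal covariance learning ($W_{12},A_{21}$) and second-coordinate variance learning ($W_{22},A_{22}$), which is precisely what survives the restriction. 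Conceptually, nothing is needed beyond Proposition~\ref{multivarnotmonotone} and Theorem~3 of~\cite{crouzeix1996criteria}: once one point yields a strictly negative value of $v^{\top}J_{cc}v$, and one yields a strictly negative $v^{\top}J_{eg}v$, condition~\ref{cond:A} fails and the proposition follows.
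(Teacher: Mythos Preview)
Your proposal is correct and follows essentially the same approach as the paper: violate condition~\ref{cond:A} by producing a single witness $(x,v)$ with $v=KF(x)$ for a skew-symmetric $K$. In fact the paper uses precisely the block rotation $K$ you describe (pairing $(W_{12},W_{22})$ and $(A_{22},A_{21})$), the same $\Sigma$, and the point $(W_{12},W_{22},A_{22},A_{21})=(0,0,0.1,0.1)$, obtaining $v^{\top}J_{cc}v=v^{\top}J_{eg}v=-189684$; so your worry that the witness point must be re-chosen turns out to be unnecessary, though your caution about $K$ is warranted since the $4\times 4$ $K$ is not simply the deletion of rows and columns from the $6\times 6$ one.
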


\begin{proof}
We will show that this system fails condition~\ref{cond:A}. Please refer to the Mathematica notebook for our derivations of these results.

Define the following skew symmetric matrix.
\begin{align}
    K &= \begin{bmatrix}
        0 & 1 & 0 & 0 \\
        -1 & 0 & 0 & 0 \\
        0 & 0 & 0 & 1 \\
        0 & 0 & -1 & 0
    \end{bmatrix} 
\end{align}

Let $v_{cc}=KF_{cc}$ and $v_{eg}=KF_{eg}$. Let

\begin{align}
    \Sigma &= \begin{bmatrix}
        1 & 1 \\
        1 & 100
    \end{bmatrix}
\end{align}

\begin{align}
    x &= \begin{bmatrix}
        W12 \\
        W22 \\
        A22 \\
        A21
    \end{bmatrix} =
    \begin{bmatrix}
        0 \\
        0 \\
        0.1 \\
        0.1
    \end{bmatrix}
\end{align}

Then 

\begin{align}
    v_{cc}^\top  J_{cc}(x) v_{cc}^\top  \Big\vert_{x} &= -189684 < 0 \\
    v_{eg}^\top  J_{eg}(x) v_{eg}^\top  \Big\vert_{x} &= -189684 < 0
\end{align}

This implies that neither system is quasimonotone.
\end{proof}

\begin{proposition}
The 3-d LQ-GAN with the diagonal of $A$ already learned, i.e., $A_{ii}=A_{ii}^*$, is not quasimonotone for $F_{cc}$ or $F_{eg}$ with or without scaling.
\end{proposition}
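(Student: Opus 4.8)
The plan is to reuse the counterexample technique of the preceding propositions: exhibit a single point at which condition~\ref{cond:A} of the Crouzeix--Schaible criterion fails, so that by the necessity half of the theorem of~\cite{crouzeix1996criteria} the map cannot be quasimonotone. First I would restrict to the linear subspace of the full $3$-d LQ-GAN state space given by $w_1 = 0$, $b = \mu$, and $A_{ii} = A_{ii}^*$ for all $i$; the surviving free coordinates are then the off-diagonal entries of the symmetric $W_2$ together with the strictly lower-triangular entries $A_{21}, A_{31}, A_{32}$ of $A$. Because this is a coordinate subspace, a violation of quasimonotonicity detected here automatically implies non-quasimonotonicity for the full $3$-d system (the same reduction argument used just before Proposition~\ref{Fnotquasi}). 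On this subspace I would write out $F_{cc}$ explicitly from the row-wise formula for $F^{W_2,A}_{cc}$ in Table~\ref{tab:maps}, and likewise $F_{eg}$ and the $(AA^\top)^{-1}$-rescaled maps $F_{cc'}$ and $F_{eg'}$.

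Next I would instantiate an ill-conditioned target: take $\Sigma$ to contain a nearly degenerate $2\times 2$ block (the $3$-d analogue of the matrix used in Proposition~\ref{multivarnotmonotone}) and compute $A^*$ (Cholesky, or $\Sigma^{1/2}$) so the frozen diagonal is pinned. I would then pick a concrete evaluation point $x$ with $W_2 = 0$ and the three lower-triangular entries of $A$ set to small values --- the $0.1$'s that worked in the $2$-d multivariate case are the natural first guess --- and take the admissible test direction to be $v = K F_{cc}(x)$ for a skew-symmetric matrix $K$ pairing each coordinate with a partner. Since $K^\top = -K$, one has $v^\top F_{cc}(x) = -F_{cc}(x)^\top K F_{cc}(x) = 0$, so $v$ is a legitimate choice in condition~\ref{cond:A}, and it remains only to evaluate the scalar $v^\top J_{cc}(x)\,v$ and check that it is strictly negative. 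The identical $x$ and $v$ (with the appropriate Jacobians) must then be confirmed to give negative values for $F_{eg}$ and for the scaled maps $F_{cc'}$ and $F_{eg'}$, exactly as in Proposition~\ref{multivarnotmonotone}; all of these are finite polynomial evaluations and are most conveniently carried out in the accompanying Mathematica notebook.

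The main obstacle is finding one witness point that defeats all four maps simultaneously. Rescaling by a positive-definite matrix can genuinely restore monotonicity (see~\ref{ccmon2nonmon}), so I cannot simply inherit the unscaled counterexample for the scaled maps; the way around this is to exploit the ill-conditioning of $\Sigma$, which makes the destabilizing curvature contributed by the small off-diagonal $A$ directions dominate the quadratic form both before and after the $(AA^\top)^{-1}$ scaling, or else to pass to a limit in which one lower-triangular entry of $A$ tends to $0^+$ (mirroring the $a \to 0^+$ analysis of the $(w_2,a)$-subsystem), where the leading term in $v^\top J v$ is negative regardless of the other free variables. A minor bookkeeping point is to keep $x$ feasible, i.e. to keep the diagonal of $A$ strictly positive, which the small off-diagonal perturbations of $A^*$ respect automatically. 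Once such a point is located, the conclusion follows immediately from the necessity of condition~\ref{cond:A} for quasimonotonicity.
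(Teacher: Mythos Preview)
Your plan is essentially the paper's own argument: violate condition~\ref{cond:A} at a single numerical point using $v = KF$ with a block skew-symmetric $K$, then read off a negative value of $v^\top J v$. The logical structure is identical, and your handling of the subspace restriction and of the orthogonality $v^\top F = 0$ is correct.

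Two concrete differences in the witness are worth flagging. First, the paper does \emph{not} take $W_2 = 0$ here: with the diagonal of $A$ frozen at $A_{ii}^*$ the remaining degrees of freedom are fewer than in Proposition~\ref{multivarnotmonotone}, and the paper's counterexample instead loads the off-diagonal $W_2$ entries with large values (all equal to $10$) while taking a only mildly conditioned $\Sigma$ and generic small off-diagonal $A$ entries. Your $W_2=0$ / ill-conditioned-$\Sigma$ guess, borrowed from the $2$-d case, may well fail here, so if that first attempt does not produce a negative quadratic form you should move in the opposite direction and push $W_2$ large rather than refining $\Sigma$. Second, the paper's written proof only evaluates $v^\top J v$ for the unscaled $F_{cc}$ and $F_{eg}$; the ``with or without scaling'' clause is deferred to the accompanying notebook, so your explicit treatment of $F_{cc'}$ and $F_{eg'}$ is if anything more complete than what appears in the text.
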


\begin{proof}
We will show that this system fails condition~\ref{cond:A}. Please refer to the Mathematica notebook for our derivations of these results.

Define the following skew symmetric matrix.
\begin{align}
    K &= \begin{bmatrix}
        0 & 1 & 0 & 0 & 0 & 0 \\
        -1 & 0 & 0 & 0 & 0 & 0 \\
        0 & 0 & 0 & 1 & 0 & 0 \\
        0 & 0 & -1 & 0 & 0 & 0 \\
        0 & 0 & 0 & 0 & 0 & 1 \\
        0 & 0 & 0 & 0 & -1 & 0
    \end{bmatrix}
\end{align}

Let $v_{cc}=KF_{cc}$ and $v_{eg}=KF_{eg}$. Let

\begin{align}
    \Sigma &= \begin{bmatrix}
        0.2 & 0.15 & 0.5 \\
        0.15 & 0.9 & 0.8 \\
        0.5 & 0.8 & 2
    \end{bmatrix}
\end{align}

\begin{align}
    x &= \begin{bmatrix}
        W12 \\
        W13 \\
        W23 \\
        A21 \\
        A31 \\
        A32
    \end{bmatrix} =
    \begin{bmatrix}
        10 \\
        10 \\
        10 \\
        0.1 \\
        0.2 \\
        -0.5
    \end{bmatrix}
\end{align}

Then 

\begin{align}
    v_{cc}^\top  J_{cc}(x) v_{cc}^\top  \Big\vert_{x} &= -1024.26 < 0 \\
    v_{eg}^\top  J_{eg}(x) v_{eg}^\top  \Big\vert_{x} &= -242766 < 0
\end{align}

This implies that neither system is quasimonotone.
\end{proof}

\begin{proposition}
The N-d LQ-GAN with all but a single row of $A$ fixed is strictly-monotone for $F_{cc}$, $F_{eg}$, and $F_{con}$.
\end{proposition}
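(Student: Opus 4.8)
The plan is to recognize this ``row-$N$'' subsystem as a higher-dimensional copy of the $(w_1,b)$-subsystem and then reuse Propositions~\ref{affinemon} and~\ref{w1bmonotone}. First I would identify the active variables: when only the $N$-th row of $A$ is being learned --- with the diagonals of $A$ and $W_2$, all other rows of $A$, and the off-diagonal entries of $W_2$ outside column $N$ held fixed --- the free variables are the off-diagonal entries $A_{Nd}$, $d<N$, of that row together with the entries $W_{iN}=W_{Ni}$, $i<N$, of the symmetric matrix $W_2$. Restricting $F^{W_2,A}=[-\nabla_{W_2}V;\nabla_A V]$ to these variables, the relevant components are $F^{W_{iN}}\propto(AA^\top)_{iN}-\Sigma_{iN}$ and $F^{A_{Nd}}\propto(W_2A)_{Nd}$. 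The key observation is that both are \emph{affine} in the free variables: $(AA^\top)_{iN}=\sum_{d}A_{id}A_{Nd}$ is linear in $\{A_{Nd}\}$ --- the only quadratic term, $\sum_d A_{Nd}^2$, lives in the excluded $(N,N)$ component --- and $(W_2A)_{Nd}=\sum_{m}W_{Nm}A_{md}$ is linear in $\{W_{iN}\}$ once the fixed diagonal entry $W_{NN}=0$ is accounted for. Consequently the $W$--$W$ and $A$--$A$ blocks of the Jacobian vanish.

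Next I would compute the (constant) Jacobian and show it is skew-symmetric of the block form $J=\begin{bmatrix}0 & M\\ -M^\top & 0\end{bmatrix}$, where $M$ is, up to a positive scalar, the leading $(N{-}1)\times(N{-}1)$ principal submatrix of $A$ (since $\partial F^{W_{iN}}/\partial A_{Nd}=A_{id}$ up to scaling). The one point requiring care is that the two off-diagonal blocks be exact negative-transposes of one another: this is where the factor of $2$ produced by the symmetric parameterization of $W_2$ must be tracked on \emph{both} $F^{W_{iN}}$ and $F^{A_{Nd}}$, so that the blocks scale identically. Once done, $J$ is genuinely skew-symmetric, exactly as $\begin{bmatrix}0&1\\-1&0\end{bmatrix}$ is for $(w_1,b)$.

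Given that structure, the conclusion is immediate. Because $A$ is of Cholesky form, its leading $(N{-}1)$-submatrix is lower triangular with strictly positive diagonal, hence invertible; so $M$, and therefore $J$ (with $\det J=\pm(\det M)^2\neq0$), are invertible. A skew-symmetric matrix is normal, so by Proposition~\ref{affinemon} the map $F_{cc}$ has symmetrized Jacobian $\frac{1}{2}(J-J^\top)^\top(J-J^\top)=2J^\top J=2\,\mathrm{diag}(MM^\top,\,M^\top M)\succ0$, and $F_{cc}$ is strictly monotone on the (unconstrained, convex) subsystem domain. For $F_{eg}=(I-\eta J)F$ and $F_{con}=(I+\eta J^\top)F$ the same skew-symmetry gives $F_{eg}=F_{con}$ and, exactly as in the proof of Proposition~\ref{w1bmonotone}, symmetrized Jacobian $-\eta J^2=\eta JJ^\top=\eta\,\mathrm{diag}(MM^\top,M^\top M)\succ0$ for $\eta>0$; again strictly monotone. (On any compact set these maps are in fact strongly monotone, with modulus the smallest eigenvalue of those blocks.)

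The main obstacle is entirely in the first two steps: carefully delineating which $W_2$- and $A$-entries constitute the row-$N$ subsystem, and verifying that the signs and factors of $2$ coming from the zero-sum game and the symmetric parameterization of $W_2$ combine to make $J$ exactly skew-symmetric rather than merely block anti-triangular. After that, invertibility of the leading submatrix of $A$ and strict monotonicity of $F_{cc}$, $F_{eg}$, and $F_{con}$ follow as direct corollaries of Propositions~\ref{affinemon} and~\ref{w1bmonotone}, since the subsystem is just a higher-dimensional instance of the $(w_1,b)$ case.
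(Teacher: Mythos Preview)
Your proposal is correct and essentially mirrors the paper's proof: both identify the row-$N$ free variables as $\{W_{iN}\}_{i<N}$ together with $\{A_{Nd}\}_{d<N}$, observe that the restricted map is affine with constant skew-symmetric Jacobian $J=\begin{bmatrix}0 & M\\ -M^\top & 0\end{bmatrix}$ where $M$ is (a scalar multiple of) the leading $(N{-}1)\times(N{-}1)$ block of $A$, invoke Proposition~\ref{affinemon}, and conclude strict monotonicity from the positive-definiteness of $\mathrm{diag}(MM^\top,M^\top M)$ since that leading block is lower triangular with strictly positive diagonal. The only cosmetic differences are that the paper eliminates $A_{NN}$ via the constraint $A_{NN}=\sqrt{\Sigma_{NN}-\sum_{d<N}A_{Nd}^2}$ rather than holding it fixed (immaterial, since $A_{NN}$ does not appear in the relevant map components once $W_{NN}=0$), and the paper works with the ``pure'' versions $F_{cc}=F_{eg}=F_{con}=-JF$ rather than the $\eta$-weighted forms you write down; your attention to the factor-of-$2$ bookkeeping from the symmetric parameterization of $W_2$ is exactly the care the paper's computation requires.
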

\label{prop:covar}

\begin{proof}
First, note that the Cholesky decomposition of $\Sigma$, denoted by $A^*$, obeys the follow equation:
\begin{align}
    0 &= \Sigma_{ij} - \sum_{d=1}^{i} A^*_{id} A^*_{jd} \label{chol_sat}
\end{align}
where $i < j$. $\Sigma$ is symmetric, so $\Sigma_{ji}$ can be recovered as $\Sigma_{ij}$. This allows us to remove 1 degree of freedom from the system by defining the diagonal term in a single row of $A$ in terms of the other entries in the row:
\begin{align}
    A_{ii} &= \sqrt{\Sigma_{ii} - \sum_{d=1}^{i-1} A^{2}_{id}}
\end{align}
where as before $A_{ii}$ must be greater than zero. We assume that $\Sigma_{ii}$ has already been learned by \emph{Crossing-the-Curl} as described in the main body. The condition $A_{ii} > 0$ can be ensured by constraining $\sum_{d=1}^{i-1} A^{2}_{id} \le \Sigma_{ii} - \epsilon$ with $\epsilon \ll 1$\textemdash this can be achieved with a simple ball projection.

Consider learning a single row of $A$, specifically $A_{Ni}$ with $i < N$; $A_{NN}$ is recovered as discussed above and $A_{N,i>N} = 0$ by definition of the Cholesky decomposition. We will also set all $W_{2ij}=W_{2ji}$ equal to zero except where $i$ xor $j$ equals $N$. This has the effect of fixing parts of the system irrelevant for solving the $N$th row of $A$. For ease of exposition, we will drop the ``2'' subscript of $W_2$ in what follows.

We will begin by writing down the map for the entire system and then simplifying using the constraints and assumptions discussed above:

\begin{align}
F_{W_2} &= AA^T - \Sigma \\
&= \begin{bmatrix}
A_{11}^2 & A_{11}A_{21} & A_{11}A_{31} & \cdots  \\
A_{11}A_{21} & A_{21}^2 + A_{22}^2 & A_{21}A_{31} + A_{22}A_{32} & \cdots  \\
A_{11}A_{31} & A_{21}A_{31} + A_{22}A_{32} & A_{31}^2 + A_{32}^2 + A_{33}^2 & \cdots \\
\vdots & \vdots & \vdots & \ddots
\end{bmatrix} - \begin{bmatrix}
S_{11} & S_{12} & S_{13} & \cdots \\
S_{12} & S_{22} & S_{23} & \cdots \\
S_{13} & S_{23} & S_{33} & \cdots \\
\vdots & \vdots & \vdots & \ddots
\end{bmatrix} \\
F_{A} &= -2W_2A \\
&= -2\begin{bmatrix}
A_{11} W_{11} + A_{21} W_{12} + A_{31} W_{13} + \cdots & A_{22} W_{12} + A_{32} W_{13} + \cdots & A_{33} W_{13} + \cdots & \cdots  \\
A_{11} W_{12} + A_{21} W_{22} + A_{31} W_{23} + \cdots & A_{22} W_{22} + A_{32} W_{23} + \cdots & A_{33} W_{23} + \cdots & \cdots  \\
A_{11} W_{13} + A_{21} W_{23} + A_{31} W_{33} + \cdots & A_{22} W_{23} + A_{32} W_{33} + \cdots & A_{33} W_{33} + \cdots & \cdots  \\
\vdots & \vdots & \vdots & \ddots
\end{bmatrix}.
\end{align}

We are only interested in learning the $N$th row of $A$. Take $N=3$ for example. Notice that the $3$rd row of $A$, $A_{3:},$ only contains the following $W_2$ terms: $W_{13}, W_{23}$. The rest are set to zero as mentioned earlier. The reason for this will become apparent soon. We fix all other entries to zero to highlight the relevant subsystem below:

\begin{align}
F_{W_2} &= AA^T - \Sigma \\
&= \begin{bmatrix}
0 & 0 & A_{11}A_{31} - S_{13} & \cdots \\
0 & 0 & A_{21}A_{31} + A_{22}A_{32} - S_{23} & \cdots \\
A_{11}A_{31} - S_{13} & A_{21}A_{31} + A_{22}A_{32} - S_{23} & 0 & \cdots \\
\vdots & \vdots & \vdots & \ddots
\end{bmatrix} \\
F_{W_{i<N}} &= 2 \big( \sum_{d \le i} A_{id} A_{Nd} - S_{iN} \big) \\
F_{A} &= -2W_2A \\
&= -2\begin{bmatrix}
0 & 0 & 0 & \cdots  \\
0 & 0 & 0 & \cdots  \\
A_{11} W_{13} + A_{21} W_{23} & A_{22} W_{23} & 0 & \cdots  \\
\vdots & \vdots & \vdots & \ddots
\end{bmatrix} \\
F_{A_{N>i}} &= -2\big( \sum_{d < N} A_{di} W_{dN} \big).
\end{align}

Notice that the map $F_{W_2}$ is zero only if Equation~(\ref{chol_sat}) is satisfied for $\Sigma_{iN}$ and $W_{dN} = 0$ for all $d<N$. Therefore, setting all other entries of $W_2$ as prescribed simplified the system, while maintaining the correct fixed point.

In order to determine the monotonicity of this system, we need to compute the Jacobian of $F = [F_{W_2};F_{A}]$:

\begin{align}
    J &= \begin{bmatrix}
    \frac{\partial F_{W_{i<d}}}{\partial W_{k<d}} & \frac{\partial F_{W_{i<d}}}{\partial A_{d>k}} \\
    \frac{\partial F_{A_{d>i}}}{\partial W_{k<d}} & \frac{\partial F_{A_{d>i}}}{\partial A_{d>k}}
    \end{bmatrix} \\
    &= -2\begin{bmatrix}
    (d-1) \times 0 & -A_{i \ge k} \\
    A_{k \ge i} & (d-1) \times 0
    \end{bmatrix} \\
    &= -2\begin{bmatrix}
    0 & 0 & -A_{11} & 0 \\
    0 & 0 & -A_{21} & -A_{22} \\
    A_{11} & A_{21} & 0 & 0 \\
    0 & A_{22} & 0 & 0
    \end{bmatrix} \text{ for $N=3$} \\
    &= -2\begin{bmatrix}
    0 & -A_{:d-1} \\
    A_{:d-1}^\top & 0
    \end{bmatrix}
\end{align}
which is skew-symmetric and constant with respect to the variables being learned: $W_{2,i<N}$ and $A_{N>i}$. Therefore, $J+J^\top=0$ is PSD, which implies $F$ is monotone. The fact that $J$ is constant along with Proposition~\ref{affinemon} imply that $F_{cc}=F_{eg}=F_{con}=-JF$ are also monotone:

\begin{align}
    F_{cc} = F_{eg} = F_{con} &= 2\begin{bmatrix}
    -A_{:d-1} F_{A_{d>i}} \\
    A_{:d-1}^\top F_{W_{i<d}}
    \end{bmatrix}.
\end{align}

Note that the component of $F_{cc}$ corresponding to the dynamics of $A$,  is independent of $W_2$. This means the dynamics are now decoupled from $W_2$ and can be run separately. By inspecting the symmetrized Jacobian of $F_{cc}$ we can show that it is a block matrix composed of positive definite matrices:
\begin{align}
    J_{sym} &= \frac{1}{4}(J-J^\top)^\top (J-J^\top) \\
    &= J^\top J = -JJ \\
    &= \begin{bmatrix}
    A_{:d-1}A_{:d-1}^\top & 0 \\
    0 & A_{:d-1}^\top A_{:d-1}
    \end{bmatrix}.
\end{align}
$A_{:d-1}A_{:d-1}^\top$ is positive definite because $A$ is constrained to be of Cholesky form. Moreover, the eigenvalues of $A_{:d-1}^\top A_{:d-1}$ are the same as $A_{:d-1}A_{:d-1}^\top$, therefore both blocks are positive definite. This implies the entire matrix $J_{sym}$ is positive definite which means $F_{cc}=F_{eg}=F_{con}$ are strictly monotone. Note that we do not require $A_{:d-1}=A^*_{:d-1}$ for strict monotonicity. In practice, the system will actually be both strongly-monotone and smooth. This is because $A$ is constrained with a projection onto a ball and the diagonal of $A$ is restricted to be larger than $\epsilon$. These two conditions guarantee a nonzero, finite minimum and maximum value for the eigenvalues of $A_{:d-1}A_{:d-1}^\top$\textemdash the minimum corresponds to strong-monotonicity and the maximum corresponds to smoothness.
\end{proof}

Unlike the ($w_2,a$)-subsystem where monotonicity depends on the accuracy of the learned mean, this system is monotone as long as $A_{:d-1}$ is PSD which is guaranteed from the form we have prescribed to $A$. This result suggests learning the rows of $A$ in succession, and each subsystem is guaranteed to be strictly monotone. Note that the variance, i.e., diagonal of $\Sigma$, will be slightly off the true value if the mean, $\mu$, is not first learned perfectly. The learned $A$ will then be slightly off the true $A^*$ and errors will compound, but still not affect monotonicity. The subsystems corresponding to each row of $A$ can be revisited to learn the entries of $A$ more accurately. Permuting the dimensions of $x$ such that the dimensions corresponding to highest variance are learned first may ensure subsystems with maximal \emph{strong}-monotonicity. We leave a detailed examination to future research.


\subsection{An $\mathcal{O}(N/k)$ Algorithm for LQ-GAN}
\label{sub:cc_lqgan}
Here we present pseudocode for solving the stochastic LQ-GAN. The maps corresponding to learning the mean and variance by \emph{Crossing-the-Curl} are both strongly convex and can therefore be solved with a simple projected gradient method. We argued in the previous subsection that the map associated with learning the covariance terms is strongly-monotone and smooth, not only strictly monotone. In practice, we found that a projected Extragradient algorithm~\cite{kannan2017pseudomonotone} gave better results. The full procedure is outlined in Algorithm~\ref{alg:cc_lqgan}. Replace sample estimates with the true $\mu$ and $\Sigma$ for the deterministic LQ-GAN.
\begin{algorithm}[H]
    \caption{\emph{Crossing-the-Curl} for LQ-GAN}
    \label{alg:cc_lqgan}
    \begin{algorithmic}
        \STATE Input: Sampling distribution $p(y)$, max iterations $K$, batch size $B$, lower bound on variance $\sigma_{\min}$
        \STATE \textbf{(1)} Learn Mean
        \STATE $\mu_0 = [0,\ldots,0]^\top$
        \FORALL{$k=1,2,\ldots,K$}
            \STATE $\hat{\mu} = \frac{1}{B} \sum_{s=1}^B (y_s \sim p(y))$
            \STATE $\mu_k = \frac{k}{k+1} \mu_{k-1} + \frac{1}{k+1} \hat{\mu}$, \hspace{0.3cm} i.e., $\mu_k = \mu_{k-1} - \rho_k F^b_{cc}$ with step size $\rho_k = \frac{1}{k+1}$
        \ENDFOR
        \STATE \textbf{(2)} Learn Variance
        \STATE $\sigma_0 = [1,\ldots,1]^\top$
        \FORALL{$k=1,2,\ldots,K$}
            \STATE $\hat{\sigma}^2 = \frac{1}{B-1} \sum_{s=1}^B [(y_s \sim p(y)) - \mu_K]^2$
            \STATE $F^a_{cc'} = (\sigma_k^2 - \hat{\sigma}^2)/(2 \sigma_k)$
            \STATE $\sigma_k = \texttt{clip}(\sigma_{k-1} - \frac{1}{k+1} F^a_{cc'}, \sigma_{\min}, \infty)$
        \ENDFOR
        \STATE \textbf{(3)} Learn Covariance
        \STATE $A_0 = LT(I_N)$, i.e., lower triangular part of Identity matrix
        \STATE $A_{0,11} = \sigma_{K,1}$
        \FORALL{$d=2,\ldots,N$}
            \FORALL{$k=1,2,\ldots,K$}
                \STATE $y_s \sim p(y), \,\, s=1,\ldots,B$
                \STATE $\hat{\Sigma} = \frac{1}{B-1} \sum_{s=1}^B (y_s - \mu_K)^\top (y_s - \mu_K)$
                \STATE $F_{W_{i<d}} = 2 \big( \sum_{j \le i} A_{k-1,ij} A_{k-1,dj} - \hat{\Sigma}_{id} \big)$
                \STATE $F^A_{cc} = A^\top_{k-1,:d-1} F_{W_{i<d}}$ where $A_{k-1,:d-1}$ refers to the top left $d-1 \times d-1$ block of $A_{k-1}$
                \STATE $\hat{A}_{k,d:} = A_{k-1,d:} - \frac{1}{k+1} F^A_{cc}$ where $A_{k-1,d:}$ refers to the $d$th row of $A_k$ excluding the diagonal
                \IF{$\sum_j \hat{A}_{k,dj}^2 > \sigma_{K,d}^2 - \sigma_{\min}^2$}
                \STATE $\hat{A}_{k,dj} = \hat{A}_{k,dj} \cdot \sigma_{K,d} / \sqrt{\sum_j \hat{A}_{k,dj}^2 + \sigma_{\min}^2}$
                \ENDIF
                \STATE $F_{W_{i<d}} = 2 \big( \sum_{j \le i} A_{k-1,ij} \hat{A}_{k,dj} - \hat{\Sigma}_{id} \big)$
                \STATE $F^A_{cc} = A^\top_{k-1,:d-1} F_{W_{i<d}}$ where $A_{k-1,:d-1}$ refers to the top left $d-1 \times d-1$ block of $A_{k-1}$
                \STATE $A_{k,d:} = A_{k-1,d:} - \frac{1}{k+1} F^A_{cc}$ where $A_{k-1,d:}$ refers to the $d$th row of $A_k$ excluding the diagonal
                \IF{$\sum_j A_{k,dj}^2 > \sigma_{K,d}^2 - \sigma_{\min}^2$}
                \STATE $A_{k,dj} = A_{k,dj} \cdot \sigma_{K,d} / \sqrt{\sum_j A_{k,dj}^2 + \sigma_{\min}^2}$
                \ENDIF
            \ENDFOR
            \STATE $A_{K,dd} = \sqrt{\sigma_{K,d}^2 - \sum_j A_{K,dj}^2}$
        \ENDFOR
    \end{algorithmic}
\end{algorithm}

\subsubsection{Convergence Rate}
As mentioned above, the maps for learning the mean and variance are both strongly convex which implies a $\mathcal{O}(1/k)$ stochastic convergence rate for each, the sum of which is still $\mathcal{O}(1/k)$.

In practice, the maps for learning each row of $A$ are strongly-monotone and smooth (see last paragraph of proof of Proposition~\ref{prop:covar}) which implies a $\mathcal{O}(1/k)$ stochastic convergence rate for each as well. Because this technique consists of $N+1$ steps for learning the full $N$-d LQ-GAN, it requires $\hat{k}=Nk$ iterations which, in total, implies a $\mathcal{O}(N/k)$ stochastic convergence rate.

Hidden within this analysis is the fact that each iteration of learning the mean and variance is $\mathcal{O}(N)$ in terms of time-complexity and each iteration for learning each row of $A$ is $\mathcal{O}(N^2)$, therefore this entire procedure is $\mathcal{O}(N^3/k)$ in terms of FLOPS. This is expected as the complexity of a Cholesky decomposition to compute $A=\Sigma^{1/2}$ is also $\mathcal{O}(N^3)$. Note that unlike the complexity of computing $F$ each iteration which can be mitigated with parallel computation, the sequential nature of the stagewise procedure cannot be amortized which is why we report a $\mathcal{O}(N/k)$ convergence rate and not $\mathcal{O}(1/k)$.

Another subtle point is that the LQ-GAN is locally monotone about the equilibrium. Recall from Theorem D.1 on p.26 in~\cite{nagarajan2017gradient} that the Jacobian at the equilibrium is of the following form (remember our definition for the Jacobian is the negative of theirs):
\begin{align}
    J &= \begin{bmatrix}
    J_{DD} & J_{DG} \\
    -J_{DG}^\top & 0
    \end{bmatrix}
\end{align}
where $J_{DD}$ is positive definite. The symmetrized Jacobian is then
\begin{align}
    J_{sym} = \frac{1}{2}(J+J^\top) &= \begin{bmatrix}
    J_{DD} & 0 \\
    0 & 0
    \end{bmatrix} \succeq 0.
\end{align}
This implies $F$ is monotone where $F = [\nabla V_{A,b}; -\nabla V_{W_2,w_1}]$. Therefore, we can use stagewise procedure in Algorithm~\ref{alg:cc_lqgan} to converge to a local neighborhood about the equilibrium, constrain the system to this neighborhood with a projection (which will guarantee smoothness of the map), and then continue with an extragradient method applied to the full system. The local convergence rate will still be $\mathcal{O}(1/k)$ with $\mathcal{O}(N^3)$ iteration complexity due to the matrix multiplications required in computing $F$ (see Proposition~\ref{multivarsoln}).

\subsection{Deep Learning Specifications and Results}
\label{deepnets}
We also experimented on common neural-net driven tasks. We tested $F_{lin}$ with $(\alpha,\beta,\gamma)=(1,10,10^{-4})$ on a mixture of Gaussians and $(\alpha,\beta,\gamma)=(1,10,0.1)$ on CIFAR10 against $F_{con}$, i.e., $(\alpha,\beta,\gamma)=(1,10,0)$. Introducing a small $-JF$ term can help accelerate training (see Figure~\ref{fig:MO8G+CIFAR10}).
\begin{figure}[htbp]
    \begin{minipage}{0.42\textwidth}
        \centering
        \includegraphics[scale=0.25]{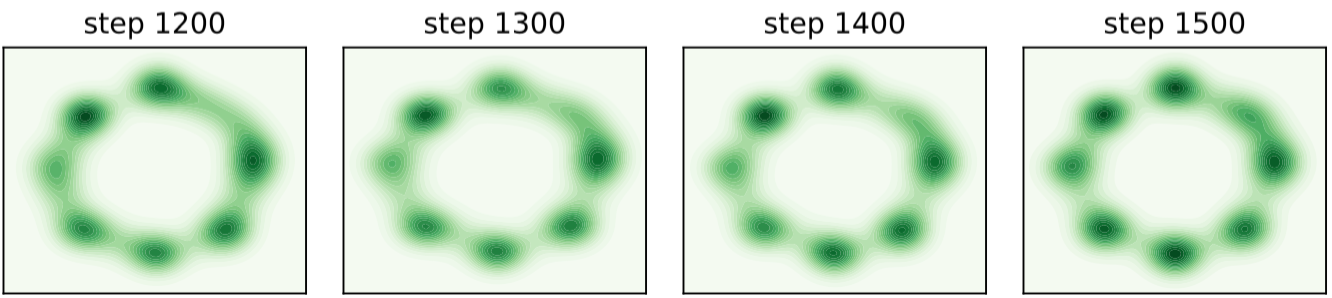}\\
        \includegraphics[scale=0.25]{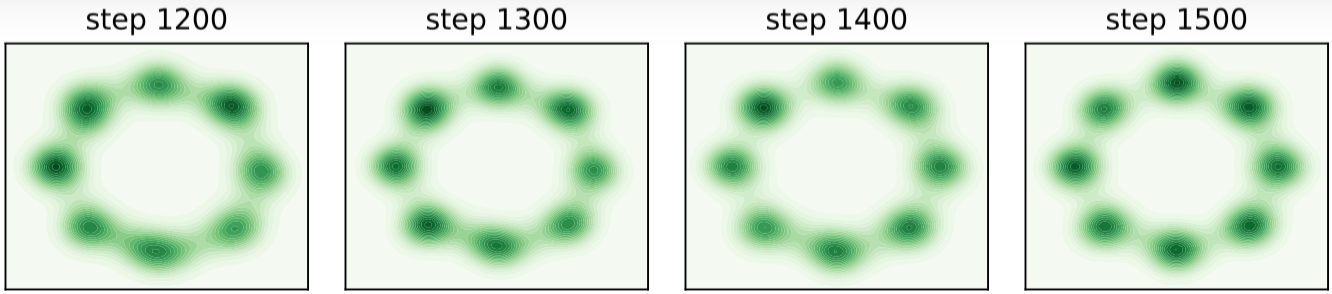}
    \end{minipage}
    \begin{minipage}{0.45\textwidth}
        \centering
        \includegraphics[scale=0.11]{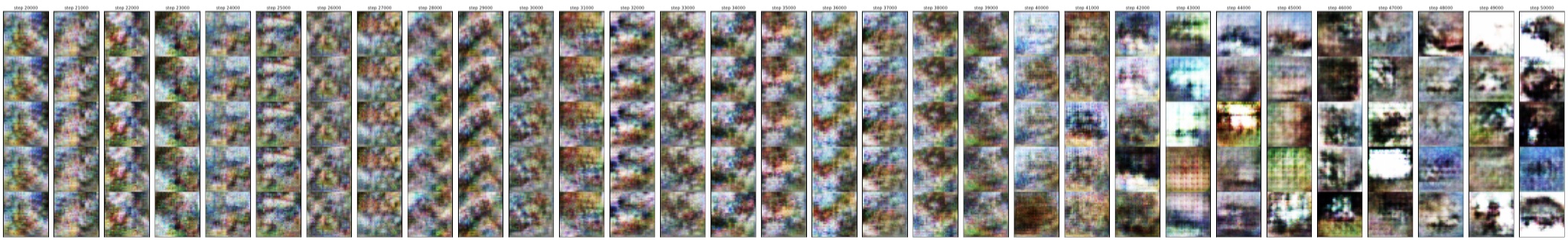}\\
        \includegraphics[scale=0.11]{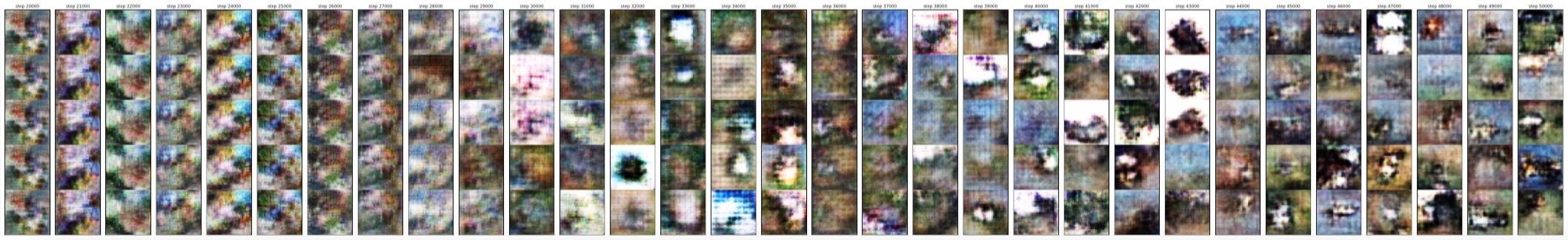}
    \end{minipage}
    \caption{$F_{con}$ (top) vs $F_{lin}$ (bottom) on a mixture of Gaussians (left) and CIFAR10 (right). Each column of images corresponds to an epoch with epochs increasing left to right.}
    \label{fig:MO8G+CIFAR10}
\end{figure}

\subsubsection{Images at End of Training for Mixture of Gaussians}
See Figure~\ref{fig:MO8Gend}.
\begin{figure}[htbp]
    \centering
    \begin{minipage}{0.42\textwidth}
        \centering
        \includegraphics[scale=0.125]{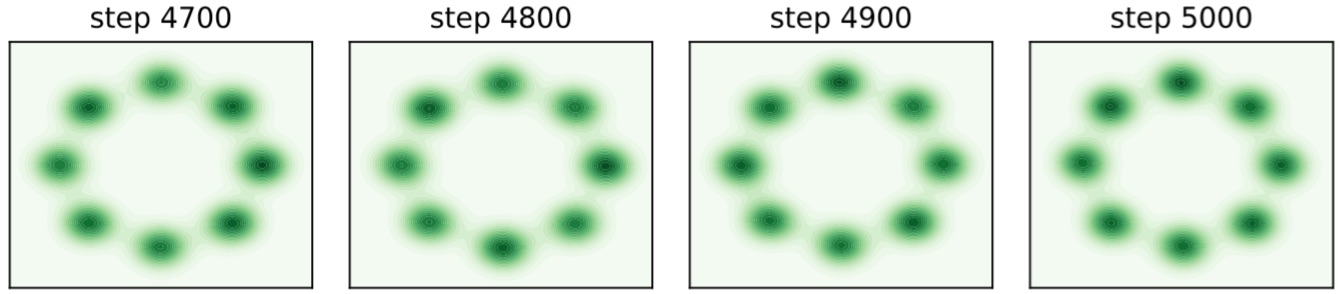}\\
        \includegraphics[scale=0.125]{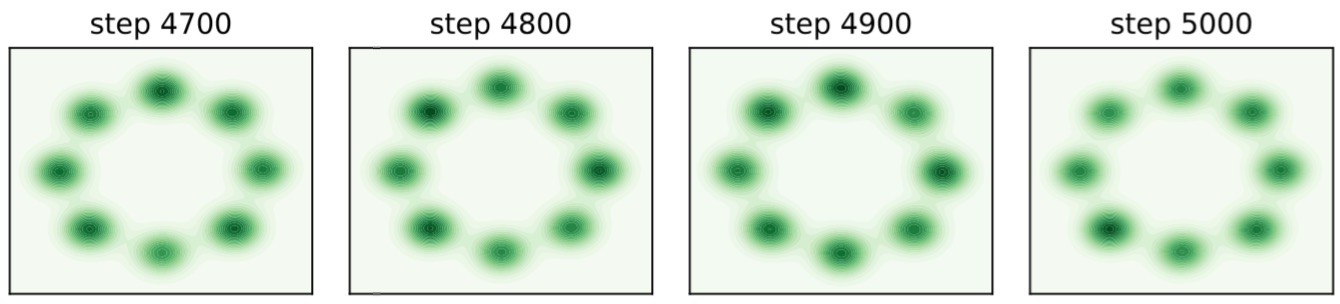}
    \end{minipage}
    \begin{minipage}{0.225\textwidth}
        \centering
        \includegraphics[scale=0.25]{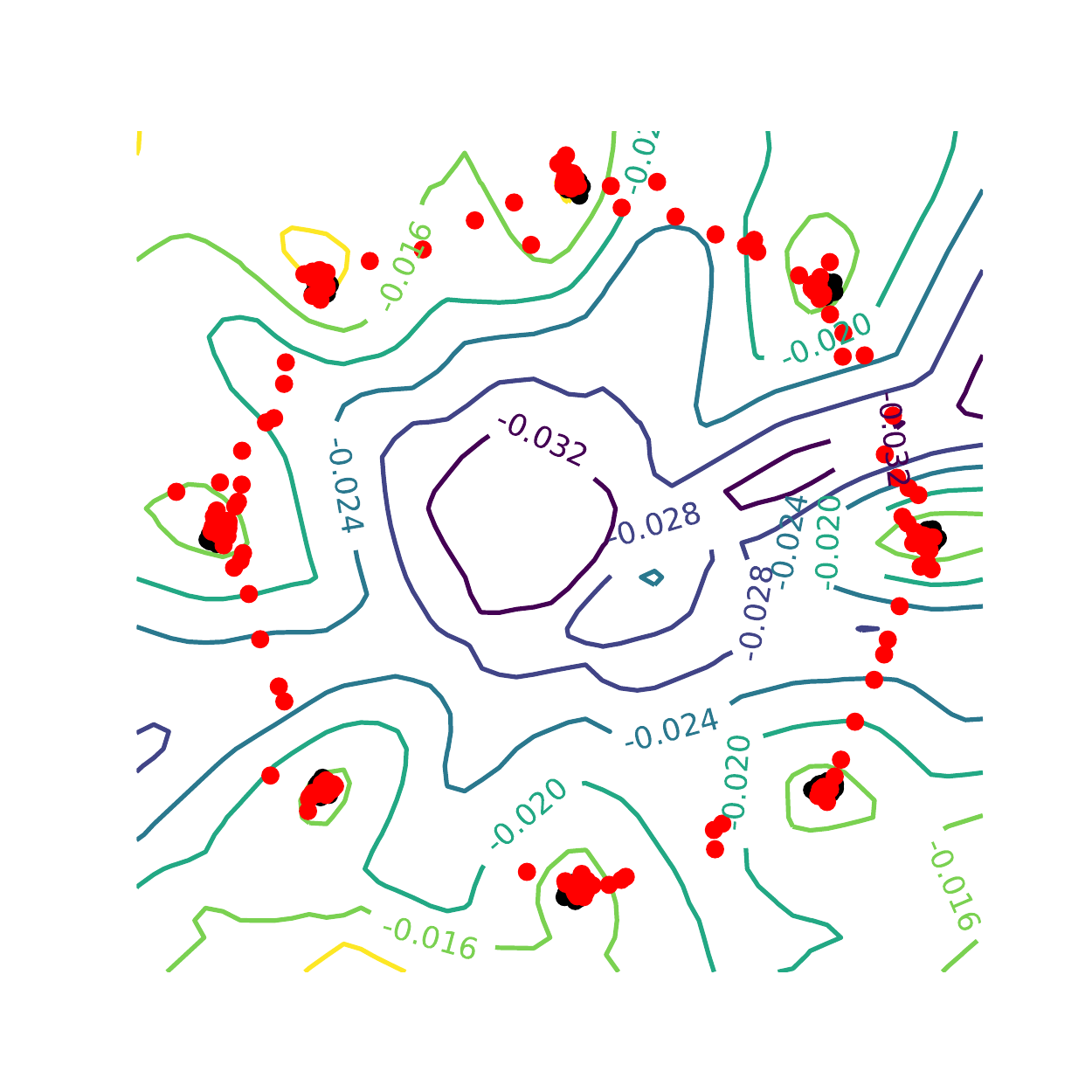}
    \end{minipage}
    \begin{minipage}{0.225\textwidth}
        \centering
        \includegraphics[scale=0.25]{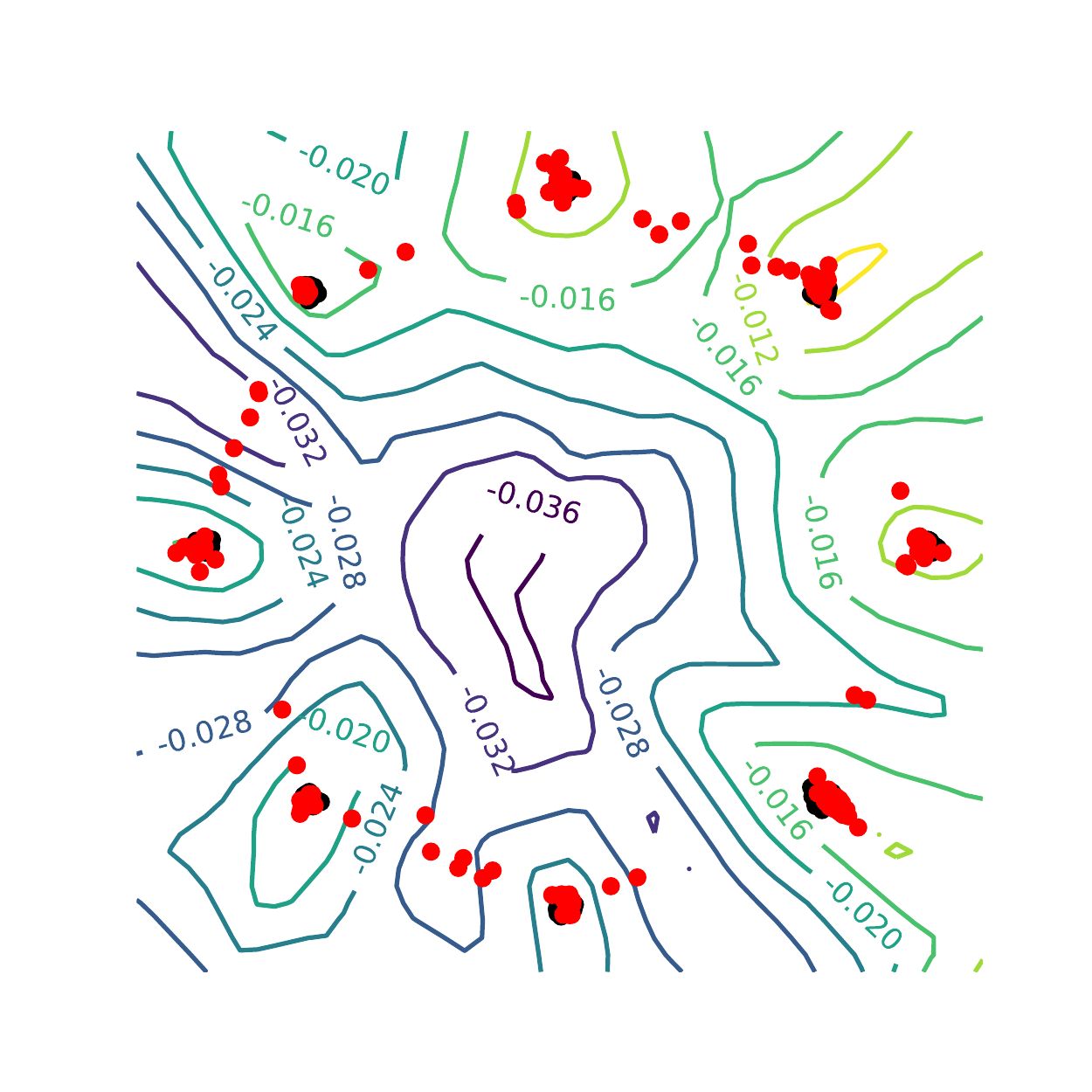}
    \end{minipage}
    \caption{$F_{con}$ (top row) vs $F_{lin}$ (bottom row) on a mixture of Gaussians. Contour plots of discriminator along with samples in red shown for $F_{con}$ (left) and $F_{lin}$ (right).}
    \label{fig:MO8Gend}
\end{figure}

\subsubsection{Mixture of Gaussians Network Architectures}

Both the generator and discriminator are fully connected neural networks. The relevant hyperparameters for setting up the GAN are itemized below.
\begin{itemize}
    \item batch size 512
    \item divergence Wasserstein
    \item disc optim Adam
    \item disc learning rate 0.001
    \item disc n hidden 16
    \item disc n layer 4
    \item disc nonlinearity ReLU
    \item gen optim Adam
    \item gen learning rate 0.001
    \item gen n hidden 16
    \item gen n layer 4
    \item gen nonlinearity ReLU
    \item betas [0.5, 0.999]
    \item epsilon 1e-08
    \item max iter 5001
    \item z dim 16
    \item x dim 2
\end{itemize}

$F_{con}$ was used with $\beta=1.0$ and $F_{lin}$ was used with $(\alpha,\beta,\gamma)=(1.0,1.0,0.001)$.

\subsubsection{Images at End of Training for CIFAR10}
See Figure~\ref{fig:CIFAR10end}.
\begin{figure}[htbp]
    \centering
    \begin{minipage}{0.45\textwidth}
        \centering
        \includegraphics[scale=0.12]{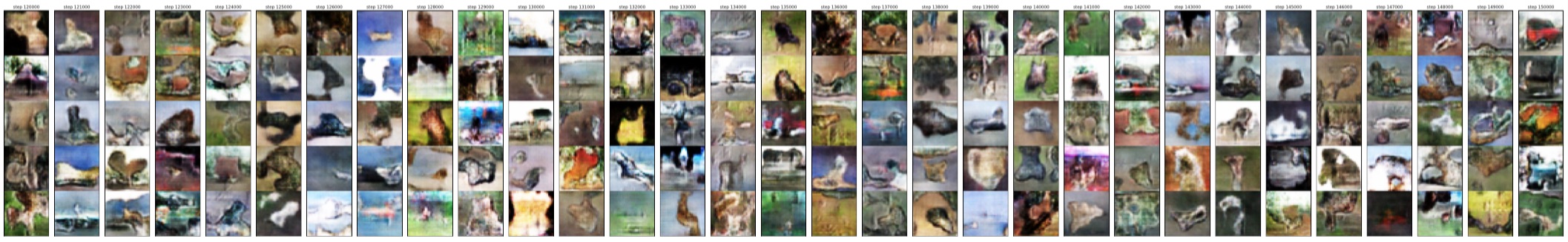}\\
        \includegraphics[scale=0.12]{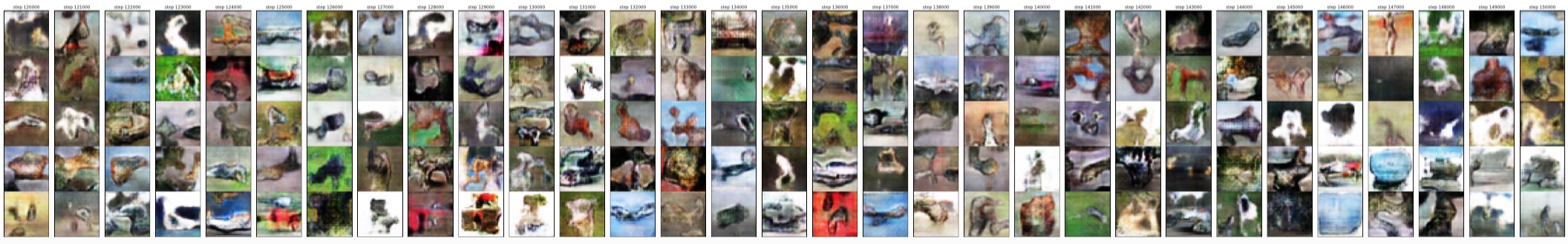}
    \end{minipage}
    \begin{minipage}{0.25\textwidth}
        \centering
        \includegraphics[scale=0.27]{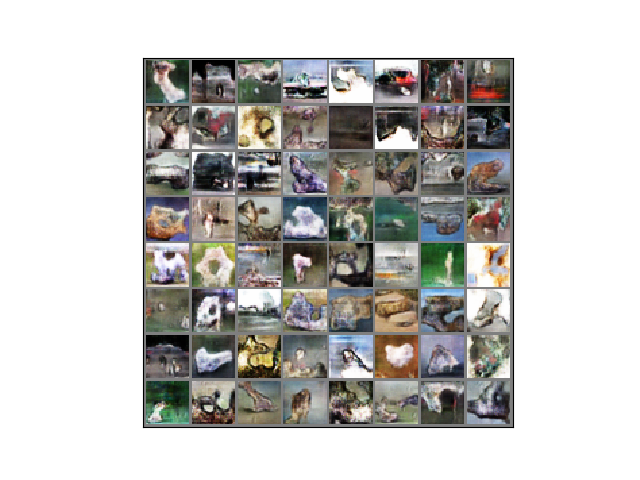}
    \end{minipage}
    \begin{minipage}{0.25\textwidth}
        \centering
        \includegraphics[scale=0.27]{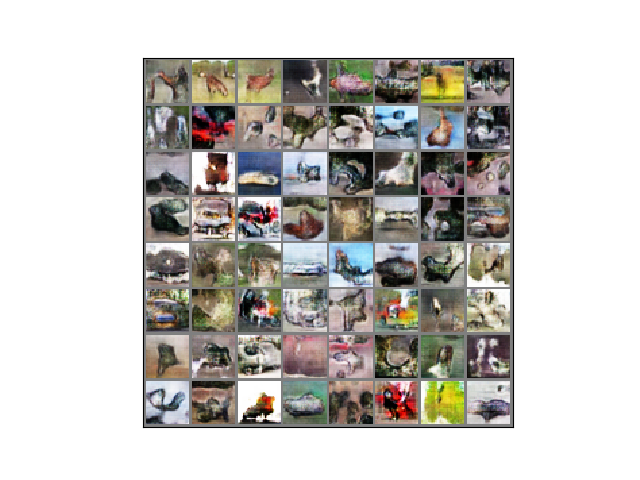}
    \end{minipage}
    \caption{$F_{con}$ (top row) vs $F_{lin}$ (bottom row) on CIFAR10. Images generated at final iteration shown for $F_{con}$ (left) and $F_{lin}$ (right).}
    \label{fig:CIFAR10end}
\end{figure}

\subsubsection{CIFAR10 Network Architectures}

Both the generator and discriminator are convolutional neural networks; we copied the architectures used in~\cite{mescheder2017numerics}. The generator consists of a linear layer, followed by 4 deconvolution layers ($5 \times 5$ kernel, $2 \times 2$ stride, leaky ReLU, 64 hidden channels), followed by a final linear layer with a tanh nonlinearity. The discriminator consists of 4 convolution layers ($5 \times 5$ kernel, $2 \times 2$ stride, leaky ReLU, 64 hidden channels) followed by a linear layer. The relevant hyperparameters for setting up the GAN are itemized below.
\begin{itemize}
    \item batch size 64
    \item divergence JS
    \item disc optim RMSprop
    \item disc learning rate 0.0001
    \item gen optim RMSprop
    \item gen learning rate 0.0001
    \item betas [0.5, 0.999]
    \item epsilon 1e-08
    \item max iter 150001
    \item z dim 256
    \item x dim 1024
\end{itemize}

$F_{con}$ was used with $\beta=10.0$ and $F_{lin}$ was used with $(\alpha,\beta,\gamma)=(1.0,10.0,0.0001)$.

\end{document}